\pgfplotsset{compat=newest} 
\newtheorem{theorem}{Theorem}
\newtheorem{proposition}[theorem]{Proposition}
\newtheorem{lemma}[theorem]{Lemma}
\newtheorem{remark}[theorem]{Remark}
\newtheorem{example}[theorem]{Example}
\newcommand{\OWLQL}{\textsl{OWL\,2\,QL}}
\newcommand{\dlliter}{\textsl{DL-Lite$_R$}}
\def\Tmc{\ensuremath{\mathcal{T}}}
\def\Amc{\ensuremath{\mathcal{A}}}
\def\Kmc{\ensuremath{\mathcal{K}}}
\def\Imc{\ensuremath{\mathcal{I}}}
\def\Cmc{\ensuremath{\mathcal{C}}}
\def\cn{\ensuremath{\mathsf{N_{1}}}}
\def\rn{\ensuremath{\mathsf{N_{2}}}}
\def\rni{\ensuremath{\mathsf{N}^\pm_{2}}}
\def\ainds{\ensuremath{\mathsf{inds}}}
\def\sig{\mathsf{sig}}
\def\q{\mathbf{q}}
\def\vars{\mathsf{vars}}
\def\avars{\mathsf{avars}}
\def\canmod{\Cmc_{\Tmc,\Amc}}
\def\CmC{\Cmc_{\Tmc_\Cir^\vec{x},\Amc_\Cir}}
\def\qclin{\q_{\Cir}^\mathsf{lin}}
\def\qc{\q_{\Cir}}
\def\dcx{D_{\Cir}^\vec{x}}
\def\emptyword{\varepsilon}
\def\goal{\mathsf{goal}}
\def\twfn{\ensuremath{f^{\mathsf{tw}}_{\q,\Tmc}}}
\def\homfn{\ensuremath{f^{\mathsf{tw}'}_{\q,\Tmc}}}
\def\primfn{\ensuremath{f^{\mathsf{prim}}_{\q,\Tmc}}}
\def\primfnP{\ensuremath{f^{\mathsf{prim}}_{\q_P,\Tmc_P}}}
\def\primfnn{\ensuremath{f^{\mathsf{prim}}_{\q_n,\Tmc_n}}}
\def\primsuper{\mathsf{prim}}
\def\nameprimfn{primitive evaluation}
\newcommand{\NOT}{\ensuremath{{\rm NOT}}}
\newcommand{\size}{\sigma}
\def\twset{\Theta^\q_\Tmc}
\def\numtypes{M}
\def\type{\xi}
\def\atom{\eta}
\def\twidth{t}
\def\valpha{\vec{\alpha}}
\def\vgamma{\vec{\gamma}}
\def\np{\ensuremath{\textsc{NP}}}
\def\lspace{\ensuremath{\textsc{L}}}
\def\nlpoly{\ensuremath{\textsc{NL}/\text{{\rm poly}}}}
\def\nppoly{\ensuremath{\textsc{NP}/\text{{\rm poly}}}}
\def\ppoly{\ensuremath{\textsc{P}/\text{{\rm poly}}}}
\def\ncone{\ensuremath{\textsc{NC}^1}}
\def\sac{\ensuremath{\SAC}}
\newcommand{\leftt}{\mathsf{{Left}}}
\newcommand{\rightt}{\mathsf{{Right}}}
\newcommand{\Reach}{\mathsf{{Reach}}}
\def\ind{\ainds}
\newcommand{\Cir}{{\boldsymbol{C}}}
\newcommand{\FO}{\text{FO}}
\newcommand{\PE}{\text{PE}}
\newcommand{\NDL}{\text{NDL}}
\newcommand{\tr}{\mathfrak{t}_\mathsf{r}}
\newcommand{\ti}{\mathfrak{t}_\mathsf{i}}
\renewcommand{\t}{\mathfrak{t}}
\renewcommand{\l}{\mathfrak{l}}
\newcommand{\NL}{\textsc{NL}}
\newcommand{\SAC}{\textsc{SAC}^1}
\newcommand{\LOGCFL}{\textsc{LOGCFL}}
\newcommand{\THP}{\textrm{THGP}}
\newcommand{\HP}{\textrm{HGP}}
\newcommand{\AND}{\ensuremath{{\rm AND}}}
\newcommand{\OR}{\ensuremath{{\rm OR}}}
\newcommand{\la}{\langle}
\newcommand{\ra}{\rangle}
\def\canmodrho{\Cmc_{\Tmc}^\varrho}
\def\logcfl{\mathrm{LOGCFL}}
\def\frontier{\mathsf{Frontier}}
\def\bbbdalgo{TreeQuery}
\def\bbarbalgo{BLQuery}
\def\compnode{MapCore}
\def\compnodeanon{MapAnon}
\def\compedge{MapEdge}
\def\llred{\ensuremath{\lspace^{\logcfl}}}
\def\stack{\mathsf{Stack}}
\def\false{\textbf{false}}
\def\true{\textbf{true}}
\def\yes{\textbf{yes}}
\def\no{\textbf{no}}
\def\stackheight{\mathsf{Height}}
\def\children{\mathsf{Children}}
\def\deepest{\mathsf{Deepest}}
\def\dom{\mathsf{dom}}
\def\aor{U}
\def\leftand{L}
\def\rightand{R}
\def\trueleaf{A}
\def\tds{\mathsf{mtd}_t}
\def\TW{\Theta^\q_\Tmc}
\def\bvars{\mathsf{border}}
\def\qwo{\q^{U,\Omega}}
\tikzstyle{or-gate}=[rectangle,draw,inner sep=1.5mm]
\tikzstyle{and-gate}=[rectangle,draw,inner sep=1.5mm]
\tikzstyle{input}=[circle,draw,minimum size=4mm,text width=3mm]
\tikzstyle{vertex-c}=[circle,draw,minimum size=2mm,fill=gray!40]
\tikzstyle{vertex-sq}=[rectangle,draw,minimum size=2mm,fill=gray!40]
\tikzstyle{vertex}=[rectangle,draw,minimum size=0.5mm,fill=black,inner sep=0mm]
\tikzstyle{vertex-l}=[circle,draw,minimum size=1mm,fill=gray!40,inner sep=0mm]
\tikzstyle{round-rect}=[rectangle,rounded corners=3pt,draw,inner sep=1.5mm]
\renewcommand{\vec}[1]{\mathbf{#1}}
\begin{document}
%
\title{
Tree-like Queries in OWL 2 QL: \\ Succinctness and Complexity Results 
}



%
\author{\IEEEauthorblockN{Meghyn Bienvenu\IEEEauthorrefmark{1},
Stanislav Kikot\IEEEauthorrefmark{2} and
Vladimir Podolskii\IEEEauthorrefmark{3}}
\IEEEauthorblockA{\IEEEauthorrefmark{1}Laboratoire de Recherche en Informatique,
CNRS \& Universit\'{e} Paris-Sud,
Orsay, France}
\IEEEauthorblockA{\IEEEauthorrefmark{2}Institute for Information Transmission Problems \& MIPT, Moscow, Russia}
\IEEEauthorblockA{\IEEEauthorrefmark{3}Steklov Mathematical Institute \&
National Research University Higher School of Economics,
Moscow, Russia}
}

\maketitle

\begin{abstract}
This paper investigates the impact of query topology on the difficulty of answering conjunctive queries in the presence of OWL 2 QL ontologies. Our first contribution is to clarify the worst-case size of positive existential (PE), non-recursive Datalog (NDL), and first-order (FO) rewritings for various classes of tree-like conjunctive queries, ranging from linear queries to bounded treewidth queries. Perhaps our most surprising result is a superpolynomial lower bound on the size of PE-rewritings that holds already for linear queries and ontologies of depth 2. 
More positively, we show that polynomial-size 
NDL-rewritings always exist for tree-shaped queries with a bounded number of leaves (and arbitrary ontologies), and for bounded treewidth queries paired with bounded depth ontologies. For FO-rewritings, we equate 
the existence of polysize rewritings 
with well-known problems in Boolean circuit complexity. As our second contribution, 
we analyze the computational complexity of query 
answering and establish tractability results (either \NL- or \LOGCFL-completeness) for
a range of query-ontology pairs. 
Combining our new results with those from the literature 
yields 
a complete picture of the succinctness and complexity landscapes 
for the considered classes of queries and ontologies.
\end{abstract}


%
\IEEEpeerreviewmaketitle

\section{Introduction}
Recent years have witnessed a growing interest from both the knowledge representation and 
database communities in \emph{ontology-based data access} (OBDA), 
in which the conceptual knowledge provided 
by an ontology is exploited when querying data. Formally, 
given an ontology $\Tmc$ (logical theory), a data instance $\Amc$ (set of ground facts), and a conjunctive query (CQ) $\q(\vec{x})$,
the problem is to compute the \emph{certain answers} to $\q$,
that is, the tuples of constants $\vec{a}$ that satisfy
$\Tmc, \Amc \models \q(\vec{a})$. 

As scalability is crucial in data-intensive applications, much of the work on OBDA focuses 
on so-called 
`lightweight' ontology languages, which provide useful modelling features while retaining good computational properties. 
The DL-Lite family \cite{CDLLR07} of lightweight description logics has played a particularly prominent 
role, as witnessed by the recent introduction of the OWL 2 QL profile \cite{profiles} (based upon DL-Lite)
into the W3C-endorsed ontology language OWL 2. 
%
The popularity of these languages is due to the fact that they enjoy 
\emph{first-order (FO) rewritability}, which means that 
for every CQ $\q(\vec{x})$ and 
ontology $\Tmc$, there exists a computable FO-query $\q'(\vec{x})$  (called a \emph{rewriting})
such that the certain answers to $\q(\vec{x})$ over $(\Tmc, \Amc)$ coincide 
with the answers of 
$\q'(\vec{x})$ over the data instance $\Amc$ (viewed as an FO interpretation). 
First-order rewritability provides a means of reducing the \emph{entailment} problem of identifying certain answers 
to the simpler problem of FO \emph{model checking}; the latter 
can be rephrased as SQL query evaluation and 
delegated to highly-optimized relational database management systems (RDBMSs). 
%
This appealing theoretical result 
spurred the development of numerous 
query rewriting algorithms 
for OWL 2 QL and its extensions, cf.\
\cite{CDLLR07,Perez-UrbinaMH09,DBLP:conf/kr/RosatiA10,DBLP:conf/cade/ChortarasTS11,DBLP:conf/icde/GottlobOP11,DBLP:conf/esws/Rosati12,Perez-Urbina12,DBLP:conf/aaai/EiterOSTX12,DBLP:conf/rr/KonigLMT12,iswc13}.
Most 
produce rewritings expressed as unions of conjunctive queries (UCQs), and 
experimental evaluation has shown that 
such rewritings may be huge, making them difficult, or even impossible, to evaluate 
using 
RDBMSs. 

The aim of this paper is to gain a better understanding of 
the difficulty of 
query rewriting and query answering in OWL 2 QL and 
how it varies depending on the topology of the query. 

\smallskip 

%



\begin{figure*}[t]
\setlength{\abovecaptionskip}{3pt plus 0pt minus 0pt}
\centering
\scalebox{.85}{
\scriptsize
\begin{tikzpicture}
\begin{axis}[
    hide z axis,
    xmin=0, xmax=9,
    ymin=0, ymax=9,
        xtick={1,2,3,5,7,8},
    xtick style={color=black},
    xticklabels={$1$,$2$,$3$,$\dots$,$d$,arb},
    ytick={1,2,3,4,5,6,7,8},
    ytick style={color=black},
    yticklabels={$2$,$\dots$,$\ell$,trees,
    tw 2,$\dots$,btw,arb},
    xlabel={\textsc{Ontology depth}},
    ylabel={Number of leaves \raisebox{4mm}{\textsc{Query shape}} 
    Treewidth$\qquad$}
]
\end{axis}
\node (x1) at (1.6,0.5) [] {};
\node (x2) at (6.3,2) [] {};
\node [fill=gray!10,draw,thin,rounded corners,inner sep=1mm,fit=(x2) (x1)] 
{$\nlpoly$: no poly PE but poly NDL \\
Thms.\ \ref{linear-lower}, \ref{bbcq-ndl}, \ref{nbps-conditional}
};
\node (y1) at (1.6,2.6) [] {};
\node (y2) at (5.2,4.4) [] {};
\node [fill=gray!30,draw,thin,rounded corners,inner sep=1mm,fit=(y2) (y1)] 
{$\SAC$: no poly PE but poly NDL\\
Thms.\ \ref{btw-ndl}, \ref{btw-fo}
};
\node (z1) at (5.8,2.6) [] {};
\node (z2) at (6.3,5.2) [] {};
\node [fill=gray!60,draw,thin,rounded corners,inner sep=1mm,fit=(z2) (z1)] 
{\rotatebox{90}{{\kern -6mm $\nppoly$ \ \ \cite{DBLP:conf/icalp/KikotKPZ12}}}};
\node (v1) at (1.6,5.0) [] {};
\node (v2) at (5.2,5.2) [] {};
\node [fill=gray!60,draw,thin,rounded corners,inner sep=1mm,fit=(v2) (v1)] 
{\raisebox{-4.2mm}{\vbox{\hbox{$\nppoly$: no polysize PE or NDL}\hbox to 4cm{\hfil\cite{lics14-KKPZ}\hfil}}}};
\node (t1) at (0.5,0.5) [] {};
\node (t2) at (1,4.4) [] {};
\node [draw,thin,rounded corners,inner sep=1mm,fit=(t2) (t1)] 
{Thm. \ref{depth-one-btw} \\[4mm] poly PE, FO and NDL\\[4mm] \cite{lics14-KKPZ}
};
\node (s1) at (0.5,5.0) [] {};
\node (s2) at (1,5.2) [] {};
\node [fill=gray!10,draw,thin,rounded corners,inner sep=1mm,fit=(s2) (s1)] 
{\fontsize{6}{7}\selectfont \\{\hspace*{-0.8mm}$\nlpoly$}\\\raisebox{0.4mm}{\cite{lics14-KKPZ}} };
\end{tikzpicture}
}
\quad
\scalebox{.85}{
\scriptsize
\begin{tikzpicture}
\begin{axis}[
    hide z axis,
    xmin=0, xmax=9,
    ymin=0, ymax=9,
    xtick={1,2,3,5,7,8},
    xtick style={color=black},
    xticklabels={$1$,$2$,$3$,$\dots$,$d$,arb},
    ytick={1,2,3,4,5,6,7,8},
    ytick style={color=black},
    yticklabels={$2$,$\dots$,$\ell$,trees,
    tw 2,$\dots$,btw,arb},
    xlabel={\textsc{Ontology depth}},
    ylabel={Number of leaves \raisebox{4mm}{\textsc{Query shape}} 
    Treewidth$\qquad$}
]
\end{axis}
\node (x1) at (0.5,0.5) [] {};
\node (x2) at (5.2,2) [] {};
\node [fill=gray!10,draw,thin,rounded corners,inner sep=1mm,fit=(x2) (x1)] 
{\NL-complete
\\[1.5mm] 
{\fontsize{6}{7}\selectfont $\geq$: \cite{CDLLR07}  / DBs  $\quad \leq$: Thm.\ \ref{nl-bb}}
};
\node (y1) at (0.5,2.6) [] {};
\node (y2) at (5.2,4.4) [] {};
\node [fill=gray!30,draw,thin,rounded corners,inner sep=1mm,fit=(y2) (y1)] 
{\LOGCFL-complete
\\[1.5mm]
{\fontsize{6}{7}\selectfont  $\geq$: \cite{DBLP:journals/jacm/GottlobLS01} $\quad\leq$: Thm. \ref{logcfl-btw}}
};
\node (z1) at (5.8,2.6) [] {};
\node (z2) at (6.3,5.2) [] {};
\node [fill=gray!60,draw,thin,rounded corners,inner sep=1mm,fit=(z2) (z1)] 
{\rotatebox{90}{{\kern -10mm NP-c \ \ {\fontsize{6}{7}\selectfont $\geq$: \cite{DBLP:conf/icalp/KikotKPZ12} $\,\,\leq$: \cite{CDLLR07}}}}};
\node (s1) at (0.5,5.0) [] {};
\node (s2) at (5.2,5.2) [] {};
\node [fill=gray!60,draw,thin,rounded corners,inner sep=1mm,fit=(s2) (s1)] 
{\raisebox{-2.2mm}{\vbox{NP-complete {\fontsize{6}{7}\selectfont $\qquad \geq$: DBs $\quad\leq$: \cite{CDLLR07}}}}};
\node (z1) at (5.8,0.5) [] {};
\node (z2) at (6.3,2) [] {};
\node [fill=gray!30,draw,thin,rounded corners,inner sep=1mm,fit=(z2) (z1)] 
{\rotatebox{90}{{\kern -8mm \raisebox{2mm}{\vbox{\mbox{\LOGCFL-c}\\[1mm]  \mbox{\fontsize{5}{6}\selectfont $\geq,\leq$: Thm. \ref{logcfl-c-arb}
}
 }
}}}};
\end{tikzpicture}
}
\caption{Succinctness landscape for query rewriting [left] and 
complexity landscape for query answering [right]. 
We use the following abbreviations: 
`arb' for `arbitrary', `(b)tw' for `(bounded) treewidth',  
`poly' for `polynomial-size', `DBs' for `inherited from databases', and `c' for `complete'.  
On the left, `$\nppoly$' indicates that `polysize FO-rewritings only if $\nppoly \subseteq \ncone$'
and $C \in \{\nlpoly,\SAC\}$ means 
`polysize FO-rewritings iff $C \subseteq \ncone$'.
}
\label{fig:1}
\vspace*{-.25cm}
\end{figure*}

\noindent\textbf{Succinctness of Query Rewriting} It is not difficult to see that exponential-size rewritings are unavoidable 
if rewritings are given as UCQs (consider 
the query $
B_1(x) \wedge \dots \wedge B_n(x)$ and the ontology $\{A_i(x) \rightarrow B_i(x) \mid 1 \leq i \leq n\}$). 
A natural 
question is whether an exponential blowup can be avoided 
by moving to other standard query languages,
like positive existential (PE) queries, non-recursive datalog (NDL) queries, or first-order (FO-) queries.
More generally, under what conditions can we ensure polynomial-size rewritings? 
A first (negative) answer was given in \cite{DBLP:conf/icalp/KikotKPZ12}, 
which 
proved exponential lower bounds for 
the worst-case size of PE- and NDL-rewritings, as well as a superpolynomial lower 
bound 
for FO-rewritings (
under the widely-held 
assumption that 
$\np \not \subseteq \ppoly$). 
Interestingly, all three results hold already for \emph{tree-shaped} CQs, which are 
a well-studied and practically relevant class of CQs that often enjoy better computational properties, cf.\ \cite{DBLP:conf/vldb/Yannakakis81,DBLP:conf/ijcai/BienvenuOSX13}. 
While the queries used in the proofs had a simple structure, the ontologies induced full binary trees of depth~$n$. 
This raised the question of whether better results could be obtained by considering 
restricted classes of ontologies. 
A recent study \cite{lics14-KKPZ} explored 
this question for ontologies of \emph{depth} 1 and 2, that is, ontologies for which the trees of labelled nulls 
appearing in the canonical model (aka chase) are guaranteed to be of depth at most 1 or 2
(see Section \ref{sec:prelims} for a formal definition). 
It was shown that for depth 1 ontologies, polysize PE-rewritings do not exist, polysize NDL-rewritings
do exist, and polysize FO-rewritings exist iff $\nlpoly \subseteq \ncone$. 
For depth 2 ontologies, neither polysize PE- nor NDL-rewritings exist, and polysize FO-rewritings do not exist 
unless $\np \subseteq \ppoly$. These results used simpler ontologies, but the considered CQs were
no longer tree-shaped. For depth 1 ontologies, this distinction is crucial, as it was further shown in \cite{lics14-KKPZ}  that polysize PE-rewritings \emph{do} exist for tree-shaped CQs.

While existing results go a fair way towards understanding the succinctness landscape 
of query rewriting in OWL 2 QL,  
a number of questions remain open:
\begin{itemize}
\item What happens if we consider tree-shaped queries and bounded depth ontologies?
\item What happens if we consider generalizations or restrictions of tree-shaped CQs?
\end{itemize}

\noindent\textbf{Complexity of Query Answering} 
Succinctness results help us understand when polysize rewritings are possible, 
but they say little about the complexity of query answering itself. 
On the one hand, the existence of polysize rewritings is not sufficient to guarantee efficient query answering,
since small rewritings may nonetheless be difficult to produce and/or evaluate.
On the other hand, negative results show that query rewriting may not always be practicable, 
but they leave open whether another approach to query answering might yield better results.
It is therefore important to investigate the complexity landscape of query answering, independently of any algorithmic approach. 

We briefly review the relevant literature. 
In relational databases, it is well-known that CQ answering is \np-complete in the general case.
A seminal result by Yannakakis established the tractability of answering tree-shaped CQs \cite{DBLP:conf/vldb/Yannakakis81},
and this result was later extended to wider classes of queries, most notably to bounded treewidth CQs \cite{DBLP:journals/tcs/ChekuriR00}.
Gottlob et al.\ \cite{DBLP:conf/icalp/GottlobLS99} pinpointed the precise complexity of answering tree-shaped and bounded treewidth CQs, 
showing both problems to be complete for the class \LOGCFL\ of all languages logspace-reducible to context-free languages \cite{DBLP:journals/jcss/Venkateswaran91}.  
In the presence of arbitrary OWL 2 QL ontologies, the \np\ upper bound for arbitrary CQs continues to hold \cite{CDLLR07},
but answering tree-shaped queries becomes \np-hard \cite{DBLP:conf/icalp/KikotKPZ12}. 
Interestingly, the latter problem was recently proven tractable in \cite{DBLP:conf/ijcai/BienvenuOSX13} for DL-Lite$_{\mathsf{core}}$ (a slightly less expressive logic than OWL 2 QL), 
raising the hope that other restrictions might also yield tractability. 
We therefore have the following additional question:
\begin{itemize}
\item How do the aforementioned restrictions on queries and ontologies impact the complexity of query answering?
\end{itemize}

\noindent\textbf{Contributions} In this paper, we address the preceding questions by providing a complete picture of both the 
worst-case size of query rewritings 
and the complexity of query answering for tree-shaped queries, their restriction to \emph{linear and bounded leaf
queries} (i.e. tree-shaped CQs with a bounded number of leaves), and their generalization to \emph{bounded treewidth queries}. 
Figure \ref{fig:1} gives an overview of new and existing results.

Regarding  succinctness, we establish a superpolynomial lower bound
on the size of PE-rewritings that holds already for linear queries and depth 2 ontologies, 
significantly strengthening earlier negative results. 
For NDL-rewritings, the situation is brighter: we show that 
polysize 
rewritings always exist for 
bounded branching queries (and arbitrary OWL 2 QL 
ontologies), and for bounded treewidth queries and bounded depth ontologies.
We also prove that the succinctness problems concerning FO-rewritings are equivalent to 
well-known problems in 
circuit complexity: $\nlpoly \subseteq \ncone$ in the case of 
linear and bounded leaf queries, and $\sac\subseteq \ncone$ in the case of tree-shaped
and bounded treewidth queries and bounded depth ontologies. 
Finally, to complete the succinctness landscape, we show that the result from \cite{lics14-KKPZ} 
that all tree-shaped queries
and depth 1 ontologies have polysize PE-rewritings generalizes to the wider class of bounded treewidth queries.
To prove our results, we establish tight connections
between Boolean functions induced by queries and ontologies and the non-uniform complexity 
classes $\nlpoly$ and $\sac$, reusing and further extending the 
machinery developed in \cite{DBLP:conf/icalp/KikotKPZ12,lics14-KKPZ}. 

Our complexity analysis reveals that all query-ontology combinations 
that have not already been shown \np-hard are in fact tractable. Specifically, 
in the case of bounded depth ontologies, we prove membership in \LOGCFL\ for bounded treewidth queries (generalizing the result in \cite{DBLP:conf/icalp/GottlobLS99})
and membership in \NL\ for bounded leaf queries. 
We also show \LOGCFL-completeness for linear and bounded leaf queries in the presence of arbitrary OWL 2 QL ontologies. 
This last result is the most interesting technically, as 
 upper and lower bounds 
rely on two different characterizations of the class \LOGCFL. 

%
%

For lack of space, some proofs are deferred to the appendix.

\section{
Preliminaries}
\label{sec:prelims}
\subsection{Querying \OWLQL\ Knowledge Bases}
We will work with the fragment of \OWLQL\ profile \cite{profiles}
that corresponds to the description logic \dlliter\ \cite{CDLLR07}, 
as the latter covers the most important features of \OWLQL\  and simplifies the technical treatment. 
Moreover, to make the paper accessible to a wider audience, 
we eschew the more common OWL and description logic notations in favour of traditional first-order logic (FO) syntax. 


\begin{figure*}[t]
\centering
\begin{minipage}[b]{0.2\textwidth}
\footnotesize
\begin{align*}
\Tmc_0 =
 \{& P(x,y) \rightarrow R(x,y), \\
& P(x,y)\rightarrow U(y,x),  \\ 
& A(x) \rightarrow \exists y P(x,y),  \\ &\exists y P(y,x) \rightarrow \exists y S(x,y), \\
& \exists y S(y,x) \rightarrow \exists y R(x,y),\\ 
& \exists y S(y,x) \rightarrow \exists y T(y,x), \\ 
& \exists y P(y,x) \rightarrow B(x)\} \\[2mm]
\Amc_0 = \{&A(a), R(a,c)\} 
\end{align*}
\end{minipage}
\quad
\scalebox{.95}{
\begin{tikzpicture}[>=latex, point/.style={circle,draw=black,thick,minimum size=1.5mm,inner sep=0pt}, wiggly/.style={thick,decorate,decoration={snake,amplitude=0.3mm,segment length=2mm,post length=1mm}},
query/.style={thick},
tw/.style={shorten <= 0.1cm, shorten >= 0.1cm,dashed},yscale=1,xscale=0.9]\footnotesize
%
%

%
\node[fill=black] (a) at (1,3) [point, label=above:{$A$}, label=left:{$a$}]{};
\node[fill=black] (c) at (2.5,3) [point, label=right:{$c$}]{};
\node (d1) at (1,2) [point, fill=white, label=left:{$aP$}, label=right:{$B$}]{};
\node (d2) at (1,1) [point, fill=white, label=left:{$aPS$}]{};
\node (d31) at (0.5,0) [point, fill=white, label=left:{$aPSR$}]{};
\node (d32) at (1.5,0) [point, fill=white, label=right:{$aPST^-$}]{};
\draw[->,query] (a) to node[above] {$R$} (c);
\draw[->,wiggly] (a)  to node [left]{\scriptsize $P, R, U^-$} (d1);
\draw[->,wiggly] (d1)  to node [left]{\scriptsize $S$} (d2);
\draw[->,wiggly] (d2)  to node [above,sloped]{\scriptsize $R$} (d31);
\draw[->,wiggly] (d2)  to node [above,sloped]{\scriptsize $T^-$} (d32);
\end{tikzpicture}}
\quad
\scalebox{.95}{
\begin{tikzpicture}[>=latex, point/.style={circle,draw=black,thick,minimum size=1.5mm,inner sep=0pt,fill=white},
spoint/.style={rectangle,draw=black,thick,minimum size=1.5mm,inner sep=0pt,fill=white},
ipoint/.style={circle,draw=black,thick,minimum size=1.5mm,inner sep=0pt,fill=lightgray},
wiggly/.style={thick,decorate,decoration={snake,amplitude=0.3mm,segment length=2mm,post length=1mm}},
query/.style={thick},yscale=1,xscale=1]\footnotesize
\node[ipoint,label=left:{$y_1$}, label=above:$B$] (y1) at (1,3) {};
\node[ipoint,label=left:{$y_2$}] (y2) at (0,2) {};
\node[ipoint,label=right:{$y_3$}] (y3) at (2,2) {};
\node[ipoint,label=left:{$x_1$}] (x1) at (0,1) {};
\node[ipoint,label=left:{$y_4$}] (y4) at (1.5,1) {};
\node[ipoint,label=right:{$y_5$}] (y5) at (2.5,1) {};
\node[ipoint,label=left:{$x_2$}] (x2) at (1.5,0) {};
%
%
\draw[->,query] (y2) to node[above,sloped] {$P$} (y1);
\draw[->,query] (y1) to node[above,sloped] {$S$} (y3);
\draw[->,query] (y2) to node[left] {$R$} (x1);
\draw[->,query] (y4) to node[pos=0.4,above,sloped] {$S$} (y3);
\draw[->,query] (y5) to node[pos=0.4,above,sloped] {$T$} (y3);
\draw[->,query] (y4) to node[left] {$U$} (x2);
%
%
\end{tikzpicture}}
\quad
\begin{minipage}[b]{0.15\textwidth}
\scriptsize
\begin{align*}
x_1 &\mapsto c\\
x_2 &\mapsto a\\
y_1 &\mapsto aP\\
y_2 &\mapsto a\\
y_3 &\mapsto aPS\\
y_4 &\mapsto aP\\
y_5 &\mapsto aPST^- \\[.2cm]
\end{align*}
\end{minipage}

\caption{From left to right: the KB $(\Tmc_0,\Amc_0)$, its canonical model 
$\Cmc_{\Tmc_0, \Amc_0}$, the tree-shaped query $\q_0(x_1,x_2)$, and the homomorphism $h_0: \q_0(c,a) \rightarrow \canmod$.}
\label{ex-fig}
\vspace*{-.25cm}
\end{figure*}

\subsubsection{Knowledge bases} \label{prelims:kb}
We assume countably infinite, mutually disjoint sets \cn\ and \rn\ of \emph{unary} and \emph{binary predicate} names. 
We will typically use the characters  $A$, $B$ for unary predicates and $P$, $R$ for binary predicates. 
For a binary predicate $P$, we will use $P^-$ to denote the \emph{inverse} of $P$ and will \emph{treat an atom $P^-(t,t')$ as 
shorthand for $P(t',t)$} (by convention, $P^{--}=P$). The set of binary predicates and their inverses is denoted $\rni$,
and we use $\varrho$ to refer to its elements.

An \OWLQL\ \emph{knowledge base} (KB) can be seen as a \emph{pair of FO theories} $(\Tmc, \Amc)$, 
constructed using predicates from $\cn$ and $\rn$. 
The FO theory $\Tmc$, called the \emph{ontology} (or TBox), consists of finitely many sentences (or \emph{axioms}) of the forms
\begin{align*}
& \forall x\,\big(\tau(x) \to \tau'(x)\big), 
& \forall x,y\,\big(\varrho(x,y) \to \varrho'(x,y)\big), \\
& \forall x\, \big(\tau(x) \land \tau'(x) \to \bot \big),
& \forall x,y\,\big(\varrho(x,y) \land \varrho'(x,y) \to \bot\big),
\end{align*}
where $\varrho \in \rni$ (see earlier) and 
$\tau(x)$ 
is defined as follows:
\begin{align*}
\tau(x) \ &::= \ \ A(x) \quad (A \in \cn) \ \mid \ \exists y\,\varrho(x,y) \quad (\varrho \in \rni) 
\end{align*}
Note that to simplify notation, we will 
omit the universal quantifiers when writing ontology axioms. 
The \emph{signature} of $\Tmc$, denoted $\sig(\Tmc)$, is the set of predicate names in $\Tmc$,
and the \emph{size} of $\Tmc$, written $|\Tmc|$, is the number of symbols in $\Tmc$. 

The second theory $\Amc$, called the \emph{data instance} (or ABox), is a finite set of \emph{ground facts}.
We use $\ind(\Amc)$ to denote the set of individual constants appearing in $\Amc$. 

The \emph{semantics} of KB $(\Tmc, \Amc)$ is the standard FO semantics of $\Tmc \cup \Amc$.
Interpretations will be given as pairs $\mathcal{I} = (\Delta^\mathcal{I}, \cdot^\mathcal{I})$, with $\Delta^\mathcal{I}$
the domain and $\cdot^\mathcal{I}$ the interpretation function;
models, satisfaction, consistency, and entailment are defined as usual.

\subsubsection{Query answering} A \emph{conjunctive query} (CQ) $\q(\vec{x})$ is an FO formula 
$\exists \vec{y}\, \varphi(\vec{x}, \vec{y})$, where $\varphi$ is a conjunction of atoms of the forms $A(z_1)$ or $R(z_1,z_2)$ with $z_i \in \vec{x} \cup \vec{y}$. The free variables $\vec{x}$ 
are called \emph{answer variables}. Note that we assume w.l.o.g.\ 
that CQs do not contain constants, and where convenient, we regard a CQ 
 as the set of its atoms. 
We use $\vars(\q)$ (resp.\ $\avars(\q)$) 
to denote the set of variables (resp.\ answer variables) 
of $\q$. The \emph{signature} and \emph{size} of $\q$, defined similarly to above, are denoted $\sig(\q)$ and $|\q|$ respectively. 
%
%

A tuple $\vec{a}\subseteq \ind (\mathcal{A})$ is a \emph{certain answer} to $\q(\vec{x})$ 
over $\mathcal{K} = (\mathcal{T},\mathcal{A})$ if  $\mathcal{I} \models \q(\vec{a})$ for all $\mathcal{I} \models \mathcal{K}$;
 in this case we write $\mathcal{K} \models \q(\vec{a})$. 
By first-order semantics, $\Imc \models \q(\vec{a})$ iff there 
is a mapping $h: \vars(\q) \rightarrow \Delta^{\Imc}$ such that 
(i) $h(z) \in A^{\Imc}$ whenever $A(z) \in \q$,
(ii) $(h(z), h(z')) \in r^{\Imc}$ whenever $r(z,z') \in \q$,
and (iii) $h$ maps $\vec{x}$ to $\vec{a}^\Imc$. 
If the first two conditions are satisified, then $h$ is a \emph{homomorphism} from $\q$ to $\Imc$, 
and we write $h \colon \q \to \Imc$. If (iii) also holds, 
then we write $h \colon \q(\vec{a}) \to \Imc$. 


\subsubsection{Canonical model}
We recall that every consistent \OWLQL\ KB $(\Tmc,\Amc)$ possesses a \emph{canonical model} (or chase) $\canmod$
with the property that 
\begin{align}
\Tmc,\Amc \models \q(\vec{a}) \quad \text{ iff }\quad \canmod \models \q(\vec{a}) \label{eq:canmod}
\end{align}
for every CQ $\q$ and tuple $\vec{a} \subseteq \ainds(\Amc)$. 
Thus, query answering in \OWLQL\ corresponds to \emph{deciding existence of a homomorphism of the query into the canonical model}. 

Informally, $\canmod$ is obtained from $\Amc$ by repeatedly applying the axioms in $\Tmc$, 
introducing fresh elements (labelled nulls) as needed to serve as witnesses for the existential quantifiers.  
Formally, the domain $\Delta^{\canmod}$ of $\canmod$ 
consists of $\ind(\Amc)$ and all words  
$a \varrho_1 \varrho_2 \ldots \varrho_n$ ($n \geq 1$) with $a \in \ind(\Amc)$ and $\varrho_i \in \rni$ ($1 \leq i \leq n)$ such that 
{\tolerance=3000
\begin{itemize}
\item $\Tmc, \Amc \models \exists y \varrho_1(a,y)$ and  $\Tmc, \Amc \models \varrho_1(a,b)$ for no $b \in \ainds(\Amc)$; 
\item for every $1 \leq i < n$: $\mathcal{T} \models \exists y\, \varrho_i(y,x) \rightarrow \exists y \, \varrho_{i+1}(x,y)$
  and $\Tmc \not \models \varrho_i(y,x) \rightarrow \varrho_{i+1}(x,y)$. 
\end{itemize}
}
\noindent Predicate names are interpreted as follows:
\begin{align*}
 A^{\canmod} =   & \,    \{ a \in \ainds(\Amc) \mid \Tmc,\Amc \models A(a) \}  \cup \\
& \{ w\varrho \in \Delta^{\canmod} \mid 
\Tmc \models \exists y\, \varrho(y,x) \rightarrow A(x) \}   \\
 P^{\canmod} =   &\,    \{ (a,b) \mid \Tmc, \Amc \models P(a,b)  \} \, \cup  \\
& \,   \{ (w,w\varrho)
\mid   \Tmc \models \varrho(x,y) \rightarrow P(x,y) \} \,\cup  \\ 
  & \,  \{ (w\varrho, w) 
\mid \
\Tmc \models \varrho(y,x) \rightarrow P(x,y) \} 
\end{align*}
Every constant $a \in \ainds(\Amc)$ is interpreted as itself: $a^{\canmod}=a$. 

Many of our constructions will exploit the fact that the canonical model has a forest structure: 
there is a \emph{core} involving the 
individual constants from the dataset  
and an \emph{anonymous part} consisting of 
trees of labelled nulls rooted at the constants. 

\begin{example} Figure \ref{ex-fig} presents a KB $(\Tmc_0, \Amc_0)$,
its canonical model $\Cmc_{\Tmc_0, \Amc_0}$, a CQ $\q_0$, and a homomorphism
$\q_0(c,a) \rightarrow \Cmc_{\Tmc_0, \Amc_0}$ witnessing that $(c,a)$
is a certain answer to $\q_0$. 
\end{example}

\subsubsection{The considered classes of queries and ontologies}
For ontologies, the parameter of interest is the depth of an ontology.
An ontology $\Tmc$ is \emph{of depth} $\omega$ 
if there is a data instance $\Amc$ such that the domain of $\canmod$ is infinite;
$\Tmc$ is \emph{of depth} $d$, $0 \le d < \omega$, if $d$ is the greatest number such that 
some $\canmod$ 
contains an element of the form $a \varrho_1 \dots \varrho_{d}$. 
Clearly, the depth of $\Tmc$
can be computed in polynomial time,
and if $\Tmc$ is of finite depth, then its depth cannot exceed $2 |\Tmc|$. 

The various classes of tree-like queries considered in this paper are defined 
by associating with every CQ $\q$ the undirected graph $G_\q$ 
whose vertices are the variables of $\q$,
and which contains an edge $\{u,v\}$ whenever $\q$ contains
some atom $R(u,v)$ or $R(v,u)$. We call a CQ $\q$ \emph{tree-shaped} 
if the graph $G_\q$ is acyclic, and we say that $\q$ has $k$ \emph{leaves} if 
the graph $G_\q$ contains exactly $k$ vertices of degree 1. 
A \emph{linear} CQ is a tree-shaped CQ with $2$ leaves.  

The most general class of queries we consider are \emph{bounded treewidth queries}. 
We recall that a \emph{tree decomposition} of an undirected graph $G=(V,E)$ is a pair $(T,\lambda)$
such that $T$ is an (undirected) tree and $\lambda$ assigns a label
$\lambda(N) \subseteq V$ 
to every node $N$ of $T$ such that the following conditions are satisfied:
\begin{enumerate}
\item For every $v \in V$, there exists a node $N$ with $v \in \lambda(N)$.
\item For every 
$e \in E$, 
there exists a node $N$ with $e \subseteq \lambda(N)$.
\item For every $v \in V$, the nodes $\{N\mid v \in \lambda(N)\} $ induce a connected subtree of~$T$.
\end{enumerate}
The \emph{width of a tree decomposition} $(T, \lambda)$ is equal to $\mathsf{max}_{N} |\lambda(N)| - 1$,
and the \emph{treewidth of a graph} $G$ is the minimum width over all tree decompositions of $G$.
The \emph{treewidth of a CQ} $\q$ is defined as the treewidth of the graph~$G_\q$.

\subsection{Query Rewriting and Boolean Functions}\label{sec:fns}
We next recall the definition of query rewriting and show how the (worst-case) size of rewritings can be
related to representations of particular Boolean functions. 
We assume the reader is familiar with Boolean circuits~\cite{Arora&Barak09,Jukna12},
built using $\AND$, $\OR$, $\NOT$ and input 
gates.
The \emph{size of a circuit} $\Cir$, denoted $|\Cir|$, is defined as the number of its gates.
We will be particularly interested in \emph{monotone circuits} (that is, circuits with no $\NOT$ gates).
(Monotone) \emph{formulas} are (monotone) circuits whose underlying graph is a tree.
\subsubsection{Query rewriting}
With every data instance $\Amc$, we associate the interpretation $\Imc_\Amc$ 
whose domain is $\ainds(\Amc)$
and whose interpretation function makes true precisely the facts in~$\Amc$.
We say an FO formula $\q'(\vec{x})$ with free variables $\vec{x}$ and without constants 
 is an \emph{\FO-rewriting of a CQ $\q(\vec{x})$ and an ontology $\Tmc$ } if, for any data instance $\Amc$  and tuple $\vec{a}\subseteq \ind(\Amc)$, we have $\Tmc, \Amc \models \q(\vec{a})$ iff $\Imc_\Amc \models \q'(\vec{a})$. 
If $\q'$ is a positive existential formula (i.e.\ it only uses $\exists$, $\land$, $\lor$), then it is called a \emph{\PE-rewriting} of $\q$ and $\Tmc$. 

We also consider rewritings in the form of nonrecursive Datalog queries.
We remind the reader that a \emph{Datalog program} 
is a finite set of rules 
$\forall \vec{x}\, (\gamma_1 \land \dots \land \gamma_m \to \gamma_0)$,
where each $\gamma_i$ is an atom of the form $G(x_1,\dots,x_l)$ with  $x_i \in \vec{x}$. 
The atom $\gamma_0$ is called the \emph{head} of the rule, 
and $\gamma_1,\dots,\gamma_m$ its \emph{body}. 
All variables in the head must also occur in the body. 
A predicate $G$ \emph{depends} on a predicate $H$ in program $\Pi$ if $\Pi$ contains a rule whose head predicate is $G$ and whose body contains $H$. The program $\Pi$ is called \emph{nonrecursive} if there are no cycles in the dependence relation for $\Pi$. 
For a nonrecursive Datalog program $\Pi$ and a predicate $\goal$, 
we say that $(\Pi,\goal)$ is an \emph{\NDL-rewriting of $\q$ and $\Tmc$} in case $\Tmc,\Amc \models \q(\vec{a})$ iff  $\Pi,\mathcal{A} \models \goal(\vec{a})$, 
for every data instance $\Amc$ and tuple $\vec{a} \subseteq \ind(\mathcal{A})$.

\begin{remark}
Observe that we disallow constants in rewritings, that is, we consider so-called 
\emph{pure rewritings},  as studied in \cite{DBLP:conf/icalp/KikotKPZ12,lics14-KKPZ} 
and implemented in existing rewriting systems. 
Impure rewritings, which use existential quantification over fixed constants, behave differently regarding 
succinctness \cite{DBLP:conf/kr/GottlobS12}. Please see 
 \cite{AIJ-forthcoming} for detailed discussion.
\end{remark}
\vspace*{-1mm}
\subsubsection{Upper bounds via tree witness functions}
The upper bounds on rewriting size shown in \cite{lics14-KKPZ} rely on
associating a Boolean function $\twfn$ 
with every query $\q$ and ontology $\Tmc$. 
The definition of the function 
$\twfn$ makes essential use of the notion of tree witness \cite{KR10our}, which we recall next. 

For every $\varrho \in \rni$, 
we let $\canmodrho$ be the canonical model of the KB 
$(\Tmc \cup \{A_\varrho(x) \rightarrow \exists y \varrho(x,y)\}, \{A_\varrho(a)\})$,
where $A_\varrho$ is a fresh unary predicate. 
Given a CQ $q$  
and a pair $\t = (\tr, \ti)$ of disjoint subsets of $\mathsf{vars}(\q)$ 
such that $\ti\subseteq \vars(\q) \setminus \avars(\q)$
and $\ti \ne\emptyset$, 
we set
$
\q_\t \ = \ \{\, S(\vec{z}) \in \q \mid \vec{z} \subseteq \tr\cup \ti \text{ and } \vec{z}\not\subseteq \tr\,\}.
$
The pair $\t = (\tr, \ti)$ is called a \emph{tree witness for $\q$ and $\Tmc$ generated by $\varrho$} 
if there is a homomorphism $h \colon \q_\t  \to \canmodrho$ sending $\tr$ to $a$ and 
$\q_\t$ is a minimal subset of $\q$ that contains all atoms involving a variable from $\ti$.
We denote by $\twset$ (resp.\ $\twset[\varrho]$) the set of tree witnesses for $\q$ and $\Tmc$ (resp.\ generated by $\varrho$).

\begin{example} 
There are 3 tree witnesses for $\q_0$ and $\Tmc_0$: 
$\t^1
\hspace{-0.4mm}
= 
\hspace{-0.4mm}
(\{x_2,y_2\}, \{y_1,y_3,y_4,y_5\})$,  
$\t^2
\hspace{-0.4mm}
= 
\hspace{-0.4mm}
(\{y_1,y_4\}, \{y_3, y_5\})$, and $\t^3 =(\{y_3\},\{y_5\})$, generated by $P$, $S$, and $T^-$ respectively. 
\end{example}

Every homomorphism of $\q$ into $\canmod$ 
induces a partition of $\q$ into subqueries $\q_{\t_1}, \ldots, \q_{\t_n}$ ($\t_i \in \twset$) that are mapped 
into 
the anonymous part and the remaining atoms 
that are mapped into the core. 
The \emph{tree witness function for $\q$ and $\Tmc$} 
captures the different 
ways of partitioning $\q$:
\vspace*{-1.5mm}
\begin{equation*}
\twfn = \!\!\!\!\bigvee_{\substack{\Theta \subseteq \twset\\ \text{ independent}}}
\bigg(\bigwedge_{\atom \in \q \setminus \q_\Theta} \!\!p_\atom 
 \wedge  \bigwedge_{\t \in \Theta} p_\t \bigg) \vspace*{-1.5mm}
\end{equation*}
Here $p_\atom $ and $p_\t$ are Boolean variables, 
$\q_\Theta$ stands for 
$\bigcup_{\t \in \Theta} \q_\t$, and `$\Theta$ independent' 
means 
$\q_\t \cap \q_\t' = \emptyset$ for all $\t \neq \t' \in \Theta$. 

In \cite{lics14-KKPZ}, it is shown how a Boolean formula or circuit computing $\twfn$ can be 
transformed into
a rewriting of $\q$ and $\Tmc$.
Thus, the circuit complexity of $\twfn$ 
provides an upper bound on the size of rewritings of $\q$ and $\Tmc$. 

\begin{theorem}[from \cite{lics14-KKPZ}]\label{TW2rew}
If $\twfn$  is computed by a \textup{(}monotone\textup{)} Boolean formula $\chi$ then
there is a \textup{(}\PE-\textup{)} \FO-rewriting of $\q$ and $\Tmc$ of size $O(|\chi| \cdot |\q| \cdot |\Tmc|)$.

If $\twfn$ is computed by a monotone Boolean circuit $\Cir$ then there is an \NDL-rewriting of $\q$ and $\Tmc$ of size $O(|\Cir|\cdot |\q| \cdot |\Tmc|)$.
\end{theorem}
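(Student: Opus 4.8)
The plan is to turn a (monotone) formula, resp.\ circuit, for $\twfn$ into a rewriting by an essentially gate-by-gate translation, exploiting that the monomials of $\twfn$ mirror the possible ways a homomorphism of $\q$ into $\canmod$ can be split into a part landing in the core and parts landing inside the anonymous trees. First I would record the \emph{tree-witness characterisation} of certain answers underlying the definition of $\twfn$. By \eqref{eq:canmod} we have $\Tmc,\Amc\models\q(\vec{a})$ iff $\canmod\models\q(\vec{a})$, and because the anonymous part of $\canmod$ is a disjoint union of trees rooted at the individuals, every homomorphism $h\colon\q(\vec a)\to\canmod$ determines: an \emph{independent} set $\Theta\subseteq\twset$ (the tree witnesses whose $\q_\t$ lands inside an anonymous tree), a map $g$ sending each $\tr$ with $\t\in\Theta$ to a \emph{single} individual that ``realises'' $\t$, and a homomorphism of $\q\setminus\q_\Theta$ --- with the variables of each such $\tr$ identified according to $g$ --- into the one-step saturation of $\Amc$ (all facts $A(b)$ with $\Tmc,\Amc\models A(b)$ and $P(b,b')$ with $\Tmc,\Amc\models P(b,b')$) that agrees with $g$ and maps $\vec x$ to $\vec a$; conversely, any such data glues back to a homomorphism into $\canmod$, the independence of $\Theta$ being exactly what keeps the anonymous pieces from interacting. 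Hence $\twfn$ is satisfied by the Boolean assignment setting $p_\atom$ to ``$\atom$ holds in the saturation under $h$'' and $p_\t$ to ``$\t$ is realised at $g(\tr)$'' precisely when such an $h$ exists.

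For the formula case I would fix, for each atom $\atom$ of $\q$, the atomic \PE-rewriting $\atom^{\dagger}$ of $\atom$ w.r.t.\ $\Tmc$ (a disjunction over the at most $|\sig(\Tmc)|$ concepts/roles forced to contain $\atom$'s predicate, each disjunct of constant size, so $|\atom^{\dagger}|=O(|\Tmc|)$) and, for each tree witness $\t\in\twset$ generated by $\varrho$, a \PE-formula $\t^{\dagger}(v)$ of size $O(|\Tmc|)$ expressing that $v$ realises $\t$ --- essentially the atomic rewriting of $\exists y\,\varrho(v,y)$, a disjunction over the $\tau$ with $\Tmc\models\tau(x)\to\exists y\,\varrho(x,y)$ with each $\tau(v)$ atomically rewritten --- as provided by the tree-witness machinery of \cite{KR10our,DBLP:conf/icalp/KikotKPZ12}. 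I would then build $\q'(\vec x)$ from $\chi$ by replacing each input $p_\atom$ with $\atom^{\dagger}$, each input $p_\t$ with $\t^{\dagger}$ applied to a chosen representative variable of $\tr$, replacing $\wedge,\vee$ with $\wedge,\vee$, globally substituting that representative for the other variables of $\tr$ (merging representatives of overlapping roots), and finally prefixing existential quantifiers over all remaining non-answer variables. Since anchors are themselves query variables, every subformula mentions at most $|\vars(\q)|$ variables, so a leaf blows up by a factor $O(|\q|\cdot|\Tmc|)$ and $|\q'|=O(|\chi|\cdot|\q|\cdot|\Tmc|)$; $\q'$ uses only $\exists,\wedge,\vee$ when $\chi$ is monotone (hence \PE) and is \FO\ in general. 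Correctness is the characterisation above: $\Imc_\Amc\models\q'(\vec a)$ iff some ``ABox-derived'' assignment satisfies $\chi$, i.e.\ $\twfn$, iff $\canmod\models\q(\vec a)$, iff $\Tmc,\Amc\models\q(\vec a)$.

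For the circuit case I would perform the same translation into nonrecursive Datalog: introduce an IDB predicate $G_g$ for every gate $g$ of $\Cir$ whose arguments are the (at most $|\vars(\q)|$) variables occurring in the subformulas attached below $g$, with the answer variables left free only at the output gate; an $\wedge$-gate with children $g_1,g_2$ gives the rule $G_g(\vec z)\gets G_{g_1}(\vec z_1)\wedge G_{g_2}(\vec z_2)$, an $\vee$-gate the two rules $G_g(\vec z)\gets G_{g_i}(\vec z_i)$, an input $p_\atom$ the constantly many rules obtained from $\atom^{\dagger}$, and an input $p_\t$ the rules obtained from $\t^{\dagger}$; the goal predicate is the output-gate predicate on $\vec x$. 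Absence of $\NOT$ gates keeps the program monotone, and its dependency graph mirrors the DAG of $\Cir$, so it is nonrecursive. There are $O(|\Cir|)$ rules, each of size $O(|\q|\cdot|\Tmc|)$ (arity and body atoms bounded by $O(|\q|)$, input rules blown up by $O(|\Tmc|)$), for a total of $O(|\Cir|\cdot|\q|\cdot|\Tmc|)$; a structural induction on $\Cir$ then shows $\Pi,\Amc\models\goal(\vec a)$ iff $\Cir$ evaluates to $1$ on an ABox-derived assignment for some choice of anchors, which is again equivalent to $\Tmc,\Amc\models\q(\vec a)$.

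The step I expect to be the main obstacle is not the size counting but verifying that the gate-wise substitution genuinely corresponds to homomorphisms: one must show that every assignment satisfying $\twfn$ that arises from an ABox configuration can be glued into a single homomorphism into $\canmod$ (using independence of $\Theta$ together with the fact that the only variables $\q_\t$ shares with $\q\setminus\q_\t$ lie in $\tr$), and that a query variable occurring simultaneously in some $\tr$, in $\q_{\t'}$ for another tree witness, and in a core atom is treated consistently by the single leading block of quantifiers (resp.\ by the shared arguments of the IDB predicates) even though, for a non-DNF $\chi$, these occurrences are scattered across unrelated gates. Everything else is routine bookkeeping on top of the tree-witness apparatus of \cite{KR10our,DBLP:conf/icalp/KikotKPZ12,lics14-KKPZ}.
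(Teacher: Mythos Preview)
Your high-level strategy --- substitute an FO-formula for each propositional input and then invoke Boolean equivalence with the explicit DNF $\twfn$ --- is exactly the approach used for this result (the paper imports Theorem~\ref{TW2rew} from~\cite{lics14-KKPZ} without reproving it, but the argument is spelled out for the analogous Theorem~\ref{Hom2rew} in the appendix). Your description of the tree-witness characterisation of certain answers is also accurate. The gap is in how you treat the inputs $p_\t$.

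You replace $p_\t$ by $\t^{\dagger}(v)$ for a chosen representative $v\in\tr$ and then \emph{globally} substitute $v$ for the remaining variables of $\tr$. This is not merely hard to verify but actually incorrect: those same variables occur inside the substitutes $\atom^{\dagger}$ for core atoms, and in disjuncts of $\twfn$ with $\t\notin\Theta$ they must be allowed to take distinct values. Concretely, take $\q=R(x_1,y)\wedge R(x_2,y)$ with $x_1,x_2$ answer variables and a tree witness $\t$ with $\tr=\{x_1,x_2\}$, $\ti=\{y\}$; your global identification of $x_1$ and $x_2$ also collapses them inside the ``core'' disjunct $p_{R(x_1,y)}\wedge p_{R(x_2,y)}$, and the resulting formula then admits spurious answers $(a,b)$ with $a\neq b$. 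Your closing paragraph flags exactly this consistency issue as the obstacle, but the construction as written cannot be rescued by a more careful correctness argument.

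The fix, visible in the proof of Theorem~\ref{Hom2rew}, is to keep the substitution purely local: replace $p_\t$ by a self-contained formula that \emph{carries} the identification of the root variables, e.g.\ $\bigvee_{\varrho:\,\t\in\twset[\varrho]}\exists z\,\big(\rho_\varrho(z)\wedge\bigwedge_{z'\in\tr}z'{=}z\big)$. Once every propositional variable is replaced by a fixed FO-formula over $\vars(\q)$, Boolean equivalence of $\chi$ and $\twfn$ yields FO-equivalence of the substituted formulas immediately, and correctness reduces to checking the substituted DNF --- which is precisely the tree-witness rewriting of~\cite{lics14-KKPZ}; no cross-gate consistency argument is needed at all. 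For the \NDL\ part the paper likewise avoids your variable-arity gate predicates (which produce unsafe rules at $\vee$-gates whose children mention fewer variables than the gate itself): every $G_i$ is given the full arity $|\vars(\q)|$, and a domain predicate $D(\vec z)$ built from the EDB signature is conjoined into each body to restore safety.
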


Observe that 
$\twfn$ contains a variable $p_\t$ for every tree witness $\t$, and so 
it can only be used to show 
polynomial upper bounds in cases where $|\twset|$
is bounded polynomially in $|\q|$ and $|\Tmc|$.
%
We therefore introduce the following variant: 
\begin{equation*}
\homfn = \!\!\!\!\bigvee_{\substack{\Theta \subseteq \twset\\ \text{ independent}}}
\bigg(\bigwedge_{\atom \in \q \setminus \q_\Theta} \!\!p_\atom 
 \wedge  \bigwedge_{\t \in \Theta} \big(\!\!\bigwedge_{z,z'\in\t} \!\!p_{z=z'} \wedge\!\! \bigvee_{\substack{\varrho \in \rni,\\ \t \in \twset[\varrho]}}\!\! \bigwedge_{z \in \t} p_{z}^\varrho\big)\bigg)  
\end{equation*}
Intuitively, 
we use $p_{z=z'}$ to enforce that variables $z$ and $z'$ are mapped to elements of $\canmod$ that
begin by the same individual constant and $p^{\varrho}_z$ to ensure that $z$ is mapped to an element whose initial constant $a$
 satisfies $\Tmc, \Amc \models \exists y \varrho(a,y)$.

We observe that the number of variables in $\homfn$ is polynomially bounded in $|\q|$ and $|\Tmc|$.
Moreover, we can prove that it 
has the same properties as $\twfn$ regarding upper bounds.

\begin{theorem}\label{Hom2rew} Thm.\ \ref{TW2rew} remains true 
 if $\twfn$ is replaced by $\homfn$.
\end{theorem}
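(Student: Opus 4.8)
The plan is to mimic the proof of Theorem~\ref{TW2rew} from~\cite{lics14-KKPZ}, showing that a (monotone) formula or monotone circuit computing $\homfn$ can be turned into a rewriting of $\q$ and $\Tmc$ of size $O(|\chi|\cdot|\q|\cdot|\Tmc|)$ (resp.\ $O(|\Cir|\cdot|\q|\cdot|\Tmc|)$). The key observation is that $\homfn$ is logically equivalent to $\twfn$ once the auxiliary variables are read off from a homomorphism: given a homomorphism $h\colon\q\to\canmod$, a tree witness $\t\in\Theta$ realised by $h$ has all its variables sent into the subtree below a single anonymous branch, which forces $p_{z=z'}$ for $z,z'\in\t$ and $p_z^\varrho$ for the generating $\varrho$; conversely, any satisfying assignment of the $p$-variables arising from the data instance yields the required partition. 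Formally, I would prove that for every data instance $\Amc$, $\Tmc,\Amc\models\q(\vec a)$ iff the assignment that sets $p_\atom$ true when the core atoms are satisfied, $p_{z=z'}$ true when $z$ and $z'$ can be identified with the same constant, and $p_z^\varrho$ true when $\Tmc,\Amc\models\exists y\,\varrho(a,y)$, satisfies $\homfn$ --- this is essentially the same correctness argument as for $\twfn$, with the single variable $p_\t$ replaced by the conjunction $\bigwedge_{z,z'\in\t}p_{z=z'}\wedge\bigvee_{\varrho}\bigwedge_{z\in\t}p_z^\varrho$ that encodes exactly the conditions under which the tree witness $\t$ can be used in a homomorphism.

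Concretely, I would first recall (from the proof of Theorem~\ref{TW2rew}) the translation that converts each input gate of a circuit for $\twfn$ into a small conjunctive formula expressing the corresponding atomic condition: $p_\atom$ becomes the atom $\atom$ itself, and $p_\t$ becomes a fixed PE-formula $\exists\vec z\,\bigwedge_{S(\vec w)\in\q_\t}S(\vec z_{\vec w})$ asserting that $\q_\t$ maps into the anonymous part. Then I would replace that last translation: $p_{z=z'}$ is rewritten to the formula ``$z$ and $z'$ lie in the same tree of the forest'', which for OWL~2~QL is captured by a bounded-size PE-formula over the ABox signature (using that two anonymous elements are in the same tree iff they share the root constant, detectable via the finitely many $\varrho$ with $\Tmc,\Amc\models\exists y\,\varrho(a,y)$), and $p_z^\varrho$ is rewritten to the PE-formula certifying $\Tmc,\Amc\models\exists y\,\varrho(z,y)$, which is a fixed union of conjunctions of size $O(|\Tmc|)$ (the standard ``atomic rewriting'' of $\exists y\,\varrho(x,y)$). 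Substituting these into a formula/circuit computing $\homfn$ and performing the same gate-by-gate replacement of $\AND$ by $\wedge$, $\OR$ by $\vee$, with existential quantification over the fresh variables introduced by the $p_\t$-style gadgets, gives a PE- or NDL-rewriting; the size bound $O(|\chi|\cdot|\q|\cdot|\Tmc|)$ follows since each gate is replaced by a gadget of size $O(|\q|\cdot|\Tmc|)$.

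The main obstacle is getting the semantics of the new variables $p_{z=z'}$ and $p_z^\varrho$ exactly right, so that $\homfn$ under the intended assignment is equivalent to $\twfn$ under its intended assignment. In particular one must check that when a tree witness $\t$ is used, the variables of $\ti$ are genuinely mapped into a single anonymous subtree, which requires that (i) all variables of $\t$ agree on their root constant --- this is what $\bigwedge_{z,z'\in\t}p_{z=z'}$ encodes --- and (ii) that root constant admits an outgoing $\varrho$-edge for some $\varrho$ generating $\t$ --- encoded by $\bigvee_\varrho\bigwedge_{z\in\t}p_z^\varrho$. The subtle point is that the existence of the homomorphism $h\colon\q_\t\to\canmodrho$ (part of the definition of tree witness) is \emph{query-and-ontology-dependent} and already built into the index set of the disjunction, so $\homfn$ only needs the per-data-instance conditions; I would argue that the ``local'' conditions (i)--(ii) together with the existence of the abstract tree witness are necessary and sufficient for $h$ to extend to a homomorphism of $\q$ into $\canmod$, exactly as the single variable $p_\t$ did in~\cite{lics14-KKPZ}. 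Once this equivalence is established, Theorem~\ref{TW2rew} applied to the modified construction yields the claim.
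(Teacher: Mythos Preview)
Your high-level strategy is the paper's: replace each Boolean variable of $\homfn$ by a small first-order gadget and verify the result against the known tree-witness rewriting from~\cite{lics14-KKPZ}. However, your concrete translations miss the mechanism by which rewritings work, and as written the proposal would not go through.

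A minor correction first: in the proof of Theorem~\ref{TW2rew}, $p_\atom$ is translated to the atomic PE-rewriting $\rho_\atom=\bigvee_{\Tmc\models\xi\to\atom}\xi$, not to $\atom$ itself; and $p_\t$ is not translated to a copy of the subquery $\q_\t$ (``$\exists\vec z\,\bigwedge_{S(\vec w)\in\q_\t}S(\vec z_{\vec w})$'') but to the formula $\bigvee_\varrho\exists z\,(\rho_\varrho(z)\wedge\bigwedge_{z'\in\tr}z'{=}z)$, which merely asserts that some ABox constant generates a $\varrho$-tree capable of absorbing $\q_\t$. The rewriting never re-expresses the anonymous part.

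The real gap is your handling of $p_{z=z'}$. You propose ``a bounded-size PE-formula over the ABox signature'' detecting that $z$ and $z'$ lie in the same anonymous subtree, to be built ``via the finitely many $\varrho$ with $\Tmc,\Amc\models\exists y\,\varrho(a,y)$''. This is confused: a rewriting is evaluated over $\Imc_\Amc$, whose domain is $\ainds(\Amc)$, so every variable---including the interior variables of a tree witness---is instantiated by an ABox constant, and there is no anonymous subtree to speak of at evaluation time. For two constants, ``same root constant'' is literal equality. The paper therefore translates $p_{z=z'}$ to nothing more than the equality atom $z=z'$, and $p_z^\varrho$ to the atomic rewriting $\rho_\varrho(z)$ of $\exists y\,\varrho(z,y)$. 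After existentially quantifying $\vec y$, the resulting formula is compared to the known rewriting $\q'''$ from~\cite{lics14-KKPZ}: the per-tree-witness conjunct $\bigwedge_{z,z'\in\t}z{=}z'\wedge\bigvee_\varrho\bigwedge_{z\in\t}\rho_\varrho(z)$ is equivalent to $\bigvee_\varrho\exists z\,(\rho_\varrho(z)\wedge\bigwedge_{z'\in\tr}z'{=}z)$ once all the variables of $\t$ are collapsed to a single constant, and the interior variables appear in no other atom of the disjunct, so the extra equalities are harmless.
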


\subsubsection{Lower bounds via \nameprimfn\ functions} 
In order to obtain lower bounds on the size of rewritings, 
we associate with each pair $(\q, \Tmc)$ a third function $\primfn$
that describes the result of evaluating $\q$ over data instances containing a single individual constant. 
Given an assignment $\gamma: \mathsf{sig}(\Tmc) \cup \mathsf{sig}(\q)  \to \{0,1\}$, 
we let
$$\Amc_\gamma = \{ A(a) \mid \gamma(A) = 1\} \cup \{ R(a,a) \mid \gamma(R) = 1\}$$ 
and set
$\primfn(\gamma) = 1$ iff $\Tmc, \Amc_\gamma \models \q(\vec{a})$, 
where $\vec{a}$
is the tuple of $a$'s of the required length.
We call $f^\primsuper_{\q, \Tmc}$ the \emph{\nameprimfn\ function}
for $\q$ and $\Tmc$.

\begin{theorem}[implicit in \cite{lics14-KKPZ}]\label{rew2prim}
If $\q'$ is a \textup{(}\PE-\textup{)} \FO-rewriting of $\q$ and $\Tmc$, then
$\primfn$ is computed by a \textup{(}monotone\textup{)} Boolean formula  of size $O(|\q'|)$.

If $(\Pi, G)$ is an \NDL-rewriting of $\q$ and $\Tmc$, then $\primfn$ is computed by a
monotone Boolean circuit of size $O(|\Pi|)$.
\end{theorem}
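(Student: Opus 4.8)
The plan is to observe that a data instance of the form $\Amc_\gamma$ has a \emph{single} domain element $a$, so that evaluating a rewriting over $\Imc_{\Amc_\gamma}$ degenerates into a purely Boolean computation: the only facts that can hold are $A(a)$ and $R(a,a)$, and whether they hold is exactly the input bit $\gamma(A)$, resp.\ $\gamma(R)$. I would therefore replay the syntactic structure of the given rewriting, replacing each construct by its Boolean counterpart, and check that the size grows by at most a constant factor.

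For the \PE/\FO\ case, given an \FO-rewriting $\q'(\vec x)$, I would define by structural recursion, for each subformula $\psi$ of $\q'$, a Boolean formula $\widetilde\psi$ over variables $p_P$ indexed by $P \in \sig(\Tmc)\cup\sig(\q)$: an atom $A(z)$ becomes $p_A$, an atom $R(z,z')$ becomes $p_R$ (an atom over a predicate outside $\sig(\Tmc)\cup\sig(\q)$ becomes the constant $0$, an equality atom becomes $1$), $\wedge,\vee,\neg$ are kept verbatim, and a quantifier $\exists y\,\psi'$ or $\forall y\,\psi'$ is simply deleted, since over the one-element domain the bound variable can only be sent to $a$. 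A routine induction then gives $\Imc_{\Amc_\gamma}\models\psi(a,\dots,a)$ iff $\widetilde\psi$ evaluates to $1$ under $\gamma$; taking $\psi=\q'$ and using $\Tmc,\Amc_\gamma\models\q(\vec a)$ iff $\Imc_{\Amc_\gamma}\models\q'(\vec a)$ shows that $\widetilde{\q'}$ computes $\primfn$. Each atom or connective of $\q'$ contributes $O(1)$ gates and the quantifiers vanish, so $|\widetilde{\q'}|=O(|\q'|)$; and if $\q'$ is a \PE-formula it has neither negations nor universal quantifiers, so $\widetilde{\q'}$ is monotone.

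For the \NDL\ case, given $(\Pi,\goal)$, I would build a monotone circuit with one gate $g_P$ for each predicate $P$ occurring in $\Pi$, designed to evaluate to $1$ iff $P(a,\dots,a)$ lies in the least model of $\Pi$ over $\Amc_\gamma$. For an extensional predicate I put $g_P=p_P$ (or $0$ if $P\notin\sig(\Tmc)\cup\sig(\q)$); for an intensional predicate $H$ I set $g_H$ to be the disjunction, over all rules of $\Pi$ with head predicate $H$, of the conjunction of the gates $g_{P_i}$ of the body predicates $P_i$. The point that needs care is that, over the one-element domain, a rule body is satisfiable precisely when every body predicate is derivable — every variable must be mapped to $a$, so all joins are trivial — and, since all head variables occur in the body, firing the rule derives exactly $H(a,\dots,a)$. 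Nonrecursivity of $\Pi$ makes the dependency graph acyclic, so the $g_P$ form a genuine loop-free circuit; its size is one gate per predicate, plus the $\AND$-gates for rule bodies and the $\OR$-gates combining rules, i.e.\ $O(|\Pi|)$ in total. Finally $\primfn(\gamma)=1$ iff $\Pi,\Amc_\gamma\models\goal(\vec a)$ iff $g_\goal$ evaluates to $1$ under $\gamma$, so the circuit computes $\primfn$, and it is monotone by construction.

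The routine part is the two inductions and the size accounting. The only genuinely delicate step — and what I would regard as the (modest) main obstacle — is verifying that evaluation over the singleton structure $\Imc_{\Amc_\gamma}$ collapses correctly: that an \FO\ quantifier over a one-element domain is equivalent to substituting $a$, and that a Datalog rule body over a one-element domain holds iff all of its relation symbols are derived. Once these facts are pinned down, everything else is bookkeeping.
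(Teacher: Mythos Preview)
Your proposal is correct and essentially identical to the paper's proof: substitute $a$ for all variables (which the paper does explicitly, and you accomplish by deleting quantifiers and noting the domain is a singleton), then replace atoms by propositional variables and, for \NDL, introduce an \OR-gate per head predicate and an \AND-gate per rule body. Your treatment is slightly more explicit about equality atoms and out-of-signature predicates, but otherwise the two arguments coincide.
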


\section{Succinctness Results for Query Rewriting}
\label{sec:succ}
In this section, we relate the upper and lower bound functions from Section \ref{sec:fns} to non-uniform models of computation,
which allows us to exploit results from circuit complexity to infer bounds on rewriting size. 
As in \cite{lics14-KKPZ}, we use hypergraph programs (defined next) as a useful intermediate formalism. 

\subsection{Tree Hypergraph Programs (\THP s)}
A hypergraph takes the form $H = (V,E)$,  where $V$ is a set of \emph{vertices} and $E \subseteq 2^V$ a set of \emph{hyperedges}.
A subset $E' \subseteq E$ is \emph{independent} if $e \cap e' = \emptyset$, for any distinct $e,e' \in E'$.

A \emph{hypergraph program ($\HP$)} $P$ consists of a hypergraph $H_P = (V_P,E_P)$ and a 
function $\l_P$  that labels every vertex with $0$, $1$, or a conjunction of literals built from a set $L_P$ of propositional 
variables. 
An \emph{input} for $P$ is a valuation of $L_P$. 
The  \HP\ $P$ computes the Boolean function $f_P$ defined as follows:  
$f_P(\vec{\alpha})=1$ iff there is an independent subset
of $E$ that \emph{covers all zeros}---that is, contains every vertex in $V$ whose label evaluates to $0$ under $\vec{\alpha}$. 
A \HP\  is \emph{monotone} if there are no negated
variables among its vertex labels. The \emph{size} $|P|$ of \HP\ $P$ is  $|V_P| + |E_P| + |L_P|$. 
In what follows, we will focus on a particular subclass of \HP s whose hyperedges correspond to subtrees of a 
tree. 
Formally, given a tree $T=(V_T,E_T)$ and $u,v \in V_T$, 
the \emph{interval} $\la u,v \ra$
is the set of edges that appear on the unique simple path connecting $u$ and $v$.
If $v_1, \ldots, v_k \in V_T$, then the \emph{generalized interval} $\la v_1, \ldots, v_k \ra$
is defined as the union of intervals $\la v_i, v_j \ra$ over all pairs $(i,j)$.
We call $v \in V_T$ a \emph{boundary vertex} for generalized interval $I$ (w.r.t.\ $T$) 
if there exist edges 
$\{v,u\} \in  I$ and $\{v,u'\} \in E_T \setminus I$. 
A hypergraph $H = (V_H,E_H)$ is a \emph{tree hypergraph}\footnote{Our definition of tree hypergraph is a minor variant of the 
notion of (sub)tree hypergraph (aka hypertree) from graph theory, cf.\  \cite{Flament1978223,Brandstadt:1999:GCS:302970,Bretto:2013:HTI:2500991}.} 
if there is a tree $T=(V_T,E_T)$ such that $V_H=E_T$ and every hyperedge in
$E_H$ is a \emph{generalized interval of $T$ all of whose boundary vertices have degree 2 in $T$}.
A \emph{tree hypergraph program ($\THP$)} is an \HP\ 
based on 
a tree hypergraph. As a special case, we have \emph{interval hypergraphs} \cite{Brandstadt:1999:GCS:302970,Bretto:2013:HTI:2500991}
and \emph{interval \HP s}, whose underlying trees have 
exactly 2 leaves. 
%
%
\subsection{Primitive Evaluation Function and \THP s}\label{ssec:prim}
\vspace{-.5mm}
Our first step will be to show how functions given by \THP s
can be computed using primitive evaluation functions.

Consider a \THP\ $P=(H_P, \l_P)$ whose underlying tree $T$ has vertices  
$v_1, \ldots, v_n$, and 
 let $T^{\downarrow}$ be the directed tree obtained from $T$ by fixing its leaf $v_1$ as the root and orienting
edges away from~$v_1$.
We wish to construct a tree-shaped CQ $\q_P$ and an ontology $\Tmc_P$ of depth 2
whose \nameprimfn\ function $f_{\q_P, \Tmc_P}^\primsuper$ can be used to compute $f_P$. 
The query $\q_P$ is obtained by simply `doubling' the edges in $T^{\downarrow}$:
\begin{equation*}
\q_P = \exists \vec{y}\, \bigwedge_{(v_i, v_j) \in T^{\downarrow}} (S_{ij}(y_i, y_{ij}) \land S'_{ij}(y_{ij}, y_{j})).
\end{equation*}
The ontology $\Tmc_P$ is defined as the union of $\Tmc_e$ over all hyperedges $e \in E_P$.
Let $e = \la v_{i_1}, \ldots, v_{i_m} \ra \in E_P$ with $v_{i_1}$ the vertex in $e$ that is highest in $T^{\downarrow}$,
and suppose w.l.o.g.\ that every $v_{i_j}$ is either a boundary vertex of $e$ or a leaf in $T$. 
Then $\Tmc_e$ is defined as follows:
\vspace{-1mm}
\begin{align*}
& \{B_e(x) \rightarrow \exists y R_e(x,y), \exists y R_e(y,x) \rightarrow \exists y R_e'(x,y)\} \ \cup \\
& \{R_e(x,y) \rightarrow S_{i_1, k}(x,y) \mid \{v_{i_1},v_k\} \in e \}\ \cup \\
& \{R_e(y,x) \rightarrow  S_{j_\ell, i_\ell}'(x,y) \mid 1 < \ell \leq n, (v_{j_\ell},v_{i_\ell}) \in T^{\downarrow} \}\ \cup \\
& \{R_e'(x,y) \rightarrow S'_{j, k}(x,y)\mid \{v_j, v_k\} \in e, (v_j, v_k) \in T^{\downarrow}, \\
&\hspace*{3.75cm} v_k \neq v_{i_\ell} \text{ for all } 1 < \ell \leq m  \}\ \cup \\
& \{R_e'(x,y) \!\rightarrow\! S_{j,k}(y,x) \! \mid \! \{v_j, v_k\} \in e, (v_j, v_k) \in T^{\downarrow}, v_j \neq v_{i_1} \! \}. 
\end{align*}
\vspace{-.5mm}
Observe that both $\q_P$ and $\Tmc_P$ are of polynomial size in $|P|$ 
and that $\q_P$ has the same number of leaves as $T$. 
\vspace{-1mm}
\begin{example}\label{thp-ex}
Consider a \THP\ $P$ whose tree hypergraph has vertices 
$\{\{v_1, v_2\}, \{v_2, v_3\}, \{v_2,v_6\}, \{v_3, v_4\}, \{v_4,v_5\}\}$
and a single hyperedge $e = \langle v_1,v_4,v_6\rangle$.
Fixing $v_1$ as root and applying the above construction, we obtain the query $\q_P$ and the canonical model $\Cmc^{R_e}_{\Tmc_P}$ depicted 
below: 
\newline
{\centering
\scalebox{.85}{
\begin{tikzpicture}[>=latex, point/.style={circle,draw=black,thick,minimum size=1.5mm,inner sep=0pt,fill=white},
spoint/.style={rectangle,draw=black,thick,minimum size=1.5mm,inner sep=0pt,fill=white},
ipoint/.style={circle,draw=black,thick,minimum size=1.5mm,inner sep=0pt,fill=lightgray},
wiggly/.style={thick,decorate,decoration={snake,amplitude=0.3mm,segment length=2mm,post length=1mm}},
query/.style={thick},yscale=1,xscale=1]\footnotesize
\node[ipoint,label=above:{$y_1$}] (y1) at (1,2) {};
\node[ipoint,label=left:{$y_{12}$}] (y12) at (1,1) {};
\node[ipoint,label=left:{$y_2$}] (y2) at (1,0) {};
\node[ipoint,label=left:{$y_{26}$}] (y26) at (0,1) {};
\node[ipoint,label=above:{$y_{6}$}] (y6) at (0,2) {};
\node[ipoint,label=above:{$y_{23}$}] (y23) at (1.75,1) {};
\node[ipoint,label=right:{$y_3$}] (y3) at (2.5,0) {};
\node[ipoint,label=left:{$y_{34}$}] (y34) at (3.25,1) {};
\node[ipoint,label=above:{$y_{4}$}] (y4) at (3.25,2) {};
\node[ipoint,label=above:{$y_{45}$}] (y45) at (4,2) {};
\node[ipoint,label=above:{$y_{5}$}] (y5) at (4.75,2) {};
\node at (-.3, 0){$\q_e$};
\node at (4.7, 0){$\q_P$};
%
\draw[ultra thin,dashed,rounded corners=8] (-0.7,-0.25) rectangle +(4.15,2.75);
\draw[->,query] (y1) to node[left] {\scriptsize$S_{12}$} (y12);
\draw[->,query] (y12) to node[left,near start] {\scriptsize$S_{12}'$} (y2);
\draw[->,query] (y2) to node[left] {\scriptsize$S_{26}$} (y26);
\draw[->,query] (y2) to node[right, near start] {\scriptsize$S_{23}$} (y23);
\draw[->,query] (y26) to node[left] {\scriptsize$S_{26}'$} (y6);
\draw[->,query] (y23) to node[right, near start] {\scriptsize$S_{23'}$} (y3);
\draw[->,query] (y3) to node[right] {\scriptsize$S_{34}$} (y34);
\draw[->,query] (y34) to node[left, near end] {\scriptsize$S_{34}'$} (y4);
\draw[->,query] (y4) to node[below] {\scriptsize$S_{45}$} (y45);
\draw[->,query] (y45) to node[below] {\scriptsize$S_{45}'$} (y5);
\end{tikzpicture}
}
\,\,
\scalebox{.85}{
\begin{tikzpicture}[>=latex, point/.style={circle,draw=black,thick,minimum size=1.5mm,inner sep=0pt}, wiggly/.style={thick,decorate,decoration={snake,amplitude=0.3mm,segment length=2mm,post length=1mm}},
query/.style={thick},
tw/.style={shorten <= 0.1cm, shorten >= 0.1cm,dashed},yscale=1,xscale=0.9]\footnotesize
\node[fill=black] (a) at (0,3) [point, label=right:{$A_{R_e}$}, label=left:{$a$}]{};
\node (d1) at (0,1.75) [point, fill=white]{};
\node (d2) at (0,0.5) [point, fill=white]{};
\node at (3, 3){$\Cmc^{R_e}_{\Tmc_P}$};
\draw[->,wiggly] (a)  to node [right]{\scriptsize $R_e, S_{12}, S_{34}'^-,S_{26}'^-$} (d1);
\draw[->,wiggly] (d1)  to node [right]{\scriptsize $R_e', S_{12}', S_{23}^-,S_{23}',S_{34}^-, S_{26}^-$} (d2);
\end{tikzpicture}}
\newline
} 
\noindent
Observe how the axioms in $\Tmc_P$ ensure that the subquery $\q_e$ induced by the hyperedge $e$ 
maps into $\Cmc^{R_e}_{\Tmc_P}$.
\end{example}

The next theorem specifies how $f_P$ can be computed using 
the \nameprimfn\ function $f_{\q_P, \Tmc_P}^\primsuper$.

\begin{theorem}\label{tree-hg-to-query}
Let $P=(H_P, \l_P)$ be a \THP. 
For every input $\alpha$ for $P$, 
 $f_P(\alpha) = 1$ iff \mbox{$f_{\q_P,\Tmc_P}^\primsuper(\vgamma)= 1$}, 
where $\vgamma$ is defined as follows: 
$\vgamma(B_e) = 1$, $\vgamma(R_e)=\vgamma(R_e')=0$, 
and $\vgamma(S_{ij})=\vgamma(S_{ij}')= \valpha(\l_P(\{v_i,v_j\}))$. 
\end{theorem}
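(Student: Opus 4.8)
Since $\q_P$ has no answer variables, $\primfnP(\vgamma)=1$ iff $\Tmc_P,\Amc_\vgamma\models\q_P$, which by \eqref{eq:canmod} holds iff there is a homomorphism $h\colon\q_P\to\Cmc$, where $\Cmc:=\Cmc_{\Tmc_P,\Amc_\vgamma}$. The plan is to describe $\Cmc$ explicitly and then match homomorphisms $\q_P\to\Cmc$ with independent zero-covering sets of hyperedges of $P$. Since $\vgamma(B_e)=1$ for all $e\in E_P$ while $\vgamma(R_e)=\vgamma(R_e')=0$, the dataset $\Amc_\vgamma$ consists of the facts $B_e(a)$ together with the self-loops $S_{ij}(a,a),S'_{ij}(a,a)$ for the $T$-edges $\{v_i,v_j\}$ with $\valpha(\l_P(\{v_i,v_j\}))=1$; applying $\Tmc_P$, each $B_e(a)$ spawns a length-two path $a\to w_e\to w_e'$ (writing $w_e=aR_e$, $w_e'=aR_eR_e'$) and no further existential axiom fires, so --- up to dead-end leaf nulls that no variable of $\q_P$ can reach --- $\Delta^\Cmc=\{a\}\cup\{w_e,w_e':e\in E_P\}$. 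Inspecting the four families of role inclusions in each $\Tmc_e$ then gives the \emph{key observation}: apart from the self-loops at $a$, every pair in $S_{ij}^\Cmc\cup(S'_{ij})^\Cmc$ lies inside $\{a,w_e,w_e'\}$ for a single $e$, and it occurs there only when the $T$-edge $\{v_i,v_j\}$ belongs to the generalized interval $e$; moreover, on $\{a,w_e,w_e'\}$ the model $\Cmc$ contains a copy of $\Cmc^{R_e}_{\Tmc_P}$ with the $A_{R_e}$-root identified with $a$, so (as in Example~\ref{thp-ex}) there is an embedding $\q_e\hookrightarrow\Cmc$ sending the ``outer'' variables $y_{i_1},\dots,y_{i_m}$ of $e$ to $a$ and the remaining variables of $\q_e$ into $\{w_e,w_e'\}$.

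For ``$\Rightarrow$'', given an independent $E'\subseteq E_P$ covering all zeros, I would define $h$ by applying, for each $e\in E'$, the embedding of $\q_e$ above and sending every other variable of $\q_P$ to $a$. Independence makes each $T$-edge lie in at most one $e\in E'$, so the partial maps agree on overlaps (the outer variables of any $e\in E'$ go to $a$, as do the three variables $y_i,y_{ij},y_j$ of any $T$-edge $\{v_i,v_j\}$ covered by no $e\in E'$). The atoms of $\q_P$ arising from a covered $T$-edge are satisfied inside the relevant copy of $\q_e$, and the two atoms arising from an uncovered $T$-edge $\{v_i,v_j\}$ are satisfied by $S_{ij}(a,a),S'_{ij}(a,a)$, which belong to $\Amc_\vgamma$ because $E'$ covers all zeros and hence $\valpha(\l_P(\{v_i,v_j\}))=1$. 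Thus $h\colon\q_P\to\Cmc$ and $\primfnP(\vgamma)=1$.

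For ``$\Leftarrow$'', given $h\colon\q_P\to\Cmc$, put $\Theta=\{e\in E_P : h(y)\in\{w_e,w_e'\}\text{ for some variable }y\text{ of }\q_P\}$, regarded as a set of hyperedges; I claim $\Theta$ is independent and covers all zeros, whence $f_P(\alpha)=1$. Using the key observation one first shows that, for each $T$-edge $\{v_i,v_j\}$ (oriented $v_i\to v_j$ in $T^{\downarrow}$), either $h(y_{ij})=a$ --- and then $h(y_i)=h(y_j)=a$ and $\valpha(\l_P(\{v_i,v_j\}))=1$ (``core edge'') --- or $h(y_{ij})=w_e$ for a \emph{unique} $e$, necessarily with $\{v_i,v_j\}\in e$ (``$e$-active edge''), in which case $h(y_i),h(y_j)\in\{a,w_e'\}$; moreover $e\in\Theta$ iff $e$ has an active edge. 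Covering is then immediate: a $T$-edge with label $0$ cannot be a core edge, so it is $e$-active for some $e\in\Theta$, hence lies in that $e$. For independence it suffices to prove that $e\in\Theta$ implies \emph{every} $T$-edge of $e$ is $e$-active, since then two distinct members of $\Theta$ cannot share a $T$-edge. This is shown by propagating ``activeness'' along the subtree of $T$ spanned by $e$, starting from one active edge: here the degree-$2$ requirement on boundary vertices is essential, as it guarantees that every non-topmost, non-leaf vertex touched by $e$ has all of its $T$-edges in $e$, so that the (deliberately asymmetric) axioms of $\Tmc_e$ force $h$ to send the outer variables of $e$ to $a$, its internal variables to $w_e'$, and all its edge-variables to $w_e$, mirroring the forward construction.

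I expect this last point --- a ``tree-witness rigidity'' statement saying that once $h$ touches the anonymous chain of $e$ it must route all of $\q_e$ through it in the canonical way --- to be the main obstacle: it is the step that genuinely uses both the degree-$2$ condition on boundary vertices and the precise orientation pattern of the inclusions defining each $\Tmc_e$. Everything else (the explicit form of $\Cmc$, the key observation, the case analysis of $h$ on a single $T$-edge, and the forward construction) reduces to direct verification from the definitions and from the computation illustrated in Example~\ref{thp-ex}.
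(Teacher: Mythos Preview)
Your proposal is correct and follows essentially the same approach as the paper's proof. In both arguments the forward direction builds $h$ by sending, for each $e\in E'$, the outer variables of $\q_e$ to $a$ and the rest into $\{aR_e,aR_eR_e'\}$, while the backward direction defines the set of ``used'' hyperedges from the image of $h$ (your $\Theta$ coincides with the paper's $E'=\{e:h^{-1}(aR_e)\neq\emptyset\}$) and then reduces independence to exactly the rigidity statement you isolate: once $h$ touches the anonymous chain of $e$, every $T$-edge of $e$ must be $e$-active. The paper proves this claim by picking a \emph{highest} variable in $h^{-1}(aR_e)$, showing it must sit over the top vertex $v_{i_1}$ of $e$, and then propagating downward by induction on distance from $y_{i_1}$---which is precisely your ``propagating activeness along the subtree'' argument, and it uses the degree-$2$ boundary condition and the orientation of the $\Tmc_e$-axioms in the same way you anticipate.
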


\subsection{Bounded Treewidth Queries, \THP s, and $\SAC$}
We next show how the modified tree witness functions associated
with bounded treewidth queries and bounded depth ontologies 
can be computed using \THP s. We then relate \THP s to the 
non-uniform complexity class  $\SAC$. 

\smallskip

Suppose we are given a TBox $\Tmc$ of depth $d$,
a CQ $\q$, and a tree decomposition $(T, \lambda)$
of $G_\q$ of width~$\twidth$, and we wish to define a $\THP$ that computes $\homfn$. 
In order to more easily refer to the variables in $\lambda(N)$,
we construct functions $\lambda_1, \ldots, \lambda_\twidth$
such that $\lambda_i(N) \in \lambda(N)$ and $\lambda(N) = \{\lambda_i(N) \mid 1 \leq i \leq \twidth\}$.

The basic idea underlying the construction of the $\THP$ 
is as follows: for each node $N$ in $T$, 
we select a data-independent 
description of the way
the variables in $\lambda(N)$ are homomorphically mapped into the canonical model. 
These 
 descriptions 
are given by tuples from $W_d^t=\{(w_1, \ldots, w_\twidth)\mid w_i \in (\rni \cap \sig(\Tmc))^*, |w_i|\leq d  \}$,
where the $i$th word $\vec{w}[i]$ of tuple $\vec{w} \in W_d^t $ 
indicates that variable $\lambda_i(N)$ is mapped to an element of the form $a \, \vec{w}[i]$. 
The tuple assigned to node $N$ must be compatible with the restriction of $\q$ to $\lambda(N)$
and with the tuples of neighbouring nodes. 
Formally, we say 
 $\vec{w} 
\in W_d^t$ is  \emph{compatible with node} $N$ 
 if the following conditions hold:
\begin{itemize}
\item if $A(\lambda_i(N)) \in \q$ and $\vec{w}[i] \neq \emptyword$, then 
$\vec{w}[i]= w' \varrho$ for some $\varrho \in \rni$ with $\Tmc \models \exists y\, \varrho(y,x) \rightarrow A(x)$
\item if $R(\lambda_i(N),\lambda_j(N)) \in \q$, then one of the following holds:
\begin{itemize}
\item $\vec{w}[i] = \vec{w}[j] = \emptyword$
\item $\vec{w}[j]= \vec{w}[i] \cdot \varrho$ with $\Tmc \models \varrho(x,y) \rightarrow R(x,y)$ 
\item $\vec{w}[i] = \vec{w}[j] \cdot \varrho$ with $\Tmc \models \varrho(x,y) \rightarrow R(y,x)$ 
\end{itemize}
\end{itemize}
%
%
A pair $(\vec{w}, \vec{w}')$ 
is \emph{compatible with
the pair of nodes} $(N, N')$ if $\lambda_i(N)= \lambda_j(N')$ implies that $\vec{w}[i] = \vec{w}'[j]$.

Let $W_d^t = \{\vec{w}_1, \ldots, \vec{w}_\numtypes\}$, and consider
the tree $T'$ obtained from $T$ by replacing every edge $\{N_i, N_j\}$ by the edges:
\begin{tikzpicture}[>=latex, point/.style={circle,draw=black,thick,minimum size=1.5mm,inner sep=0pt,fill=white},
spoint/.style={rectangle,draw=black,thick,minimum size=1.5mm,inner sep=0pt,fill=white},
ipoint/.style={circle,draw=black,thick,minimum size=1mm,inner sep=0pt,fill=black},
wiggly/.style={thick,decorate,decoration={snake,amplitude=0.3mm,segment length=2mm,post length=1mm}},
query/.style={thick},yscale=1,xscale=1]\footnotesize
\node[ipoint,label=left:{$N_i$}] (Ni) at (0,0) {};
\node[ipoint,label=above:{$u_{ij}^1$}] (uij1) at (0.5,0) {};
\node[ipoint,label=below:{$v_{ij}^1$}] (vij1) at (1,0) {};
\node[ipoint,label=above:{$u_{ij}^2$}] (uij2) at (1.5,0) {};
\node[ipoint,label=below:{$v_{ij}^2$}] (vij2) at (2,0) {};
\node (d1) at (2.5,0) {$\ldots$};
\node[ipoint,label=above:{$u_{ij}^M$}] (uijm) at (3,0) {};
\node[ipoint,label=below:{$v_{ij}^M$}] (vijm) at (3.5,0) {};
\node[ipoint,label=below:{$v_{ji}^M$}] (vjim) at (4,0) {};
\node[ipoint,label=above:{$u_{ji}^M$}] (ujim) at (4.5,0) {};
\node (d1) at (5,0) {$\ldots$};
\node[ipoint,label=below:{$v_{ji}^2$}] (vji2) at (5.5,0) {};
\node[ipoint,label=above:{$u_{ji}^2$}] (uji2) at (6,0) {};
\node[ipoint,label=below:{$v_{ji}^1$}] (vji1) at (6.5,0) {};
\node[ipoint,label=above:{$u_{ji}^1$}] (uji1) at (7,0) {};
\node[ipoint,label=right:{$N_j$}] (Nj) at (7.5,0) {};
%
\draw (Ni) to (uij1);
\draw (uij1) to (vij1);
\draw (vij1) to (uij2);
\draw (uij2) to (vij2);
\draw (uijm) to (vijm);
\draw (vijm) to (vjim);
\draw (vjim) to (ujim);
\draw (vji2) to (uji2);
\draw (uji2) to (vji1);
\draw (vji1) to (uji1);
\draw (uji1) to (Nj);
\end{tikzpicture}
%
The desired $\THP$ $(H_{\q, \Tmc}, \l_{\q, \Tmc})$ is based upon $T'$
and contains the following hyperedges:
\begin{itemize}
\item $E_i^k = \la u_{ij_1}^k, \ldots, u_{i j_n}^k\ra $, if $\vec{w}_k \in W_d^t$ is compatible with $N_i$ and $N_{j_1}, \ldots, N_{j_n}$ are the neighbours of $N_i$;
\item $E_{ij}^{k m} = \la v_{ij}^k, v_{ji}^m\ra $, if $\{N_i, N_j\}$ is an edge in $T$ and $(\vec{w}_k, \vec{w}_m)$ is compatible with $(N_i, N_j)$.
\end{itemize}
Intuitively, $E^k_{i}$ corresponds to assigning (compatible) tuple $\vec{w}_k$ to node $N_i$, and
hyperedges of the form $E_{ij}^{k m}$ are used to ensure compatibility of choices at neighbouring nodes. 
Vertices of $H_{\q, \Tmc}$ 
(i.e.\ the edges in $T'$) are labeled by $\l_{\q, \Tmc}$ as follows:
edges of the forms $\{N_i, u_{ij}^1\}$, $\{v_{ij}^\ell, u_{ij}^{\ell +1}\}$, and $\{v_{ij}^\numtypes, v_{ji}^\numtypes\}$ are labelled $0$,
and every edge $\{u_{ij}^\ell, v_{ij}^\ell\}$ 
is labelled by the conjunction of the following variables:
\begin{itemize}
\item $p_\atom$, if  $\atom \in \q$,
$\vars(\atom) \subseteq \lambda(N_i)$,
and $\lambda_g(N_i) \in \vars(\atom)$ implies $\vec{w}_\ell[g]= \emptyword$;
\item $p_z^\varrho$, if $\vars(\atom) = \{z\}$, 
$z = \lambda_g(N_i)$, and $\vec{w}_\ell[g]= \varrho w'$;
\item $p_z^\varrho$, $p_{z'}^\varrho$, and $p_{z=z'}$, if $\vars(\atom) = \{z, z'\}$, 
$z = \lambda_g(N_i)$, $z' = \lambda_{g'}(N_i)$,
and either $\vec{w}_\ell[g]= \varrho w'$ or $\vec{w}_\ell[g']= \varrho w'$.
\end{itemize}
We prove in the appendix that the $\THP$ $(H_{\q, \Tmc}, \l_{\q, \Tmc})$ computes $\homfn$,
which allows us to establish the following result:

%
\vspace{-0.7mm}
\begin{theorem}\label{DL2THP}
Fix $\twidth \geq 1$ and $d \geq 0$. For every ontology $\Tmc$ of depth $\leq d$ 
and CQ $\q$ of treewidth $\leq \twidth$, $\homfn$ is computed by 
a monotone $\THP$ of size polynomial in $|\Tmc| + |\q|$. 
\end{theorem}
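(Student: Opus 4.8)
The plan is to verify that the explicit construction given just before the theorem statement actually works, i.e.\ that the $\THP$ $(H_{\q,\Tmc}, \l_{\q,\Tmc})$ built from a width-$\twidth$ tree decomposition $(T,\lambda)$ of $G_\q$ and a depth-$d$ ontology $\Tmc$ computes $\homfn$, and then to read off the size bound. The size bound is the easy part: $|W_d^\twidth|$ is bounded by $(|\sig(\Tmc)|^{d+1})^\twidth$, which for fixed $\twidth$ and $d$ is polynomial in $|\Tmc|$; each edge of $T$ is replaced by $O(|W_d^\twidth|)$ new edges, each original node spawns one hyperedge per compatible tuple, and each edge pair spawns $O(|W_d^\twidth|^2)$ hyperedges, so $|H_{\q,\Tmc}|$ is polynomial in $|\Tmc|+|\q|$; the label set is the variable set of $\homfn$, also polynomial. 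So the bulk of the argument is the correctness equivalence $f_{(H_{\q,\Tmc},\l_{\q,\Tmc})} = \homfn$.

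For the correctness proof I would argue both inclusions in terms of the combinatorial characterizations. First note (as the paper's intuition for $\homfn$ makes explicit) that a satisfying assignment of $\homfn$ corresponds to choosing an independent set $\Theta\subseteq\twset$ together with, for each $\t\in\Theta$, a generating role $\varrho$ and a consistent ``initial-constant'' pattern on the variables of $\t$, and requiring that all query atoms outside $\q_\Theta$ are ``present.''  For the $\THP$ side, an accepting run is an independent family of hyperedges covering every $0$-labelled vertex.  The key observation is that the only way to cover the forced-$0$ vertices $\{N_i,u_{ij}^1\}$, $\{v_{ij}^\ell,u_{ij}^{\ell+1}\}$ and the ``bridge'' vertices $\{v_{ij}^\numtypes,v_{ji}^\numtypes\}$ is: pick exactly one hyperedge $E_i^k$ at each node $N_i$ (this is forced because the $u$-vertices on every branch out of $N_i$ must be covered, the $E_i^k$ are the only hyperedges touching them, and independence forbids two of them), and pick exactly one bridge hyperedge $E_{ij}^{km}$ per edge $\{N_i,N_j\}$ of $T$; independence of $E_i^k$ with $E_{ij}^{km}$ and $E_j^{m'}$ then forces $m=k$ at the $i$-end and $m'$ equal to the $m$ chosen for the $j$-end, so the bridge edge certifies exactly the pairwise-compatibility condition $(\vec w_k,\vec w_{m'})$ compatible with $(N_i,N_j)$. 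Thus an accepting run is precisely a globally consistent assignment $N_i\mapsto \vec w_{k(i)}\in W_d^\twidth$ of compatible tuples. Then the leftover $0$-labelled constraints are exactly the vertices $\{u_{ij}^\ell,v_{ij}^\ell\}$ with $\ell\neq k(i)$ whose conjunctive labels must evaluate to $1$; I would show that the family of labels ranging over the chosen $\ell=k(i)$ being ``free'' and the others being ``forced to $1$'' translates, via the three labelling clauses ($p_\atom$, $p_z^\varrho$, $p_z^\varrho\wedge p_{z'}^\varrho\wedge p_{z=z'}$), into exactly the clauses of $\homfn$ once one reconstructs $\Theta$ as the set of tree witnesses whose image-words, read off from the $\vec w_{k(i)}$, are nonempty.

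The main obstacle — and the place where I would spend the real work — is the bookkeeping that translates ``globally consistent assignment of words to tree-decomposition nodes'' into ``independent set of tree witnesses plus a choice of generator and initial-constant pattern,'' in both directions. Going from a homomorphism of $\q$ into $\canmod$ to a node-assignment is clean (use property (3) of tree decompositions: the nodes containing a fixed variable form a connected subtree, so the word assigned to that variable is well-defined and automatically consistent across the $T'$-path), but going back — given a consistent assignment of words, reconstruct an actual partition into $\q_\t$'s and verify $\Theta$ is independent and each $\q_\t$ is the minimal atom-set around its interior variables $\ti$ — requires carefully matching the ``$\vec w[i]=\vec w[j]\cdot\varrho$'' compatibility clauses against the definition of $\canmodrho$ and of the canonical model, and checking that the variables assigned $\emptyword$ form exactly the ``root'' part $\tr$ of each witness while the nonempty-word variables form the interior part, with boundary vertices of hyperedges having degree $2$ ensuring the reconstructed $\q_\t$ is a valid generalized interval. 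I would isolate this as a lemma proved by induction on the tree $T$ (processing nodes from a fixed root outward), and defer its routine-but-lengthy verification to the appendix, which is what the paper says it does.
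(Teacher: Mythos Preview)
Your overall strategy matches the paper's proof: establish that an accepting cover of the $\THP$ corresponds to a globally consistent assignment $N_i \mapsto \vec w_{k(i)}$ of tuples from $W_d^\twidth$ (exactly one $E_i^k$ per node, one $E_{ij}^{km}$ per edge, with independence forcing the indices to agree), and then translate such an assignment into an independent $\Theta\subseteq\twset$ by grouping query atoms according to whether they contain a variable with nonempty assigned word. The paper carries out the reconstruction via an equivalence relation on atoms (two atoms are related if they share a variable with $w_z\neq\emptyword$) and a claim that each nontrivial class is a $\q_\t$ for some tree witness $\t$ generated by the first letter of the relevant words; this is the bookkeeping lemma you outline, and your identification of $\tr$ with the $\emptyword$-variables and $\ti$ with the rest is correct.

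However, your account of which labels are constrained is inverted, and this is not a cosmetic slip. The hyperedge $E_i^{k(i)}=\langle u_{ij_1}^{k(i)},\ldots,u_{ij_n}^{k(i)}\rangle$ covers the edges $\{u_{ij}^\ell,v_{ij}^\ell\}$ for $\ell<k(i)$ on each branch (it is the interval from $N_i$ out to each $u_{ij}^{k(i)}$, stopping just before $v_{ij}^{k(i)}$), and the bridge $E_{ij}^{k(i)m}=\langle v_{ij}^{k(i)},v_{ji}^{m}\rangle$ covers $\{u_{ij}^\ell,v_{ij}^\ell\}$ for $\ell>k(i)$. The single uncovered content vertex on each branch is precisely $\{u_{ij}^{k(i)},v_{ij}^{k(i)}\}$, so it is the label at the \emph{chosen} index that must evaluate to $1$, while the labels at $\ell\neq k(i)$ are irrelevant (covered). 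This orientation is essential: the label of $\{u_{ij}^{k(i)},v_{ij}^{k(i)}\}$ is built to encode exactly the $p_\atom$, $p_z^\varrho$, $p_{z=z'}$ constraints incurred by committing to $\vec w_{k(i)}$ at node $N_i$, and forcing \emph{that} label to $1$ is what recovers the conjunct of $\homfn$ corresponding to the reconstructed $\Theta$. With your reading the correspondence with $\homfn$ would not go through. Once you flip this, the remainder of your plan coincides with the paper's argument. (Your aside about the degree-$2$ boundary condition is also misplaced: that condition is a well-formedness requirement on the hyperedges of $H_{\q,\Tmc}$ in $T'$, satisfied automatically since every $u_{ij}^k$ and $v_{ij}^k$ has degree $2$ in $T'$; it plays no role in reconstructing the $\q_\t$'s.)
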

\vspace{-0.7mm}

To characterize 
tree hypergraph programs, we
consider \emph{semi-unbounded fan-in} circuits
in which $\NOT$ gates are applied only to the inputs, $\AND$ gates have fan-in 2, and $\OR$ gates have unbounded fan-in.
The complexity class $\SAC$~\cite{vollmer99} is defined by considering circuits of this type having polynomial size and logarithmic depth.
$\SAC$ is the non-uniform analog of the class \LOGCFL \cite{DBLP:journals/jcss/Venkateswaran91}, which will play a central role 
in our complexity analysis in Section \ref{sec:complexity}. 


We consider semi-unbounded fan-in circuits of size $\size$ and depth $\log \size$, where $\size$ is a parameter,
and show that they are polynomially equivalent to 
$\THP$s by providing
reductions in both directions (details can be found in the appendix).
\vspace{-0.7mm}
\begin{theorem} \label{thm:thp_vs_sac}
There exist  
polynomials 
$p, p'$ such that:
\begin{itemize}
\item Every function computed by a semi-unbounded fan-in 
  circuit of size at most $\size$
and depth at most $\log \size$ is computable by a 
$\THP$ of size $p(\size)$.
\item Every function computed by a 
$\THP$ of size $\size$ is computable by a semi-unbounded fan-in 
circuit of size at most $p'(\size)$
and depth at most $\log p'(\size)$.
\end{itemize}
Both reductions preserve monotonicity. 
\end{theorem}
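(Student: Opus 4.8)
The plan is to establish the two simulations separately; together they say that $\THP$s compute exactly the non-uniform analogue of $\LOGCFL$, with the branching of a tree hypergraph playing the role that a pushdown stack plays for $\LOGCFL$, just as the linearity of an interval hypergraph matched $\nlpoly$ in \cite{lics14-KKPZ}. Recall that in the interval case one reads a covering of the zeros of a path by disjoint available intervals as a source-to-sink walk in a polynomial-size ``jump graph'', the chosen interval encoding the current state; our job is to lift this picture from walks to branching computations, and then to balance the resulting depth.

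\textbf{From $\THP$s to circuits.} Let $P$ be a $\THP$ of size $\size$ with underlying tree $T$; root $T$ at a leaf. For each edge $e$ of $T$ together with a choice that is either ``$e$ is uncovered'' or ``$e$ is covered by the hyperedge $E$'' (for some $E \in E_H$ with $e \in E$), consider the subproblem: does the subtree of $T$ hanging below $e$ admit an independent family of available hyperedges covering all its zero-labelled edges and agreeing with this choice at $e$? Because every hyperedge is a generalized interval whose boundary vertices have degree $2$, a hyperedge is all-or-nothing at each branching vertex, so once $E$ is fixed the portion of $E$ lying below $e$ is determined; hence each subproblem reduces to an $\OR$ over the topmost hyperedge used at $e$ and an $\AND$, taken over the edges at which that hyperedge stops or branches, of the corresponding sub-subproblems, conjoined with the input literals labelling the edges it newly covers. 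There are only polynomially many (edge, choice) subproblems, and $f_P$ is the one attached to the root edge with choice ``covered''. The resulting recursion has depth $O(\mathrm{height}(T))$, which can be linear in $\size$; to compress it to depth $O(\log\size)$ I would invoke the balancing argument underlying Venkateswaran's theorem $\LOGCFL=\SAC$ (equivalently, Ruzzo's balancing of tree-size bounded computations): the hyperedges used in a covering form a forest under the ``lies directly below'' relation, this forest always has a node whose subtree contains between one third and two thirds of all used hyperedges, and splitting there divides the covering into two pieces each of at most two thirds the size. Iterating such guessed splits rearranges the search into an $O(\log\size)$-depth alternation over polynomially many ``partial covering'' subproblems --- each named by an edge of $T$, a choice at that edge, and a size budget ranging over $O(\log\size)$ values --- with unbounded fan-in $\OR$ gates guessing the split and binary $\AND$ gates recombining the two pieces. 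No negation is ever introduced, so monotonicity is preserved, and the circuit has size $p'(\size)$ and depth $\log p'(\size)$ for a suitable polynomial $p'$.

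\textbf{From circuits to $\THP$s.} Let $\Cir$ be a semi-unbounded circuit of size $\le\size$ and depth $\le\log\size$. Since $\AND$ gates have fan-in $2$ and the depth is $\le\log\size$, every proof tree of $\Cir$ (one child chosen at each $\OR$ gate, both children kept at each $\AND$ gate) has at most $\size$ leaves, hence $O(\size\log\size)$ nodes. I would build a polynomial-size tree $T$ so that the hyperedge coverings of the resulting $\THP$ are exactly the proof trees of $\Cir$: $T$ is assembled from short ``slots'' made of degree-$2$ linker vertices, it branches only where a proof tree does (at $\AND$ gates), and, exactly as in the interval/$\nlpoly$ construction of \cite{lics14-KKPZ}, a slot does not name a gate of $\Cir$ --- the hyperedge chosen to cover it does. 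An $\OR$ gate with inputs $g_1,\dots,g_k$ contributes a family of alternative hyperedges over its slot, the $j$-th continuing into the slot that will represent $g_j$; an $\AND$ gate contributes a single hyperedge that passes through the branch and fully into both child subtrees --- the only hyperedge that branches, and, as required, it branches fully. Slots that represent input-literal gates carry that literal as their conjunction label, while every remaining edge is labelled $0$, which forces any admissible covering to trace out a genuine proof tree of $\Cir$. Checking $f_P=f_\Cir$ and that $|P|$ is polynomial in $\size$, and noting that literal labels are copied verbatim so that monotonicity is preserved, completes this direction.

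\textbf{Main obstacle.} The $\OR/\AND$ decomposition, the slot-and-linker bookkeeping, and the polynomial size counts are fiddly but mechanical. The genuinely delicate point is the depth reduction in the $\THP\to$ circuit direction: one must verify that the ``lies directly below'' forest of a covering admits a balanced split even though a single hyperedge can branch into many regions of $T$, and that a ``partial covering'' subproblem is pinned down by only a constant amount of boundary data (the edge and the identity of the hyperedge crossing it) plus a logarithmic-size budget, so that polynomially many subproblems suffice --- this is exactly the tree-size-balancing heart of $\LOGCFL=\SAC$, transported to the hypergraph setting. A secondary but still non-routine point is to check, in the circuit-to-$\THP$ direction, that encoding the current gate purely through the chosen hyperedge really does keep $T$ polynomial while admitting precisely the coverings that correspond to proof trees of $\Cir$.
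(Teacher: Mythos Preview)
Your circuit$\to\THP$ direction is close in spirit to the paper's, which also builds $T$ from the proof-tree structure, but the paper makes the mapping from gates to tree positions explicit: it first normalises the circuit so that at each $\AND$-level the left and right subcircuits are gate-disjoint (costing a factor $2^{\log\size}=\size$ in size), then gives each gate $g_i$ its own fixed triple $u_i,v_i,w_i$ in $T$, arranging the gates of each $\AND$-depth along a path and branching once per level. With gate-indexed vertices the hyperedges $\langle v_i,v_{k_j}\rangle$ (for $\OR$) and $\langle v_i,v_j,v_k\rangle$ (for $\AND$) are unambiguous. In your anonymous-slot version you still owe an account of how a hyperedge ending in slot $p$'s child signals ``this child is now gate $g_j$''; some gate-indexed sub-slot structure is needed, after which the two constructions essentially coincide.

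The $\THP\to$circuit direction has a real gap. Your balanced-split claim---that the ``lies directly below'' forest of a covering always has a node whose subtree contains between one third and two thirds of the used hyperedges---fails once hyperedges have high fan-out: a hyperedge covering a branching vertex of degree $k$ in $T$ has up to $k-1$ children in that forest, and a star has no $\tfrac13$--$\tfrac23$ subtree. Even patching this with a centroid vertex-split, recursing produces subproblems with a growing number of boundary holes, which your sketch does not control. The paper sidesteps both issues by decomposing the \emph{underlying tree} rather than the covering: it first normalises so that each hyperedge contains at most one branching point, contracts $T$ to the tree $D$ of branching points and leaves, and proves a centroid lemma that splits any subtree of $D$ of degree at most~$2$ into pieces of at most half the size \emph{each still of degree at most~$2$} (plus possibly one degree-$1$ piece). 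The circuit then computes a reachability predicate $\Reach(v_1,v_2)$ (``the region of $T$ between $v_1$ and $v_2$ can be partitioned into available hyperedges'') by repeated squaring along the branches of $T$ and then merging along this decomposition of $D$; a Huffman-style weighting bounds the $\AND$-depth contributed at each merge. The degree-$\le 2$ invariant is precisely what keeps each subproblem described by at most two boundary edges---the obstacle you correctly flagged as the crux, but did not resolve.
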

\vspace{-2mm}
\subsection{Bounded Leaf Queries, Linear \THP s, \& NBPs}
\label{ssec:linear}


\begin{figure*}[t]\label{fig-NL-Hypergraph}
\setlength{\abovecaptionskip}{3pt plus 0pt minus 0pt}
\scalebox{.97}{
\includegraphics{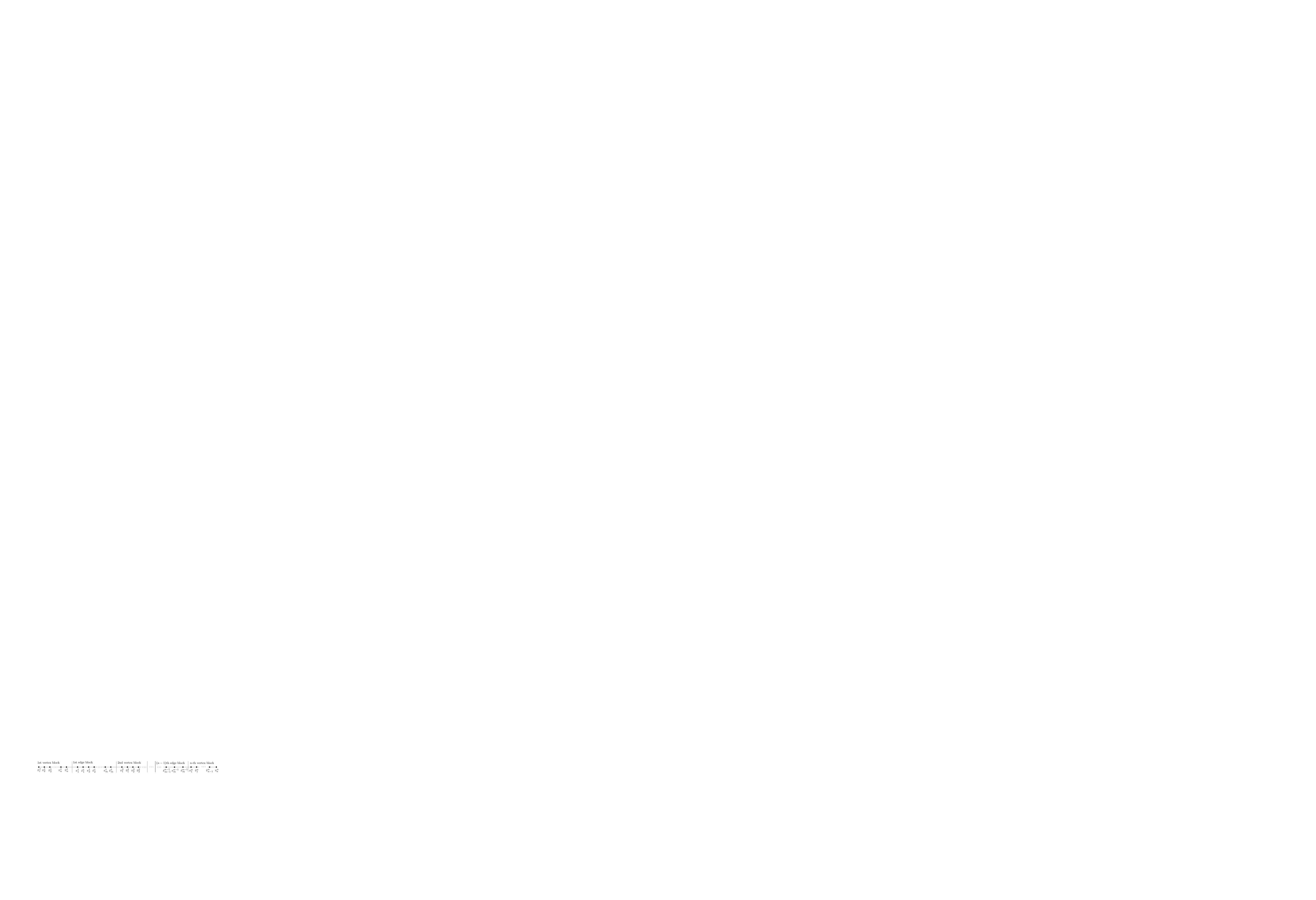}
}
\caption{The graph $G$ underlying the interval hypergraph program from the proof of Theorem \ref{nbp-to-ihp}. 
}
\label{fig:interval-hypergraph}
\vspace*{-.25cm}
\end{figure*}

For bounded leaf queries, we establish a tight connection to 
\emph{non-deterministic branching programs} (NBPs), a well-known 
representation of Boolean functions situated between Boolean formulas and Boolean circuits \cite{Razborov91,Jukna12}. 
We recall that an 
NBP
is defined as a tuple $P = (V_P, E_P, s, t, \l_P)$, where $(V_P, E_P)$ is a directed graph,
$s,t \in V_P$, and $\l_P$ is a function that labels every edge  $e \in E_P$ with 0, 1, or a conjunction
of propositional literals built from $L_P$. 
The NBP $P$ induces the function
$f_P$ defined as follows:  for every valuation $\valpha$ of the variables 
$L_P$,
$f_P(\vec{\alpha}) = 1$ iff there is a path from $s$ to $t$ in the graph $(V_P,E_P)$ 
such that all labels along the path evaluate to 1 under $\valpha$. The size $|P|$ of $P$ 
is $|V_P|+|E_P|+|L_P|$. An NBP is \emph{monotone} if neither of its labels contains negation.

The next theorem shows that tree witness functions of bounded leaf queries 
can be captured by polysize NBPs. 
\vspace{-0.7mm}
\begin{theorem}\label{bbthp-to-nbp}
Fix $\ell \geq 2$. 
For every  ontology $\Tmc$ and tree-shaped CQ $\q$ with at most $\ell$ leaves, 
the function $\twfn$ is 
computable by a monotone NBP of size polynomial in $|\q|$ and $|\Tmc|$. 
\end{theorem}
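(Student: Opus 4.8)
The plan is to go through the intermediate notion of a \emph{linear} (i.e.\ interval) $\THP$, mirroring the route used for bounded treewidth queries and $\SAC$ but in the narrower linear setting. First I would recall from Theorem \ref{TW2rew} and the surrounding discussion that when $\q$ has at most $\ell$ leaves, the number of tree witnesses $|\twset|$ is polynomially bounded in $|\q|$ and $|\Tmc|$: a tree witness $\t=(\tr,\ti)$ is determined by the connected `piece' $\q_\t$ it carves out of the tree-shaped $\q$, and since $G_\q$ is a tree with $\le\ell$ leaves, the subtrees that can arise as $\q_\t$ are described by a bounded number of `cut points' near branching vertices plus a generating role $\varrho\in\rni\cap\sig(\Tmc)$; hence there are only polynomially many. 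Consequently $\twfn$ itself (not just $\homfn$) has polynomially many variables and it suffices to build a small NBP for it directly.

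Next I would construct a linear $\THP$ computing $\twfn$, by threading the tree-shaped query along a single path. Fix a depth-first traversal (Euler tour) of $G_\q$; this linearises the at-most-$\ell$-leaf tree into a path in which each edge of $G_\q$ is visited a bounded ($\le 2$) number of times, and each tree witness $\q_\t$, being a connected subtree, occupies a bounded number of contiguous blocks along the tour (its boundary vertices are among the branching vertices, which all have degree $\ge 3$ in $G_\q$ and hence degree $2$ in the host path after we widen them as in the $T'$-construction preceding Theorem \ref{DL2THP}). The underlying host tree $T$ of the $\THP$ then has exactly $2$ leaves, i.e.\ is an interval, so this is an interval $\HP$. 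Its vertices (edges of $T$) are labelled by the $p_\atom$ and $p_\t$ variables exactly as in the $\homfn$-to-$\THP$ construction, specialised: each atom-block contributes $p_\atom$ at a `$0$'-position unless it is covered, and for each tree witness $\t$ we put a hyperedge (a generalized interval) spanning the blocks of $\q_\t$, labelled so that selecting it forces $p_\t$ and frees the atoms of $\q_\t$. Independence of a set $\Theta\subseteq\twset$ translates into edge-disjointness of the corresponding generalized intervals, which is precisely the $\THP$ semantics; so the resulting linear $\THP$ computes $\twfn$, monotonically, and has size polynomial in $|\q|+|\Tmc|$.

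Finally I would convert a linear (interval) $\THP$ into a monotone NBP of polynomial size. An interval $\HP$ is read left to right along its path $v_1,\dots,v_n$: the NBP has states tracking, at each position, which (single) hyperedge is currently `open', together with the finitely-many choices needed to handle the boundary vertices; a hyperedge is a generalized interval all of whose boundary vertices have degree $2$, so along the linear host this is just a finite union of contiguous runs, and the NBP nondeterministically guesses for each `$0$'-labelled vertex the hyperedge covering it, reading the conjunctive label of each traversed vertex as an edge label, and rejecting if two overlapping hyperedges are chosen. Acceptance of an $s$--$t$ path then coincides with the existence of an independent covering family, i.e.\ with $f_P=1$. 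This yields the monotone NBP of polynomial size, completing the chain $\twfn \leadsto$ linear $\THP \leadsto$ NBP.

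\textbf{Main obstacle.} The delicate point is the second step: bounding the size of the interval $\THP$, i.e.\ checking that linearising a tree-shaped query with $\le\ell$ leaves really keeps every tree witness $\q_\t$ within a bounded number of contiguous blocks and that all boundary vertices of the corresponding generalized intervals land on degree-$2$ vertices of the host path (so that the tree-hypergraph condition is met and we genuinely get an \emph{interval} program rather than a general $\THP$). This requires a careful argument that the branching structure of $\q$ — controlled by the $\ell$ leaves — limits both the number of `cut patterns' yielding tree witnesses and the way those subtrees interleave along the Euler tour; the rest (the $\THP$-to-NBP simulation) is routine given Theorem \ref{thm:thp_vs_sac}-style bookkeeping, here in the easier linear regime where $\nlspace/\mathrm{poly}$ replaces $\SAC$.
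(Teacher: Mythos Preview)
Your plan breaks at step 2. On a path (the host tree of an interval $\HP$), a generalized interval $\langle v_{i_1},\dots,v_{i_k}\rangle$ is always a \emph{single} contiguous segment: it is the span from the leftmost to the rightmost of the $v_{i_j}$. So ``a bounded number of contiguous blocks'' is not good enough; to be a hyperedge of an interval hypergraph you need exactly one block, and the Euler tour does not give you that.

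Concretely, take $G_\q$ a star with centre $c$ and leaves $l_1,l_2,l_3$ (so $\ell=3$), and run the Euler tour from $l_1$. The edge $\{c,l_1\}$ occurs at the first and last positions of the tour, while $\{c,l_2\}$ and $\{c,l_3\}$ each occupy a contiguous pair of positions in the middle. A tree witness $\t_1$ with $\ti=\{l_1\}$ has $\q_{\t_1}$ consisting of the single edge $\{c,l_1\}$, which sits at the two extreme positions; the only interval containing both is the whole tour. A tree witness $\t_2$ with $\ti=\{l_2\}$ covers $\{c,l_2\}$ in the middle. Now $\q_{\t_1}\cap\q_{\t_2}=\emptyset$, so $\{\t_1,\t_2\}$ is independent, yet any interval hyperedge for $\t_1$ swallows the positions of $\t_2$; your interval $\HP$ would declare them incompatible and would not compute $\twfn$. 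The obstacle you flag (the degree-$2$ boundary condition) is secondary; the primary failure is that a union of disjoint segments is simply not a hyperedge on a path.

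The paper bypasses any $\THP$ intermediary and builds the NBP directly. Fix a root $v_0$ of $G_\q$ and call an independent $\Theta\subseteq\twset$ \emph{flat} if every simple path in $G_\q$ from $v_0$ meets at most one $E_\t$ with $\t\in\Theta$. A flat set then has at most $\ell$ members (one per root-to-leaf path), so there are only polynomially many flat sets. Flat sets are partially ordered by distance from $v_0$ ($\Theta\prec\Theta'$ if every path from $v_0$ to $V_{\Theta'}$ crosses $E_\Theta$), and any independent $\Theta$ decomposes into a $\prec$-chain of flat layers $\Theta^1\prec\cdots\prec\Theta^m$. The NBP has vertices $u_\Theta,v_\Theta$ for each flat $\Theta$ together with $s,t$; the edge $(u_\Theta,v_\Theta)$ is labelled $\bigwedge_{\t\in\Theta}p_\t$, and the edges $(s,u_\Theta)$, $(v_\Theta,u_{\Theta'})$ for $\Theta\prec\Theta'$, and $(v_\Theta,t)$ are labelled by the conjunction of the $p_\atom$ for atoms lying respectively before $\Theta$, between $\Theta$ and $\Theta'$, and after $\Theta$. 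An $s$--$t$ path selects a chain of flat layers whose union is precisely an independent $\Theta$ making the corresponding disjunct of $\twfn$ true, and conversely.
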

\vspace{-0.7mm}
\begin{proof}
Consider an ontology $\Tmc$, a tree-shaped CQ $\q$ with $\ell$ leaves, and its associated graph $G_\q= (V_\q, E_\q)$.
For every tree witness $\t \in \twset$, let $(V_\t, E_\t)$ be the graph associated with $\q_\t$, 
and for every subset $\Theta \subseteq \twset$, let $V_\Theta = \bigcup_{\t \in \Theta} V_\t$ and $E_\Theta = \bigcup_{\t \in \Theta} E_\t$. 
Pick some vertex $v_0 \in V_\q$ and call an independent subset $\Theta \subseteq \twset$ \emph{flat}
if every simple path in $G_\q$ with endpoint $v_0$ intersects at most one of the sets $E_\t$, $\t \in \Theta$. 
Note that every flat subset of $\twset$ can contain at most $\ell$ tree witnesses, 
so the number of flat subsets is polynomially bounded in $|\q|$, when $\ell$ is a fixed constant.
Flat subsets can be partially ordered as follows: $\Theta \prec\Theta'$ if every simple path between $v_0$ 
and a vertex $v' \in V_{\Theta'}$ intersects $E_\Theta$. 

The required NBP $P$ 
is based upon the graph $G_P$ with vertices 
$V_P = \{u_\Theta, v_\Theta \mid \Theta \subseteq \twset \text{ is flat}\} \cup\{s,t\}$ and 
$E_P=\{(s, u_\Theta), (v_\Theta, t), (u_\Theta, v_\Theta) \mid \Theta 
 \text{ flat}\} 
\cup \{(v_\Theta, u_{\Theta'}) \mid \text{ flat } 	\Theta \prec \Theta'\}$. 
We label $(u_\Theta, v_\Theta)$ with $\bigwedge_{\t\in\Theta}p_\t$ and label 
edges $(s, u_\Theta)$, $(v_\Theta, t)$, and $(v_\Theta, u_\Theta)$ 
by conjunctions of variables $p_\atom$ ($\atom \in \q$) corresponding respectively to the 
atoms in $\q$ that occur `before' $\Theta$, `after' $\Theta$,
and `between' $\Theta$ and~$\Theta'$. 
In the appendix, we detail the construction and show 
that 
$f^{\mathsf{tw}}_{\q,\Tmc}(\alpha) = 1$ iff 
there is a path from $s$ to 
$t$ in $G_P$ all of whose 
labels evaluate to $1$ under~$\alpha$.  
\end{proof}

NBPs in turn can be translated into polysize interval \HP s. 

\begin{theorem}\label{nbp-to-ihp}
Every function 
that is computed by a 
NBP $P$ is 
computed by an 
interval \HP\ of size 
polynomial in $|P|$.
The reduction preserves monotonicity.
\end{theorem}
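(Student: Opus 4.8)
The plan is to transform the NBP $P=(V_P,E_P,s,t,\l_P)$ directly into an interval \HP\ $P'$ over the \emph{same} variable set, by laying the vertices of a normalised copy of $P$ along a single line and turning each edge of $P$ into an interval of that line, arranged so that an independent family of intervals covering all the $0$-valued line-vertices is forced to trace out exactly one directed $s$-$t$ path of $P$ — and, moreover, one all of whose edge-labels are satisfied. First I would normalise $P$: passing to its standard layered unfolding into $|V_P|$ levels makes $P$ acyclic while preserving $f_P$, and afterwards one may assume $s$ is the unique source, $t$ the unique sink, and every vertex lies on an $s$-$t$ path; all of this costs a polynomial factor and introduces no negations. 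Fixing a topological order $v_1=s,\dots,v_n=t$, I then \emph{subdivide} every edge $e=(v_i,v_j)$ by a fresh midpoint $m_e$, replacing $e$ with $(v_i,m_e)$ (carrying $\l_P(e)$) and $(m_e,v_j)$ (carrying $1$), and I place $m_e$ in the vertex order directly after its source $v_i$.

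Next I build the path $T$ that underlies $P'$ — being a path, $P'$ is an interval \HP. Reading the edges of $T$ (which are the \emph{vertices} of the hypergraph) from left to right: for each ordinary vertex $v$ there are two consecutive line-edges, an in-gate $\pi_v$ and an out-gate $\sigma_v$; for each midpoint $m_e$ there are three, $\pi_{m_e},\lambda_e,\sigma_{m_e}$; and these blocks are interleaved according to the vertex order above. The labelling is $\l_{P'}(\pi_v)=0$ for $v\neq s$, $\l_{P'}(\sigma_v)=0$ for $v\neq t$, $\l_{P'}(\pi_s)=\l_{P'}(\sigma_t)=1$, and $\l_{P'}(\lambda_e)=\l_P(e)$ — these are all the line-edges, and no negation is created. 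For every edge $(x,y)$ of the subdivided program I take the hyperedge $I_{(x,y)}$ to be the contiguous stretch of line-edges from $\sigma_x$ to $\pi_y$ inclusive; by the layout this is genuinely an interval, and since every boundary vertex of an interval of a path has degree $2$, $P'$ is a legal interval \HP, of size $O(|P|^2)$, monotone whenever $P$ is.

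To establish $f_{P'}=f_P$ I would scan the line from left to right and show that an independent family $\mathcal{F}$ of hyperedges covers every line-vertex of $[\sigma_s,\pi_t]$ iff $\mathcal{F}=\{I_{e_1'},I_{e_1''},I_{e_2'},I_{e_2''},\dots\}$ arises from a directed $s$-$t$ path $e_1,e_2,\dots$ of the subdivided program, $e_p',e_p''$ being the two halves of $e_p$. The key local observations are: only some $I_{(s,m)}$ can cover $\sigma_s$; once an interval of $\mathcal{F}$ has just ended at a gate $\pi_x$ with $x\neq t$, disjointness forces the interval covering the $0$-vertex $\sigma_x$ to \emph{begin} at $\sigma_x$, hence to correspond to an edge out of $x$ — this both drives $\mathcal{F}$ forward along actual edges and forbids any interval from leaping over a vertex on the path; and the selected intervals then tile $[\sigma_s,\pi_t]$, so no further interval can be added. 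Finally, for each edge $e_p$ on the path the line-vertex $\lambda_{e_p}$ sits exactly at the seam between $I_{e_p'}$ (which ends at $\pi_{m_{e_p}}$) and $I_{e_p''}$ (which begins at $\sigma_{m_{e_p}}$), so it is covered by no interval of $\mathcal{F}$; hence such an $\mathcal{F}$ certifies $f_{P'}(\valpha)=1$ precisely when every $\l_P(e_p)$ is satisfied by $\valpha$, i.e.\ precisely when $\valpha$ admits a satisfied $s$-$t$ path through $P$. (If $P$ has no $s$-$t$ path at all, output the trivial interval \HP\ computing $0$.)

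I expect the correctness argument of the last paragraph to be the only real obstacle, and within it the delicate point is to fix the left-to-right order of the gates $\pi_\bullet,\lambda_\bullet,\sigma_\bullet$ and of the midpoint blocks so precisely that mere disjointness of intervals already rules out (a) using two edges leaving, or entering, a common vertex, (b) an interval that skips a vertex lying on the path, and (c) any extraneous interval in a cover — while at the same time the midpoint gadget makes the label of every \emph{used} edge mandatory and imposes nothing on the unused ones. Everything else — the normalisation, the polynomial size bound, and the preservation of monotonicity — is routine.
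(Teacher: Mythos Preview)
Your proposal is correct and follows essentially the same idea as the paper: both time-expand the NBP and encode each (time-stamped) edge by two abutting half-intervals with the edge's label sitting at the seam, so that an independent cover of the zeros is forced to trace out a single $s$--$t$ path whose edge-labels are all satisfied. The paper packages the time-expansion implicitly---laying out $n$ vertex-blocks alternating with $n{-}1$ edge-blocks, each block listing \emph{all} vertices or edges of $P$---whereas you layer explicitly first and then place the resulting DAG on the line in topological order with midpoints right after each source; your midpoint triple $(\pi_{m_e},\lambda_e,\sigma_{m_e})$ plays exactly the role of the paper's doubled edge-copies $(e^\ell_i,\bar e^\ell_i)$.
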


\begin{proof}
Consider an NBP $P = (V_P, E_P, v_1, v_n, \l_P)$, where $V_P= \{v_1, \ldots, v_n\}$
and $E_P= \{e_1, \ldots, e_m\}$. We may assume w.l.o.g.\ that $e_m=(v_n,v_n)$ and $\l_P(e_m)=1$. 
This assumption ensures that if there is a path from $v_1$ to $v_n$ whose labels evaluate to $1$, then there is 
a (possibly non-simple) path with the same properties 
 whose length is exactly $n-1$. 

We now construct an interval \HP\ $(H, \l_H)$ 
that computes the function $f_P$. In Figure~\ref{fig:interval-hypergraph}, we display the graph 
$G=(V_G,E_G)$ that underlies the interval hypergraph $H$. 
Its vertices are arranged into $n$ \emph{vertex blocks} and $n-1$ \emph{edge blocks} which alternate. 
The $\ell$th vertex block (resp.\ edge block) contains two copies, $v_i^\ell, \bar v_i^\ell$ (resp.\ $e_i^\ell, \bar e_i^\ell$), 
of every vertex $v_i \in V_P$
(resp.\ edge $e_i \in E_P$). 
We remove the first and last vertices $v^1_1$ and $\bar v_n^n$ and 
connect the remaining vertices as shown in Figure~\ref{fig:interval-hypergraph}. 
The hypergraph $H=(V_H,E_H)$ is defined by setting $V_H=E_G$ and
letting $E_H$ be the set of all hyperedges 
$\zeta_{i,\ell} = \langle \bar v^\ell_j, e^\ell_i \rangle$ and $\zeta'_{i,\ell} = \langle  {\bar e}^\ell_i, v^{\ell+1}_k \rangle$
where $e_i =(v_j,v_k) \in E_P$ and $1 \leq \ell < n$. 
The function $\l_H$ labels $\{e^\ell_i,\bar e^\ell_i\}$ with $\l_P(e_i)$ and all other vertices of $H$ (i.e.\ edges of $G$)
 with $0$.

We claim that 
 $(H,\l_H)$ computes $f_P$. Indeed, if $f_P(\alpha)=1$,
then there is a path $e_{j_1}, e_{j_2}, \ldots, e_{j_{n-1}}$
from $v_1$ to $v_n$ whose labels evaluate to $1$ under $\alpha$. 
It follows that $E' = \{\zeta_{j_\ell, \ell}, \zeta'_{j_\ell, \ell} \mid 1 \leq \ell< n\}$ 
is an independent subset of $E_H$ that covers all zeros. 
Conversely, if $E' \subseteq E_H$ is independent and covers all zeros under $\alpha$,
then it must contain exactly one pair of hyperedges $\zeta_{j_\ell,\ell}$ and $\zeta'_{j_\ell,\ell}$ 
for every $1 \leq \ell <n$,
and the corresponding sequence of edges $e_{j_1}, \ldots, e_{j_{n-1}}$ defines a path from $v_1$ to $v_n$. 
Moreover, since $E'$ does not cover $\{e^\ell_{j_\ell},\bar e^\ell_{j_\ell}\}$, we know that 
$\l_H(\{e^\ell_{j_\ell},\bar e^\ell_{j_\ell}\}) =\l_P(e_{j_\ell})$ evaluates to $1$ under $\alpha$, for every $1 \leq \ell <n$. 
\end{proof}
%

\subsection{Succinctness Results}
We now combine the correspondences from the preceding subsections with results from circuit complexity 
to derive upper and lower bounds on rewriting size for tree-like queries. 

\smallskip 
We start with what is probably our most surprising result: a super-polynomial lower bound on the size of 
PE-rewritings of linear queries and depth-2 ontologies. 
This result significantly improves upon earlier negative results for PE-rewritings \cite{DBLP:conf/icalp/KikotKPZ12,lics14-KKPZ}, 
which required either arbitrary queries or arbitrary ontologies. 
The proof utilizes Theorems~ \ref{rew2prim}, \ref{tree-hg-to-query}  and \ref{nbp-to-ihp} 
and the well-known circuit complexity result 
that there is a sequence $f_n$ of monotone Boolean functions that are 
computable by polynomial-size monotone NBPs, but all monotone Boolean formulas computing $f_n$ are of size $n^{\Omega(\log n)}$
\cite{DBLP:conf/stoc/KarchmerW88}.



\begin{theorem}\label{linear-lower}
There is a sequence of linear CQs $\q_n$ and ontologies $\Tmc_n$ of depth 2, both of polysize in $n$, 
such that any PE-rewriting of $\q_n$ and $\Tmc_n$ is of size $n^{\Omega(\log n)}$. 
\end{theorem}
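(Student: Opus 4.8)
The plan is to chain together the machinery already developed in the excerpt so that a hard Boolean function gets "transported" into a query-ontology pair via the primitive evaluation function. First I would take the sequence $f_n$ of monotone Boolean functions from \cite{DBLP:conf/stoc/KarchmerW88}: these are computable by polynomial-size monotone NBPs but require monotone formulas of size $n^{\Omega(\log n)}$. Applying Theorem \ref{nbp-to-ihp} to the polysize monotone NBP for $f_n$, I obtain a monotone interval \HP\ $P_n$ of polynomial size computing $f_n$; crucially, an interval \HP\ is a special case of a \THP\ whose underlying tree has exactly two leaves. Then I apply Theorem \ref{tree-hg-to-query} to $P_n$: this yields a tree-shaped CQ $\q_n := \q_{P_n}$ and an ontology $\Tmc_n := \Tmc_{P_n}$ of depth $2$, both of polynomial size in $|P_n|$ (hence in $n$), such that $f_{P_n}$ reduces to $\primfn[\q_n][\Tmc_n]$ by the simple substitution $\vgamma$ described there. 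Since $P_n$ has exactly two leaves, so does $\q_n$, i.e.\ $\q_n$ is \emph{linear}.

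Next I would derive the lower bound by contraposition. Suppose $\q'_n$ is a PE-rewriting of $\q_n$ and $\Tmc_n$. By Theorem \ref{rew2prim}, $\primfn[\q_n][\Tmc_n]$ is then computed by a \emph{monotone} Boolean formula $\Phi_n$ of size $O(|\q'_n|)$. The substitution instance from Theorem \ref{tree-hg-to-query} fixes some input variables of $\primfn[\q_n][\Tmc_n]$ to constants $0$ and $1$ and renames the rest to the variables of $f_n$; substituting these constants and renamings into $\Phi_n$ produces a monotone formula of size $O(|\q'_n|)$ computing $f_n$ (substituting constants into a monotone formula only shrinks it and preserves monotonicity, and variable renaming costs nothing). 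Hence $O(|\q'_n|) \geq n^{\Omega(\log n)}$, so $|\q'_n| = n^{\Omega(\log n)}$, which is the claimed bound.

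The step I expect to be the main obstacle — or at least the one requiring the most care — is verifying that the composition genuinely preserves the two structural parameters simultaneously: \emph{linearity} of the resulting query and \emph{depth $2$} of the resulting ontology. Theorem \ref{tree-hg-to-query} guarantees $\q_P$ has the same number of leaves as the underlying tree $T$ and that $\Tmc_P$ has depth $2$; and an interval \HP\ by definition is built on a tree with exactly two leaves, so $T$ here has two leaves and $\q_n$ is linear. I would also need to double-check that the constant substitution in Theorem \ref{rew2prim}'s formula is legitimate even though $\primfn$ there is stated over the full signature — this is routine, since partial evaluation of a monotone formula at Boolean constants again yields a monotone formula no larger than the original. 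One subtlety worth a sentence: the variables $p_{z=z'}$, $p_z^\varrho$ appearing in $\homfn$ play no role here, because we use $\twfn$-style reasoning only through Theorems \ref{tree-hg-to-query} and \ref{rew2prim}, both of which are phrased directly in terms of $\primfn$; so no extra bookkeeping about tree-witness variables is needed for the lower bound.

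\begin{proof}[Proof sketch]
By \cite{DBLP:conf/stoc/KarchmerW88}, there is a sequence $f_n$ of monotone Boolean functions computable by polynomial-size monotone NBPs, yet every monotone Boolean formula computing $f_n$ has size $n^{\Omega(\log n)}$. Fix a polysize monotone NBP for $f_n$ and apply Theorem \ref{nbp-to-ihp} to obtain a monotone interval \HP\ $P_n$ of size polynomial in $n$ with $f_{P_n} = f_n$. An interval \HP\ is a \THP\ whose underlying tree has exactly two leaves, so Theorem \ref{tree-hg-to-query} yields a tree-shaped CQ $\q_n = \q_{P_n}$ with two leaves (i.e.\ linear) and an ontology $\Tmc_n = \Tmc_{P_n}$ of depth $2$, both of polynomial size in $n$, together with an assignment $\vgamma$ (fixing some signature symbols to $0$ or $1$ and identifying the rest with the variables of $f_n$) such that $f_{P_n}(\alpha) = \primfn[\q_n][\Tmc_n](\vgamma)$ for every input $\alpha$.

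Now suppose $\q'_n$ is a PE-rewriting of $\q_n$ and $\Tmc_n$. By Theorem \ref{rew2prim}, $\primfn[\q_n][\Tmc_n]$ is computed by a monotone Boolean formula of size $O(|\q'_n|)$. Substituting the constants prescribed by $\vgamma$ and performing the induced renaming turns this into a monotone Boolean formula of size $O(|\q'_n|)$ computing $f_n$; substituting Boolean constants into a monotone formula only decreases its size and preserves monotonicity, and renaming is free. Hence $O(|\q'_n|) = n^{\Omega(\log n)}$, so $|\q'_n| = n^{\Omega(\log n)}$.
\end{proof}
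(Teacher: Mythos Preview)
Your proposal is correct and follows essentially the same approach as the paper: start from the Karchmer--Wigderson monotone NBP-vs-formula separation, apply Theorem~\ref{nbp-to-ihp} to get a polysize interval \HP, then Theorem~\ref{tree-hg-to-query} to produce a linear CQ and depth-$2$ ontology, and finally Theorem~\ref{rew2prim} to transfer the $n^{\Omega(\log n)}$ lower bound to PE-rewritings via the substitution $\vgamma$. Your additional care in verifying that the interval \HP's two-leaf tree yields a linear query and that constant substitution preserves monotonicity and size is welcome, but the overall chain of reductions is exactly the paper's.
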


%

We obtain a positive result for NDL-rewritings of bounded-leaf queries using Theorems \ref{TW2rew} and \ref{bbthp-to-nbp} and the fact that 
NBPs are representable as polynomial-size monotone circuits \cite{Razborov91}. 

\begin{theorem}\label{bbcq-ndl}
Fix a constant $\ell \geq 2$.  Then 
all tree-shaped CQs with at most $\ell$ leaves and arbitrary ontologies have polynomial-size NDL-rewritings.
\end{theorem}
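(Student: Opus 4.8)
The plan is to chain together the tools already assembled in the paper. By Theorem~\ref{bbthp-to-nbp}, for any fixed $\ell \geq 2$, every ontology $\Tmc$ and tree-shaped CQ $\q$ with at most $\ell$ leaves yields a tree witness function $\twfn$ that is computed by a monotone NBP $P$ of size polynomial in $|\q|$ and $|\Tmc|$. The next step is to convert this monotone NBP into a monotone Boolean circuit: it is a classical fact (see \cite{Razborov91}) that an NBP with $n$ vertices and $m$ edges is simulated by a monotone circuit of size polynomial in $n+m$ --- essentially by encoding reachability from $s$ via the transitive closure (iterated squaring of the adjacency matrix, or a layered reachability computation if the NBP is acyclic), using $\OR$ gates for the existential choice of predecessor and $\AND$ gates to combine an edge label with reachability of its source. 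Since $P$ is monotone, no negations are introduced, so we obtain a monotone circuit $\Cir$ of size polynomial in $|\q|+|\Tmc|$ that computes $\twfn$.

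Finally, I would invoke Theorem~\ref{TW2rew}: if $\twfn$ is computed by a monotone Boolean circuit $\Cir$, then there is an NDL-rewriting of $\q$ and $\Tmc$ of size $O(|\Cir|\cdot |\q|\cdot |\Tmc|)$. Composing the three polynomial bounds gives an NDL-rewriting of $\q$ and $\Tmc$ of size polynomial in $|\q|+|\Tmc|$, which is exactly the claim. Note that it is essential here to use $\twfn$ (rather than $\homfn$), because for bounded-leaf queries the number of tree witnesses $|\twset|$ --- hence the number of variables $p_\t$ in $\twfn$ --- is itself polynomially bounded, so the NBP from Theorem~\ref{bbthp-to-nbp} really does compute a function on polynomially many inputs, and Theorem~\ref{TW2rew} applies directly.

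The only genuinely delicate point is the NBP-to-monotone-circuit step, and even that is standard: one must make sure the reachability circuit stays monotone and polynomial. If $P$ is not acyclic, the transitive-closure approach via $O(\log |V_P|)$ rounds of squaring keeps the size polynomial; monotonicity is preserved because reachability through monotone edge labels is itself a monotone property (adding a true literal only adds edges). Everything else is bookkeeping: the bound $\ell \geq 2$ being a fixed constant is what makes the polynomial bound in Theorem~\ref{bbthp-to-nbp} uniform, and we simply carry that constant through. Thus the proof is essentially \enquote{Theorem~\ref{bbthp-to-nbp} $+$ NBP $\Rightarrow$ monotone circuit $+$ Theorem~\ref{TW2rew}}, with the middle arrow being the classical simulation of nondeterministic branching programs by monotone circuits.
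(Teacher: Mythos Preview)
Your proposal is correct and follows essentially the same route as the paper's own proof: apply Theorem~\ref{bbthp-to-nbp} to obtain a polynomial-size monotone NBP computing $\twfn$, invoke the standard simulation of monotone NBPs by polynomial-size monotone circuits (citing \cite{Razborov91}), and then apply Theorem~\ref{TW2rew}. Your additional remarks on why $\twfn$ (rather than $\homfn$) suffices here, and on how the NBP-to-circuit conversion preserves monotonicity, are accurate elaborations but not needed beyond what the paper states.
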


As for FO-rewritings, 
we can use Theorems~\ref{TW2rew}, \ref{rew2prim}, \ref{nbp-to-ihp}, and \ref{tree-hg-to-query} to show that 
the existence of polysize FO-rewritings is equivalent to the open problem 
of whether $\nlpoly\subseteq \ncone$. 

\begin{theorem}\label{nbps-conditional}
The following are equivalent:
\begin{enumerate}
\item There exist polysize FO-rewritings for all linear CQs and depth $2$ ontologies;
\item There exist polysize FO-rewritings for all tree-shaped CQs with at most $\ell$ leaves and arbitrary ontologies (for any fixed $\ell$);
\item There exists a polynomial function $p$ such that every NBP of size at most $s$ is
computable by a formula of size $p(s)$. Equivalently,
$\nlpoly\subseteq \ncone$.
\end{enumerate}
\end{theorem}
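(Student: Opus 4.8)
The plan is to prove the three-way equivalence by establishing a cycle of implications, exploiting the fact that the circuit-complexity statement in item (3) is an equivalence between two formulations (NBPs vs.\ formulas, and $\nlpoly \subseteq \ncone$) that I can take as known. I will show $(2) \Rightarrow (1)$, $(1) \Rightarrow (3)$, and $(3) \Rightarrow (2)$. The implication $(2) \Rightarrow (1)$ is immediate, since linear CQs are exactly tree-shaped CQs with $2$ leaves, so the class of linear CQs paired with depth-$2$ ontologies is a subclass of the query-ontology pairs covered in (2) with $\ell = 2$.

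For $(1) \Rightarrow (3)$, I would argue by contradiction using the lower-bound machinery. Suppose every NBP of size $s$ could \emph{not} be simulated by a formula of size $p(s)$ for any polynomial $p$; concretely, take the sequence $f_n$ of monotone Boolean functions from \cite{DBLP:conf/stoc/KarchmerW88} that have polysize monotone NBPs but require monotone formulas of size $n^{\Omega(\log n)}$. Translate each $f_n$ through Theorem~\ref{nbp-to-ihp} into a polysize interval \HP, hence a linear \THP; feed this \THP\ through the construction of Section~\ref{ssec:prim} to obtain a linear CQ $\q_n$ and a depth-$2$ ontology $\Tmc_n$, both of polynomial size, such that by Theorem~\ref{tree-hg-to-query} the primitive evaluation function $\primfnn$ computes $f_n$ (under the fixed substitution of inputs). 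If there were a polysize FO-rewriting $\q_n'$ of $\q_n$ and $\Tmc_n$, then Theorem~\ref{rew2prim} would yield a Boolean formula of size $O(|\q_n'|)$ computing $\primfnn$, and hence a polysize formula computing $f_n$ --- contradicting the lower bound. (Care is needed to track monotonicity: the $f_n$ are monotone, so one wants the PE/FO distinction here, but since FO-formulas of polysize for a monotone function still yield, via the Karchmer--Wigderson framework or a direct monotone simulation, the needed contradiction; alternatively one frames the whole chain in terms of the equivalence $\nlpoly \subseteq \ncone$, which is a non-monotone statement, and uses that the reductions in Theorems~\ref{nbp-to-ihp} and \ref{tree-hg-to-query} preserve monotonicity so the non-uniform separation transfers cleanly.)

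For $(3) \Rightarrow (2)$, I would run the upper-bound machinery in the forward direction. Fix $\ell$, and let $\q$ be a tree-shaped CQ with at most $\ell$ leaves paired with an arbitrary ontology $\Tmc$. By Theorem~\ref{bbthp-to-nbp}, the tree witness function $\twfn$ is computed by a monotone NBP of size polynomial in $|\q|$ and $|\Tmc|$. By the hypothesis (3), this NBP can be converted into a Boolean formula of size polynomial in $|\q| + |\Tmc|$ computing $\twfn$; one should check that the NBP-to-formula simulation underlying $\nlpoly \subseteq \ncone$ can be taken to preserve monotonicity, or else observe that for the purpose of producing an FO-rewriting a non-monotone formula for $\twfn$ still suffices, as Theorem~\ref{TW2rew} only requires a (not-necessarily-monotone) formula to yield an FO-rewriting. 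Then Theorem~\ref{TW2rew} turns this formula into an FO-rewriting of $\q$ and $\Tmc$ of size $O(|\chi| \cdot |\q| \cdot |\Tmc|)$, which is polynomial. This establishes (2).

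The main obstacle I anticipate is the bookkeeping around monotonicity and the precise form of the circuit-complexity equivalence in item (3): the cleanest route is to note that $\nlpoly \subseteq \ncone$ is equivalent to ``every NBP of size $s$ has an equivalent formula of size $p(s)$'' (the non-uniform analog of $\NL \subseteq \ncone$), and that all the reductions invoked --- Theorems~\ref{bbthp-to-nbp}, \ref{nbp-to-ihp}, \ref{tree-hg-to-query}, together with \ref{TW2rew} and \ref{rew2prim} --- either preserve monotonicity or are insensitive to it in the direction needed. Getting the monotone/non-monotone matching exactly right so that the super-polynomial separation from \cite{DBLP:conf/stoc/KarchmerW88} (which is a \emph{monotone} separation) genuinely contradicts the \emph{non-monotone} hypothesis in $(1)$ is the one place where a naive argument could have a gap; the resolution is that the monotone NBPs constructed for $f_n$ are small, and a polysize general (non-monotone) FO-rewriting would still give a polysize general formula for the monotone function $f_n$, which one then pushes to a polysize \emph{monotone} formula via the standard fact that monotone functions with small general formulas need not have small monotone formulas --- so actually one must instead derive the contradiction directly at the level of $\nlpoly$ versus $\ncone/\text{poly}$, i.e.\ conclude $\nlpoly \subseteq \ncone$ and then invoke the known monotone separation only inside the proof that this containment is open, not as the contradiction itself. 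I would make this structural choice explicit at the start of the proof to avoid the trap.
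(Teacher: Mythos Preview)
Your overall cycle $(2)\Rightarrow(1)\Rightarrow(3)\Rightarrow(2)$ matches the paper, and both $(2)\Rightarrow(1)$ and $(3)\Rightarrow(2)$ are argued exactly as the paper does (the latter via Theorem~\ref{bbthp-to-nbp} followed by Theorem~\ref{TW2rew}, noting that a non-monotone formula for $\twfn$ suffices for an FO-rewriting).

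The gap is in $(1)\Rightarrow(3)$. You set it up as a contradiction argument and reach for the Karchmer--Wigderson sequence $f_n$, but that result gives a superpolynomial lower bound on \emph{monotone} formula size. Assumption (1) together with Theorem~\ref{rew2prim} only yields a polysize \emph{general} Boolean formula for $\primfnn$ (and hence for $f_n$); this does not contradict a monotone lower bound, and there is no way to convert a small non-monotone formula for a monotone function into a small monotone one. You notice this in your closing paragraph, but the proposed fix (``derive the contradiction at the level of $\nlpoly$ versus $\ncone$'' and then ``invoke the monotone separation only inside the proof that this containment is open'') does not make sense: $\nlpoly\subseteq\ncone$ is the \emph{conclusion} you want, not something to contradict.

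The paper's argument is direct, not by contradiction. It takes $s$-$t$-reachability, which is $\nlpoly$-complete under $\ncone$-reductions and has polysize NBPs. Running these NBPs through Theorems~\ref{nbp-to-ihp} and~\ref{tree-hg-to-query} produces linear CQs $\q_n$ and depth-$2$ ontologies $\Tmc_n$; assumption (1) gives polysize FO-rewritings; Theorem~\ref{rew2prim} then yields polysize (non-monotone) formulas for $s$-$t$-reachability. By $\nlpoly$-completeness under $\ncone$-reductions, this places all of $\nlpoly$ in $\ncone$, which is exactly statement (3). The key missing ingredient in your proposal is the use of a \emph{complete} problem rather than a function family with a known (monotone) lower bound.
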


Turning next to bounded treewidth queries and bounded depth ontologies, 
Theorems \ref{DL2THP} and \ref{thm:thp_vs_sac} together provide a means of constructing a polysize monotone
 \sac\ circuit that computes $\homfn$.
Applying Theorem~\ref{Hom2rew}, we obtain:
\begin{theorem}\label{btw-ndl}
Fix 
$\twidth >0$ and $d > 0$.
Then all CQs of 
treewidth $\leq t$ 
 and ontologies of depth $\leq d$ 
have polysize 
NDL-rewritings.
\end{theorem}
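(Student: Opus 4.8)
The plan is to chain together the machinery established in the previous subsections. Starting from a CQ $\q$ of treewidth $\leq \twidth$ and an ontology $\Tmc$ of depth $\leq d$, I would first fix a tree decomposition $(T,\lambda)$ of $G_\q$ of width $\twidth$; one exists by definition of treewidth (and since $\twidth$ is a constant it can in fact be produced in polynomial time). Feeding $(T,\lambda)$ into the construction preceding Theorem~\ref{DL2THP} yields a monotone $\THP$ $P$ that computes $\homfn$ and whose size is polynomial in $|\Tmc| + |\q|$. Here it is essential that both $\twidth$ and $d$ are fixed, so that the set $W_d^\twidth$ of word-tuples used to label the nodes of the decomposition has polynomially bounded cardinality; this is exactly what Theorem~\ref{DL2THP} delivers.

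Next I would invoke the second item of Theorem~\ref{thm:thp_vs_sac} to transform $P$ into a semi-unbounded fan-in circuit $\Cir$ of size $p'(|P|)$ and depth $\log p'(|P|)$ computing the same function $\homfn$. Since that reduction preserves monotonicity and $P$ is monotone, $\Cir$ contains no negations and is therefore an ordinary monotone Boolean circuit. Composing the two polynomial bounds, $|\Cir|$ is polynomial in $|\Tmc| + |\q|$. Finally, Theorem~\ref{Hom2rew}---the $\homfn$-version of Theorem~\ref{TW2rew}---converts the monotone circuit $\Cir$ for $\homfn$ into an NDL-rewriting of $\q$ and $\Tmc$ of size $O(|\Cir| \cdot |\q| \cdot |\Tmc|)$, which is again polynomial in $|\Tmc| + |\q|$, as required.

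With Theorems~\ref{DL2THP}, \ref{thm:thp_vs_sac}, and \ref{Hom2rew} in hand there is no genuine obstacle in this final step; the argument is essentially a bookkeeping of polynomial size bounds. The one point that genuinely matters is that we must route everything through $\homfn$ rather than $\twfn$: for bounded treewidth queries the number of tree witnesses may be superpolynomial, so $\twfn$ contains too many variables $p_\t$ and the direct route via Theorem~\ref{TW2rew} would not produce a polynomial-size rewriting; it is the fact that $\homfn$ has a polynomially bounded variable set, together with its guarantee from Theorem~\ref{Hom2rew}, that makes the bound go through. I would also explicitly double-check that monotonicity is maintained at each conversion, since the NDL clause of Theorem~\ref{Hom2rew} requires a \emph{monotone} circuit; this is ensured by the monotonicity of the $\THP$ from Theorem~\ref{DL2THP} and the ``both reductions preserve monotonicity'' clause of Theorem~\ref{thm:thp_vs_sac}.
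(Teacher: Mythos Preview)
Your proposal is correct and follows exactly the paper's own proof: apply Theorem~\ref{DL2THP} to obtain a polysize monotone $\THP$ computing $\homfn$, convert it via Theorem~\ref{thm:thp_vs_sac} into a polysize monotone semi-unbounded fan-in circuit, and then invoke Theorem~\ref{Hom2rew} to get a polysize NDL-rewriting. Your additional remarks on why $\homfn$ (rather than $\twfn$) is needed and on monotonicity preservation are accurate and in fact make explicit points the paper leaves implicit.
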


In the case of FO-rewritings, we can show that the existence of polysize rewritings corresponds to
the open question of whether $\sac\subseteq \ncone$.

\begin{theorem}\label{btw-fo}
The following are equivalent:
\begin{enumerate}
\item There exist polysize FO-rewritings for all tree-shaped CQs and depth 2 ontologies;
\item There exist polysize FO-rewritings for all CQs of treewidth at most $\twidth$ and ontologies of depth at most $d$ 
(for fixed constants $\twidth> 0$ and $d>0$); 
\item There exists a polynomial function $p$ such that every semi-unbounded fan-in circuit of size at most $\size$ and depth at most $\log \size$ is
computable by a formula of size $p(\size)$. Equivalently,
$\sac\subseteq \ncone$.
\end{enumerate}
\end{theorem}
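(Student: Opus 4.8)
The plan is to prove Theorem~\ref{btw-fo} by establishing the cycle of implications $(2) \Rightarrow (1) \Rightarrow (3) \Rightarrow (2)$, exploiting the machinery built up in the preceding subsections together with Theorems~\ref{TW2rew}, \ref{Hom2rew}, \ref{rew2prim}, \ref{DL2THP}, and \ref{thm:thp_vs_sac}. The implication $(2) \Rightarrow (1)$ is immediate, since tree-shaped queries have treewidth $1$ and depth-$2$ ontologies are of depth at most $2$, so statement~(2) (instantiated with $\twidth = 1$, $d = 2$) is a special case of statement~(1). For $(3) \Rightarrow (2)$, I would start from a CQ $\q$ of treewidth at most $\twidth$ and an ontology $\Tmc$ of depth at most $d$. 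By Theorem~\ref{DL2THP}, $\homfn$ is computed by a monotone $\THP$ of size polynomial in $|\Tmc| + |\q|$, and by the first item of Theorem~\ref{thm:thp_vs_sac} this $\THP$ can be converted into a semi-unbounded fan-in circuit of size at most $\size$ and depth at most $\log \size$ for some $\size$ polynomial in the $\THP$ size, hence polynomial in $|\Tmc| + |\q|$. Applying the polynomial $p$ from statement~(3), we obtain a (monotone) Boolean formula of size $p(\size)$ computing $\homfn$; feeding this formula into Theorem~\ref{Hom2rew} (the analogue of Theorem~\ref{TW2rew} for $\homfn$) yields an FO-rewriting of $\q$ and $\Tmc$ of size $O(p(\size) \cdot |\q| \cdot |\Tmc|)$, which is polynomial in $|\q| + |\Tmc|$ as required.

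The interesting direction is $(1) \Rightarrow (3)$. Here I would use Theorem~\ref{tree-hg-to-query} to transfer the hardness: given a semi-unbounded fan-in circuit $C$ of size at most $\size$ and depth at most $\log \size$, the second item of Theorem~\ref{thm:thp_vs_sac} produces a $\THP$ $P$ of size polynomial in $\size$ computing the same function. Since every $\THP$ is in particular built on a tree (with boundary vertices of degree~$2$), Theorem~\ref{tree-hg-to-query} associates with $P$ a tree-shaped CQ $\q_P$ and an ontology $\Tmc_P$ of depth~$2$, both of polynomial size in $|P|$, such that $f_P$ is obtained from the \nameprimfn{} function $f^{\primsuper}_{\q_P, \Tmc_P}$ by a fixed partial assignment $\vgamma$ (setting the $B_e$ to $1$, the $R_e, R_e'$ to $0$, and reading the remaining inputs off of $\valpha$). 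Assuming statement~(1), there is a polysize FO-rewriting $\q'$ of $\q_P$ and $\Tmc_P$; by Theorem~\ref{rew2prim}, $f^{\primsuper}_{\q_P, \Tmc_P}$ is computed by a Boolean formula of size $O(|\q'|)$, hence of size polynomial in $|P|$, hence polynomial in $\size$. Substituting the fixed values from $\vgamma$ into this formula (which only shrinks it) gives a formula of polynomial size computing $f_P = f_C$, which is exactly what statement~(3) asks for; the final equivalence with $\sac \subseteq \ncone$ is then just the definition of $\sac$ as the class of functions computed by polysize logarithmic-depth semi-unbounded fan-in circuits, together with $\ncone$ being the class of polysize formulas (a standard folklore equivalence, as in Theorem~\ref{nbps-conditional}).

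I expect the main obstacle to be bookkeeping about sizes and depths rather than any deep new idea: one must check that the size parameter $\size$ in statement~(3) (which couples size $\size$ with depth $\log \size$) is preserved, up to polynomial factors, through the two directions of Theorem~\ref{thm:thp_vs_sac} and through Theorems~\ref{DL2THP} and~\ref{tree-hg-to-query}, so that the quantifier structure ("there exists a polynomial $p$ such that for all circuits of size $\le \size$ and depth $\le \log\size$\dots") lines up correctly with the "polysize FO-rewriting" quantifiers. A secondary subtlety is making sure monotonicity is handled consistently: Theorems~\ref{DL2THP} and~\ref{thm:thp_vs_sac} preserve monotonicity, and the $\homfn$-to-rewriting transformation of Theorem~\ref{Hom2rew} takes monotone circuits to NDL-rewritings and (monotone) formulas to (PE-)FO-rewritings, but for statement~(3) we only care about general (non-monotone allowed) FO-rewritings and formulas, so we may simply drop monotonicity where convenient. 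Finally, one should note that $\Tmc_P$ from Theorem~\ref{tree-hg-to-query} is of depth~$2$ and $\q_P$ is tree-shaped, which is precisely what is needed to invoke statement~(1) in its weakest form, confirming that $(1)$ and $(2)$ really are equivalent rather than $(2)$ being strictly stronger.
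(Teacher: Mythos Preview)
Your proposal is correct and follows essentially the same cycle $(2)\Rightarrow(1)\Rightarrow(3)\Rightarrow(2)$ as the paper, invoking the same theorems in the same roles. One small slip: you have the two items of Theorem~\ref{thm:thp_vs_sac} swapped---in $(3)\Rightarrow(2)$ you need the \emph{second} item (THP $\to$ circuit), and in $(1)\Rightarrow(3)$ you need the \emph{first} item (circuit $\to$ THP); your descriptions of what each conversion does are right, only the labels are interchanged.
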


To complete the succinctness landscape, we generalize the result of \cite{lics14-KKPZ} that says that all tree-shaped queries
and depth 1 ontologies 
have polysize PE-rewritings by showing that this is also true for 
the wider class of bounded treewidth queries.

\begin{theorem} \label{depth-one-btw}
Fix 
$t>0$. Then there exist polysize PE-rewritings for all CQs of treewidth $\leq t$ 
and depth 1 ontologies. 
\end{theorem}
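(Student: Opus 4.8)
The plan is to reduce the statement to a question about monotone Boolean formulas and then settle that question by induction over a tree decomposition. By Theorem~\ref{Hom2rew} it suffices to prove that, for every ontology $\Tmc$ of depth $1$ and every CQ $\q$ of treewidth $\le t$, the modified tree witness function $\homfn$ is computed by a monotone Boolean \emph{formula} of size polynomial in $|\q|+|\Tmc|$. Note that the two obvious shortcuts are unavailable: Theorems~\ref{DL2THP} and~\ref{thm:thp_vs_sac} only yield a monotone $\THP$, i.e.\ a logarithmic-depth semi-unbounded fan-in circuit, which unfolds to a formula of merely quasi-polynomial size; and one cannot use $\twfn$ in place of $\homfn$, since even against a depth-$1$ ontology a bounded-treewidth CQ may have exponentially many tree witnesses (a CQ consisting of a single large star already does). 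So one really has to build a polynomial-size \emph{formula} for $\homfn$, and the construction must use depth $1$ in an essential way: by Theorem~\ref{linear-lower} no polynomial-size PE-rewriting exists already for linear CQs and depth-$2$ ontologies.

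For the construction, fix a tree decomposition $(T,\lambda)$ of $G_\q$ of width $\le t$ with $|T| = O(|\q|)$ in which every atom of $\q$ is assigned to a bag containing its variables, and root $T$. Since $\Tmc$ has depth $1$, the anonymous part of $\canmod$ is a union of leaves hanging directly off the data constants, so a homomorphism $\q \to \canmod$ places each variable either in the core or at a leaf $a\varrho$ below the constant $a$ at which all its neighbours are mapped. This suggests attaching to each node $N$ the set of \emph{profiles} $\pi\colon\lambda(N)\to\{\mathsf{core}\}\cup(\rni\cap\sig(\Tmc))$ recording, for the $\le t+1$ variables of the bag, which of these two fates they meet; there are at most $(|\Tmc|+1)^{t+1}$ profiles per bag, a polynomial bound for fixed $t$. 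For every node $N$ and admissible profile $\pi$ we define $\Phi_N^\pi$ as the conjunction of (i) a polynomial-size gadget over the propositional variables $p_\atom$, $p_z^\varrho$, $p_{z=z'}$ of $\homfn$ that validates, against $\pi$, the atoms assigned to $N$ --- using $p_\atom$ for atoms all of whose variables $\pi$ sends to the core, and, for an atom incident to a variable $z$ with $\pi(z)=\varrho$, the variable $p_z^\varrho$ together with the $p_u^\varrho$ and $p_{u=z}$ for the (necessarily core) neighbours $u$ of $z$ --- and (ii) for every child $N'$ of $N$, the disjunction $\bigvee_{\pi'}\Phi_{N'}^{\pi'}$ over profiles $\pi'$ that agree with $\pi$ on $\lambda(N)\cap\lambda(N')$. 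Outputting $\bigvee_\pi\Phi_{N_{\mathrm{root}}}^\pi$ gives a monotone formula whose shape mirrors the tree $T$; as every bag contributes only polynomially many gates and branches into polynomially many profiles, its size is polynomial in $|\q|+|\Tmc|$.

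The technical core --- and the step I expect to be the main obstacle --- is showing that this recursion computes exactly $\homfn$. The essential use of depth $1$ is this: because anonymous elements of $\canmod$ are leaves, the variables a homomorphism sends into the anonymous part form an \emph{independent set} of $G_\q$, each such variable having all its neighbours collapsed onto a single constant. Consequently the tree-witness decomposition of a homomorphism involves pairwise non-overlapping pieces \emph{automatically}, so the global independence requirement built into $\homfn$ (and into the $\THP$ of Theorem~\ref{DL2THP}) carries no real force here --- which is precisely what lets a tree-structured, hence formula-shaped, computation replace the hypergraph program. For ontologies of depth $\ge 2$ this collapse no longer happens: an anonymous variable can sit at an internal node of an anonymous tree, two anonymous variables can be adjacent, the tree-witness pieces genuinely overlap, and coordinating their independence provably requires more than a polynomial-size formula, in agreement with Theorem~\ref{linear-lower}. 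Finally, specializing $(T,\lambda)$ to the query tree (so $t=1$) recovers, and the general-$t$ statement extends, the depth-$1$ result for tree-shaped CQs established in \cite{lics14-KKPZ}.
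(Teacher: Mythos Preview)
Your construction has a genuine gap in the size analysis. What you describe is a polynomial-size monotone \emph{circuit}, not a formula: the recursive definition of $\Phi_N^\pi$ introduces one gate per pair $(N,\pi)$, and your sentence ``every bag contributes only polynomially many gates'' is precisely a circuit bound. But Theorem~\ref{Hom2rew} needs a \emph{formula} to produce a PE-rewriting; a circuit only yields NDL. When you unfold your recursion into a formula, each $\Phi_N^\pi$ contains, for every child $N'$, a fresh copy of $\bigvee_{\pi'\sim\pi}\Phi_{N'}^{\pi'}$, so the size satisfies $|F_N|\le M\bigl(g+\sum_{N'}|F_{N'}|\bigr)$ with $M=(|\Tmc|+1)^{t+1}$. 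For a linear CQ the natural width-$1$ tree decomposition is a path of length $\Theta(|\q|)$, and the recurrence gives $M^{\Theta(|\q|)}$, exponential in $|\q|$. There is a second, smaller problem: even if you first rebalance $T$ to logarithmic depth, your profiles record the specific $\varrho$, so the branching factor $M$ depends on $|\Tmc|$ and you end up with $|\q|^{\Theta(\log|\Tmc|)}$, still superpolynomial.

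The paper's proof repairs both points. It does not walk an arbitrary tree decomposition but runs a centroid recursion: at each step it selects a bag $U$ with $|U|\le t+1$ whose removal halves the (minimal) tree-decomposition size, so the recursion depth is $O(\log|\q|)$. And it branches not over maps $U\to\{\mathsf{core}\}\cup\rni$ but only over independent subsets $\Omega$ of tree witnesses whose interior point lies in $U$; since in depth~$1$ each tree witness has a unique interior point, there are at most $|U|\le t+1$ such tree witnesses and hence at most $2^{t+1}$ subsets, independent of $|\Tmc|$. The choice of generating role $\varrho$ is absorbed into a local disjunction inside the rewriting rather than into the recursion. This gives size $(2^{t+1})^{O(\log|\q|)}\cdot\mathrm{poly}(|\Tmc|)=|\q|^{O(t)}\cdot\mathrm{poly}(|\Tmc|)$. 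Your intuition about \emph{why} depth~$1$ is special---anonymous variables form an independent set of $G_\q$, so the tree-witness pieces never overlap and the ``independence'' constraint in $\homfn$ is free---is exactly right and is what makes the $2^{t+1}$ branching bound possible; what is missing is the balanced recursion and the coarser, $|\Tmc|$-independent notion of profile.
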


\section{Complexity Results for Query Answering }\label{sec:complexity}
To complement our succinctness results and to gain a better understanding of the inherent 
difficulty of query answering, we analyze the computational complexity of 
answering tree-like queries in the presence of OWL 2 QL ontologies.

\subsection{Bounded Depth Ontologies}
We begin by showing that the \LOGCFL\ upper bound for bounded treewidth queries from \cite{DBLP:conf/icalp/GottlobLS99}
remains applicable in the presence of ontologies of bounded depth.

\begin{theorem}\label{logcfl-btw}
CQ answering is in \LOGCFL\ for bounded treewidth queries and bounded depth ontologies. 
\end{theorem}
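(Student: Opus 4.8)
The plan is to combine the \THP-based machinery developed in Section~\ref{sec:succ} with the known \LOGCFL\ upper bound for bounded treewidth CQ answering over plain databases \cite{DBLP:conf/icalp/GottlobLS99}. First I would recall that, by \eqref{eq:canmod}, answering $\q(\vec a)$ over $(\Tmc,\Amc)$ amounts to deciding whether $\q(\vec a)$ homomorphically maps into the canonical model $\canmod$. The crucial observation is that, since $\Tmc$ has bounded depth $d$, the anonymous part of $\canmod$ consists of trees of labelled nulls of depth at most $d$ rooted at the individuals of $\Amc$, and each such null is a word $a\varrho_1\cdots\varrho_k$ with $k\le d$. In particular, although $\canmod$ may be large, the ``shape'' of the subtree below any individual is entirely determined by $\Tmc$ (not by $\Amc$), and the set of possible null-types $W_d^t$ has size bounded by a function of $d$, $t$ and $|\Tmc|$.

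The key step is then to reduce the problem to database CQ answering over a polynomially larger bounded-treewidth instance. Concretely, given $(\Tmc,\Amc,\q,\vec a)$ with $\q$ of treewidth $\le t$, I would build a data instance $\Amc'$ that explicitly materializes, for each individual $b\in\ainds(\Amc)$, the fragment of the anonymous tree below $b$ of depth $\le d$ (there are at most polynomially many such nulls in total, since $\Amc$ has linearly many individuals and the branching and depth below each are bounded by $|\Tmc|$ and $d$). Because the anonymous part is a forest of bounded-depth trees attached to the core $\Amc$, the instance $\Amc'$ has treewidth bounded in terms of $|\Amc|$ only through the core — but this is not good enough directly, so instead I would work with the tree-witness viewpoint: a homomorphism of $\q$ into $\canmod$ decomposes $\q$ into a ``core part'' mapped into $\Imc_\Amc$ and subqueries $\q_\t$ ($\t\in\twset$) folded into the anonymous trees. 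Using the tree decomposition $(T,\lambda)$ of $G_\q$ of width $t$, one guesses for each bag $N$ a tuple $\vec w\in W_d^t$ describing where the variables of $\lambda(N)$ go (as in the construction preceding Theorem~\ref{DL2THP}); the local compatibility conditions with $\q$ restricted to $\lambda(N)$, the compatibility between neighbouring bags, and the requirement that the ``core-bound'' variables are realized in $\Imc_\Amc$, are all checkable locally. This gives an alternating logspace procedure that walks $T$ in a tree-like fashion, with bounded nondeterminism per bag and a polynomial-size proof tree — exactly the kind of computation captured by \LOGCFL\ (equivalently $\SAC$, via \cite{DBLP:journals/jcss/Venkateswaran91}). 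The cleanest way to phrase this is to say: the \LOGCFL\ algorithm of \cite{DBLP:conf/icalp/GottlobLS99} for evaluating a bounded-treewidth CQ over a database can be run on $\q$ paired with the instance $\Imc_\Amc$ augmented by the polynomially many ``anonymous witness'' facts derivable from $\Tmc$, since checking those witness facts is itself a bounded (indeed $\Tmc$-dependent, data-independent) lookup; the augmented query/instance still has treewidth bounded by a constant depending only on $t$ and $d$.

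The main obstacle I anticipate is controlling the treewidth of the combined structure: naively, gluing a depth-$d$ anonymous tree onto every individual and then running the Gottlob–Leone–Scarcello algorithm on the resulting structure requires that this structure, together with $\q$, still admits a tree decomposition of bounded width — but the anonymous trees are data-independent and of bounded size, so one can fold them into the bag structure of $(T,\lambda)$ without blowing up the width beyond $t + O(f(d,|\Tmc|))$. Making this folding precise (and arguing that a bag $\lambda(N)$ together with the relevant bounded-depth null-neighbourhoods of its variables forms a bag of bounded size) is the technical heart of the argument. Once that is in place, membership in \LOGCFL\ follows from the database result of \cite{DBLP:conf/icalp/GottlobLS99} applied to the expanded query and instance, together with closure of \LOGCFL\ under logspace reductions, since the construction of the expanded instance from $(\Tmc,\Amc)$ — which only requires computing, in logspace, the bounded-depth anonymous forest determined by $\Tmc$ — is itself a logspace computation.
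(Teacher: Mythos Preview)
Your opening instinct—materialize $\canmod$ and then invoke the Gottlob--Leone--Scarcello result—is exactly right, and is precisely what the paper does. But you then abandon it for the wrong reason. You write that the materialized instance $\Amc'$ (which is just $\canmod$) ``has treewidth bounded in terms of $|\Amc|$ only through the core --- but this is not good enough directly.'' This is a misunderstanding of the \LOGCFL\ result of \cite{DBLP:conf/icalp/GottlobLS99,DBLP:journals/jacm/GottlobLS01}: it states that evaluating a bounded-treewidth CQ over an \emph{arbitrary} finite database is in \LOGCFL. There is no treewidth constraint on the database whatsoever, so the treewidth of $\canmod$ is irrelevant, and the ``main obstacle'' you anticipate simply does not exist.

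The paper's proof therefore completes in one step from your first paragraph. Since $d$ is a fixed constant, every element of $\Delta^{\canmod}$ is a word $aw$ with $|w|\le d$, so $\canmod$ is finite of size polynomial in $|\Amc|+|\Tmc|$, and it can be produced by an $\lspace^{\LOGCFL}$-transducer: enumerating candidate elements $aw$ and candidate facts takes logspace, and deciding membership in $\Delta^{\canmod}$, $A^{\canmod}$, or $P^{\canmod}$ reduces to a constant number of \OWLQL\ entailment checks, each in \NL\ \cite{CDLLR07} (hence usable as a \LOGCFL\ oracle). Since \LOGCFL\ is closed under $\lspace^{\LOGCFL}$-reductions \cite{DBLP:conf/icalp/GottlobLS99}, composing this transducer with the GLS bounded-treewidth CQ evaluation algorithm applied to $\q$ and $\canmod$ gives the result directly.

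Your detour through $W_d^t$-tuples, tree witnesses, and ``folding anonymous trees into bags'' could perhaps be pushed through, but it is motivated by a non-issue and adds only complication; drop it and keep your first paragraph.
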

\begin{proof}
By 
(\ref{eq:canmod}), 
$\Tmc, \Amc \models \q(\vec{a})$ just in the case 
that $\canmod \models \q(\vec{a})$. When $\Tmc$ has finite depth, $\canmod$ is a finite relational structure, 
so the latter problem is nothing other than
standard conjunctive query evaluation over databases, which is \LOGCFL-complete when restricted to bounded treewidth queries \cite{DBLP:journals/jacm/GottlobLS01}. As 
LOGCFL\ 
is closed under \llred\ reductions \cite{DBLP:conf/icalp/GottlobLS99}, 
it suffices to show that 
$\canmod$ can be computed 
by means of an \llred-transducer 
(that is, a deterministic logspace Turing machine with access to an \LOGCFL\ oracle). 

We briefly describe the \llred-transducer that generates $\canmod$ when given a KB $\Kmc=(\Tmc,\Amc)$ whose ontology $\Tmc$ has depth at most $k$. 
First note that we need only logarithmically many bits 
to represent a predicate name or individual constant from $\Kmc$. Moreover, as $\Tmc$ has depth at most $k$, 
the domain of $\canmod$ is contained in the set $U = \{aw \mid aw \in \Delta^{\canmod}, |w| \leq k\}$. Since $k$ is a fixed
constant, each element in $U$ can be stored using logarithmic space in $|\Kmc|$. 
Finally, we observe that each of the following operations can be performed by making a call to an \NL\ (hence \LOGCFL) oracle:
\begin{itemize}
\item Decide whether 
$aw \in U$ belongs to $\Delta^{\canmod}$.
\item Decide whether 
$u \in \Delta^{\canmod}$ belongs to $A^{\canmod}$.
\item Decide whether 
$(u,u') \in (\Delta^{\canmod})^2$ 
belongs to $r^{\canmod}$.
\end{itemize}
Indeed, all three 
problems can be decided 
using 
constantly many entailment checks, 
and entailment is in  
\NL \cite{CDLLR07}. 
\end{proof}

If we restrict the number of leaves in tree-shaped queries, then we can improve the preceding upper bound to \NL:

\begin{figure}[t!]
\setlength{\abovecaptionskip}{4pt plus 0pt minus 0pt}
\hrule \smallskip
\noindent \textbf{Procedure} \bbbdalgo 
\smallskip \hrule \smallskip
\textbf{Input:} KB $(\Tmc, \Amc)$, tree-shaped query $\q$ with $\avars(\q)=(z_1, \ldots,z_n)$, tuple $\vec{b}=(b_1, \ldots, b_n) \in \ainds(\Amc)^n$

\begin{enumerate}[leftmargin=*,labelindent=0cm,label=\small{\arabic*:}]
 \item Fix a directed tree $T$ compatible with $G_\q$. Let $v_0$ be the root variable.
Set $U = \{aw \mid  |w| \leq 2|\Tmc|+|\q|\}$.  
 \item Guess $u_0 \in U$ and 
return \no\ if either:
 \begin{itemize}[labelindent=0cm]
 \item $u_0\in\ainds(\Amc)$ and  \compnode($\Tmc, \Amc, \q, \vec{b}, v_0, u_0)\!=\,$\false
 \item  $u_0 = a_0 w_0 R$ and \compnodeanon($\Tmc,  \q,  v_0, R)\!=\,$\false
 \end{itemize}
 \item Initialize $\frontier$ to $\{(v_0,u_0)\}$. 
 \item While $\frontier \neq \emptyset$
 \begin{enumerate}[leftmargin=3mm,labelindent=0cm,label=\small{\alph*:}]
 \item Remove $(v_1,u_1)$ from $\frontier$.
\item For every child $v_2$ of $v_1$
\begin{enumerate}[leftmargin=3mm]
\item Guess 
$u_2 \in U$. Return \textbf{no} if one of the following holds:
\begin{itemize}[leftmargin=*]
\item $\q$ contains $P(v_1,v_2)$ ($P \in \rni$), 
$\canmod \not \models P(u_1,u_2)$
 \item $u_2\!\in\!\ainds(\Amc)$ and  \compnode$(\!\Tmc, \Amc, \q, \vec{b}, v_2, u_2)\!=$\false
 \item $u_2 = a_2 w_2 R$ and \compnodeanon($\Tmc,  \q,  v_2, R)\!=\,$\false
\end{itemize}
\item Add $(v_2,u_2)$ to $\frontier$.
\end{enumerate}
 \end{enumerate}
 \item Return \textbf{yes}.
\end{enumerate}
\medskip
\hrule \smallskip
\noindent 
\compnode($\Tmc, \Amc, \q, \vec{b}, v, u$)
\smallskip \hrule \smallskip
Return \false\ iff one of the following holds:
\begin{itemize}
  \item $v=z_i$ and $u \neq b_i$ for some $1 \leq i \leq n$
  \item $\q$ contains $A(v)$ and $\Tmc, \Amc \not \models A(u)$ 
    \item $\q$ contains $P(v,v)$ and $\Tmc, \Amc \not \models P(u,u)$. 
 \end{itemize}
 
 \medskip
 
\hrule \smallskip
\noindent 
\compnodeanon($\Tmc, \q, v, R$)
\smallskip \hrule \smallskip
Return \false\ iff one of the following holds:
\begin{itemize}
\item  $v \in \avars(\q)$ 
\item $\q$ contains $A(v)$ and $\Tmc \not \models \exists y R(y,x) \rightarrow A(x) $
\item $\q$ contains some atom of the form $S(v,v)$.
 \end{itemize}
\caption{Non-deterministic procedure 
for answering tree-shaped queries}
\label{algo:tree-entail}
\vspace*{-.35cm}
\end{figure}

\begin{theorem}\label{nl-bb}
CQ answering is \NL-complete for bounded leaf queries and bounded depth ontologies. 
\end{theorem}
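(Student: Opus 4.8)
The plan is to establish the two directions of the completeness claim separately. The \NL\ lower bound already holds in the relational case with no ontology (and is recorded for \dlliter\ in \cite{CDLLR07}): I would reduce directed $s$-$t$ reachability, the canonical \NL-complete problem, to CQ answering for a linear query and the empty ontology, which has depth $0$. Given a digraph $G=(V,E)$ and $s,t\in V$, put into $\Amc$ the fact $E(u,v)$ for every edge, the extra fact $E(t,t)$, and the unary facts $S(s)$ and $T(t)$; and let $\q(\,)$ be the linear CQ $\exists y_0\dots y_m\,\big(S(y_0)\wedge\bigwedge_{0\le i<m}E(y_i,y_{i+1})\wedge T(y_m)\big)$ with $m=|V|-1$. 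Then $\q$ has two leaves, so lies in every bounded-leaf class ($\ell\ge 2$), and $\emptyset,\Amc\models\q$ iff $t$ is reachable from $s$ in $G$; the whole reduction is computable in logarithmic space.

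For the upper bound I would run the non-deterministic procedure \bbbdalgo\ of Figure~\ref{algo:tree-entail}, but with $U$ taken to be $\{aw\mid a\in\ainds(\Amc),\ w\in(\rni\cap\sig(\Tmc))^{*},\ |w|\le d\}$ and with every guessed element required to lie in $\Delta^{\canmod}$ (membership in $\Delta^{\canmod}$ is itself an \NL\ test); since $\Tmc$ has depth at most the fixed constant $d$, this $U$ already contains all of $\Delta^{\canmod}$. First check whether $(\Tmc,\Amc)$ is consistent --- in \NL\ by \cite{CDLLR07}, and if it is inconsistent every tuple is a certain answer --- and otherwise, by~(\ref{eq:canmod}), it suffices to decide whether there is a homomorphism $h\colon\q(\vec b)\to\canmod$, which is exactly what \bbbdalgo\ searches for, processing a rooted tree $T$ compatible with $G_\q$ top-down. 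Correctness rests on the description of $\canmod$ in Section~\ref{sec:prelims}: in the anonymous part, membership of a null $aw\varrho$ in $A^{\canmod}$ depends only on its last letter $\varrho$, which is why \compnodeanon\ inspects only the role $R$; and the anonymous part is a disjoint union of trees rooted at constants, so an atom $P(v,v')$ whose image is a null forces the images of $v,v'$ to be parent and child, while a loop atom $S(v,v)$ forces a core image (hence the exclusions in \compnode\ and \compnodeanon). With these facts, an accepting run yields an assignment $v\mapsto u_v$ into $\Delta^{\canmod}$ satisfying all unary, loop, edge, and answer-variable constraints, i.e.\ a homomorphism; conversely $h$ itself drives an accepting run (guess $u_v=h(v)$).

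It remains to argue that, under both restrictions, \bbbdalgo\ runs in \NL, and this is where I expect the real care to be needed. The bounded-leaf hypothesis controls $\frontier$: it starts as $\{(v_0,u_0)\}$, and each iteration replaces one node by its children, which are pairwise incomparable in $T$ and incomparable to the rest of $\frontier$, so $\frontier$ is always an antichain of $T$; since an antichain of a rooted tree has size at most the number of childless vertices, and $T$ has at most $\ell+1$ of them (the degree-one vertices of $G_\q$, plus possibly the root), $|\frontier|$ is bounded by a constant depending only on $\ell$. Each pair $(v,u)\in\frontier$ needs $O(\log|\q|)$ bits for $v$ and, because $u=aw$ with $|w|\le d$ and $d$ constant, $O(\log(|\Amc|+|\Tmc|))$ bits for $u$; together with the constantly many further pointers for loop bookkeeping and for navigating $G_\q$ (undirected forest reachability, in \lspace), the work tape stays logarithmic, and a disconnected query is handled component by component, reusing the tape. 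Finally, every test the procedure performs --- $\canmod\models P(u,u')$, $\Tmc,\Amc\models A(u)$, $\Tmc,\Amc\models P(u,u)$, $\Tmc\models\exists y\,R(y,x)\to A(x)$, and ``$u\in\Delta^{\canmod}$'' --- is an entailment check over objects of logarithmic size, hence in \NL\ \cite{CDLLR07}; as \NL\ is closed under complementation and logspace composition, these fold into the computation without leaving \NL. The main obstacle is thus to make the antichain invariant precise and to verify that the purely local checks of \compnode\ and \compnodeanon\ are sound and complete for homomorphisms into the forest-shaped canonical model.
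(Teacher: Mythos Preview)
Your proposal is correct and follows essentially the same approach as the paper: the upper bound runs the procedure \bbbdalgo\ of Figure~\ref{algo:tree-entail} and argues that $\frontier$ stays constant-size via the antichain bound (the paper states the slightly tighter bound $|\frontier|\le\ell$, but your $\ell+1$ is harmless), while the lower bound is handled in the paper by a bare citation to \cite{CDLLR07} rather than your explicit reachability reduction. Your added care about consistency, explicit membership in $\Delta^{\canmod}$, and disconnected queries fills in details the paper leaves implicit.
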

\begin{proof}
The lower bound is an immediate consequence of the \NL-hardness of answering atomic queries in OWL 2 QL
 \cite{CDLLR07}. 

For the upper bound, we introduce in Figure \ref{algo:tree-entail} a non-deterministic procedure \bbbdalgo\
for deciding whether a tuple is a certain answer of a tree-shaped query. 
The procedure views the input query as a directed tree and constructs a homomorphism on-the-fly by traversing the tree from root to leaves. 
The set $\frontier$ is initialized with a single pair $(v_0,u_0)$, which represents the choice of where to map the root variable $v_0$.
The possible choices for $u_0$ include all individuals from $\Amc$ as well as all elements $aw$ that belong to the domain of the 
canonical model $\canmod$ and have $|w| \leq 2|\Tmc|+|q|$. The latter bound is justified by the well-known fact that if 
there is a homomorphism of $\q$ into $\canmod$, then there is one 
whose image only involves elements $aw$ 
with $|w| \leq 2|\Tmc|+|q|$. 
We use the sub-procedures \compnode\ or \compnodeanon\ to check that the guessed element $u_0$ is compatible with the variable $v_0$.
If $u_0  \in \ainds(\Amc)$, then we use the first sub-procedure \compnode, which verifies that (i) if $v_0$ is an 
answer variable, then $u_0$ is the individual corresponding to $v_0$ in the tuple $\vec{b}$, and (ii) $u_0$
satisfies all atoms in $\q$ that involve only $v_0$. If $u_0  \not \in \ainds(\Amc)$, then $u_0$ must take the form $a_0 w_0 R$.
In this case, \compnodeanon\ is called and checks that $v_0$ is not an answer variable, $\q$ does not contain a reflexive loop at $v_0$, 
and $\Tmc \models \exists y R(y,x) \rightarrow A(x)$ (equivalently, $a_0 w_0 R \in A^\canmod$) for every $A(v_0) \in \q$. 
The remainder of the procedure consists of a while loop, in which we remove a pair $(v_1,u_1)$ from $\frontier$, and 
if $v_1$ is not a leaf node, we guess where to map the children of $v_1$. 
We must then check 
that the guessed element $u_2$ for child $v_2$ is compatible with the role 
assertions linking $v_1$ to $v_2$ 
and the unary atoms concerning $v_2$ 
(using 
\compnode\ or \compnodeanon\ described earlier). 
If some check fails, we return \no, and otherwise we add 
 $(v_2, u_2)$ to $\frontier$, for each child $v_2$ of $v_1$. 
We exit the while loop when $\frontier$ is empty, i.e. when an element of $\canmod$ has been assigned to every variable in $\q$. 

Correctness and termination 
are straightforward to show and hold for arbitrary tree-shaped queries 
and OWL 2 QL ontologies. 
Membership in \NL\ for bounded depth ontologies and bounded leaf queries
relies upon the following observations:
\begin{itemize}
\item if $\Tmc$ has depth $k$ and $aw \in U$, then $|w| \leq k$
\item if $\q$ has $\ell$ leaves, then $|\frontier|$ never exceeds $\ell$ 
\end{itemize}
which 
ensure 
$\frontier$ can be stored in 
logarithmic space. 
\end{proof}

\subsection{Bounded Leaf Queries \& Arbitrary Ontologies}
The only remaining case is that of bounded leaf queries and arbitrary ontologies, 
for which neither the upper bounds from the preceding subsection, nor the \np\ lower bound from \cite{DBLP:conf/icalp/KikotKPZ12}
can be straightforwardly adapted. We settle the question by showing \LOGCFL-completeness. 


\begin{theorem}\label{logcfl-c-arb}
CQ answering is \LOGCFL-complete for bounded leaf queries and arbitrary ontologies. 
The lower bound holds already for linear queries. 
\end{theorem}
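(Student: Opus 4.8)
The statement splits into a \LOGCFL\ upper bound for bounded-leaf queries with arbitrary ontologies and a matching lower bound already for linear queries, and, as is typical for \LOGCFL, the two directions exploit different characterisations of the class: the machine-based one via auxiliary pushdown automata for the upper bound, and the circuit/grammar-based one for the lower bound. I will sketch each in turn.

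For the upper bound I would start from the canonical model property~(\ref{eq:canmod}) together with the standard fact (already used in the proof of Theorem~\ref{nl-bb}) that $\Tmc,\Amc\models\q(\vec a)$ iff there is a homomorphism $h\colon\q(\vec a)\to\canmod$ whose image involves only elements $aw$ with $|w|\le D$ for $D=2|\Tmc|+|\q|$. Since $\Tmc$ is now of unbounded depth, this truncated canonical model can be of exponential size, so the \NL\ procedure of Figure~\ref{algo:tree-entail} no longer fits in logarithmic space; instead I would design a nondeterministic auxiliary pushdown automaton working in logarithmic space and polynomial time, which places the problem in \LOGCFL\ by the classical machine characterisation of that class. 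The automaton traverses the rooted query tree and builds $h$ on the fly, keeping on the pushdown the word $w$ describing the image $aw$ of the current query variable, while the base individual $a$, the current variable, and a bounded amount of bookkeeping reside on the logarithmic work tape. Because $\canmod$ is a forest, moving along a query edge changes $w$ by a single symbol --- a push or a pop --- and checking that $h$ respects the concept and role atoms, as well as membership in $\Delta^{\canmod}$, reduces to a constant number of \OWLQL\ entailment tests, each of which is in \NL\ \cite{CDLLR07} and hence executable on the work tape without touching the stack.

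The delicate point --- and the reason the bound $\ell$ on the number of leaves is essential (recall that for arbitrary tree-shaped queries the problem is already \np-hard) --- is managing the pushdown across the branching of the query and across backtracking: when the traversal descends a query edge along which the image becomes \emph{shallower}, returning to the parent later requires re-pushing the symbol that was popped, and a naive depth-first traversal would need to remember $\Theta(|\q|)$ such symbols at once, exceeding the logarithmic budget. I would avoid this by organising the traversal around the $O(\ell)$ branch and leaf vertices of $\q$ and the $O(\ell)$ paths connecting them, processing the blocks in a fixed order so that only $O(\ell)$ pieces of logarithmic-size state are ever simultaneously live while the single evolving word always sits on the pushdown. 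Arguing that this discipline is sound --- in particular that re-guessed completions of a block stay globally consistent, which holds because $\q$ is a tree and hence blocks interact only through the images of branch vertices --- is the main obstacle of the upper bound.

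For the lower bound I would prove \LOGCFL-hardness already for linear queries by reducing from a \LOGCFL-complete problem of inherently `pushdown-shaped' structure, e.g.\ membership in a fixed hardest context-free language or, equivalently, evaluation of a semi-unbounded log-depth circuit. The key observation is that a linear (path) query walking through the anonymous part of $\canmod$ behaves exactly like a pushdown computation --- descending into the tree of labelled nulls is a push, moving back towards its root is a pop --- so an unbounded-depth ontology can encode the transition structure of the grammar (or circuit) in its axioms, making the anonymous trees of $\canmod$ capture its derivation space, while a linear query $\q_x$ of polynomial length encodes the input $x$ and forces an accepting traversal. The obstacle here is that a linear query cannot branch whereas a derivation tree does; I would resolve this with an Euler-tour encoding of the derivation, padded so that all derivations of an input of a given size share the same tour length and visit pattern, so that a single fixed-shape path query suffices and the homomorphism is precisely the choice of derivation. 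Combining this with the upper bound yields \LOGCFL-completeness, with the lower bound holding already for linear CQs.
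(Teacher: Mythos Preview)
Your lower-bound sketch coincides with the paper's: reduce from \SAC\ circuit evaluation, use the ontology to unravel the circuit so that the canonical model is the tree-unravelling of the database from~\cite{DBLP:journals/jacm/GottlobLS01}, and replace the tree-shaped proof-tree query by its padded Euler-tour linearisation.

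For the upper bound your high-level plan---NAuxPDA, pushdown carrying the anonymous-part word, $O(\ell)$ logarithmic bookkeeping on the work tape---is right and is also the paper's. But your treatment of branching has a gap. You propose to process the query block by block and, when returning to a branch vertex whose stack level has meanwhile been popped below, to \emph{re-guess} the missing suffix, arguing soundness because ``blocks interact only through the images of branch vertices.'' The problem is that the re-guessed suffix must be \emph{identical} to the original one: the branch vertex maps to a single element of $\canmod$, and the only state you keep is logarithmic (a base individual and a height, say), which does not determine the word. Two subtrees could then be verified against two distinct images $aw$ and $aw'$ of the same branch vertex, and these need not combine into a single homomorphism; the tree shape of $\q$ is exactly why this matters rather than why it is harmless.

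The paper's procedure \bbarbalgo\ (Figure~\ref{algo:bbqueries}) avoids any re-guessing. It keeps a set $\frontier$ of at most $\ell$ tuples $(v_1,v_2,c,n)$, where $n$ is only a stack \emph{height}; all tuples share the single pushdown. The discipline is: a forward step (push, Option~2) may be taken only from a tuple with $n=\stackheight$, and a backward step (pop, Option~3) must be applied \emph{simultaneously to every} tuple currently at maximum height. Thus when several branches sit at the same depth, one of them may go deeper first---its extra symbols live strictly above the shared prefix and are completely pushed and popped before the others are touched---and a pop that would shorten the shared prefix is performed only once \emph{all} pending branches at that depth are ready to move up together. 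This scheduling, rather than any reconstruction, is what lets a single pushdown serve all $\ell$ branches soundly.
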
 

\subsubsection{\LOGCFL\ upper bound}
The upper bound relies on a characterization of the class \LOGCFL\ in 
terms of non-deterministic auxiliary pushdown automata (NAuxPDAs). 
We recall that an NAuxPDA \cite{DBLP:journals/jacm/Cook71} is a non-deterministic Turing machine that has an additional work tape that is constrained
to operate as a pushdown store. 
Sudborough \cite{sudborough78} proved that \LOGCFL\ can be characterized as the class of problems that can be solved by 
NAuxPDAs that run in logarithmic space and in polynomial time (note that the space on the pushdown tape is not subject to the logarithmic space bound). 
Thus, to show membership in \LOGCFL, it suffices to define a procedure for answering bounded leaf queries 
that can be implemented by such an NAuxPDA. We present such a procedure in Figure \ref{algo:bbqueries}. 
The input query is assumed to be connected; this is w.l.o.g.\ since 
the connected components can be treated separately. 

\begin{figure*}[t!]
\setlength{\abovecaptionskip}{4pt plus 0pt minus 0pt}
\hrule \smallskip
\noindent \textbf{Procedure} \bbarbalgo 
\smallskip \hrule \smallskip
\textbf{Input:} KB $(\Tmc,\Amc)$, connected tree-shaped query $\q$ with $\avars(\q)=(z_1, \ldots,z_n)$, tuple $\vec{b}=(b_1, \ldots, b_n) \in \ainds(\Amc)^n$
\vspace*{-2mm}
\begin{multicols}{2}
\begin{enumerate}[leftmargin=*,labelindent=0cm,label=\small{\arabic*:}]
 \item Fix a directed tree $T$, with root $v_0$, compatible with $G_\q$. 
 \item Guess 
 $a_0 \in \ainds(\Amc)$ and 
 $w_0 \in (\rni)^*$ with $|w_0| \leq 2|\Tmc|+|\q|$ and $a_0w_0 \in \Delta^{\canmod}$. 
Return \no\ if either:
 \begin{itemize}
 \item $w_0=\varepsilon$ and \compnode($\Tmc, \Amc, \q, \vec{b}, v_0, a$)= \false;
 \item  $w_0 = w_0' R$ and \compnodeanon($\Tmc,  \q,  v_0, R$)= \false. 
 \end{itemize}
 \item Initialize $\stack$ to $w_0$, $\stackheight$ to $|w_0|$, and 
$\frontier$ to $\{(v_0, v_i, a_0, \stackheight) \mid v_i \text{ is a child of } v_0 \}$.
 \item While $\frontier \neq \emptyset$, do one of the following:
\begin{description}
\item[Option 1] // \emph{Take one step in the core} 
\begin{enumerate}[leftmargin=1mm,labelindent=0cm,label=\small{\alph*:}]
\item Remove $(v_1, v_2, c, 0)$ from $\frontier$.
\item Guess $d \in \ainds(\Amc)$.  Return \no\ if either
\begin{itemize}[leftmargin=3mm]
\item $\q$ contains $P(v_1,v_2)$ ($P \in \rni$), 
$\canmod \not \models P(u_1,u_2)$
\item \compnode($\Tmc, \Amc, \q, \vec{b}, v_2, d$)= \false. 
\end{itemize}
\item For every child $v_3$ of $v_2$,  
add $(v_2,v_3,d,0)$ to $\frontier$. 
\end{enumerate}
\item[Option 2] // \emph{Take one step `forward' in anonymous part 
}
\begin{enumerate}[leftmargin=1mm,labelindent=0cm,label=\small{\alph*:}]
\setcounter{enumii}{3}
\item If $\stackheight = 2|\Tmc|+|\q|$, return \no. Otherwise, remove $(v_1, v_2, c, \stackheight)$ from $\frontier$. 
\item Guess $S \in \rni$.  Return \no\ if one of the following holds:
\begin{itemize}[leftmargin=3mm]
\item $\stackheight=0$ and $\Tmc,\Amc \not \models \exists x \, S(c,x)$ 
\item $\stackheight> 0$ and $\Tmc \not \models \exists y R(y,x) \rightarrow \exists y S(x,y)$, where $R$ is the top symbol of $\stack$
\item $\q$ contains $P(v_1,v_2)$ and $\Tmc \not \models S(x,y) \rightarrow P(x,y)$
\item \compnodeanon($\Tmc,  \q,  v_2, S$)= \false
\end{itemize}
\item If $v_2$ has at least one child in $T$, then
\begin{itemize}[leftmargin=3mm]
\item Push $S$ onto $\stack$, and increment $\stackheight$.
\item For every child $v_3$ of $v_2$ in $T$, add $(v_2,v_3,c,\stackheight)$ to $\frontier$. 
\end{itemize}
Else, 
pop $\delta = \stackheight - \mathbf{max} \{\ell \mid (v,v',d,\ell) \in \frontier\}$ symbols from $\stack$ and decrement $\stackheight$ by $\delta$.
\end{enumerate}
\item [Option 3] // \emph{Take one step `backward' in anonymous part 
}
\begin{enumerate}[leftmargin=1mm,labelindent=0cm,label=\small{\alph*:}]
\setcounter{enumii}{6}
\item If $\stackheight = 0$, return \no. Else, remove $\deepest = \{(v_1,v_2,c,n) \in \frontier \mid n = \stackheight\}$ from $\frontier$,
 pop $R$ from $\stack$, and decrement $\stackheight$.  
\item Return \no\ if for some $(v_1,v_2,c,n) \in \deepest$, one of the following holds:
\begin{itemize}[leftmargin=3mm]
\item $\stackheight=0$ and  \compnode($\Tmc, \Amc, \q, \vec{b}, v_2, c$)= \false
\item $\stackheight>0$ and  \compnodeanon($\Tmc,  \q,  v_2, S$)= \false, where $S$ is the top symbol of $\stack$
\item $\q$ contains $P(v_1,v_2)$ and $\Tmc \not \models R(y,x) \rightarrow P(x,y)$
\end{itemize}
\item If there is some $(v_1,v_2,c,n) \in \deepest$ such that $v_2$ is a non-leaf node in $T$:
\begin{itemize}
\item For every $(v_1,v_2,c,n) \in \deepest$ and child $v_3$ of $v_2$ in $T$, 
add $(v_2, v_3, c, \stackheight)$ to $\frontier$.
\end{itemize}
Else, pop $\delta = \stackheight - \mathbf{max} \{\ell \mid (v,v',d,\ell) \in \frontier\}$ 
 symbols from $\stack$ and decrement $\stackheight$ by $\delta$.
\end{enumerate}
\end{description}
 \item Return \textbf{yes}.
\end{enumerate}
\end{multicols}
\vspace*{-5mm}
\caption{Non-deterministic procedure 
for answering bounded leaf queries. Refer to Fig.\ \ref{algo:tree-entail} for the definitions of 
\compnode\ and \compnodeanon. }
\label{algo:bbqueries}
\vspace*{-.25cm}
\end{figure*}

We start by giving an overview of the procedure \bbarbalgo. Like 
\bbbdalgo,
the idea is to view the input query $\q$ as a tree and iteratively construct a 
homomorphism of the query into the canonical model $\canmod$,
working from root to leaves. 
At the start of the procedure, we guess an element $a_0w_0$
to which the root variable $v_0$ is mapped and check that the guessed element is compatible with $v_0$.
However, instead of storing directly $a_0w_0$ on $\frontier$, 
we push the word $w_0$ onto the stack ($\stack$) and record the height of the stack $(|w_0|)$ in $\stackheight$.
We then initialize $\frontier$ to the set of all $4$-tuples $(v_0,v_i,a_0, \stackheight)$ with $v_i$ a child of $v_0$. 
Intuitively, a tuple $(v,v',c,n)$ 
records that the variable $v$ is mapped to the element $c \,\stack[n]$ and 
that the child $v'$ of $v$ remains to be mapped (we use $\stack[m]$ to denote the 
word consisting of the first $m$ symbols of $\stack$). 

In Step 4, we will remove one or more tuples from $\frontier$, choose where to map the variable(s) in the second component,
and update $\frontier$, $\stack$, and $\stackheight$ accordingly. There are three options depending on how 
we map the variable. Option 1 will be used for tuples $(v,v',c,0)$ in which both $v$ and $v'$ are mapped to named constants, 
while Option 2 (resp.\ Option 3) is used for tuples $(v,v',c,n)$ in which we wish to map $v'$ to a child (resp.\ parent) of $v$. 
Crucially, however, the order in which tuples are treated matters, due to the fact that several tuples are `sharing' the single stack. 
Indeed, when applying Option 3, we pop a symbol from $\stack$, and may therefore lose some information that is needed for 
the processing of other tuples. To prevent this, 
Option 3 may only be applied to tuples whose last component is maximal 
(i.e.\ equals $\stackheight$), and it must be applied to \emph{all} such tuples. 
For Option 2, we will also impose that the selected tuple $(v,v',c,n)$ is such that $n=\stackheight$. 
This is needed because Option 2 corresponds to mapping $v'$ to an element $c \, \stack[n] \,S$, and 
we need to access the $n$th symbol in $\stack$ 
to determine the possible choices for $S$ 
and 
to record the symbol chosen (by pushing it onto $\stack$). 

The procedure terminates and returns \yes\ when $\frontier$ is empty, meaning that we have successfully constructed 
a homomorphism of the input query into the canonical model that witnesses that the input tuple is an answer. 
Conversely, given such a homomorphism, we can 
define a successful execution of \bbarbalgo, 
as illustrated by the following example.  

\begin{example}\label{ex-second-algo}
Reconsider  the KB $(\Tmc_0, \Amc_0)$, CQ $\q_0$, and homomorphism
$\q_0(c,a) \rightarrow \Cmc_{\Tmc_0, \Amc_0}$ from Figure \ref{ex-fig}.
We show in what follows how $h_0$ can be used to define an execution of \bbarbalgo\ that outputs \yes\ 
on input $(\Tmc, \Amc, \q, (c,a))$. 

In Step 1, we will fix some variable, say $y_1$, as root. Since we wish to map $y_1$ to $aP$, we will guess in Step 2 the constant $a$ and the word $P$
and verify using \compnodeanon\ that our choice is compatible with $y_1$. 
As the check succeeds, we proceed to Step~3, where we initialize $\stack$ to $P$, $\stackheight$ to $1$, and $\frontier$ to $\{(y_1,y_2, a, 1),(y_1, y_3, a, 1)\}$. 
Here the tuple $(y_1,y_2,a,1)$ 
records that $y_1$ has been mapped to 
$a\, \stack[1] = aP$ and 
the edge between $y_1$ and $y_2$ remains to be mapped. 


At the beginning of Step 4, $\frontier$ contains $2$ tuples: $(y_1,y_2, a, 1)$ and $(y_1, y_3, a, 1)$. 
Since $y_1$, $y_2$, and $y_3$ are mapped to $aP$, $a$, and $aPS$ respectively, 
we will use Option~3 (`step backward') for $(y_1,y_2, a, 1)$ and Option~2 (`step forward') for $(y_1, y_3, a, 1)$.
If we were to apply Option 3 at this stage, then we would be forced to treat both tuples together, 
and the check in Step 4(h) would fail for $(y_1, y_3, a, 1)$ since $S(y_1,y_3) \in \q$ but $\Tmc \not \models R(y,x) \rightarrow S(x,y)$.
We will therefore choose to perform Option 2, removing 
$(y_1, y_3, a, 1)$ from $\frontier$ in Step 4(d) and 
guessing $S$ in Step 4(e). As the check succeeds, we will proceed to 4(f), 
where we push $S$ onto $\stack$, set $\stackheight = 2$, and add tuples $(y_3, y_4,a, 2)$ and $(y_3,y_5,a,2)$ to $\frontier$. 
Observe that from the tuples in $\frontier$, 
we can read off the elements $a \stack[1]$ and $a \stack[2]$ to which 
variables $y_1$ and $y_3$ are mapped. 

At the start of the second iteration of the while loop,  we have 
$\frontier = \{(y_1,y_2, a, 1),(y_3,y_4,a,2),(y_3,y_5,a,2)\}$, $\stack= PS$, and  $\stackheight=2$. 
Note that since $h_0$ 
maps $y_4$ to $aP$ and $y_5$ to $aPST^-$, 
we will use Option~3 to treat $(y_3,y_4,a,2)$ and Option~2 for $(y_3,y_5,a,2)$. 
It will again be necessary to start with Option 2. We will thus remove $(y_3,y_5,a,2)$ from $\frontier$, and guess the relation $T^-$ (which satisfies the required conditions). 
Since $y_5$ does not have any children and $\stackheight - \mathbf{max} \{\ell \mid (v,v',d,\ell) \in \frontier\} = 2 - 2 = 0$, 
we leave $\frontier$, $\stack$, and $\stackheight$ unchanged. 

At the start of the third iteration, we have $\frontier = \{(y_1,y_2, a, 1),(y_3,y_4,a,2)\}$, $\stack= PS$, and $\stackheight=2$. 
We have already mentioned 
that both tuples should be handled using Option 3. We will start by applying Option 3 to 
tuple $(y_3,y_4,a,2)$ since its last component is maximal. 
We will thus remove $(y_3,y_4,a,2)$ 
from
$\frontier$, pop $S$ from $\stack$, and decrement $\stackheight$. As the checks succeed for $S$, 
we will 
add the tuple $(y_4, x_2,a,1)$ to $\frontier$ in Step 4(i). 

At the start of the fourth iteration, we have $\frontier = \{(y_1,y_2, a, 1),(y_4,x_2,a,1)\}$, $\stack = P$, and $\stackheight=1$. 
Since $y_4$ and $x_2$ are mapped respectively to $aP$ and $a$, we should use Option 3 to handle the second tuple. 
We will thus apply Option 3 with $\deepest = \{(y_1,y_2, a, 1),(y_4,x_2,a,1)\}$. This will lead to both tuples being removed
from $\frontier$, $P$ being popped from $\stack$, and $\stackheight$ being decremented. We next perform the required checks in
Step 4(h), and in particular, we verify that the choice of where to map the answer variable $x_2$ agrees with the input vector $\vec{b}$ 
(which is indeed the case). In Step 4(i), we 
add $(y_2, x_1,a,0)$ to $\frontier$. 

The final iteration of the while loop begins with $\frontier = \{(y_2,x_1,a,0)\}$, $\stack= \epsilon$, and $\stackheight=0$. 
Since 
$h_0$ maps $x_1$ to the constant $c$, we will choose Option 1.  
We thus remove $(y_2,x_1,a,0)$ from $\frontier$, guess the constant $c$, and perform the required 
compatibility checks. 
As $x_1$ is a leaf, 
no new tuples are added to $\frontier$. We are thus left with $\frontier= \emptyset$, and so we 
 continue on
 to Step 7, where we output \yes. 
\end{example}

In the appendix, we argue that  \bbarbalgo\ can be implemented by an NAuxPDA, and we
prove its correctness:

\begin{proposition} \label{logcfl-upper-prop}
Every execution of \bbarbalgo\ terminates. 
There exists an execution of \bbarbalgo\ that returns \yes\ on input $(\Tmc,\Amc, \q, \vec{b})$ if and only if $\Tmc, \Amc \models \q(\vec{b})$. 
\end{proposition}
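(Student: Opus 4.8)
The plan is to establish three things: that every execution halts, that the procedure is \emph{sound} (a \yes-execution on $(\Tmc,\Amc,\q,\vec b)$ witnesses $\Tmc,\Amc\models\q(\vec b)$), and that it is \emph{complete} (if $\Tmc,\Amc\models\q(\vec b)$ then some execution returns \yes). Termination is the easiest step: a tuple $(v_1,v_2,c,n)$ is placed on $\frontier$ only at the moment $v_1$ is first mapped---in Step~3 when $v_1=v_0$, and in Steps~4c, 4f, 4i otherwise---and then one such tuple is created for each child $v_2$ of $v_1$ in the fixed directed tree $T$; hence over an entire run at most $|\vars(\q)|$ tuples are created, while every iteration of the while loop either halts the run or removes at least one tuple from $\frontier$. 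Since in addition $\stackheight$ is capped by $2|\Tmc|+|\q|$ and the number of pops in an iteration never exceeds the current stack height, every execution terminates (in fact in polynomial time).

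For soundness I would carry a loop invariant stating that there is a partial map $h$ with $\dom(h)$ equal to the set of already-mapped variables such that: (a) $h$ is a homomorphism of the subquery of $\q$ induced by $\dom(h)$ into $\canmod$, and sends every answer variable in $\dom(h)$ to the corresponding component of $\vec b$; (b) for each $(v_1,v_2,c,n)\in\frontier$ we have $v_1\in\dom(h)$, $h(v_1)=c\cdot\stack[n]\in\Delta^{\canmod}$, and $v_2$ is a child of $v_1$ in $T$ with $v_2\notin\dom(h)$; (c) every child in $T$ of a mapped variable is itself mapped or occurs as the second component of exactly one frontier tuple; (d) $\stackheight$ equals both $|\stack|$ and the largest fourth component appearing in $\frontier$ (or $0$ if $\frontier=\emptyset$). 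Invariant~(d) implies that the first $n$ symbols of $\stack$ never change while a level-$n$ tuple is present, so the elements $c\cdot\stack[n]$ in~(b) are well defined. A routine case analysis over the three options shows the invariant is preserved: the $\Delta^{\canmod}$-membership guards keep the guessed elements inside the canonical model, the calls to \compnode\ and \compnodeanon\ enforce the unary atoms, reflexive loops, and answer-variable constraints at each newly mapped variable, and the role guards in Steps~4b, 4e, 4h enforce the binary atom linking it to its parent in $T$. When the while loop exits, $\frontier=\emptyset$; as $\q$ is connected and $T$ spans $\vars(\q)$, invariant~(c) forces $\dom(h)=\vars(\q)$, so $h\colon\q(\vec b)\to\canmod$, and $(\ref{eq:canmod})$ yields $\Tmc,\Amc\models\q(\vec b)$.

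For completeness I would use $(\ref{eq:canmod})$ together with the standard fact that if $\canmod\models\q(\vec b)$ then there is a homomorphism $h_0\colon\q(\vec b)\to\canmod$ whose image contains only elements $aw$ with $|w|\le 2|\Tmc|+|\q|$, and then resolve every guess so that the partial map $h$ of the soundness invariant remains a restriction of $h_0$. At the root we guess $a_0w_0=h_0(v_0)$; whenever a tuple $(v_1,v_2,c,n)$ with $h(v_1)=h_0(v_1)=c\cdot\stack[n]$ is selected, we apply Option~1 if $h_0(v_1)$ and $h_0(v_2)$ are both individuals, Option~2 if $h_0(v_2)$ is an anonymous child of $h_0(v_1)$ in $\canmod$, and Option~3 if $h_0(v_2)$ is the parent of $h_0(v_1)$ in $\canmod$ (possibly an individual, when $h_0(v_1)$ has anonymous-depth $1$), reading the required individual or role off $h_0$. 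The scheduling discipline is: always act on tuples of maximal fourth component, first performing all Option~2 moves among them one at a time, then a single Option~3 step batching the remaining maximal-height tuples (at height~$0$, where Option~3 never applies, Option~1 moves may be interleaved arbitrarily). The point is that, $h_0$ being a genuine homomorphism into the forest-shaped $\canmod$, for each maximal-height tuple $h_0(v_2)$ really is a child or the parent of $h_0(v_1)$---and an individual only when the stack is empty---so one of the options always applies and all of its guards are passed; and by invariant~(d), once a subtree pushed by an Option~2 move has been fully explored the stack has been popped back to exactly the level from which it was pushed, so the next maximal-height tuple is again available. The depth bound on $h_0$ guarantees the cutoff in Step~4d is never reached. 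Hence $\frontier$ is eventually emptied and the procedure returns \yes; this is precisely the execution traced in Example~\ref{ex-second-algo}.

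I expect the completeness direction to be the main obstacle: one must verify that the ``all forward moves before the batched backward move'' scheduling at the top of the stack is always legal and never deadlocks, which hinges on the interaction of invariant~(d) with the forest structure of $\canmod$. Termination and soundness amount to bookkeeping once the loop invariant (a)--(d) has been isolated.
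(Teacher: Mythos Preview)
Your proposal is correct and follows essentially the same approach as the paper. Your invariant (a)--(d) packages together what the paper splits into its Claim~1, Claim~2, and properties \textbf{P1}--\textbf{P3}; your soundness argument (build up a partial homomorphism $h$ iteration by iteration) is exactly the paper's Claim~5, and your completeness argument (let a fixed witnessing homomorphism $h_0$ dictate every guess, maintaining $h\subseteq h_0$) is exactly the paper's Claim~4 with its invariant \textbf{Inv}. The only noticeable difference is in the scheduling for completeness: the paper gives Option~1 moves at height~$0$ top priority (its Case~1) before looking at the top of the stack, whereas you always work at maximal height first and interleave Option~1 only once the stack is empty; both schedules are valid and the verification is the same.
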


\subsubsection{\LOGCFL\ lower bound}
\begin{figure*}[t]
\setlength{\abovecaptionskip}{4pt plus 0pt minus 0pt}
\scalebox{.75}{
\begin{tikzpicture}
\scriptsize
\node at (-3.2,4) {{\small (a)}};
\node[fill=gray!40,input,label=left:$g_{11}$] (in1) at (-2.7,0) {$x_1$}; 
\node[input,label=left:$g_{12}\!$] (in2) at (-1.6,0) {$x_2$}; 
\node[fill=gray!40,input,label=left:$g_{13}\!$] (in3) at (-.5,0) {$\!\!\neg x_3$}; 
\node[input,label=left:$g_{14}\!$] (in4) at (.6,0) {$x_4$}; 
\node[input,label=left:$g_{15}\!$] (in5) at (1.7,0) {$x_5$}; 
\node[input,label=left:$g_{16}\!$] (in6) at (2.8,0) {$\!\!\neg x_1$}; 
%
\node at (0,-0.7) {\textsc{Input}: $\quad x_1=1 \quad x_2=0 \quad x_3=0\quad x_4=0 \quad x_5=0$};
\node[fill=gray!40,or-gate,label=left:$g_7$] (or1-1) at (-2.2,1) {OR}; 
\node[fill=gray!40,or-gate,label=left:$g_8$] (or1-2) at (-.75,1) {OR}; 
\node[fill=gray!40,or-gate,label=left:$g_{9}$] (or1-4) at (.75,1) {OR}; 
\node[or-gate,label=left:$g_{10}$] (or1-5) at (2.2,1) {OR}; 
\node[fill=gray!40,and-gate,label=left:$g_{4}$] (and1-1) at (-1.5,2) {AND};
\node[fill=gray!40,and-gate,label=left:$g_{5}$] (and1-2) at (0,2) {AND};
\node[and-gate,label=left:$\quad g_{6}$] (and1-3) at (1.5,2) {AND};
%
\node[fill=gray!40,or-gate,label=left:$g_{2}$] (or2-1) at (-.8,3) {OR}; 
\node[fill=gray!40,or-gate,label=left:$g_3$] (or2-2) at (.8,3) {OR}; 
\node[fill=gray!40,and-gate,label=left:$g_{1}$] (and2) at (0,4) {AND}; 
\draw[->] (in1) to (or1-1);
\draw[->] (in1) to (or1-2);
\draw[->] (in2) to (or1-1);
\draw[->] (in2) to (or1-4);
\draw[->] (in3) to (or1-2);
\draw[->] (in3) to (or1-4);
\draw[->] (in4) to (or1-4);
\draw[->] (in4) to (or1-5);
\draw[->] (in5) to (or1-5);
\draw[->] (in6) to (or1-5);
%
\draw[->] (or1-1) to (and1-1);
\draw[->] (or1-2) to (and1-1);
\draw[->] (or1-2) to (and1-3);
\draw[->] (or1-2) to (and1-2);
\draw[->] (or1-4) to (and1-2);
\draw[->] (or1-5) to (and1-3);
\draw[->] (and1-1) to (or2-1);
\draw[->] (and1-2) to (or2-1);
\draw[->] (and1-2) to (or2-2);
\draw[->] (and1-3) to (or2-2);
%
\draw[->] (or2-1) to (and2);
\draw[->] (or2-2) to (and2);
\end{tikzpicture}
\quad
\begin{tikzpicture}
\scriptsize
\node at (4.7,4) {{\small (b)}};
\node[round-rect] (v10) at (5,0)  {$g_{11}$}; 
\node[round-rect] (v6) at (5,1)    {$g_7$}; 
\node[round-rect] (v11) at (6,0)   {$g_{13}$}; 
\node[round-rect] (v7) at (6,1)    {$g_8$}; 
\node[round-rect] (v4) at (5.5,2)  {$g_4$}; 
\node[round-rect] (v2) at (5.5,3)  {$g_2$}; 
\node[round-rect] (v1) at (6.5,4)  {$g_1$}; 
\node[round-rect] (v3) at (7.5,3)  {$g_3$}; 
\node[round-rect] (v5) at (7.5,2)  {$g_5$}; 
\node[round-rect] (v8) at (7,1)   {$g_8$}; 
\node[round-rect] (v9) at (8,1)   {$g_9$}; 
\node[round-rect] (v12) at (7,0)  {$g_{13}$}; 
\node[round-rect] (v13) at (8,0)   {$g_{13}$}; 
\draw[-] (v10) to (v6);
\draw[-] (v11) to (v7);
\draw[-] (v6) to (v4);
\draw[-] (v7) to (v4);
\draw[-] (v4) to (v2);
\draw[-] (v2) to (v1);
\draw[-] (v12) to (v8);
\draw[-] (v13) to (v9);
\draw[-] (v8) to (v5);
\draw[-] (v9) to (v5);
\draw[-] (v5) to (v3);
\draw[-] (v3) to (v1);
\node at (5,-.75) {};
\end{tikzpicture}}
\quad
\scalebox{.75}{
\begin{tikzpicture}[>=latex, point/.style={circle,draw=black,thick,minimum size=1.5mm,inner sep=0pt,fill=white},
spoint/.style={rectangle,draw=black,thick,minimum size=1.5mm,inner sep=0pt,fill=white},
ipoint/.style={circle,draw=black,thick,minimum size=1.5mm,inner sep=0pt,fill=lightgray},
wiggly/.style={thick,decorate,decoration={snake,amplitude=0.3mm,segment length=2mm,post length=1mm}},
query/.style={thick},yscale=1,xscale=1]
\footnotesize
\node at (4.5,4) {{\small (c)}};
\node[ipoint] (u1) at (6,4) {};
\node[ipoint] (u2) at (5.25,3) {};
\node[ipoint] (u3) at (5,2) {};
\node[ipoint] (u4) at (4.75,1) {};
\node[ipoint,label=below:{$\trueleaf$}] (u5) at (4.75,0) {};
\node[ipoint] (u6) at (5.15,1) {};
\node[ipoint] (u7) at (5.5,2) {};
\node[ipoint] (u8) at (5.85,1) {};
\node[ipoint,label=below:{$\trueleaf$}] (u9) at (6.25,0) {};
\node[ipoint] (u10) at (6.25,1) {};
\node[ipoint] (u11) at (6,2) {};
\node[ipoint] (u12) at (5.75,3) {};
\node[ipoint] (u13) at (6.5,4) {};
\node[ipoint] (u14) at (7.25,3) {};
\node[ipoint] (u15) at (7,2) {};
\node[ipoint] (u16) at (6.75,1) {};
\node[ipoint,label=below:{$\trueleaf$}] (u17) at (6.75,0) {};
\node[ipoint] (u18) at (7.15,1) {};
\node[ipoint] (u19) at (7.5,2) {};
\node[ipoint] (u20) at (7.85,1) {};
\node[ipoint,label=below:{$\trueleaf$}] (u21) at (8.25,0) {};
\node[ipoint] (u22) at (8.25,1) {};
\node[ipoint] (u23) at (8,2) {};
\node[ipoint] (u24) at (7.75,3) {};
\node[ipoint] (u25) at (7,4) {};
\draw[<-] (u1) to node[left] {} 
(u2);
\draw[<-] (u2) to node[right] {$\,\,\,\,\aor$} 
(u3);
\draw[<-] (u3) to node[left] {} (u4);
\draw[<-] (u4) to node[left] {$\aor$} (u5);
\draw[->] (u5) to node[right] {$\aor$} 
(u6);
\draw[->] (u6) to node[left] {$\!\!\!\!\leftand$} (u7);
\draw[<-] (u7) to node[right] {$\,\rightand$} 
(u8);
\draw[<-] (u8) to node[left] {$\aor$} (u9);
\draw[->] (u9) to node[right] {$\,\aor$} (u10);
\draw[->] (u10) to node {}
(u11);
\draw[->] (u11) to node {} 
(u12);
\draw[->] (u12) to node[left] {$\leftand\,\,$}(u13);
\draw[<-] (u13) to node {$\qquad\rightand$}(u14);
\draw[<-] (u14) to node[right] {$\,\,\,\,\aor$} 
(u15);
\draw[<-] (u15) to node {} 
(u16);
\draw[<-] (u16) to node {}
(u17);
\draw[->] (u17) to node[right] {$\,\aor$}(u18);
\draw[->] (u18) to node[left] {$\!\!\!\!\leftand$} 
(u19);
\draw[<-] (u19) to node[right] {$\,\rightand$} 
(u20);
\draw[->] (u21) to node[left] {$\aor$}(u20);
\draw[->] (u21) to node[right] {$\aor$}(u22);
\draw[->] (u22) to node {} 
(u23);
\draw[->] (u23) to node {}
(u24);
\draw[->] (u24) to node {} 
(u25);
\node at (7,-.75) {};
\end{tikzpicture}}
\quad
\scalebox{.75}{
\begin{tikzpicture}[>=latex, point/.style={circle,draw=black,thick,minimum size=1.5mm,inner sep=0pt,fill=white},
spoint/.style={rectangle,draw=black,thick,minimum size=1.5mm,inner sep=0pt,fill=white},
ipoint/.style={circle,draw=black,thick,minimum size=1.5mm,inner sep=0pt,fill=lightgray},
wiggly/.style={thick,decorate,decoration={snake,amplitude=0.3mm,segment length=2mm,post length=1mm}},
query/.style={thick},yscale=1,xscale=1]
\footnotesize
\node at (-3,4) {{\small (d)}};
\node[point,fill=white,label=left:$G_{11}\!$,label=below:{$\trueleaf$}] (in1) at (-2.6,0) {}; 
\node[point,fill=white,label=left:$G_{12}\!$] (in2) at (-1.8,0) {}; 
\node[point,fill=white,label=left:$G_{11}\!$,label=below:{$\trueleaf$}] (in3) at (-.4,0) {}; 
\node[point,fill=white,label=left:$G_{13}\!$,label=below:{$\trueleaf$}] (in4) at (.4,0) {}; 
\node[point,fill=white,label=left:$G_{14}\!$] (in5) at (1.4,0) {}; 
\node[point,fill=white,label=left:$G_{15}\!$] (in6) at (2.2,0) {}; 
\node[point,fill=white,label=left:$G_{16}\!$] (in7) at (3.1,0) {}; 
\node[point,fill=white,label=left:$G_7\!$] (or1-1) at (-2.2,1) {}; 
\node[point,fill=white,label=left:$G_8\!$] (or1-2) at (-1.3,1) {};
\node (dots1) at (-1.3, .7) {$\vdots$};
\node[point,fill=white,label=left:$G_8\!$] (or1-3) at (0,1) {}; 
\node[point,fill=white,label=left:$G_9\!$] (or1-4) at (.85,1) {}; 
\node (dots1) at (.85, .7) {$\vdots$};
\node[point,fill=white,label=left:$G_8\!$] (or1-5) at (1.55,1) {}; 
\node (dots1) at (1.55, .7) {$\vdots$};
\node[point,fill=white,label=left:$G_{10}\!$] (or1-6) at (2.4,1) {}; 
\node[point,fill=white,label=left:$G_{4}\!$] (and1-1) at (-1.5,2) {};
\node[point,fill=white,label=left:$G_{5}\!$] (and1-2) at (-.5,2) {};
\node (dots1) at (-.5, 1.7) {$\vdots$};
\node[point,fill=white,label=left:$G_{5}\!$] (and1-3) at (.5,2) {};
\node[point,fill=white,label=left:$G_{6}\!$] (and1-4) at (1.5,2) {};
\node[point,fill=white,label=left:$G_{2}$] (g2) at (-.65,3) {}; 
\node[point,fill=white,label=right:$G_3$] (g3) at (.65,3) {}; 
\node[point,fill=black,label=left:$G_{1}$,label=above:$a$] (a) at (0,4) {}; 
\draw[->,wiggly] (g2)  to node [left]{\scriptsize $L\,\,$} (a);
\draw[->,wiggly] (g3)  to node [right]{\scriptsize $\,R$} (a);
\draw[->,wiggly] (and1-1)  to node [left]{\scriptsize $U\,$} (g2);
\draw[->,wiggly] (and1-2)  to node [right]{\scriptsize $\,U$} (g2);
\draw[->,wiggly] (and1-3)  to node [left]{\scriptsize $U\,$} (g3);
\draw[->,wiggly] (and1-4)  to node [right]{\scriptsize $\,U$} (g3);
\draw[->,wiggly] (or1-1)  to node [left]{\scriptsize $L\,$} (and1-1);
\draw[->,wiggly] (or1-2)  to node [right]{\scriptsize $R$} (and1-1);
\draw[->,wiggly] (or1-3)  to node [left]{\scriptsize $L$} (and1-3);
\draw[->,wiggly] (or1-4)  to node [right]{\scriptsize $R$} (and1-3);
\draw[->,wiggly] (or1-5)  to node [left]{\scriptsize $L$} (and1-4);
\draw[->,wiggly] (or1-6)  to node [right]{\scriptsize $\,R$} (and1-4);
\draw[->,wiggly] (in1)  to node [left]{\scriptsize $U\,$} (or1-1);
\draw[->,wiggly] (in2)  to node [right]{\scriptsize $\,U$} (or1-1);
\draw[->,wiggly] (in3)  to node [left]{\scriptsize $U\,$}  (or1-3);
\draw[->,wiggly] (in4)  to node [right]{\scriptsize $\,U$} (or1-3);
\draw[->,wiggly] (in5)  to node [right]{\scriptsize $\,U$} (or1-6);
\draw[->,wiggly] (in6)  to node [right]{\scriptsize $\,U$} (or1-6);
\draw[->,wiggly] (in7)  to node [right]{\scriptsize $\,U$} (or1-6);
\node at (1,-.75) {};
\end{tikzpicture}}
%
%
%
\caption{
(a) Example circuit $\Cir^*$ with input $\vec{x}^*$ (b) proof tree for $\Cir^*$ and $\vec{x}^*$
(c) query $\q_{\mbox{\tiny $\Cir^*$}}^\mathsf{lin}$
(d) canonical model for KB $(\Tmc_{\mbox{\tiny $\Cir^*$}}^{\vec{x}^*}, \Amc_{\Cir^*})$. }
\label{fig:5a}
\vspace*{-.35cm}
\end{figure*}
The proof is by reduction from the 
problem of deciding whether an 
input of length $l$ is accepted by the $l$th circuit of a \emph{logspace-uniform} family of 
\sac\ circuits (proven \LOGCFL-hard in \cite{DBLP:journals/jcss/Venkateswaran91}).  
This problem was used in \cite{DBLP:journals/jacm/GottlobLS01} to establish the \LOGCFL-hardness 
of evaluating tree-shaped queries over databases.
We follow a similar approach, but with one crucial difference:
using an OWL 2 QL ontology, we can `unravel' the circuit into a tree, allowing us to replace 
tree-shaped queries by linear ones. 

As in \cite{DBLP:journals/jacm/GottlobLS01}, we assume w.l.o.g.\ that the considered \sac\ circuits
adhere to the following \emph{normal form}: 
\begin{itemize}
\item fan-in of all AND gates is 2;
\item nodes are assigned to levels, with gates on level $i$ only receiving inputs from gates on level $i+1$;
\item there are an odd number of levels with the output \AND\ gate on level 1 and the input gates on the
greatest level;
\item all even-level gates are \OR\ gates, and all odd-level gates (excepting the circuit inputs)  are \AND\ gates.
\end{itemize}
It is well known (cf.\ \cite{DBLP:journals/jacm/GottlobLS01,DBLP:journals/jcss/Venkateswaran91}) and easy to see that a circuit 
in normal form 
accepts an input $\vec{x}$ iff there is a labelled rooted tree (called a \emph{proof tree})
with the following properties:
\begin{itemize}
\item the root node is labelled with the output \AND\ gate;
\item if a node is labelled by an \AND\ gate $g_i$,
then it has two children labelled by the two predecessor nodes of $g_i$;
\item if a node is labelled by an \OR\ gate $g_i$,
then it has a unique child that is labelled by a predecessor of $g_i$;
\item every leaf node is labelled by an input gate whose corresponding literal evaluates into 1 under $\vec x$.
\end{itemize}
For example, the circuit $\Cir^*$ in Fig.\ 6(a) accepts input $\vec{x}^*=(1,0,0,0,1)$,
as witnessed by the proof tree in Fig.\ 6(b).

Importantly, while a circuit-input pair may admit multiple proof trees, 
they are all isomorphic modulo the labelling. 
Thus, with every circuit $\Cir$, we can associate a \emph{skeleton proof tree} $T_{\Cir}$
such that $\Cir$ accepts input $\vec{x}$ iff some labelling of $T_{\Cir}$ is a proof tree for 
$\Cir$ and~$\vec{x}$. 
The reduction in \cite{DBLP:journals/jacm/GottlobLS01} encodes the circuit $\Cir$ 
and input $\vec{x}$ in the database
and uses a Boolean tree-shaped query based upon the 
skeleton proof tree. 
More precisely, 
the database $D_{\Cir}^\vec{x}$
uses the gates of $\Cir$ as constants and contains the following facts\footnote{For presentation purposes, we use a minor variant of the reduction in~\cite{DBLP:journals/jacm/GottlobLS01}.}:
\begin{itemize}
\item $\aor(g_j,g_i)$, for every OR gate $g_i$ with predecessor gate $g_j$;
\item $\leftand(g_j, g_i)$ (resp.\ $\rightand(g_j,g_i)$), for every AND gate $g_i$ with left (resp.\ right) predecessor~$g_j$;
\item $A(g_i)$, for every input gate $g_i$ whose value is $1$ under $\vec{x}$. 
\end{itemize}
The query $\q_{\Cir}$ uses the nodes of $T_{\Cir}$ as variables, 
has an atom $\aor(n_j,n_i)$ (resp.\ $\leftand(n_j,n_i)$, $\rightand(n_j,n_i)$) 
for every node $n_i$ with unique (resp.\ left, right) child $n_j$,
and has an atom $\trueleaf(n_i)$ for every leaf node $n_i$.
It is proven in \cite{DBLP:journals/jacm/GottlobLS01} that $D_{\Cir}^\vec{x}\models \q_{\Cir}$ 
if and only if $\Cir$ accepts $\vec{x}$. Moreover, both $\q_{\Cir}$ and $D_{\Cir}^\vec{x}$
can be constructed by means of logspace transducers. 

To adapt the preceding reduction to our setting, 
we will replace the tree-shaped query $\q_{\Cir}$ by a linear query 
$\q_{\Cir}^{\mathsf{lin}}$ that is obtained, intuitively, by performing
an ordered depth-first traversal of $\q_{\Cir}$. The new query 
$\q_{\Cir}^{\mathsf{lin}}$ may give a different answer than $\q_{\Cir}$
when evaluated on $D_{\Cir}^\vec{x}$, but the two queries coincide if evaluated on the 
\emph{unraveling of $D_{\Cir}^\vec{x}$ into a tree}. Thus, we will define 
a KB $(\Tmc_{\Cir}^{\vec{x}}, \Amc_{\Cir})$ whose canonical model 
induces a tree that is isomorphic to the tree-unravelling of $D_{\Cir}^\vec{x}$. 

To formally define the query $\q_{\Cir}^{\mathsf{lin}}$,
consider the sequence of words 
inductively defined as follows:
$w_0 = \epsilon$ 
and  $w_{j+1} = \leftand^-\,\aor^-\,w_j\,\aor\,\leftand\,\rightand^-\,\aor^-\,w_j\,\aor\,\rightand$.
Every word $w =  \varrho_1 \varrho_2 \dots \varrho_k$ naturally gives rise to a linear query 
$\q_w 
=  \bigwedge_{i=1}^{k} \varrho_i(y_{i-1}, y_i).$
We then take 
$$\q_{\Cir}^\mathsf{lin} = \exists y_1 \dots \exists y_k(\q_{w_d}\land\bigwedge_{
w_n[i,i+1] =\aor^-\,\aor} \trueleaf(y_i)).$$
where $k=|w_d|$ and $d$ is such that $\Cir$ has $2d+1$ levels. 
The query $\q_{\Cir^*}^\mathsf{lin}$ for our example circuit $\Cir^*$ is given in Fig.\ \ref{fig:5a}(c). 


We now proceed to the definition of the KB $(\Tmc_{\Cir}^{\vec{x}}, \Amc_{\Cir})$.
Suppose 
$\Cir$ has gates $g_1, g_2, \dots, g_m$, with $g_1$ 
the output gate. 
In addition to the predicates $\aor,\leftand,\rightand,\trueleaf$ from earlier, we introduce  
a unary predicate $G_i$ for each gate $g_i$
and a binary predicate $P_{ij}$ for each gate $g_i$ with predecessor $g_j$.
We set $\Amc_{\Cir} = \{G_1(a)\}$ and include in $\Tmc_{\Cir}^{\vec{x}}$ the following axioms:
\begin{itemize}
\item $G_{i}(x) \rightarrow \exists y P_{i j}(y,x)$ and $\exists y P_{i j}(x,y) \rightarrow G_j(x)$ for every gate $g_i$ with predecessor $g_j$;
\item $P_{i j}(x,y) \rightarrow S(x,y)$ for every $S \in \{\aor, \leftand,\rightand\}$ such that $S(g_j,g_i) \in D_{\Cir}^\vec{x}$;
\item $G_i(x) \rightarrow \trueleaf(x)$ whenever $\trueleaf(g_i) \in D_{\Cir}^\vec{x}$.
\end{itemize}
In Fig.\  \ref{fig:5a}(d), we display (a portion of) the canonical model of the KB associated with circuit $\Cir^*$ and input $\vec{x}^*$. 
Observe that, when restricted to the predicates $\aor,\leftand,\rightand,\trueleaf$,  
it is isomorphic to the unravelling of $D_{\Cir}^\vec{x}$ into a tree starting from $g_1$.

In the appendix, we argue $\q_{\Cir}^\mathsf{lin}$ and $(\Tmc_{\Cir}^{\vec{x}}, \Amc_{\Cir})$ can be constructed by logspace transducers,
and we prove the following proposition that establishes the correctness of the reduction.

\begin{proposition}\label{logcfl-lower-prop}
$\Cir$ accepts input $\vec{x}$ iff $\Tmc_{\Cir}^{\vec{x}}, \Amc_{\Cir} \models \q_{\Cir}^\mathsf{lin}(a)$.
\end{proposition}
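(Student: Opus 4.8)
The plan is to reduce $\Tmc_{\Cir}^{\vec{x}},\Amc_{\Cir}\models\qclin(a)$ to a homomorphism question over the canonical model, identify that model on the relevant signature with the tree obtained by unravelling $\dcx$, and then show that homomorphisms of the \emph{linear} query $\qclin$ into this tree correspond exactly to homomorphisms of the \emph{tree-shaped} query $\qc$ --- equivalently, to proof trees for $\Cir$ and $\vec x$. By (\ref{eq:canmod}), $\Tmc_{\Cir}^{\vec{x}},\Amc_{\Cir}\models\qclin(a)$ iff there is a homomorphism $h\colon\qclin\to\CmC$ with $h(y_0)=a$. A routine inspection of the chase shows that the reduct of $\CmC$ to the predicates $\aor,\leftand,\rightand,\trueleaf$ (the only ones occurring in $\qclin$) is isomorphic to the tree $\widehat{D}$ unravelling $\dcx$ from $g_1$: its elements are the sequences $(g_1,g_{i_1},\dots,g_{i_\ell})$ with each $g_{i_{k+1}}$ a predecessor of $g_{i_k}$; each non-root sequence carries exactly one of $\aor,\leftand,\rightand$ to its parent, matching the fact holding in $\dcx$ between its last two gates; and $\trueleaf$ holds at a sequence iff its last gate is an input gate with value $1$. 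In particular $a$ corresponds to the root of $\widehat{D}$, the Gaifman graph of this reduct is exactly the tree $\widehat{D}$, and $\widehat{D}$ has no reflexive loop $S(v,v)$.

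The heart of the proof is the equivalence: there is a homomorphism $h\colon\qclin\to\widehat{D}$ with $h(y_0)$ the root iff there is a \emph{proof-tree homomorphism} $\phi\colon T_{\Cir}\to\widehat{D}$ mapping the root of $T_{\Cir}$ to the root of $\widehat{D}$ --- meaning $\phi$ maps the root to the root, an $\AND$-node $n$ with children $n_L,n_R$ to an element with $\leftand(\phi(n_L),\phi(n))$ and $\rightand(\phi(n_R),\phi(n))$, an $\OR$-node $n$ with child $n'$ to an element with $\aor(\phi(n'),\phi(n))$, and every leaf to an element satisfying $\trueleaf$. One checks that $w_d$ is exactly the sequence of edge labels along the closed depth-first walk $n_{(0)},n_{(1)},\dots,n_{(k)}$ of $T_{\Cir}$ rooted at $n_{(0)}$ (descending along an $S$-atom emits $S^-$, ascending emits $S$), and that the positions $i$ carrying a $\trueleaf$-atom in $\qclin$ are precisely the leaf visits of this walk. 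Hence for ($\Leftarrow$) we may take $h(y_i):=\phi(n_{(i)})$: each edge-atom and each $\trueleaf$-atom of $\qclin$ then follows from $\phi$ being a proof-tree homomorphism. For ($\Rightarrow$), set $r_i=h(y_i)$; since $\widehat{D}$ is a loop-free tree, every step $r_i\to r_{i+1}$ traverses a tree edge, moving away from the root on an inverse letter and toward it on a forward letter, so the distance from $r_i$ to the root equals the depth of $n_{(i)}$ in $T_{\Cir}$. A standard tree-excursion argument (a walk that leaves the subtree below an edge must return through that edge) gives $r_i=r_j$ whenever $n_{(i)}=n_{(j)}$; therefore $\phi(n):=r_i$ (for any $i$ with $n_{(i)}=n$) is well defined, it is a proof-tree homomorphism because $h$ realises the edge- and $\trueleaf$-atoms of $\qclin$, and $\phi$ sends the root to $r_0=a$.

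Finally, proof-tree homomorphisms into $\widehat{D}$ relate to acceptance as follows. Composing $\phi$ with the fold $\widehat{D}\to\dcx$ sending a sequence to its last gate yields a proof-tree homomorphism $T_{\Cir}\to\dcx$ rooted at $g_1$, that is, a labelling of $T_{\Cir}$ that is a proof tree for $\Cir$ and $\vec x$; by the normal-form characterisation recalled above, this is possible iff $\Cir$ accepts $\vec x$ (and is also exactly the statement $\dcx\models\qc$ from \cite{DBLP:journals/jacm/GottlobLS01}). Conversely, a proof tree for $\Cir$ and $\vec x$ is such a homomorphism $T_{\Cir}\to\dcx$ rooted at $g_1$, and it lifts along the unravelling $\widehat{D}\to\dcx$ (path lifting into a tree, starting at the root) to a proof-tree homomorphism $T_{\Cir}\to\widehat{D}$ rooted at the root. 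Chaining the three equivalences proves $\Tmc_{\Cir}^{\vec{x}},\Amc_{\Cir}\models\qclin(a)$ iff $\Cir$ accepts $\vec x$.

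I expect the main obstacle to be the ($\Rightarrow$) direction of the central equivalence: a homomorphism of $\qclin$ could a priori realise a ``wandering'' walk in the canonical model, and one must exploit the fact that $\Tmc_{\Cir}^{\vec{x}}$ was engineered so that the relevant reduct of the canonical model is a \emph{tree} --- which forces the walk to be a faithful traversal and lets us read off a homomorphism of the tree-shaped query. Getting the distance/excursion bookkeeping exactly right, in particular for nodes of $T_{\Cir}$ visited several times by the walk and for the placement of the $\trueleaf$-atoms, is the delicate point; the chase computation of the first paragraph and the appeal to the classical reduction of \cite{DBLP:journals/jacm/GottlobLS01} are routine.
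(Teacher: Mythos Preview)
Your proposal is correct and takes essentially the same approach as the paper: both set up the square with $\qclin$, $\qc$ (your $T_\Cir$), $\CmC$ (your unravelling $\widehat D$), and $\dcx$, lift a homomorphism $\qc\to\dcx$ to $\qc\to\CmC$ inductively from the root, and factor a homomorphism $\qclin\to\CmC$ through the projection $p_\q\colon\qclin\to\qc$ using the tree shape of the canonical model. The only cosmetic difference is in that factoring step (your $(\Rightarrow)$ of the central equivalence): the paper proves $p_\q(y_i)=p_\q(y_j)\Rightarrow f(y_i)=f(y_j)$ by a short induction on $|j-i|$, using that each anonymous element of $\CmC$ has a unique parent, whereas you reach the same conclusion via the distance-plus-tree-excursion argument.
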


\section{Conclusion}
In this paper, we have clarified the impact of query topology and ontology depth on the worst-case size of query rewritings and the complexity of query answering 
in OWL 2 QL. 
Our results close an open question from \cite{lics14-KKPZ} and yield 
a complete picture of the succinctness and complexity landscapes 
for the considered classes of queries and ontologies.

On the theoretical side, our results demonstrate the utility of 
using non-uniform complexity as a tool for studying succinctness.  
In future work, we plan to utilize the developed machinery to investigate additional dimensions
of the succinctness landscape, with the hope of identifying other natural restrictions 
 that guarantee small rewritings. 

Our results also have practical implications for querying OWL 2 QL KBs. 
Indeed, our succinctness analysis provides strong evidence in favour of 
adopting NDL as the target language for rewritings, since we have identified a range of query-ontology 
pairs for which polysize NDL-rewritings are guaranteed, but PE-rewritings 
may be of superpolynomial size. Interestingly, we have proved that for 
these same classes of queries and ontologies, query answering is tractable 
(either in \NL\ or in \LOGCFL). 
We plan to marry these positive succinctness and complexity
results by developing concrete NDL-rewriting algorithms for OWL 2 QL for 
which both the rewriting and evaluation phases
run in polynomial time (as was done in \cite{DBLP:conf/ijcai/BienvenuOSX13} for DL-Lite$_{\mathsf{core}}$). 
Moreover, since \NL\ and \LOGCFL\ are considered highly parallelizable,
it would also be interesting to explore parallel query answering algorithms.

\section*{Acknowledgment}
\noindent Theorems 13 and 14 were obtained at the IITP RAS
at the expense of the Russian Foundation for Sciences (project No. 14-50-00150).
The rest of the paper was partially funded 
by ANR grant 12-JS02-007-01, the Russian Foundation for Basic Research
and the programme ``Leading Scientific Schools''.
%
%
%



\newpage
\onecolumn

\section*{Proofs for Section \ref{sec:prelims}}

\medskip

\noindent\textbf{Theorem \ref{Hom2rew}.} Thm.\ \ref{TW2rew} remains true
 if $\twfn$ is replaced by $\homfn$: 
\begin{equation*}
\homfn = \!\!\!\!\bigvee_{\substack{\Theta \subseteq \twset\\ \text{ independent}}}
\bigg(\bigwedge_{\atom \in \q \setminus \q_\Theta} \!\!p_\atom
 \wedge  \bigwedge_{\t \in \Theta} \big(\!\!\bigwedge_{z,z'\in\t} \!\!p_{z=z'} \wedge\!\! \bigvee_{\substack{\varrho \in \rni,\\ \t \in \twset[\varrho]}}\!\! \bigwedge_{z \in \t} p_{z}^\varrho\big)\bigg)
\end{equation*}
\noindent{\bf Remark.}  In fact, Theorem~\ref{TW2rew} was proved in \cite{lics14-KKPZ} 
only for \emph{consistent} KBs. However, it is known that it is possible to define a  
short PE-query $\q_{\Tmc}^\bot$ that when evaluated on $\Imc_\Amc$
returns all $k$-tuples of individual constants on $\Imc_\Amc$ if 
the KB $(\Tmc,\Amc)$ is inconsistent, and returns no answers otherwise, cf.\ \cite{ACKZ09}. 
It follows that if 
$\q'$ 
 is a rewriting for $\q$ and $\Tmc$ for all 
data instances $\Amc$ that are consistent with $\Tmc$, then we can obtain a rewriting for $\q$  and $\Tmc$
(that works for all data instances) by taking the disjunction of $\q'$ and $\q_{\Tmc}^\bot$. 
Therefore, to prove Theorem \ref{Hom2rew}, it sufficient to show how to construct such ``consistent rewritings".

\begin{proof}
Let $\Tmc$ be an OWL 2 QL ontology and $\q= \exists \vec{y}\,\varphi(\vec{x},\vec{y})$ be a CQ with answer variables $\vec{x}$ and existential variables $\vec{y}$. 
(we will use $z$ and $z'$
when referring to variables of either type). We begin by recalling that every atom 
$\atom(\vec{u})$ has the following simple PE-rewriting: 
$$\rho_{\atom} 
 = \bigvee_{\Tmc \models \xi(\vec{u}) \to \atom(\vec{u})} \xi(\vec{u})$$
where $\xi(\vec{u})$ ranges over $\varrho(\vec{u})$ ($\varrho \in \rni$) when $|\vec{u}|=2$
and over 
\begin{align*}
\tau(u) \ &::= \ \ A(u) \quad (A \in \cn) \quad \mid \quad \exists v\,\varrho(u,v) \quad (\varrho \in \rni) 
\end{align*}
when $\vec{u}$ consists of the single variable $u$. 
%

\medskip
To show the first statement, 
consider a Boolean formula $\chi$ that computes $\homfn$, and let $\q'$ be the FO-formula obtained from $\chi$ as follows:
\begin{itemize}
\item replace $p_{z=z'}$ by the equality $z = z'$;
\item replace $p_\atom$ by its PE-rewriting $\rho_\atom$; 
\item replace $p^{\varrho}_{z}$ by the PE-rewriting $\rho_\varrho(z)$ of $\exists y \varrho(z,y)$; 
\item existentially quantify the variables $\vec{y}$.
\end{itemize}
Note that $\q'$ has the same answer variables as $\q$, and if $\chi$ is a monotone formula, then $\q'$ is a PE-formula.

\smallskip

We wish to show that $\q'$ is a consistent rewriting of $\q$ and~$\Tmc$ (cf.\ preceding remark).
To do so, we let $\q''$ be the PE-formula obtained by applying the above transformation
to the original monotone Boolean formula $\homfn$:
$$\q'' = \exists \vec{y} \!\!\!\bigvee_{\substack{\Theta \subseteq \twset\\ \text{ independent}}}\,\,
\bigg(\bigwedge_{\atom \in \q \setminus \q_\Theta}\!\!\! \rho_\atom \,\,\wedge \bigwedge_{\t \in \Theta} (\bigwedge_{z,z'\in\t}\hspace{0cm} z=z' \wedge \bigvee_{\substack{\varrho \in \rni, \\Ê\t \in \twset[\varrho]}}  \bigwedge_{z \in \t} \rho_\varrho(z))\bigg).$$
We know that $\chi$ and $\homfn$
compute the same Boolean function.
It follows that $\q'$ and $\q''$ are equivalent FO-formulas.
It thus suffices to show that $\q''$ is a consistent rewriting of $\q$ and~$\Tmc$.
This is easily seen by comparing
$\q''$ to the following query 
$$\q'''=\exists \vec{y}' \!\!\!\bigvee_{\substack{\Theta \subseteq \twset\\ \text{ independent}}}\,\,
\bigg(\bigwedge_{\atom \in \q \setminus \q_\Theta}\!\!\! \rho_\atom \,\,\wedge \bigwedge_{\t \in \Theta}
(\!\!\bigvee_{\substack{\varrho \in \rni, \\ \t \in \twset[\varrho]}} \exists z \,(\,\rho_\varrho(z) \wedge \bigwedge_{z' \in \t_r} z'=z))\bigg)  $$
which was proven in \cite{lics14-KKPZ} to be a consistent FO-rewriting of $\q$ and $\Tmc$ (here $\vec{y}'$ is the restriction of $\vec{y}$ to the variables in $\q'''$).
%

\medskip

The proof of the second statement concerning NDL-rewritings closely follows the proof of Theorem \ref{TW2rew} from \cite{}, 
but we include it for the sake of completeness.
First, we define a unary predicate $D_0$ that contains all individual constants of the given data instance. This is done by taking the rules
\begin{equation}\label{d1}
\varrho(u) \rightarrow  D_0(u),
\end{equation}
where $\varrho(u)$ is of the form $S(u)$,
$S(u,v)$ and $S(v,u)$, 
for some predicate $S \in \sig(\Tmc) \cup \sig(\q)$. 
Next, we let $\vec{z}=\vec{x} \cup \vec{y}$ and define a $|\vec{z}|$-ary predicate $D$ using the following rule:
\begin{equation}
\bigwedge_{z \in \vec{z}}D_0(z) \rightarrow D(\vec{z}).
\end{equation}
We need the predicate $D$ to ensure that all the rules in our \NDL\ program are safe, i.e.\ every variable that appears in the head of a rule also occurs in the body. 

\smallskip

Now let $\Cir$ be a monotone circuit for $\twfn$ whose gates are $g_1, \dots, g_n$, with $g_n$ the output gate. 
For input gates $g_i$ whose variable is $p_{z=z'}$,  we take the rule\footnote{For ease of notation, we use equality atoms in rule bodies, but these can be removed using standard (equality-preserving) transformations.}
\begin{equation}\label{d3}
 z = z' \wedge D(\vec{z})  \rightarrow G_i(\vec{z}). 
\end{equation}
For every input gate $g_i$ whose variable is $p_{\atom}$, 
we include the rule 
\begin{equation}\label{d2}
 \xi \land D(\vec{z}) \rightarrow G_i(\vec{z}), 
\end{equation}
for every disjunct  $\xi$ of the rewriting $\rho_{\atom}(\vec{z})$ of $\atom$ and $\Tmc$ (here we assume w.lo.g.\ that any variable 
in $\xi$ that does not appear in $\nu$ does not belong to $\vec{z}$).
If instead $g_i$ is associated with variable $p_z^\varrho$, then we use the rules
\begin{equation}\label{d2}
\xi \land D(\vec{z})  \rightarrow  G_i(\vec{z}), 
\end{equation}
where $\xi$ is a disjunct of the rewriting $\rho_\varrho$ of $\varrho(z)$ and $\Tmc$ (here again we assume that every variable that appears both in $\xi$ and $\vec{z}$ also appears in the atom $\varrho$). 
The remaining (AND and OR) gates are encoded using the following rules:
\begin{align}\label{d5}
G_{j_1}(\vec{z})  \land G_{j_2}(\vec{z})  \land  D(\vec{z}) \rightarrow G_i(\vec{z})  & \quad\text{ if } g_i = g_{j_1} \land g_{j_2};\\[3pt]
\label{d6}
\left.\begin{array}{ll}%
G_{j_1}(\vec{z}) \land D(\vec{z}) \rightarrow G_i(\vec{z}) \\[3pt]
G_{j_2}(\vec{z}) \land D(\vec{z}) \rightarrow G_i(\vec{z}) 
\end{array}\right\}
 &\quad \text{ if } g_i = g_{j_1} \lor g_{j_2}.
\end{align}
%


Denote the resulting set of rules~\eqref{d1}--\eqref{d6} by $\Pi$. We note that $\Pi$ is of size $O(|\Cir| \cdot |\Tmc|)$ and further claim that $(\Pi, G_n)$ is an \NDL-rewriting of $\q$ and $\Tmc$. 
To see why, observe that by ``unfolding" these rules in the standard way, we can transform $(\Pi, G_n)$ into an equivalent \PE-formula of the form
\begin{equation*}
\exists \vec{y} \, \Big[\psi(\vec{x}, \vec{y}) \land \bigwedge_{z \in Z} \  \Bigl(
\bigvee_{ \varrho(u) \rightarrow  D_0(u) \in \Pi
} \hspace*{-1em}\varrho(z)
\Bigr)
 \Big],
\end{equation*}
where $Z \subseteq \vec{x} \cup \vec{y}$ and $\exists \vec{y} \,\psi(\vec{x}, \vec{y})$ can be constructed by taking the Boolean formula representing $\Cir$ and replacing $p_\atom$ with $\rho_\atom$, $p_{z=z'}$ with $z=z'$ and $p_z^\varrho$ with $\rho_\varrho$.
We have already shown that $\exists \vec{y} \,\psi(\vec{x}, \vec{y})$ is a rewriting of $\q$ and $\Tmc$ in the first half of the proof, and the additional conjuncts asserting that the variables in $\vec{z}$ appear in some predicate are trivially satisfied. \end{proof}

\bigskip

\noindent{\bf Theorem \ref{rew2prim}}
 If $\q'$ is a \textup{(}\PE-\textup{)} \FO-rewriting of $\q$ and $\Tmc$, then
there is a \textup{(}monotone\textup{)} Boolean formula $\chi$ of size $O(|\q'|)$ which computes
$\primfn$.
If $(\Pi, G)$ is an \NDL-rewriting of $\q$ and $\Tmc$, then $\primfn$ is computed by a
monotone Boolean circuit $\Cir$ of size $O(|\Pi|)$.

\begin{proof}[Proof (implicit in \cite{lics14-KKPZ}).]
Given a \PE-, \FO- or \NDL-rewriting $\q'$ of $\q$ and $\Tmc$, we show how to construct, respectively, a monotone Boolean formula, a Boolean formula or a monotone Boolean circuit for
the function $\primfn$ of size $|\q'|$.

\smallskip

Suppose $\q'$ is a \PE-rewriting of $\q$ and $\Tmc$. We eliminate the quantifiers in $\q'$ by first replacing
every subformula of the form $\exists x\, \psi(x)$ in $\q'$ with $\psi(a)$, and then
replacing each atom of the form $A(a)$ and $P(a,a)$ with the corresponding propositional variable.
One can verify that the resulting propositional monotone Boolean formula computes $\primfn$.
If  $\q'$ is an \FO-rewriting of $\q$, then we eliminate the quantifiers by replacing $\exists x\, \psi(x)$ and $\forall x\, \psi(x)$ in $\q'$ with $\psi(a)$.
We then proceed as before, replacing atoms $A(a)$ and $P(a,a)$ by the corresponding propositional variables, to obtain a 
Boolean formula
computing $\primfn$.

\smallskip

If  $(\Pi, G)$ is an \NDL-rewriting of $\q$, then
we replace all the 
variables in $\Pi$ with $a$ and then perform the replacement described above.
Denote the resulting propositional \NDL-program by $\Pi'$.
The program $\Pi'$ can now be transformed into a monotone Boolean circuit computing $\primfn$.
For every (propositional) variable $p$ occurring in the head of a rule in $\Pi'$,
we introduce an $\OR$-gate whose output is $p$ and inputs are the bodies of the rules with head $p$;
for each such body, we introduce an $\AND$-gate whose inputs are the propositional variables in the body.
\end{proof}

\section*{Proofs for Section \ref{sec:succ}}

\medskip

\noindent{\bf Theorem \ref{tree-hg-to-query}}.
Let $P=(H_P, \l_P)$ be a \THP.
For every input $\alpha$ for $P$,
 $f_P(\alpha) = 1$ iff \mbox{$f_{\q_P,\Tmc_P}^\primsuper(\vgamma)= 1$},
where $\vgamma$ is defined as follows:
$\vgamma(B_e) = 1$, $\vgamma(R_e)=\vgamma(R_e')=0$,
and $\vgamma(S_{ij})=\vgamma(S_{ij}')= \valpha(\l_P(\{v_i,v_j\}))$.

\begin{proof}
Consider a \THP\ $P=(H_P, \l_P)$ whose underlying tree $T$ has vertices  
$v_1, \ldots, v_n$, and 
 let $T^{\downarrow}$ be the directed tree obtained from $T$ by fixing one of its leaves $v_1$ as the root and orienting
edges away from~$v_1$. In what follows, we will say that a vertex $v \in V_T$ is an \emph{internal vertex in $e \in E_P$ (w.r.t.\ $T$)} if
it appears in $e$ and is neither a leaf nor a boundary vertex of $e$ w.r.t.\ $T$. 
Note that because we chose a leaf as root of $T^\downarrow$, we know that for every hyperedge $e$, the highest vertex in $e$ (according to $T^\downarrow$)
must be either a leaf or a boundary vertex of $e$. 
 
Take some $\valpha: L_P \to \{0,1\}$
and let $\vgamma$ be as defined in the theorem statement.
Define the corresponding data instance: $$\Amc_{\vgamma} =
 \{ B_e(a) \mid e \in E_P\} \cup \{ S_{ij}(a,a), S_{ij}'(a,a) \mid \vgamma(S_{ij})=\vgamma(S_{ij}')= \valpha(\l_P(\{v_i,v_j\}))= 1\}.$$

\smallskip

For the first direction, suppose that $f_P(\valpha ) = 1$.
Then we know that there exists $E' \subseteq E_P$ that is independent  and covers all zeros of $\valpha$.
To show $\primfnP(\vgamma)= 1$, we must show that $\Tmc_P, \Amc_{\vgamma} \models  \q_P$.
Define a mapping $h$ as follows:
\begin{itemize}
\item $h(y_i)=a R_e R_e'$ if $v_i$ is an internal vertex of $e \in E'$. 
Otherwise, $h(y_i)=a$. \smallskip
\item $h(y_{ij})= a R_e$ if 
$\{v_i, v_j\} \in e$ and $e \in E'$. Otherwise, $h(y_{ij})=a$.
\end{itemize}
Note that $h$ is well-defined: since $E'$ is independent, different hyperedges in $E'$ cannot share internal vertices, and there can be at most one hyperedge $e \in E'$ that contains a given edge $\{v_i, v_j\}$.

It remains to show that $h$ is a homomorphism from $\q_P$ to   $\Cmc_{\Tmc_P, \Amc_{\vgamma}}$.
Consider a pair of atoms $S_{ij}(y_i, y_{ij}), S'_{ij}(y_{ij}, y_{j})$ in $\q_P$.
Then $(v_i, v_j) \in T^{\downarrow}$, so either $\valpha(\{v_i, v_j\}) = 1$ or
there is some $e \in E'$ such that $\{v_i, v_j\} \in e$.

In the former case, we have $\vgamma(S_{ij})=\vgamma(S_{ij}')= 1$,
so $\Amc_{\vgamma}$ contains $S_{ij}(a,a)$ and $S_{ij}'(a,a)$.
If there is no $e \in E'$ such that $\{v_i, v_j\} \in e$, 
then $h(y_i)=h(y_{ij})=h(y_j)=a$, so the atoms $S_{ij}(y_i, y_{ij}), S'_{ij}(y_{ij}, y_{j})$ are satisfied by $h$.

Now consider the alternative in which $e = \la v_{k_1}, \ldots, v_{k_m} \ra \in E'$ is such that $\{v_i, v_j\} \in e$ and $e \in E'$. 
Note that because boundary vertices must have degree 2 (recall that this condition is part of the definition of \THP s), we know that all boundary and leaf vertices of $e$
must be among $v_{k_1}, \ldots, v_{k_m}$. Moreover, we may assume without loss of generality that $v_{k_1}, \ldots, v_{k_m}$ are all either boundary vertices or leaves of $T$
(since any internal vertex $v_{k_\ell}$ can be dropped without changing the meaning of $e$).  
Note that this ensures that for all $v_i \in e$, $h(y_i) = a$ iff  $v_i \in \{ v_{k_1}, \ldots, v_{k_m} \}.$
%
%
%
There are four possibilities to consider:
\begin{itemize}
\item Case 1: $\{v_i, v_j\} \subseteq \{v_{k_1}, \ldots, v_{k_m}\}$ \emph{(i.e.\ neither of $v_i$ and $v_j$ is internal)}. We know that the boundary vertices of $e$ have degree 2, so the only possibility is that $e=\{\{ v_i, v_j \}\}$. 
We therefore have $h(v_i)=h(v_j)=a$ and $h(v_{ij})=a R_e$, and the ontology $\Tmc_P$ contains $R_e(x,y) \to S_{ij}(x,y)$ and  $R_e(y,x) \to S_{ij}'(x,y)$.
\smallskip
\item  Case 2: $v_i \in \{v_{k_1}, \ldots, v_{k_m}\}$ but $v_j \not \in \{v_{k_1}, \ldots, v_{k_m}\}$.  \emph{ (i.e.\ $v_i$ is a boundary vertex or leaf and $v_j$ is internal)}\smallskip\\
We have $h(v_i)=a$, $h(v_{ij})=a R_e$, and $h(v_{j})=a R_e R_e'$, and the ontology contains
$R_e(x,y) \to S_{ij}(x,y)$ and  $R_e'(x,y) \to S_{ij}'(x,y)$. \smallskip
\item Case 3: $v_j \in \{v_{k_1}, \ldots, v_{k_m}\}$ but $v_i \not \in \{v_{k_1}, \ldots, v_{k_m}\}$\emph{ (i.e.\ $v_j$ is a boundary vertex or leaf and $v_i$ is internal)}\smallskip\\
Then we have $h(v_j)=a$, $h(v_{ij})=a R_e$, and $h(v_{i})=a R_e R_e'$, and the ontology contains
$R_e(y,x) \to S_{ij}'(x,y)$ and  $R_e'(y,x) \to S_{ij}(x,y)$. \smallskip
\item Case 4:  $\{v_i, v_j\} \cap \{v_{k_1}, \ldots, v_{k_m}\} = \emptyset$ \emph{(i.e.\ both are internal vertices)}. 
Then we have $h(v_i)=h(v_j)=a R_e R_e'$ and $h(v_{ij})=a R_e$, and the ontology contains
$R_e(y,x) \to S_{ij}(x,y)$ and  $R_e'(x,y) \to S_{ij}'(x,y)$.
\end{itemize}
In all cases, we find that $h$ satisfies the atoms $S_{ij}(y_i, y_{ij}), S'_{ij}(y_{ij}, y_{j})$.
We can thus conclude that $h$ is indeed a homomorphism.

\medskip

For the other direction, suppose that $\primfnP(\valpha) = 1$. Then we have
$\Tmc_P, \Amc_{\vgamma} \models \q_P$, so there is a homomorphism
$h: \q_P \rightarrow  \Cmc_{\Tmc_P, \Amc_{\vgamma}}$. We wish to show that there exists
a subset of $E_P$ that is independent and covers all zeros of $\valpha$.
Let us define $E' $ as the set of all $e \in E$ such that $h^{-1}(a R_e) \neq \emptyset$ (that is, $a R_e$ is in the image of $h$).

To show that $E'$ is independent, we start by establishing the following claim:\medskip\\
\noindent\textbf{Claim}. If $h^{-1}(a R_e) \neq \emptyset$, $y_{ij} \in \vars(\q_H)$, and $\{v_i, v_j\} \in e$,
then $h(y_{ij})= a R_e$. \smallskip\\
\noindent\emph{Proof of claim}. Suppose that $h^{-1}(a R_e) \neq \emptyset$,
where $e = \la v_{k_1}, \ldots, v_{k_m} \ra \in E'$. We may assume w.l.o.g.\
that $v_{k_1}$ is the highest vertex in $e$ according to $T'$, and that none of $v_{k_1}, \ldots, v_{k_m}$ is an internal vertex.
Now pick some variable $z \in h^{-1}(a R_e)$
such that there is no $z' \in h^{-1}(a R_e)$ that is higher than $z$ in $\q_P$ (here we use the ordering
of variables induced by the tree $T^{\downarrow}$).
We first note that $z$ cannot be of the form $y_j$, since then there is an atom in $\q_P$
of the form $S_{j \ell}(y_j, y_{j\ell})$ or $S_{\ell j}'(y_{\ell j}, y_j)$, and $a R_e$ does not have
any outgoing $S_{j \ell}$ or $S_{\ell j}'^-$ arcs in $\Cmc_{\Tmc_P, \Amc_{\vgamma}}$. It follows that $z=y_{j \ell}$. By again considering
the available arcs leaving $a R_e$, we can further see that $\{v_j, v_\ell\} \in e$. We next wish to
show that $j= k_1$. Suppose that this is not the case. Then, we know that 
there must exist some edge $\{v_p, v_j\} \in e$ such that $(v_p, v_j) \in T^\downarrow$. A simple examination of the axioms in $\Tmc_P$ shows that the only
way for $h$ to satisfy the atom $S_{j \ell}(y_j, y_{j \ell})$ is to map $y_j$ to $a R_e R_e'$.
It follows that to satisfy that atom $S_{p j}'(y_{p j}, y_j)$, we must have $h(y_{p j})=a R_e$. This contradicts our earlier assumption that $z=y_{j \ell}$ was a highest vertex in $h^{-1}(a R_e)$. We thus have $j= k_1$.
Now using a simple inductive argument on the distance from $y_{k_1}$, and considering the possible ways of
mapping the atoms of $\q_H$, we can show that
$h(y_{ij})= a R_e$ for every $\{v_i, v_j\} \in e$. (\emph{end proof of claim})
\medskip\\
\noindent Suppose that there are two distinct hyperedges $e,e' \in E'$ that have a non-empty intersection:
$\{v_i, v_j\} \in e \cap e'$. We know that either $y_{ij}$ or $y_{ji}$ belongs to $\vars(\q_P)$, and we can suppose w.l.o.g.\
that it is the former. We can thus apply the preceding claim to obtain $h(y_{ij})= a R_e = a R_{e'}$, a contradiction.
We have thus shown
that $E'$ is independent, and so it only remains to show it covers all zeros. To this end,
let $\{v_i,v_j\}$ be such that $\valpha(\{v_i,v_j\})=0$ and again suppose w.l.o.g.
that $y_{ij} \in \vars(\q_P)$. Then $\Amc_{\vgamma}$ does not contain
$S_{ij}(a,a)$, so the only way $h$ can satisfy the query atom $S_{ij}(y_i, y_{ij})$
is by mapping $y_{ij}$ to some element $a R_e$ such that $\{v_i,v_j\} \in e$.
It follows that there is some $e \in E'$ such that
$\{v_i,v_j\} \in e$, so all zeros of $\valpha$ are covered by $E'$.
We have thus shown that $E'$ is an independent subset of $E_P$
that covers all zeros of $\valpha$, and hence  we conclude that $f_P(\valpha) = 1$.
\end{proof}

\bigskip

\noindent{\bf Theorem~\ref{DL2THP}}.
Fix $\twidth \geq 1$ and $d \geq 0$. For every ontology $\Tmc$ of depth $\leq d$ 
and CQ $\q$ of treewidth $\leq \twidth$,
there is a monotone $\THP$ that computes $\homfn$
of size
polynomial in $|\Tmc| + |\q|$.

\medskip

More specifically, we show the following:

\medskip

\noindent{\bf Proposition.}
For every ontology $\Tmc$ and CQ $\q$,  the $\THP$ $(H_{\q, \Tmc}, \l_{\q, \Tmc})$ defined in Section III.C
computes $\homfn$.
If $\Tmc$ has depth $d$ and $\q$ has treewidth $\twidth$, then $H_{\q, \Tmc}$ 
\begin{itemize}
\item contains at most $(2M +1)L$ vertices;
\item contains at most $L(M+M^2)$ hyperedges;
\item has labels with at most $(2|\Tmc|+ |\q| + 1) |\q|$ conjuncts.
\end{itemize}
where $L = (2|\q|-1)^2$ and $M= |W_d^t| \leq (2|\Tmc|)^d$. 
\begin{proof}
 Let $(T, \lambda)$ be the tree decomposition of $G_\q$ 
 of width~$\twidth$ that was used to construct the $\THP$ $(H_{\q, \Tmc}, \l_{\q, \Tmc})$.
We may assume  w.l.o.g.\ that $T$ contains
at most $(2|\q|-1)^2$ nodes, cf.\ \cite{books/sp/Kloks94}.
Recall that to more easily refer to the variables in $\lambda(N)$,
we make use of functions $\lambda_1, \ldots, \lambda_\twidth$
such that $\lambda_i(N) \in \lambda(N)$ and $\lambda(N) = \cup_i \lambda_i(N)$.
Further recall that the formula $\homfn$ is defined as follows:
$$\homfn = \bigvee_{\substack{\Theta \subseteq \twset\\ \text{ independent}}}\,\,
\bigg(\bigwedge_{\atom \in \q \setminus \q_\Theta} p_\atom
 \wedge  \bigwedge_{\t \in \Theta} \big(\bigwedge_{z,z'\in\t} p_{z=z'} \,\, \wedge \bigvee_{\substack{\varrho \in \rni,\\ \t \in \twset[\varrho]}} \,\, \bigwedge_{z \in \t} p_{z}^\varrho\big)\bigg)  $$
where $\q_\Theta = \bigcup_\t \q_\t$.
Throughout the proof, we use $f_P$ to denote the function computed by the $\THP$ $(H_{\q, \Tmc}, \l_{\q, \Tmc})$.
Note that by definition $f_P$ uses exactly the same set of propositional variables as $\homfn$.
\bigskip

To show the first direction of the first statement, let $\vec{v}$ be a valuation of the variables
in $\homfn$ such that $\homfn(\vec{v})=1$.
Then we can find an independent subset $\Theta \subseteq \twset$ such that $\vec{v}$ satisfies the
corresponding disjunct of $\homfn$:
\begin{equation}\label{disjunct}
\bigwedge_{\atom \in \q \setminus \q_\Theta} p_\atom
 \wedge  \bigwedge_{\t \in \Theta} \big(\bigwedge_{z,z'\in\t} p_{z=z'} \,\, \wedge \bigvee_{\substack{\varrho \in \rni,\\ \t \in \twset[\varrho]}} \,\, \bigwedge_{z \in \t} p_{z}^\varrho\big)
\end{equation}
For every $\t \in \Theta$, we let $\varrho_\t$ be a role that makes the final disjunction hold.
Furthermore,
we choose some homomorphism $h_\t: \q_\t \rightarrow \Cmc_{\Tmc_{\varrho_\t}, \Amc_{\varrho_\t}}$, where
$\Tmc_{\varrho_\t}= \Tmc \cup \{A_{\varrho_\t}(x) \rightarrow \exists y \varrho_\t(x,y) \}$ and $\Amc_{\varrho_\t}= \{A_{\varrho_\t}(a)\}$.
Such homomorphisms are guaranteed to exist by the definition of tree witnesses.

Now for each node $N$ in the tree decomposition $T$,
we define $\vec{w}_N$ 
by setting:
\begin{itemize}
\item $\vec{w}_N[j] = \emptyword$ if $\lambda_j(N) = z$ and either $z$ appears in an atom $\atom$ such that
$\vec{v}(p_\atom)=1$ or there is some $\t \in \Theta$ such that $z \in \t_r$.
\item $\vec{w}_N[j] = w$ if $\lambda_j(N) = z$ and there is some $\t \in \Theta$ such that $z \in \t_i$ and $h_\t(z) = a w$.
\end{itemize}
First note that $\vec{w}_N$ is well-defined since the independence of $\Theta$ guarantees that every variable in $\q$ can appear in $\t_i$ for
at most one tree witness $\t \in \Theta$. Moreover, every variable in $\q$ must either belong to an atom $\atom$ such that
$\vec{v}(p_\atom)=1$ or to an atom that is contained in $\q_\t$ for some $\t \in \Theta$.

\smallskip

Next we show that $\vec{w}_N$ is consistent with the node $N$.
To show that the first condition holds, consider some atom $A(\lambda_i(N)) \in \q$
such that $\vec{w}_N[i] \neq \emptyword$. Then there must be a tree witness $\t \in \Theta$
such that $\lambda_i(N) \in \t_i$, in which case we have that 
$h_\t (\lambda_i(N))=a \vec{w}_N[i]$.
Let $\varsigma \in \rni$ be the final symbol in $\vec{w}_N[i]$. Then since $h_\t$ is a homomorphism from $\q_\t$ into $\Cmc_{\Tmc_{\varrho_\t}, \Amc_{\varrho_\t}}$,
it must be the case that $\Tmc \models \exists y\, \varsigma(y,x) \to A(x)$. 

To show the second condition holds,
consider some atom $R(\lambda_i(N),\lambda_j(N)) \in \q$ such that either $\vec{w}_N[i] \neq \emptyword$
or $\vec{w}_N[j] \neq \emptyword$. We suppose w.l.o.g.\ that $\vec{w}_N[i] \neq \emptyword$ (the other case is handled analogously).
It follows from the definition of  $\vec{w}_N$
that there must exist a tree witness $\t \in \Theta$
such that $\lambda_i(N) \in \t_i$ and $h_\t (\lambda_i(N))=a \vec{w}_N[i]$.
Since $\lambda_i(N) \in \t_i$ and $R(\lambda_i(N),\lambda_j(N)) \in \q$, the definition of tree witnesses
ensures that $\lambda_j(N) \in \t$.
Because $h_\t$ is a homomorphism from $\q_\t$ into $\Cmc_{\Tmc_{\varrho_\t}, \Amc_{\varrho_\t}}$,
we know that one of the following must  hold:
\begin{itemize}
\item $\lambda_j(N) \in t_r$, $\vec{w}_N[j]=\emptyword$, and $\vec{w}_N[i]=\varsigma$ for some $\varsigma\in \rni$
such that $\Tmc \models \varsigma(y,x) \to  R(x,y)$
\item $\lambda_j(N) \in t_i$ and $\vec{w}_N[i]=\vec{w}_N[j] \cdot \varsigma$ for some $\varsigma\in \rni$
such that $\Tmc \models \varsigma(y,x) \to R(x,y)$
\item $\lambda_j(N) \in t_i$ and $\vec{w}_N[j]=\vec{w}_N[i] \cdot \varsigma$ for some $\varsigma\in \rni$
such that $\Tmc \models \varsigma(x,y) \to R(x,y)$
\end{itemize}
This establishes the second  consistency condition.

We must also show that the pairs associated with different nodes in $T$ are compatible.
To this end, consider a pair of nodes $N_1$ and $N_2$ and the corresponding tuples of words
$\vec{w}_{N_1}$ and  $\vec{w}_{N_2}$.
It is clear from the way we defined $\vec{w}_{N_1}$ and  $\vec{w}_{N_2}$
that if $\lambda_i(N_1)= \lambda_j(N_2)$, then we must have $\vec{w}_{N_1}[i] = \vec{w}_{N_2}[j]$.

\smallskip

Now consider the set $E'$ of hyperedges in $H_{\q,\Tmc}$ that contains:
\begin{itemize}
\item for every $N_i$ in $T$, the hyperedge $E_i^k = \la u_{ij_1}^k, \ldots, u_{i j_n}^k\ra $, where $k$ is such that
$\xi_k = \vec{w}_{N_i}$, and $N_{j_1}, \ldots, N_{j_n}$ are the neighbours of $N_i$;
\item for every pair of adjacent nodes $N_i, N_j$ in $T$, the hyperedge
$E_{ij}^{k m} = \la v_{ij}^k, v_{ji}^m\ra $, where $k$ and $m$ are such that $\xi_k=\vec{w}_{N_i}$ and $\xi_m=\vec{w}_{N_j}$.
\end{itemize}
Note that the aforementioned hyperedges all belong to $H_{\q,\Tmc}$ since we showed that
each $\vec{w}_{N_i},$ is consistent with node $N_i$, and that
$\vec{w}_{N_i}$ and  $\vec{w}_{N_j}$ are
compatible with $(N_i, N_j)$ for all pairs of  nodes  $(N_i, N_j)$ in $T$.
It is easy to see that $E'$ is independent, since whenever we include $E_i^k$ or $E_{ij}^{k m}$, we do not include any $E_i^{k'}$ or $E_{ij}^{k' m}$ for $k' \neq k$.
To see why $E'$ covers all zeros, consider a vertex $F$ of $H_{\q,\Tmc}$  (= an edge in $T'$) that evaluates to 0 under $\vec{v}$.
There are several cases to consider:
\begin{itemize}
\item $F = \{N_i, u_{ij}^1\}$: $F$ is covered by the hyperedge in $E'$ of the form $E_i^k$
\item $F= \{v_{ij}^\ell, u_{ij}^{\ell +1}\}$: then $F$ is either covered by the hyperedge in $E'$ of the form $E_i^k$ (if $k \leq \ell +1$) or by the hyperedge
$E_{ij}^{k m}$ (if $k > \ell +1$)
\item $F = \{v_{ij}^\numtypes, v_{ji}^\numtypes\}$: then $F$ is covered  by the hyperedge in $E'$ of the form $E_{ij}^{k m}$
\item $F= \{u_{ij}^\ell, v_{ij}^\ell\}$ with $\xi_\ell = \vec{w}$: then $F$ is either covered by the hyperedge in $E'$ of the form $E_i^k$ (if $\ell< k$) or by the hyperedge $E_{ij}^{k m}$ (if $k > \ell$), or we have $\vec{w} =\vec{w}_{N_i}$. In the latter case,
 we know that $F$ is labeled by the conjunction of the following variables:
\begin{itemize}
\item $p_\atom$, if  
$\vars(\atom) \subseteq \lambda(N_i)$
and $\lambda_g(N_i) \in \vars(\atom)$ implies $\vec{w}[g]= \emptyword$
\item $p_z^\varrho$, if $\vars(\atom) = \{z\} $, $z = \lambda_g(N_i)$, and $\vec{w}[g]= \varrho w'$
\item $p_z^\varrho$, $p_{z'}^\varrho$, and $p_{z=z'}$, if $\vars(\atom) = \{z, z'\}$, $z = \lambda_g(N_i)$, $z' = \lambda_{g'}(N_i)$,
and either $\vec{w}[g]= \varrho w'$ or $\vec{w}[g']= \varrho w'$
\end{itemize}
Since $F$ evaluates to false, one of these variables must be be assigned 0 under $\vec{v}$.
First suppose that $p_\atom$ is in the label and $\vec{v}(p_\atom)=0$. Then since Equation \eqref{disjunct}
is satisfied, it must be the case that $\atom$ belongs to some $\q_\t$, but the fact that
$\lambda_g(N_i) \in \vars(\atom)$ implies $\vec{w}_{N_i}[g]= \emptyword$ means that all variables in $\atom$
must belong to $\t_r$, contradicting the fact that $\q_\t$ contains only atoms that have
at least one variable in $\t_i$.
Next suppose that one of $p_z^\varrho$, $p_{z'}^\varrho$, and $p_{z=z'}$ is part of the label and
evaluates to $0$ under $\vec{v}$. We focus on the case where these variables came from a role atom with distinct
variables, but the proof is entirely similar if $p_z^\varrho$ is present because of a unary atom (item 2 above).
Then we know that there is some atom $\atom$ with
$\vars(\atom) = \{z, z'\} $, $z = \lambda_g(N_i)$, $z' = \lambda_{g'}(N_i)$,
and either $\vec{w}[g]= \varrho w'$ or $\vec{w}[g']= \varrho w'$.
It follows that there is a tree witness $\t \in \Theta$ such that $z \in \t$.
This means that the atom $p_{z=z'}$ must be a conjunct of Equation \eqref{disjunct}, and so it
must be satisfied under $\vec{v}$.
Moreover, the fact that $\vec{w}[g]= \varrho w'$ or $\vec{w}[g']= \varrho w'$ means that
$\varrho_\t = \varrho$, so
the variables $p_z^\varrho$ and $p_{z'}^\varrho$ are also satisfied under $\vec{v}$,
contradicting our earlier assumption to the contrary.
\end{itemize}
We have thus shown that $E'$ is independent and covers all zeros under $\vec{v}$,
which means that $f_P(\vec{v})=1$.
\medskip

For the other direction, suppose that $f_P(\vec{v})=1$, i.e.\ there is an independent subset $E'$
of the hyperedges in $H_{\q,\Tmc}$ that covers all vertices that evaluate to 0 under $\vec{v}$.
It is clear from the construction of $H_{\q,\Tmc}$ that the set $E'$ must contain exactly one hyperedge
of the form $E_i^k$ for every node $N_i$ in $T$, and exactly one hyperedge of the form
$E_{ij}^{k m}$ for every edge $\{N_i, N_j\}$ in $T$. Moreover, if we have hyperedges
$E_i^k$ and $E_{ij}^{k' m}$ (resp.\ $E_j^m$ and $E_{ij}^{k m'}$), then it must be the case that $k=k'$ (resp.\ $m=m'$).
We can thus associate with every node $N_i$ the tuple $\vec{w}_{N_i} = \type_k$.
Since all zeros are covered,
we know that for every node $N_i$,
the following variables are
assigned to $1$ by $\vec{v}$:
\begin{itemize}
\item $p_\atom$, if  
$\vars(\atom) \subseteq \lambda(N_i)$
and $\lambda_g(N_i) \in \vars(\atom)$ implies $\vec{w}[g]= \emptyword$
\item $p_z^\varrho$, if $\vars(\atom) = \{z\} $, $z = \lambda_g(N_i)$, and $\vec{w}[g]= \varrho w'$ \hfill ($\star$)
\item $p_z^\varrho$, $p_{z'}^\varrho$, and $p_{z=z'}$, if $\vars(\atom) = \{z, z'\} $,
 $z = \lambda_g(N_i)$, $z' = \lambda_{g'}(N_i)$,
and either $\vec{w}[g]= \varrho w'$ or $\vec{w}[g']= \varrho w'$
\end{itemize}
We know from the definition of  the set of hyperedges in $H_{\q,\Tmc}$ that
every $\vec{w}_{N_i}$ is consistent with $N_i$, and for adjacent nodes $N_i,N_j$,
pairs $\vec{w}_{N_j}$ and $\vec{w}_{N_j}$ are compatible.
Using the consistency and compatibility properties, and the connectedness condition of tree
decompositions, we can infer that the pairs assigned to \emph{any two nodes} $N_i, N_j$
in $T$ are compatible. Since every variable must appear in at least one node label,
it follows that we can associate a \emph{unique} word $w_z$ to every variable $z$ in $\q$.
Now let $\equiv$ be the smallest equivalence relation on the atoms of $\q$ that satisfies
the following condition:
$$\text{If $y \in \vars(\q)$, $w_y \neq \emptyword$, $y \in \atom_1$, and $y \in \atom_2$, then $\atom_1 \equiv \atom_2. $} $$
Let $\q_1, \ldots, \q_m$ be the queries corresponding to the equivalence classes of $\equiv$. It is easily verified that the queries $\q_i$
are pairwise disjoint. Moreover, if $\q_i$ contains only variables $z$ with $w_z= \emptyword$, then $\q_i$ consists of a single atom.
We can show that the remaining $\q_i$ correspond to tree witnesses:
\smallskip

\noindent\textbf{Claim}. For every $\q_i$ that contains a variable $y$ with $w_y \neq \emptyword$:
\begin{enumerate}
\item there is a role $\varrho_{i}$ such that every $w_y \neq \emptyword$ begins by $\varrho_{i}$
\item there is a homomorphism $h_{i}$ from $\q_i$ into $C_{\Tmc_i, \Amc_i}$ where
$\Tmc_i = \Tmc \cup \{A_{\varrho_{i}}(x) \rightarrow \exists y \varrho_{i}(x,y)\}$ and $\Amc_i= \{A_{\varrho_{i}}(a)\}$ (with $A_{\varrho_{i}}$ fresh)
\item there is a tree witness $\t^i$ for $\q$ and $\Tmc$ generated by $\varrho_i$ such that $\q_i= \q_{\t^i}$
\end{enumerate}
\emph{Proof of claim}.
From the way we defined $\q_i$, we know that there exists a sequence $Q_0, \ldots, Q_m$ of subsets of $\q$
such that $Q_0 = \{\atom_0\} \subseteq \q_i$ contains a variable $y_0$ with $w_{y_0}\neq \emptyword$,
$Q_m= \q_i$, and for every $1 \leq \ell \leq m$,
$Q_{\ell +1}$ is obtained from $Q_\ell$ by adding an atom $\atom \in \q \setminus Q_\ell$
that contains a variable $y$ that appears in $Q_\ell$ and is such that $w_y \neq \emptyword$.
By construction, every atom in $\q_i$ contains a variable $y$ with $w_y \neq \emptyword$.
Let $\varrho_i$ be the first letter of the word $w_{y_0}$, and for every $0 \leq \ell \leq m$,
let $h_\ell$ be the function mapping every variable $z$ in $Q_\ell$ to $a w_z$.

Statements 1 and 2 can be shown by induction. The base case is trivial.
For the induction step, suppose that at stage $\ell$, we know that every variable $y$ in $Q_\ell$ with
$w_y \neq \emptyword$ begins by $\varrho_i$, and that $h_\ell$ is a homomorphism of $Q_\ell$ into
the canonical model $C_{\Tmc_i, \Amc_i}$.
We let $\atom$ be the unique atom in $Q_{\ell+1} \setminus Q_\ell$.
Then we know that $\atom$ contains a variable $y$ that appears in $Q_\ell$ and is such that
$w_y \neq \emptyword$. If  $\atom= B(y)$, then Statement 1 is immediate.
For Statement 2, we let $N$ be a node in $T$ such that $\vars(\atom) \subseteq \lambda(N)$ (such a node must
exist by the definition of tree decompositions), and let $j$ be such that $\lambda_j(N)=y$.
 We know that $\vec{w}_N$ is consistent with $N$, so
 $w_y = \vec{w}_N[j] $ must end by a role $\varsigma$ with $\Tmc \models \exists y \varsigma(y,x) \to B(x)$,
which proves Statement 2.
Next consider the other case in which $\atom$ contains a variable other than $y$.
Then $\atom$ must be a role atom of the form $\atom= R(y,z)$ or $\atom=R(z,y)$.
We give the argument for the case where $\atom= R(y,z)$ (the argument for
$\atom=R(z,y)$ is entirely similar).
Let $N$ be a node in $T$ such that $\vars(\atom) \subseteq \lambda(N)$,
and let $j,k$ be such that $\lambda_j(N)=y$
and $\lambda_k(N)=z$.
We know that $\vec{w}_N$ is consistent with $N$,
so one of the following must hold:
\begin{itemize}
\item $\vec{w}_N[k]= \vec{w}_N[j] \cdot \varsigma$ with $\Tmc \models \varsigma(x,y) \to R(x,y)$
\item $\vec{w}_N[j] = \vec{w}_N[k] \cdot \varsigma$ with $\Tmc \models \varsigma(x,y) \to R(y,x)$
\end{itemize}
By definition, we have $w_y = \vec{w}_N[j]$ and $w_{z} =\vec{w}_N[k]$.
Since $w_y$ begins with $\varrho_i$, it follows that the same holds for $w_z$ unless $w_z = \emptyword$,
which shows Statement 1. Moreover, we either have (i)
$w_z = w_y \varsigma$ and $\Tmc \models \varsigma(x,y) \to R(x,y) $, or (ii)
$w_y = w_z \varsigma$ and $\Tmc \models \varsigma(x,y) \to R(y,x)$. In both cases, it is clear from the way we defined
$h_{\ell+1}$ that it is homomorphism from $Q_{\ell+1}$ to $C_{\Tmc_i, \Amc_i}$, so Statement 2 holds.

Statement 3 now follows from Statements 1 and 2, the definition of $\q_i$, and the definition of tree witnesses.
\emph{(end proof of claim)}
\smallskip

Let $\Theta$ consist of all the tree witnesses $\t^i$ obtained from the preceding claim.
As the $\q_i$ are known to be disjoint, we have that the set $\{\q_{\t^i} \mid \t^i \in \Theta\}$ is independent.
We aim to show that $\vec{v}$ satisfies the disjunct of $\homfn$ that corresponds to $\Theta$ (cf.\ Equation \eqref{disjunct}).
First consider some $\atom \in \q \setminus \q_\Theta$. Then we know that for every variable $z$ in $\atom$,
we have $w_z = \emptyword$.
Let $N$ be a node such that $\vars(\atom) \subseteq \lambda(N)$.
Then we know that $\lambda_g(N) \in \vars(\atom)$ implies $\vec{w}_{N}[g]= \emptyword$.
It follows from $(\star)$ that $\vec{v}(p_\atom)=1$.
Next consider a variable $p_{z=z'}$ such that there is an atom $\atom \in \t^i$
with $\vars(\atom)=\{z,z'\}$ such that $z \neq z'$. Then since $\atom \in \q_i$,
we know that either $w_{z} \neq \emptyword$ or
$w_{z'} \neq \emptyword$.  It follows from $(\star)$ that $\vec{v}(p_{z=z'})=1$.
Finally, consider some $p_z^{\varrho_i}$ such that $z \in \t^i$.
First suppose that there is a unary atom $B(z) \in \q_{\t^i}$.
Then we know that $w_{z} \neq \emptyword$, and so by the above claim,
we must have $w_{z} = \varrho_i w'$. It follows that there is a node $N$ in $T$
such that $z = \lambda_g(N)$ and $\vec{w}_N[g]= \varrho_i w'$. From $(\star$),
we can infer that $p_z^{\varrho_i}$ evaluates to $1$ under $\vec{v}$.
The other possibility is that there exists a binary atom $\atom \in \q_{\t^i}$
such that $\vars(\atom)=\{z,z'\}$. Let $N$ be a node in $T$
such that $z= \lambda_g(N)$ and $z' = \lambda_{g'}(N)$.
Since $\q_{\t^i}=\q_i$, we know that either $w_{z} \neq \varepsilon$
or $w_{z'} \neq \varepsilon$. From the above claim,
this yields $\vec{w}_N[g]= \varrho_i w'$ or $\vec{w}_N[g']= \varrho_i w'$.
We can thus apply $(\star)$ to obtain $\vec{v}(p_z^{\varrho_i})=1$.
To conclude, we have shown that $\vec{v}$ satisfies one of the disjuncts of $\homfn$, so $\homfn(\vec{v})=1$.

\bigskip

For the second statement of the theorem, we recall that the tree $T$ in the tree decomposition of $\q$ 
has at most $(2|\q|-1)^2$ nodes and that the set $W_d^t$ consists of all tuples of words $\{(w_1, \ldots, w_\twidth)\mid w_i \in (\rni \cap \sig(\Tmc))^*, |w_i|\leq d \}$.
To simplify the counting, we let $L=(2|\q|-1)^2$ and $M= |W_d^t| \leq (2|\Tmc|)^d$. 
The vertices of the hypergraph $H_{\q, \Tmc}$ correspond to the edges of $T'$, and there can be at most $L \cdot (2M+1)$ of them, since there can be no more than $L$ edges in $T$, and each is replaced by $2 M + 1$ new edges. 
The hyperedges of $H_{\q, \Tmc}$ are of two types: $E^k_i$ (where $1 \leq i \leq L$ and $1 \leq k \leq M$) and $E^{k m}_{i j}$ (where
$1 \leq i \leq L$ and $1 \leq k \leq M$). It follows that the total number of hyperedges cannot exceed $L(M+M^2)$. Finally, a simple examination of the labelling function shows that there can be at most $(2|\Tmc|+ |\q| + 1) |\q|$ conjuncts in each label.  
\end{proof}

\bigskip

\noindent{\bf Theorem \ref{thm:thp_vs_sac}.}
There exist
polynomials
$p, p'$ such that:
\begin{itemize}
\item Every function computed by a semi-unbounded fan-in 
  circuit of size at most $\size$
and depth at most $\log \size$ is computable by a 
$\THP$ of size $p(\size)$.
\item Every function computed by a 
$\THP$ of size $\size$ is computable by a semi-unbounded fan-in 
circuit of size at most $p'(\size)$
and depth at most $\log p'(\size)$.
\end{itemize}
Both reductions preserve monotonicity. 

\subsection*{Proof of the First Statement of Theorem \ref{thm:thp_vs_sac}: $\THP$s can simulate $\SAC$ circuits}

Consider a semi-unbounded fan-in circuit $\Cir$ of size at most $\size$ and depth at most $\log \size$.
Denote its gates by $g_1, \ldots, g_\size$, where $g_\size$ is the output gate.
We define the $\AND$-depth of gates of $\Cir$ inductively.
For the $\AND$ gate $g_i$, its $\AND$-depth, $d(g_i)$, equals $1$ if on each path from $g_i$ to the input
there are no $\AND$ gates.
If there are $\AND$ gates on the paths from $g_i$ to the input,
consider one such gate $g_j$ with maximal $\AND$-depth,  and let $d(g_i) = d(g_j) + 1$.
For an $\OR$ gate $g_i$, we let $d(g_i)$ to be equal to the largest $\AND$-depth of an $\AND$ gate
on some path from $g_i$ to the input.
If there are no such $\AND$ gates, then the $\AND$-depth of $g_i$ is~$0$.

We denote by $S_i$ the set of $\AND$ gates of the circuit of $\AND$-depth $i$.
Note that since the depth of $\Cir$ is at most $\log \size$, we have that the $\AND$-depth
of its gates is also at most $\log \size$.
For each $\AND$ gate $g_i$ of the circuit, we distinguish its first and its second input.
We denote by $\leftt(g_i)$ the subcircuit computing the first input of $g_i$, that is,
the subcircuit consisting of the left input $g_j$ of $g_i$
and of all gates such that there is a path from them to $g_j$.
Analogously, we use $\rightt(g_i)$ to denote the subcircuit that computes the second
input of $g_i$.

\begin{figure}[t]
\scalebox{.8}{
\begin{tikzpicture}[>=latex, point/.style={circle,draw=black,thick,minimum size=1.5mm,inner sep=0pt,fill=black}]
\node at (-3,3) {{\small (a)}};
\node at (2,3) {{\small (b)}};
\node[input,label=left:$g_{1}$] (g1) at (-2.7,0) {$x_1$};
\node[input,label=left:$g_{2}\!$] (g2) at (-1.4,0) {$x_2$};
\node[input,label=left:$g_{3}\!$] (g3) at (-.8,1) {$x_3$};
\node[input,label=left:$g_{5}\!$] (g5) at (.6,1) {$x_4$};
\node[or-gate,label=left:$g_4$] (g4) at (-2.2,1) {OR};
\node[fill=gray!40,and-gate,label=left:$g_{6}$] (g6) at (-1.5,2) {AND};
\node[fill=gray!40,and-gate,label=left:$g_{7}$] (g7) at (0.6,2) {AND};
\node[fill=gray!40,or-gate,label=left:$g_{8}$] (g8) at (-0.4,3) {OR};
\node[point,label=above: $v_{8}$] (v8) at (2,1) {};
\node[point,label=above: $u_{8}$] (u8) at (3,1) {};
\node[point,label=above: $w_{7}$] (w7) at (4,1) {};
\node[point,label=above: $v_{7}$] (v7) at (5,1) {};
\node[point,label=above: $u_{7}$] (u7) at (6,1) {};
\node[point,label=above: $w_{6}$] (w6) at (7,1) {};
\node[point,label=above: $v_{6}$] (v6) at (8,1) {};
\node[point,label=above: $u_{6}$] (u6) at (9,1) {};
\node[point,label=above: $w_{4}$] (w4) at (10,2) {};
\node[point,label=above: $v_{4}$] (v4) at (11,2) {};
\node[point,label=above: $u_{4}$] (u4) at (12,2) {};
\node[point,label=above: $w_{2}$] (w2) at (13,2) {};
\node[point,label=above: $v_{2}$] (v2) at (14,2) {};
\node[point,label=above: $u_{2}$] (u2) at (15,2) {};
\node[point,label=above: $w_{1}$] (w1) at (16,2) {};
\node[point,label=above: $v_{1}$] (v1) at (17,2) {};
\node[point,label=above: $u_{1}$] (u1) at (18,2) {};
\node[point,label=above: $w_{5}$] (w5) at (10,0) {};
\node[point,label=above: $v_{5}$] (v5) at (11,0) {};
\node[point,label=above: $u_{5}$] (u5) at (12,0) {};
\node[point,label=above: $w_{3}$] (w3) at (13,0) {};
\node[point,label=above: $v_{3}$] (v3) at (14,0) {};
\node[point,label=above: $u_{3}$] (u3) at (15,0) {};
\draw[->] (g1) to (g4);
\draw[->] (g2) to (g4);
\draw[->] (g4) to (g6);
\draw[->] (g3) to (g6);
\draw[->] (g4) to (g7);
\draw[->] (g5) to (g7);
\draw[->] (g6) to (g8);
\draw[->] (g7) to (g8);
\draw[-] (v8) to node {0} (u8);
\draw[-] (u8) to node {1} (w7);
\draw[-] (w7) to node {0} (v7);
\draw[-] (v7) to node {0} (u7);
\draw[-] (u7) to node {1} (w6);
\draw[-] (w6) to node {0} (v6);
\draw[-] (v6) to node {0} (u6);
\draw[-] (u6) to node {1} (w4);
\draw[-] (w4) to node {0} (v4);
\draw[-] (v4) to node {0} (u4);
\draw[-] (u4) to node {1} (w2);
\draw[-] (w2) to node {0} (v2);
\draw[-] (v2) to node {$x_1$} (u2);
\draw[-] (u2) to node {1} (w1);
\draw[-] (w1) to node {0} (v1);
\draw[-] (v1) to node {$x_2$} (u1);
\draw[-] (u6) to node {1} (w5);
\draw[-] (w5) to node {0} (v5);
\draw[-] (v5) to node {$x_4$} (u5);
\draw[-] (u5) to node {1} (w3);
\draw[-] (w3) to node {0} (v3);
\draw[-] (v3) to node {$x_3$} (u3);
\begin{scope}[rounded corners=7,ultra thin,dashed,gray!147]
\draw ($(v7)+(0,+0.2)$) -- ($(w4)+(0,0.6)$) -- ($(v4)+(-0.2,0.2)$) -- (v4) -- (v5) -- ($(v5)+(-0.2,-0.2)$) -- ($(w5)+(0,-0.6)$) node [anchor=north east] {$g_7 = g_4 \land g_5$} -- ($(v7)+(0,-0.2)$) -- ($(v7)+(0,+0.2)$);	
\foreach \x in {w1,w2,w3,w4,w5,w6,w7}
\draw ($(\x) +(0,+0.2)$) -- ($(\x) + (2,0.2)$) -- ($(\x) + (2,-0.2)$) --($(\x) + (0,-0.2)$) -- ($(\x) + (0,+0.2)$);	
\draw (v4) -- ($(v4)+(0.5,0.5)$) -- ($(v2)+(0,0.5)$) -- ($(v2)+(0,-0.5)$) --  ($(v4)+(0.5,-0.5)$) -- (v4);
\draw (v4) -- ($(v4)+(0.7,0.7)$) node [above right ] {$g_4 = g_1 \lor g_2$} -- ($(v1)+(0,0.7)$) -- ($(v1)+(0,-0.7)$) --  ($(v4)+(0.7,-0.7)$) -- (v4);
\end{scope}
\end{tikzpicture}
}
\caption{(a) Example circuit, with nodes from $S_1$ in white, nodes from $S_2$ in grey 
 (b) the tree underlying 
 the corresponding THGP, together with labels and some hyperedges}
\label{fig:7}
\end{figure}

\begin{lemma}
Any semi-unbounded fan-in circuit $\Cir$ of size $\size$ and depth $d$ is equivalent to a semi-unbounded fan-in circuit of size $2^d \size$ and depth $d$ such that
for each~$i$
$$
\left(\bigcup_{g \in S_i} \leftt(g) \right) \bigcap \left(\bigcup_{g \in S_i} \rightt(g) \right) = \emptyset.
$$
\end{lemma}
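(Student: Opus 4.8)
The plan is to transform $\Cir$ into the required circuit in $d$ rounds, handling the $\AND$-depth levels in decreasing order $d,d-1,\dots,1$; a round whose level $i$ has $S_i=\emptyset$ does nothing. (Since $\NOT$ gates occur only at inputs, the $\AND$-depth of every gate is at most its ordinary depth, hence at most $d$, so no $\AND$ gate is missed.) I would maintain the invariant that after the round handling level $i$ the current circuit is semi-unbounded fan-in, equivalent to $\Cir$, of depth at most $d$, of size at most $2^{d-i+1}\size$, and already satisfies $\big(\bigcup_{g\in S_j}\leftt(g)\big)\cap\big(\bigcup_{g\in S_j}\rightt(g)\big)=\emptyset$ for every $j\ge i$. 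Taking $i=1$ then yields the lemma.

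The round for level $i$ rests on a small structural fact about $B_i:=\bigcup_{g\in S_i}\big(\leftt(g)\cup\rightt(g)\big)$, the set of gates appearing in a left or right subcircuit of some $\AND$ gate of $\AND$-depth $i$. Every gate of $B_i$ has $\AND$-depth at most $i-1$, and $B_i$ is \emph{downward closed}: if $h\in B_i$ and $h'$ feeds $h$, then $h'\in B_i$ as well, since both lie below the same input of the same $S_i$-gate. Consequently every edge leaving $B_i$ is either the left- or the right-input edge of an $S_i$-gate, or an edge into a gate of $\AND$-depth strictly larger than $i$.

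The round itself would then replace $B_i$ by two vertex-disjoint copies $B_i^{\leftt}$ and $B_i^{\rightt}$ (duplicating all edges internal to $B_i$), reroute the left (resp.\ right) input edge of each $g\in S_i$ to come from the $B_i^{\leftt}$-copy (resp.\ $B_i^{\rightt}$-copy) of its source, and reroute every other edge out of $B_i$ to come from the $B_i^{\leftt}$-copy of its source. This preserves the fan-in discipline and the depth, and it preserves the computed function because each rerouted edge now reads off a gate computing the same value; the number of gates grows by $|B_i|$, so it at most doubles. For level $i$ itself we now have $\bigcup_{g\in S_i}\leftt(g)\subseteq B_i^{\leftt}$ and $\bigcup_{g\in S_i}\rightt(g)\subseteq B_i^{\rightt}$, and these are disjoint, so the separation holds at level $i$.

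The delicate point, which I expect to be the main obstacle, is to show that this round does not destroy the separation already established at a higher level $j>i$. For $g\in S_j$ one has $g\notin S_i$, so neither input of $g$ is a rerouted $S_i$-input; by induction on the distance from $g$ one sees that, in the new circuit, $\leftt(g)$ (resp.\ $\rightt(g)$) consists exactly of the gates of the old $\leftt(g)$ (resp.\ old $\rightt(g)$) lying outside $B_i$, together with the $B_i^{\leftt}$-copies of those lying inside $B_i$ --- in particular no $B_i^{\rightt}$-copy is reachable from $g$. Hence, for $g,g'\in S_j$, any gate common to the new $\leftt(g)$ and the new $\rightt(g')$ would be either an outside-$B_i$ gate lying in both old $\leftt(g)$ and old $\rightt(g')$, or a $B_i^{\leftt}$-copy of a gate lying in both $\leftt(g)\cap B_i$ and $\rightt(g')\cap B_i$; both possibilities contradict the separation at level $j$ that held before the round. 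This closes the induction, and after the round for level $1$ we obtain an equivalent semi-unbounded fan-in circuit of depth at most $d$ and size at most $2^{d}\size$ satisfying the separation at all levels. Finally, all operations used are plainly monotonicity-preserving, as they merely copy and rewire gates.
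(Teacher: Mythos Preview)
Your iterative top-down scheme is close in spirit to the paper's recursive one (handle the highest $\AND$-depth level first, at most doubling the size each time), but the preservation step rests on a false intermediate claim. You assert that for $g\in S_j$ with $j>i$ the new $\leftt(g)$ contains no $B_i^{\rightt}$-copy, arguing by induction on distance from $g$. This fails: the old $\leftt(g)$ may contain a gate $g''\in S_i$ (such gates have $\AND$-depth $i\le j-1$ and lie \emph{outside} $B_i$, so your induction hypothesis does place $g''$ itself in the new $\leftt(g)$), and by your own rerouting rule the right-input edge of $g''$ is sent to the $B_i^{\rightt}$-copy of its source. Hence the new $\leftt(g)$ does reach $B_i^{\rightt}$-copies, and your two-case analysis of a hypothetical common gate is incomplete.

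The conclusion you need is nevertheless correct, and the fix is shorter than what you wrote. Let $\pi$ be the projection from the new circuit to the old one that collapses both copies of each $h\in B_i$ back to $h$ and is the identity elsewhere; $\pi$ carries edges to edges. For $g\in S_j$ with $j>i$ (so $g$ is not duplicated and its left input projects to its old left input), every directed path from $g$'s left input in the new circuit projects to such a path in the old circuit, hence $\pi(\text{new }\leftt(g))\subseteq\text{old }\leftt(g)$, and similarly for $\rightt$. A gate common to the new $\leftt(g)$ and new $\rightt(g')$ would therefore project to a gate common to the old $\leftt(g)$ and old $\rightt(g')$, contradicting the invariant at level $j$. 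The paper sidesteps this whole issue by recursing into the two freshly separated, hence disjoint, halves, so that the lower-level transformations cannot create new overlaps at level~$j$.
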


The proof of this lemma is standard, but we include it for the sake of completeness.

\begin{proof}
We show by induction on $j$ that we can reconstruct the circuit in such a way that the property holds for all $i \leq j$,
the depth of the circuit does not change, and the size of the circuit increases at most by the factor of $2^j$.

Consider all $\AND$ gates in $S_j$, and consider a subcircuit $\bigcup_{g \in S_j} \leftt(g)$. Construct a copy $\Cir^{\prime}$ of this subcircuit separately and feed
its output as first inputs to $\AND$ gates in $S_j$. This at most doubles the size of the circuit and ensures the property for $S_j$.
Now for both circuits $\Cir^\prime$ and $\bigcup_{g \in S_j} \rightt(g)$ apply the induction hypothesis (note that the circuits do not intersect).
The size of both circuits will increase at most by the factor of $2^{j-1}$ and the property for $S_i$ for $i < j$ will be ensured.
\end{proof}

We can thus assume without loss of generality that the circuit $\Cir$ satisfies the property from the preceding lemma.

\smallskip

We now proceed to the construction of the $\THP$. We will begin by constructing a tree $T$ and then
afterwards define a hypergraph program based upon this tree. For each gate $g_i$ in $\Cir$, we introduce three vertices $w_i, v_i, u_i$,
and we arrange all these vertices into the tree from output gate to inputs. We construct the tree inductively from the root to leaves (see Figure~\ref{fig:7}). First we arrange vertices corresponding to the gates of $\AND$-depth $d$ into a path.
Vertices are ordered according to the order of gates in $\Cir$. In each triple of vertices, the $u$-vertex preceds the $v$-vertex, which precedes the $w$-vertex. `
%
Next we branch the tree into two branches at the last $u$-vertex and associate subcircuit $\bigcup_{g \in S_d} \leftt(g)$ to the left branch and the subcircuit of all other vertices
to the right branch. We repeat the process for each subcircuit.
This results in a tree, the number of vertices of which is $3\size$. We remove from this tree the vertex $w_\size$.

We now define a hypergraph program based upon this tree.
As before, the vertices of the hypergraph are the edges of the tree,
and the hyperedges will take the form of generalized intervals.
For each $i \neq \size$, we introduce a hyperedge $\langle w_i, u_i\rangle$.
For each ANG gate $g_i$ with $g_i = g_j \wedge g_k$, we add a hyperedge
$\langle v_j, v_k, v_i \rangle$.
For each OR gate $g_i = g_{k_1} \vee \ldots \vee g_{k_l}$,
 we add hyperedges $\langle v_{k_1}, v_i\rangle, \ldots, \langle v_{k_l}, v_i \rangle$.

For input gates, we label the corresponding $\{u,v\}$-edges by the corresponding literals
(recall that in the circuit $\Cir$ negations are applied only to the inputs,
so in this construction, we assume that the inputs are variables and their negations,
 and there are no $\NOT$ gates in the circuit).
We label all other $\{u,v\}$-edges and $\{v,w\}$-edges of the tree by $0$, and all
remaining edges are labelled by $1$.

\smallskip

The preceding construction clearly yields a $\THP$ of size polynomial in the original circuit, and the construction is monotonicity-preserving. 
To complete the proof of the first statement of
Theorem \ref{thm:thp_vs_sac}, we must show that the constructed $\THP$ computes the same function as the circuit. This is established by
the following claim:

\smallskip

\noindent\textbf{Claim}. For a given input $x$ and for any $i$, the gate $g_i$ outputs $1$ iff the subtree with the root $v_i$ can be covered (i.e.\ there is an independent subset of hyperedges that lies inside the subtree and  covers all of the zeros in the subtree).
\begin{proof}
We prove the claim by induction on $i$. For input gates, the claim is trivial.
If $g_i$ is an $\AND$ gate, then both its inputs output $1$. We cover both subtrees corresponding to the inputs (by induction hypothesis) and add a hyperedge
$\langle v_j, v_k, v_i \rangle$. This covers the subtree rooted in $v_i$.
If $g_i$ is an $\OR$ gate, then there exists an input $g_{k_j}$ of $g_i$ which outputs $1$.
By the induction hypothesis, we can find a cover of its subtree and then add a hyperedge $\langle v_{k_j}, v_i\rangle$. All other edges of the
subtree rooted in $v_i$ can be covered by hyperedges of the form $\{u_p, w_p\}$.
\end{proof}

\subsection*{Proof of  the Second Statement of Theorem \ref{thm:thp_vs_sac}: $\SAC$ circuits can simulate $\THP$s}

Now we proceed to the second part of Theorem~\ref{thm:thp_vs_sac}.
Suppose $P$ is a $\THP$ of size $\size$, and denote by $T$ its underlying tree.
We aim to construct a semi-unbounded fan-in circuit of size polynomial in $\size$.
We first describe the idea of the construction, then do some preliminary work, and finally,
detail the construction of the circuit. 

First of all, we note that it is not convenient to think about covering all zero vertices of the hypergraph, and it is more convenient to think about partitioning the set of all vertices into disjoint hyperedges. To switch to this setting, for each vertex $e$ of the hypergraph
(recall it is an edge of $T$), we introduce a hyperedge $\{e\}$. Thus we arrive at the following problem: 
\begin{description}
\item[(prob)]given a tree hypergraph $H=(V_H, E_H)$, a labelling of its hyperedges $\l$ and an input $\gamma$, decide if 
$V_H$ can be partitioned into disjoint hyperedges, whose labels are evaluated into 1 under $\gamma$.
\end{description}

Before we proceed, we need to introduce some notation related to the trees.
A vertex of a tree $T$ is called a \emph{branching point} if it has degree at least $3$.
A \emph{branch} of the tree $T$ is a simple path between two branching points which does not contain any other branching points.
If $v_1, v_2$ are vertices of $T$ we denote by $T_{v_1, v_2}$ the subtree of $T$ lying between the vertices $v_1$ and $v_2$.
If $v$ is a vertex of degree $k$ with adjacent edges $e_1, \ldots, e_k$ then it splits $T$ into $k$ vertex-disjoint subtrees
which we denote by $T_{v, e_1}, \ldots, T_{v,e_k}$.
We call a vertex of a subtree $T_1$ a \emph{boundary point} if it has a neighbour in $T$ outside of $T_1$.
The edges of $T_1$ adjacent to boundary points are called \emph{boundary edges} of $T_1$.
The \emph{degree} of a subtree $T_1$ is the number of its boundary points.
Note that there is only one subtree of $T$ of degree $0$ -- the tree $T$ itself.

Before we proceed with the proof of the theorem, we show the following technical lemma
concerning the structure of tree hypergraph programs.

\begin{lemma}
For any tree hypergraph program $H$ with underlying tree $T$, there is an equivalent tree hypergraph program
$H^\prime$ with underlying tree $T^\prime$ such that each hyperedge of $H^\prime$ covers at most one branching point of $T^\prime$,
and the size of $H^\prime$ is at most $p(|H|)$ for some explicit polynomial $p$.
\end{lemma}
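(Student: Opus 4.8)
The plan is to prove the lemma by a \emph{local} restructuring of $T$ that ``decouples'' the tree at each of its branches, i.e.\ at each maximal path joining two branching points. First I would put the program into a convenient normal form. Using edge subdivision, which is function‑preserving (a subdivided vertex of $H$ keeps the same label and every hyperedge through the old vertex is redirected through all the new ones), I may assume every branch of $T$ has length at least $3$. Working in the hyperedge‑labelled formulation \textbf{(prob)}, I keep all singleton hyperedges, and I allow myself to introduce fresh singletons labelled $0$, which are therefore unusable and act as ``forced‑zero'' constraints — a vertex carrying such a singleton must be covered by a non‑singleton hyperedge.

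The key observation is the following. Fix a branch $B$ with endpoints $b,b'$ (both branching points). Any hyperedge whose spanned subtree contains branching points on both sides of $B$ must contain \emph{every} edge of $B$; call these the $B$‑crossing hyperedges $e_1,\dots,e_k$. Since the $e_i$ pairwise share the edges of $B$, at most one of them can lie in any independent subset, so the only ``information'' that has to pass across $B$ is a value in $\{0,1,\dots,k\}$ (``use none of them'', or ``use $e_i$''). I would therefore replace the path $B$ by a small \emph{decoupling gadget} $G_B$, designed so that: (i) no hyperedge of the new program spans $G_B$; (ii) the left side can be covered in exactly one of $k+1$ \emph{modes} and the right side likewise; (iii) $G_B$ admits a cover iff the two modes coincide. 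Point (iii) is enforced by attaching, for each mode $i$, a forced‑zero pendant on the left and one on the right together with short ``stub'' hyperedges $e_i^{L},e_i^{R}$ that are the \emph{unique} non‑singleton coverers of their respective forced‑zero vertices. Concretely, each original $e_i$ is cut into $e_i^{L}$ (its part on the $b$‑side, extended into the left half of $G_B$) and $e_i^{R}$ (its part on the $b'$‑side, extended into the right half), and the gadget is laid out so that $e_i^{L}$ branches at most at $b$ and $e_i^{R}$ at most at $b'$; the ``cut edges'' left orphaned between a stub's endpoint and its pendant are absorbed into that stub, carefully so as not to create an extra covered branching point. Performing this at every branch simultaneously breaks every original hyperedge into pieces each confined to the ``star'' around one branching point, while every newly introduced hyperedge lives inside a single gadget and touches at most the one branching point we permit there. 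Hence in $T'$ every hyperedge covers at most one branching point.

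Then I would verify equivalence: a cover of the old program restricts, branch by branch, to a consistent choice of modes, which extends to a cover of $P'$; conversely, a cover of $P'$ must use matching modes on the two sides of each $G_B$ (the forced‑zero vertices can only be dealt with by the matching stubs), and erasing the gadgets recovers a cover of $P$. For the size bound, $T$ has $O(|H|)$ branching points, hence $O(|H|)$ branches, each branch has at most $|E_H|\le|H|$ crossing hyperedges, so each gadget has size polynomial in $|H|$; together with the polynomial number of subdivision vertices this gives an explicit polynomial $p$ with $|H'|\le p(|H|)$. Monotonicity is preserved, since the only new labels used are the constants $0$ and $1$.

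The main obstacle I anticipate is exactly the design and correctness of the decoupling gadget $G_B$: it must realise transfer of a $\{0,\dots,k\}$‑valued state across $B$, contain no spanning hyperedge, and — the delicate point — keep each stub hyperedge from covering more than the single branching point it is allowed. This is why $G_B$ needs internal structure rather than being a path, why the stubs must meet the forced‑zero pendants only as harmless pass‑throughs (degree‑$2$ incidences) rather than true branchings, and why the orphaned cut edges require care. The remaining arguments — function‑preservation under subdivision, the mode‑matching correctness proof, and the counting — are routine.
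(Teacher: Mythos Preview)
Your approach and the paper's are genuinely different. The paper does not work branch-by-branch or build any mode-matching gadget. Instead it proceeds by a simple local surgery at a single branching point of a single offending hyperedge: given a hyperedge $h$ containing at least two branching points, pick one such point $v$ with incident tree-edges $e_1,\dots,e_k$; subdivide each $e_i$ by inserting two new $0$-labelled edges $e_{i1},e_{i2}$ near $v$ (with a fresh degree-$2$ vertex $v_i$ between them); replace $h$ by the $k{+}1$ pieces obtained by cutting it at the $v_i$; and add the small ``connector'' hyperedges $\{e_{i1},e_{i2}\}$. Equivalence is immediate (when $h$ was used, use its pieces; otherwise the connectors---or the subdivided extension of whatever other hyperedge passed through $v$---cover the new $0$-edges). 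The integer potential $\sum_{h}(\text{number of branching points in }h)$, summed over hyperedges that still contain two or more branching points, strictly decreases; each step costs $O(|T|)$, giving an explicit polynomial bound. No $(k{+}1)$-valued state-transfer gadget is needed at all.

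You correctly flag the gadget as the crux of your route, but the sketch you give has a concrete flaw. You stipulate that for each mode $i$ there is a forced-zero pendant whose \emph{unique} non-singleton coverer is the stub $e_i^L$. Then any cover of the left half must contain \emph{every} $e_i^L$ (each such pendant is a zero and has only that one coverer), so rather than selecting exactly one mode you are forced to select all of them simultaneously. A working selector needs the unselected pendants to admit alternative ``filler'' covers, together with mutual exclusion ensuring at most one $e_i^L$ is compatible with the fillers; this can be arranged (the paper's NBP-to-interval-HP reduction builds exactly such a selector along a path), but it is substantially more delicate than what you sketch, and it is precisely the complication that the paper's one-branching-point splitting sidesteps.
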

\begin{proof}[Sketch]
Let $h_1, \ldots, h_l$ be the hyperedges of $H$ containing more than $2$ branching points.
Let $bp_1, \ldots, bp_l$ be the number of branching points in them.
We prove the lemma by induction on $bp = \sum_i bp_i$. The base case is when this sum is $0$.

For the induction step, consider $h_1$ and let $v$ be one of the branching points of $T$ in $h_1$.
Denote by $e_{1}, \ldots, e_{k}$ the edges adjacent to $v$ in $T$.
On each $e_i$ near vertex $v$, introduce two new adjacent edges $e_{i1}, e_{i2}$, edge $e_{i1}$ closer to $v$, and label them by $0$.
Let $v_i$ be the new vertex lying between $e_{i1}$ and $e_{i2}$. Break $h$ into $k+1$ hyperedges by vertices $v_i$
and substitute $h$ by these new hyperedges.
Add hyperedges $\{e_{i1}, e_{i2}\}$ for all $i$.
It is not hard to see that for each evaluation of variables there is a cover of all zeros in the original hypergraph iff
there is a cover of all zeros in the new hypergraph.
It is also not hard to see that $bp$ has decreased during this operation and the size of the hypergraph program has
increased by at most $2|T|$. So the lemma follows.
\end{proof}

Thus, in what follows, we can assume that each hyperedge of the hypergraph contains at most one branching point.

\medskip

Now we are ready to proceed with the proof of the theorem.
The main idea of the computation generalizes the polynomial size logarithmic depth semi-unbounded fan-in circuit
for directed connectivity problem (discussed below).

In what follows, we say that some subtree $T^\prime$ of $T$ can be partitioned into disjoint hyperedges if
there is a set of disjoint hyperedges $h_1, h_2, \ldots, h_k$ in $H$ such that they all lie in $T^\prime$,
$\l(h_i) = 1$ under $\gamma$ for $1 \le i \le k$, and their union contains all edges of $T^\prime$.
Fix $\gamma$.
Given the tree $T$ underlying the hypergraph,
we say that its vertices $v_1$ and $v_2$ are reachable from each other if
the subtree lying between them can be partitioned into disjoint hyperedges.
 In this case, we let $\Reach(v_1, v_2)=1$, otherwise we let $\Reach(v_1, v_2)=0$.
If $v$ is a vertex of $T$ and $e = \{v,u\}$ is an edge adjacent to it,
we say that $v$ is reachable from the side of $e$ if the subtree $T_{v,e}$ can be partition into disjoint hyperedges.
In this case, we let $\Reach(v, e)=1$, otherwise we let $\Reach(v, e) =0$.
Our circuit will gradually compute the reachability relation $\Reach$
for more and more vertices, and in the end, we will compute whether the whole tree can be partitioned into hyperedges.

First, our circuit will compute the reachability relation for vertices on each branch of the tree $T$.
If one of the endpoints of the branch is a leaf, we compute the reachability for the remaining vertex from the side containing the leaf.
This is done just like for the usual reachability problem.

Next we proceed to compute reachability between vertices on different branches of $T$.
For this, consider a tree $D$ those vertices are branching points and leaves of the original tree $T$ and those edges are branches of $T$.
In $D$, each vertex is either a leaf or a branching point.
We will consider subtrees of the tree $D$.

We describe a process of partitioning $D$ into subtrees. At the end of the process, all subtrees will be individual edges.
We have the following basic operation. Assume that we have already constructed a subtree $D^\prime$, consider some vertex $v \in D^\prime$
and assume that it has $k$ outgoing edges $e_1, \ldots, e_k$ within $D^\prime$, $e_i = \{v, v_i\}$.
By partitioning $D^\prime$ in the vertex $v$, we call a substitution of $D^\prime$ by a set of disjoint subtrees $D_1, \ldots, D_k \subseteq D^\prime$,
where for all $i$, we let $D_i = D_{v,e} \cap D^\prime$.

The following lemma helps us to apply our basic operation efficiently.
\begin{lemma}
Consider a subtree $D^\prime$ of size $m$. If its degree is $\leq 1$, then there is $v \in D^\prime$ partitioning it into subtrees of size at most $m/2 + 1$
and degree at most $2$ each.
If the degree of $D^\prime$ is $2$, then there is $v \in D^\prime$ partitioning it into subtrees of size at most $m/2+1$ and degree at most $2$ and possibly one subtree of size less than $m$ and degree~$1$.
\end{lemma}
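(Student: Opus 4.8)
The plan is to prove this purely combinatorially, by marrying the classical tree‑centroid argument with careful bookkeeping on boundary points. Throughout, ``size'' of a subtree means its number of vertices, and the ``degree'' of a subtree $D'$ of $D$ is its number of boundary points, i.e.\ vertices of $D'$ having a neighbour in $D$ outside $D'$. The first ingredient is to record what partitioning does to boundary points: if $v$ has degree $\geq 2$ in $D'$ and we partition $D'$ at $v$ into pieces $D_1,\dots,D_k$ (each $D_i$ being one component of $D'-v$ together with $v$), then a one‑line case analysis on the neighbours of an arbitrary vertex shows that the boundary points of $D_i$ are exactly $v$ together with those boundary points of $D'$ that lie in $D_i$. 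Hence the degree of $D_i$ never exceeds $1$ plus the number of boundary points of $D'$ that land in $D_i$. The second ingredient is the standard fact that every tree on $m$ vertices has a \emph{centroid}, a vertex $v$ such that every component of $D'-v$ has at most $\lfloor m/2\rfloor$ vertices; adjoining $v$ back, partitioning at a centroid produces pieces of size at most $m/2+1$. For $m\leq 2$ the statement is trivial, so we may assume $m\geq 3$, and then a centroid cannot be a leaf, so it automatically has degree $\geq 2$ in $D'$.

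With these in hand the degree‑$\leq 1$ case is immediate: take a centroid $v$; the unique boundary point of $D'$ (if any) lies in exactly one piece $D_{i_0}$, so $\deg(D_{i_0})\leq 2$ while every other piece has $v$ as its only boundary point, hence degree $1$; all pieces have size at most $m/2+1$.

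For the degree‑$2$ case let $b_1,b_2$ be the boundary points and $P$ the path between them in $D'$. Take any centroid $v_c$. If $v_c\in\{b_1,b_2\}$, or if $v_c$ separates $b_1$ from $b_2$ (they fall in distinct components of $D'-v_c$), then partitioning at $v_c$ already works: each piece has size at most $m/2+1$ and at most two boundary points ($v_c$, and at most one of $b_1,b_2$), hence degree at most $2$. The remaining case is $v_c\notin\{b_1,b_2\}$ with $b_1,b_2$ in a common component of $D'-v_c$, so that $P$ avoids $v_c$. Here I would discard the centroid and pick the partitioning vertex $v$ on $P$ by a sliding argument: as $v$ runs along $P$ from $b_1$ to $b_2$, the count $\alpha(v)$ of vertices strictly on the $b_1$‑side of $v$ is non‑decreasing from $0$, the count $\gamma(v)$ on the $b_2$‑side is non‑increasing to $0$, and $\alpha(v)+\beta(v)+\gamma(v)=m-1$, where $\beta(v)$ counts the vertices in the off‑$P$ branches hanging at $v$. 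Taking $v$ to be the first vertex on $P$ with $\alpha(v)\geq (m-1)/2$, or its $P$‑predecessor, one gets that the $b_1$‑side piece and the $b_2$‑side piece both have size at most $m/2+1$, and each is bounded by $v$ together with one $b_j$, so has degree at most $2$. Every off‑$P$ branch at $v$ contains no boundary point of $D'$, so such a piece has degree exactly $1$; and since two disjoint subtrees cannot both have more than $m/2$ vertices, at most one of them can have size above $m/2+1$, which is precisely the permitted ``one subtree of size less than $m$ and degree $1$''. No new vertices are introduced, so polynomiality is immediate.

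The main obstacle, and the only place needing real care, is this last non‑separating subcase: one must check that the sliding choice of $v$ keeps \emph{both} the $b_1$‑side and the $b_2$‑side below $m/2+1$ at once, even though a single off‑$P$ branch at the transition vertex may be huge. The resolution is exactly that a huge branch forces $\alpha$ to jump across the threshold, so switching between the transition vertex and its $P$‑predecessor always tames whichever side would otherwise overflow, while the huge branch is absorbed into the allowed degree‑$1$ exceptional piece. The degenerate sub‑cases (when $P$ is a single edge, or when the chosen $v$ coincides with $b_1$ or $b_2$ and the latter has degree $1$ in $D'$, which forces its off‑$P$ branch count to be $0$) follow the same pattern and I would dispatch them with the same bookkeeping.
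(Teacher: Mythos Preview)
Your proposal is correct in substance and rests on the same centroid idea as the paper; the degree-$\le 1$ case is essentially identical. For degree $2$ the paper takes a more streamlined route: rather than first locating a centroid and then case-splitting on whether it lies on $P$, the paper runs the greedy centroid slide directly from $b_1$ (always stepping into the unique piece of size $>m/2+1$) and halts the moment such a step would leave $P$. At the halting vertex the sole oversize piece is by construction an off-$P$ branch, hence of degree $1$, and the $b_1$- and $b_2$-side pieces are automatically small since at most one piece can exceed $m/2+1$. Your two-stage argument reaches the same conclusion with more case analysis; the paper's single slide avoids the separate centroid-location step entirely.

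There is one concrete gap in your sliding step: you assume a first vertex on $P$ with $\alpha(v)\ge (m-1)/2$ exists, but it need not. If the centroid sits in an off-$P$ branch attached at $b_2$ itself, then $\alpha$ may stay below $(m-1)/2$ along all of $P$ (e.g.\ take $P$ to be the three vertices $b_1,w,b_2$ with a long path attached only at $b_2$). The fix is easy---partition at $b_2$: the $b_1$-side piece then has size $\alpha(b_2)+1 < (m+1)/2 \le m/2+1$ and degree $2$, while the off-$P$ branches at $b_2$ all have degree $1$ with at most one of them oversize---but this case is not among the degenerate sub-cases you enumerate, so add it explicitly.
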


\begin{proof}
If $D^\prime$ is of degree $\leq 1$, then consider its arbitrary vertex $v_1$ and subtrees into which this vertex divides $D^\prime$.
If among them there is a subtree $D_1$ larger than $m/2+1$, then consider the (unique) vertex $v_2$ in this subtree adjacent to $v_1$.
If we separate $D^\prime$ by $v_2$, then this partition will consist of the tree $D^\prime \setminus D_1 \cup \{v_1,v_2\}$ and other trees lying inside of $D_1$.
Thus, $D_1$ will be of size at most $m/2$ and other subtrees will be of size smaller than $|D_1|$.
Thus, the size of the largest subtree decreased, and we repeat the process until the size of the largest subtree becomes at most $m/2+1$.

If $D'$ has degree $2$, consider its boundary points $b_1$ and $b_2$.
Repeat the same process starting with $v_1=b_1$. Once in this process the current vertex $v$ tries to leave a path between $b_1$ and $b_2$, we stop.
For this vertex $v$, it is not hard to see that all the resulting trees are of degree at most $2$, and the only tree having
the size larger than $m/2$ is of degree $1$.
\end{proof}

With this lemma, the partitioning process works as follows.
We start with the partition $\{D\}$ consisting of the tree $D$ itself and repeatedly partition the current set of subtrees into smaller ones.
At each step, we repeat the described procedure for each subtree separately.
Note that after two steps the size of the largest subset decreases by the factor of $2$.
Thus in $O(\log \size)$ steps, we obtain the partition consisting of individual edges.

Now we are ready to describe the computational process of the circuit.
The circuit will consider tree partitions described above in the reversed order.
That is, we first have subtrees consisting of individual edges. Then on each step we merge some of them.
In the end, we obtain the whole tree.

The intuition is that along with the construction of a subtree $D_1$, we compute the reachability for its boundary edges,
that is, for example if the boundary edges of $D_1$ are $b_1$ and $b_2$ then we compute the reachability relation $\Reach(v_1, v_2)$
for all $v_1$ lying on the branch $b_1$ in $T$ and $v_2$ lying on the branch $b_2$ in $T$.

Now we are ready to describe the circuit. First for each branch of the tree the circuit computes
the reachability matrix for that branch. This is done by squaring the adjacency matrix $O(\log \size)$ times for all branches in parallel.
Note that squaring a matrix requires only bounded fan-in $\AND$-gates, and thus this step is readily computable by semi-unbounded circuit
of size polynomial in $\size$ and depth logarithmic in $\size$.

Thus the circuit computes the reachability matrix for the initial partition of $D$.
Next the circuit computes the reachability matrix for larger subtrees of $D$ following the process above.
More specifically, suppose we merge subtrees $D_1, \ldots, D_k$ meeting in the vertex $u$ to obtain a tree $D^\prime$.
For simplicity of notation, assume that there are two  subtrees $D_1$ and $D_2$ having degree $2$,
denote by $b, b^\prime$ the boundary edges of $D^\prime$
and by $b_1, \ldots, b_k$ the boundary edges of $D_1, \ldots, D_k$ respectively
adjacent to the vertex $u$.

It is not hard to see that for all vertices $v$ in $b$ and $v^\prime$ in $b^\prime$, it is true that
$$
\Reach(v, v^\prime) = \bigvee_{h \ni u} \left( \Reach(v,v_1) \wedge \Reach(v^\prime,v_2) \wedge \bigwedge_{i=3}^{k} \Reach(v_i, e_i)\land \l(h) \right),
$$
where $h$ ranges over all hyperedges of our hypergraph inside $D$, $v_1, \ldots, v_k$ are boundary vertices of $h$ lying in the branches $b_1, \ldots, b_k$ respectively, for each $i$ $e_i$ is an edge adjacent to $v_i$ and not contained in $h$.

The case when only one subtree among $D_1, \ldots, D_k$ has degree $2$ is analogous.

Thus we have described the circuit for solving (prob). Clearly that it is monotone provided that $P$ is monotone.
It is easy to see that its size is bounded by some fixed polynomial in $\size$.
It is only left to show that this circuit can be arranged in such a way that it has depth $O(\log \size)$.
This is not trivial since we have to show how to compute big $\AND$ in the formula above to make depth logarithmic.
For this, we will use the following lemma.
\begin{lemma}
Suppose the reachability relation for each branch is already computed.
Then the subtree $D^\prime$ with $m$ edges constructed on step $i$ can be computed in the $\AND$-depth at most $\log m + i$.
\end{lemma}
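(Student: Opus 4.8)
The plan is to prove the lemma by induction on the step number $i$, bounding at every step the \AND-depth of each reachability value the circuit has produced so far. First I would dispose of the hyperedge labels: each $\l(h)$ is a conjunction of at most $\size$ literals, so all the bits ``$\l(h)$ evaluates to $1$ under $\gamma$'' can be computed once, in parallel, by a subcircuit of \AND-depth $O(\log\size)$ that is prepended to the construction; from then on I treat those bits as inputs, so that labels contribute nothing inside the recursion, and I add their $O(\log\size)$ back only at the very end (and I write $\log$ for $\lceil\log\rceil$ below). For the base case, $D'$ is a single branch of $T$; since every interior vertex of a branch has degree $2$, the only hyperedges of $P$ inside it are ordinary intervals, so reachability on the $m{+}1$ branch vertices is exactly reachability in the DAG whose arcs are the label-satisfied intervals, and the circuit computes it by $\lceil\log m\rceil$ squarings of the arc matrix. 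Since a squaring $C\mapsto C^2$ with $C^2_{xz}=\bigvee_y(C_{xy}\wedge C_{yz})$ uses only fan-in-$2$ \AND-gates below an unbounded \OR, each squaring raises the \AND-depth by exactly $1$, giving the base bound $\lceil\log m\rceil=\log m+0$.

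For the inductive step, I would take $D'$, with $m$ edges, formed at step $i$ by merging subtrees $D_1,\dots,D_k$ meeting at a vertex $u$, each $D_j$ having $m_j$ edges and produced at some step $<i$ — so $\sum_j m_j=m$, since the $D_j$ are edge-disjoint and cover $D'$ — and use as induction hypothesis that every reachability value internal to $D_j$ has \AND-depth at most $\log m_j+(i-1)$. A new value $\Reach(v,v')$ for $D'$ is given by the recurrence displayed just above the lemma: an unbounded \OR over hyperedges $h\ni u$ of an \AND of the $k{+}1$ quantities $\Reach(v,v_1)$, $\Reach(v',v_2)$, $\Reach(v_j,e_j)$ for $3\le j\le k$, and the (precomputed) bit $\l(h)$. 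The \OR is free; the step that matters is realising the $(k{+}1)$-ary \AND by a \emph{weighted} binary tree instead of a balanced one. I would give the conjunct coming from $D_j$ the weight $w_j=2^{\,\log m_j+(i-1)}=m_j2^{\,i-1}$ (absorbing the depth-$0$ label conjunct into any one of them); then, with $W=\sum_j w_j=2^{\,i-1}m$, one has $\sum_j 2^{-(\log W-\log w_j)}=\sum_j w_j/W=1$, so Kraft's inequality yields a binary tree in which the $D_j$-conjunct sits at depth $\ell_j\le\lceil\log W-\log w_j\rceil\le\log W-\log w_j+1$, whence it reaches the output of the \AND-tree at \AND-depth at most $(\log m_j+i-1)+\ell_j\le\log W+1=\log m+i$. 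This gives the inductive bound $\log m+i$ for all reachability values of $D'$; the one-sided values $\Reach(v,e)$, the degenerate merges ($k=1$, or a tiny leftover subtree), and the ceiling bookkeeping are handled the same way or are routine.

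Once the lemma is in hand, taking $D'$ to be the whole tree $D$, so that $m\le|E(T)|\le\size$ after $i=O(\log\size)$ steps, bounds the \AND-depth of the entire circuit by $O(\log\size)$; combined with the $O(\log\size)$ already spent on branches and labels and with the polynomial size delivered by the construction, this produces the claimed semi-unbounded fan-in circuit of size $p'(\size)$ and depth $\log p'(\size)$, and the whole construction is monotonicity-preserving. The hard part, I expect, is precisely the weighted-\AND-tree step: a balanced tree would add $\Theta(\log k)$ \AND-depth at every merge and hence $\Theta(\log^2\size)$ overall, so the argument hinges on the amortisation — charging each $D_j$-conjunct's descent in the \AND-tree against $\log(m/m_j)$ and using $\sum_j m_j=m$ — which lets a single merge cost only $+1$ in \AND-depth while ``banking'' the remainder in the size parameter $\log m$.
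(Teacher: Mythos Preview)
Your proposal is correct and follows essentially the same approach as the paper: both realise the $(k{+}1)$-ary \AND\ at each merge by a weighted binary tree in which the $D_j$-conjunct sits at depth $\lceil\log(m/m_j)\rceil$, so that the total \AND-depth telescopes to $\log m+i$. The paper phrases this via prefix codes (assign letter $a_j$ probability $m_j/m$ and use a code with word lengths $\lceil\log(m/m_j)\rceil$), whereas you invoke Kraft's inequality directly with weights $w_j=m_j2^{i-1}$; since the common factor $2^{i-1}$ cancels in $W/w_j=m/m_j$, the two constructions coincide. Your only real deviation is folding the branch computation into the base case (and hence implicitly counting $T$-edges rather than $D$-edges in $m$), while the paper treats branch reachability as precomputed and takes $m$ to be the number of $D$-edges; both readings give the desired $O(\log\size)$ bound.
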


\begin{proof}
The proof proceeds by induction. Suppose that to construct $D^\prime$, we unite subtrees $D_1, \ldots, D_k$ of sizes $m_1, \ldots, m_k$ respectively.
Note that $m = m_1 + \ldots + m_k$.

By the induction hypothesis, we can compute each subtree $D_j$ having $\AND$-depth at most $\log m_j + i-1$.

Consider a $k$-letter alphabet $A = \{a_1, \ldots, a_k\}$ and assign to each letter $a_j$ the probability $m_j/m$.
It is well-known that there is a prefix binary code for this alphabet such that each letter $a_j$ is encoded
by a word of length $\lceil \log(m/m_j)\rceil$. This encoding can be represented by a rooted binary tree the leaves of which
are labeled by letters of $A$ and the length of the path from root to the leaf labeled by $a_j$ is equal to $\lceil \log(m/m_j)\rceil$.
Assigning the $\AND$ function to each vertex of the tree, we obtain the computation of $\AND$ in the formula above.
The depth of this computation is the maximum over $j$ of
$$
\log m_j + (i-1) + \lceil \log(m/m_j)\rceil \leq \log m_j + (i-1) + \log(m/m_j) + 1 = \log m + i.
$$
\end{proof}

From this lemma and the fact that the computation stops after $O(\log \size)$ steps, we obtain that
overall the $\AND$-depth of the circuit is $O(\log \size)$ and thus the overall depth is $O(\log \size)$.
This completes the proof of Theorem \ref{thm:thp_vs_sac}. 

\bigskip

\noindent{\bf Theorem~\ref{bbthp-to-nbp}.}
Fix $\ell \geq 2$.
For every  ontology $\Tmc$ and CQ $\q$ with at most $\ell$ leaves,
the function $\twfn$ is
computable by a monotone NBP of size polynomial in $|\q|$ and $|\Tmc|$.

\begin{proof}[Proof (continued).]
Recall that in the main text we chose a root variable $v_0$ in the query $\q$. We then defined
flat sets $\Theta$ of tree witnesses, by requiring that every simple path
starting from $v_0$ intersect at most one tree witness of $\Theta$, and showed how flat sets could 
be ordered by the precedence relation $\prec$. This led us to construct the graph 
$G_P=(V_P, E_P)$ with
$V_P = \{u_\Theta, v_\Theta \mid \text{ flat } \Theta \subseteq \twset\} \cup\{s,t\}$ and
$E_P=\{(s, u_\Theta), (v_\Theta, t), (u_\Theta, v_\Theta) \mid  \text{ flat } \Theta \}
\cup \{(v_\Theta, u_{\Theta'}) \mid \text{ flat } 	\Theta \prec \Theta'\}$.

To formally define the labelling of the edges of $G_P$, we must first introduce some notation. 
For flat $\Theta \prec \Theta '$, we denote by $\mathsf{between}(\Theta, \Theta')$ the conjunction of $p_\atom$
for atoms $\atom$ 'between' $\Theta$ and $\Theta'$, that is those that lie outside of $\Theta$ and $\Theta'$
and are accessible from $\Theta$ via paths not passing through $\Theta'$ but are not accessible from $v_0$ via a path not passing through $\Theta$.
For flat $\Theta$, we denote by $\mathsf{before}(\Theta)$ the conjunction of $p_\atom$ for query
atoms $\atom$ which lie outside of $\Theta$ and are accessible from $v_0$ via paths not passing through $\Theta$.
By $\mathsf{after}(\Theta)$, we denote the conjunction of $p_\atom$  for atoms $\atom$
outside $\Theta$ which are accessible from $v_0$ only via paths passing through $\Theta$.
Now we are ready to define the labelling:
\begin{itemize}
\item edges of the form  $(u_\Theta, v_\Theta)$ are labelled $\bigwedge_{\t\in\Theta}p_\t$;
\item edges of the form $(s, u_{\Theta})$ are labelled with
$\mathsf{before}(\Theta)$;
\item edges of the form  $(v_{\Theta}, u_{\Theta'})$ for $\Theta \prec \Theta'$
are labelled with $\mathsf{between}(\Theta, \Theta')$;
\item edges of the form
$(v_{\Theta}, t)$ are labelled with $\mathsf{after}(\Theta)$.
\end{itemize}

\begin{figure}[t]
\centering
\scalebox{1.2}{\includegraphics{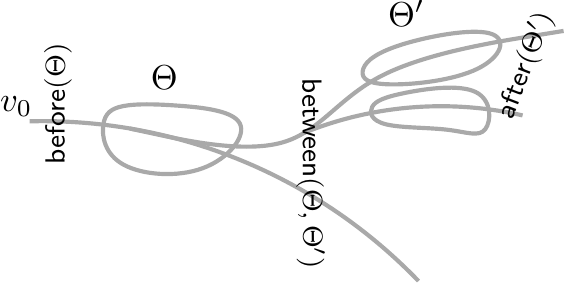}}
\caption{In the above diagram, $\Theta$ consists of one tree witness, and $\Theta'$ consists of two tree witnesses. Both are independent and flat, and $\Theta$ precedes $\Theta'$. 
Here $\mathsf{before}(\Theta)$ is the segment between $v_0$
and $\Theta$,
$\mathsf{between}(\Theta, \Theta')$  comprises the segment between $\Theta$ and $\Theta'$
as well as the downwards branch that exits $\Theta$, and $\mathsf{after}(\Theta')$ consists of the two branches that leave $\Theta'$ on the side furthest from 
$v_0$.}
\label{fig:8}
\end{figure}

We claim that under any valuation of $p_\t$ and $p_\atom$, the vertex $t$ is accessible from $s$ if and only if
there is an independent subset $\hat \Theta \subseteq \twset$ (not necessarily flat) such that
$p_\t = 1$ for all $\t \in \hat\Theta$ and $p_\atom = 1$ for all atoms $\atom$ outside $\q_{\hat\Theta}$.
Indeed, any such $\hat \Theta$ splits  into flat ``layers'' $\Theta^1, \Theta^2, \dots \Theta^m$ which form a path
$s\to u_{\Theta^1} \to v_{\Theta^1} \to u_{\Theta^2} \to \cdots \to v_{\Theta^m} \to t$ in $G_P$ and whose edge labels evaluate to 1:
take $\Theta^1$ to be the set of all edges from $\hat \Theta$ that are accessible from $v_0$ via paths
which do not cross (that is come in and go out) any hyperedge of $\hat \Theta$; take  $\Theta^2$ to be the set of all edges from $\hat \Theta \setminus \Theta^1$ which are accessible from $v_0$ via paths
which do not cross any hyperedge of $\hat \Theta \setminus \Theta^1$, and so on.
Conversely, any path leading from $s$ to $t$ gives us a covering $\hat \Theta$ which is the union
of all flat sets that occur in the subscripts of vertices on this path.
\end{proof}

\bigskip

\noindent{\bf Theorem \ref{linear-lower}.}
There is a sequence of linear CQs $\q_n$ and ontologies $\Tmc_n$ of depth 2, both of polysize in $n$,
such that any PE-rewriting of $\q_n$ and $\Tmc_n$ is of size $n^{\Omega(\log n)}$.

\begin{proof}
It is known that there is a sequence $f_n$ of monotone Boolean functions that are computable by polynomial-size
monotone NBPs, but all monotone Boolean formulas computing $f_n$ are of size $n^{\Omega(\log n)}$, e.g., $s$-$t$-reachability in a directed graph \cite{DBLP:conf/stoc/KarchmerW88}.

Apply Theorem \ref{nbp-to-ihp} to the sequence $f_n$ mentioned above to obtain
a sequence of interval hypergraph programs $P_n$ based on interval hypergraphs $H_n$
which compute the functions $f_n$.
By Theorem~\ref{tree-hg-to-query}, there exist CQs $\q_n$ and ontologies
$\Tmc_n$ of depth 2 such that 
$f_n(\alpha) = 1$ iff \mbox{$f_{\q_n,\Tmc_n}^\primsuper(\vgamma)= 1$}, 
where $\vgamma$ is defined as follows: 
$\vgamma(B_e) = 1$, $\vgamma(R_e)=\vgamma(R_e')=0$, 
and $\vgamma(S_{ij})=\vgamma(S_{ij}')= \valpha(\l_P(\{v_i,v_j\}))$. 
We know from the construction that
$\q_n$ is a linear CQ and $\q_n$ and $\Tmc_n$ are both of polynomial size in $n$.
Since  $f_n$ is obtained from  $\primfnn$ through a simple substitution, the lower bound
$n^{\Omega(\log n)}$ still holds for $\primfnn$. It remains to apply Theorem~\ref{rew2prim}
to transfer this lower bound to PE-rewritings of $\q_n$ and~$\Tmc_n.$
\end{proof}

\bigskip

\noindent{\bf Theorem~\ref{bbcq-ndl}.}
Fix a constant $\ell > 1$.  Then
all tree-shaped CQs with at most $\ell$ leaves and arbitrary ontologies have polynomial-size NDL-rewritings.

\begin{proof}
Fix $\ell > 1$. By Theorem~\ref{bbthp-to-nbp}, there exists a polynomial $p$ such that for every tree-shaped CQ $\q$ with
at most $\ell$ leaves and every ontology $\Tmc$, there is a monotone NBP of size at most $p(|q| + |\Tmc|)$
that computes $\twfn$. We also know from \cite{Razborov91} that there is a polynomial $p'$
such that every function $f_P$ given by a monotone
NBP $P$ can be computed by a monotone Boolean circuit $\Cir_P$
of size at most $p'(P)$.
By composing these two translations, we obtain polysize monotone Boolean circuits
that compute the functions $\twfn$, for the class of tree-shaped CQs with at most $\ell$ leaves.
 It then remains to  apply Theorem~\ref{TW2rew}.
\end{proof}

\bigskip

\noindent{\bf Theorem~\ref{nbps-conditional}.}
The following are equivalent:
\begin{enumerate}
\item There exist polysize FO-rewritings for all linear CQs and depth $2$ ontologies;
\item There exist polysize FO-rewritings for all tree-shaped CQs with at most $\ell$ leaves and arbitrary ontologies \mbox{(for any fixed $\ell$);}
\item There exists a polynomial function $p$ such that every NBP of size at most $s$ is
computable by a formula of size $p(s)$. Equivalently,
$\nlpoly\subseteq \ncone$.
\end{enumerate}

\begin{proof}\ \\ $(2) \Longrightarrow (1)$: Trivial.\smallskip\\
$(1) \Longrightarrow (3)$:
Suppose (1) holds. In other words,  there exists a polynomial $p$ such that
any linear query $\q$ and an ontology $\Tmc$ of depth 2 have a rewriting of the size $p(|\q|+|\Tmc|)$.
Consider a sequence of functions $f_n$ computing s-t-reachability in directed graphs, which
is known to be $\nlpoly$-complete under $\ncone$-reductions \cite{wigderson1992complexity} (This function takes the adjacency matrix
of an undirected graph $G$ on $n$ vertices with two distinguished vertices $s$ and $t$ and returns 1
iff $t$ is accessible from $s$ in $G$.) Clearly, the functions $f_n$ are computed by a sequence of polynomial-size NBPs $P_n$.
Theorem \ref{nbp-to-ihp} gives us a sequence of hypergraph programs $P'_n$ which compute the $f_n$.
By Theorem~\ref{tree-hg-to-query}, 
 there exist CQs $\q_{n}$ and ontologies $\Tmc_{n}$ such that
$f_n(\alpha) = 1$ iff \mbox{$f_{\q_n,\Tmc_n}^\primsuper(\vgamma)= 1$}, 
for the valuation $\vgamma$ defined as follows: 
$\vgamma(B_e) = 1$, $\vgamma(R_e)=\vgamma(R_e')=0$, 
and $\vgamma(S_{ij})=\vgamma(S_{ij}')= \valpha(\l_P(\{v_i,v_j\}))$. 
 By assumption, they have PE-rewritings $\q'_n$ of size $p(|\q|+|\Tmc|)$ which
is polynomial in $n$. Theorem~\ref{rew2prim} gives us a polysize Boolean formula
for computing $\primfnn$. Since $f^n$ is obtained from $\primfnn$ by some substitution,
it follows that we have a polysize formula for $f_n$, hence for all functions in $\nlpoly$.\smallskip\\
$(3) \Longrightarrow (2)$:
Suppose (3) holds. Fix some $\ell > 1$. Take a tree-shaped query $\q$ with at most $\ell$ leaves and an ontology $\Tmc$.
Since $\ell$ is fixed, by Theorem~\ref{bbthp-to-nbp}, there is a polysize
NBP $P$ which computes $\twfn$. By assumption, there is a polysize
FO formula computing $\twfn$, and Theorem~\ref{TW2rew} transforms it into
a FO-rewriting of $\q$ and $\Tmc$.
\end{proof}

\bigskip

\noindent{\bf Theorem~\ref{btw-ndl}.}
Fix constants  $\twidth >0$ and $d > 0$.
Then all CQs of
treewidth $\leq t$
 and ontologies of depth $\leq d$
have polysize
NDL-rewritings.

\begin{proof}
Fix constants $\twidth >0$ and $d >0$.
By Theorem~\ref{DL2THP}, we have that there is a polynomial $p'$ such that for any CQ $\q$ of treewidth
 at most $\twidth$ and any ontology $\Tmc$ of depth at most $d$ the $\THP$ $P$ computes
$\homfn$ and is of size at most $p'(|\q| + | \Tmc|)$. Now we apply Theorem~\ref{thm:thp_vs_sac}
and conclude that $\homfn$ may be computed by a polysize semi-unbounded fan-in circuit.
Therefore, by Theorem~\ref{Hom2rew}, there exists a polysize NDL-rewriting for $\q$ and $\Tmc$.
\end{proof}

\bigskip

\noindent{\bf Theorem~\ref{btw-fo}.}
The following are equivalent:
\begin{enumerate}
\item There exist polysize FO-rewritings for all tree-shaped CQs and depth 2 ontologies;
\item There exist polysize FO-rewritings for all CQs of treewidth at most $\twidth$ and ontologies of depth at most $d$
(for fixed constants $\twidth> 0$ and $d>0$);
\item There exists a polynomial function $p$ such that every semi-unbounded fan-in circuit of size at most $\size$ and depth at most $\log \size$ is
computable by a formula of size $p(\size)$. Equivalently,
$\sac\subseteq \ncone$.
\end{enumerate}

\begin{proof}\ \\ $(2) \Longrightarrow (1)$: Trivial.\smallskip\\
$(1) \Longrightarrow (3)$:
Suppose (1) holds. In other words,  there exists a polynomial $p''$ such that
every tree-shaped query $\q$ and ontology $\Tmc$ of depth 2 has a rewriting of the size $p''(|\q|+|\Tmc|)$.
Consider a semi-unbounded fan-in circuit $\Cir$ of size $\size$ and depth
at most $\log \size$ that computes the Boolean function $f$.
By Theorem~\ref{thm:thp_vs_sac}, $f$
is computed by a $\THP$ $P$ based on a tree hypergraph $H$ of size at most $p(\size)$ for the polynomial $p$ from Theorem~\ref{thm:thp_vs_sac}.
By Theorem~\ref{tree-hg-to-query}, there exists a tree-shaped query $\q_P$ and an ontology $\Tmc_P$ of depth 2  such
that $f$ is straightforwardly obtained from $\primfn$ via substitution. By the assumption, there exists an FO-rewriting for
$\q_P$ and $\Tmc_P$ of size at most $p''(|\q|+|\Tmc|)$. This number is polynomial in $\size$
(take the composition of $p$, the polynomial function from Theorem \ref{tree-hg-to-query} and $p''$).
Now by Theorem \ref{rew2prim}, there exists a polysize first-order formula for computing
$\primfn$ and hence also for $f$.\smallskip\\
$(3) \Longrightarrow (2)$:
Suppose (3) holds. Fix $\twidth > 0$ and $d > 0$.
Take query $\q$ of treewidth at most $\twidth$  and an ontology $\Tmc$ of depth at most $d$.
Since $\twidth$ is fixed, by Theorem~\ref{DL2THP}, there is a polysize
$\THP$ $P$ that computes $\homfn$. By assumption and Theorem~\ref{thm:thp_vs_sac}, there is a polysize
FO-formula computing $\homfn$. We can then apply Theorem~\ref{Hom2rew} to transform it into
an FO-rewriting of $\q$ and $\Tmc$.
\end{proof}

\bigskip

\noindent{\bf Theorem~\ref{depth-one-btw}.}
Fix  $t>0$. Then there exist polysize PE-rewritings for all CQs of treewidth at most $t$
and depth 1 ontologies.

\begin{proof}
Fix a constant $t>0$. Throughout the proof, we will consider CQs of treewidth at most~$t$,
and for every such query $\q$, we  will use $\tds(\q)$ (for `minimal tree decomposition')
to denote the minimum number of vertices over all tree decomposition of $\q$ that have width at most $t$.

We also fix an ontology $\Tmc$ of depth 1, and as before, we use
$\TW$ to denote the set of all tree witnesses for the query $\q$ and ontology $\Tmc$.
Since $\Tmc$ has depth 1, it is known from \cite{lics14-KKPZ} that every tree witness $\t=(\t_r, \t_i) \in \TW$ 
contains a unique interior point (i.e.\ $|\t_i|=1$), and no two distinct tree witnesses may share the same interior point. 

Given a set $U \subseteq \vars(\q)$ of variables,
we will use $\TW(U)$ to refer to the tree witnesses whose interior point belongs to $U$.
If $\Omega$ is an independent subset of  $\TW(U)$, then the set $\bvars(U, \Omega)$ of border variables
for $U$ and $\Omega$ is defined as follows:
$$\{u \in U \mid \mbox{ there is no } \t \in \Omega \mbox{ with } \t_i = \{u\}\} \cup \{ z \mid z \in \t_r
\mbox{ for some } \t \in\Omega\}.$$
We also define $\qwo = \q \setminus \{\atom \in \q \mid \vars(\atom) \subseteq
U \cup \bvars(U, \Omega)\}$.
For $z, z' \in \vars(\q) \setminus ( U \cup \bvars(U, \Omega))$, we set $z \sim z'$ if there is a
path in $G_\q$
from $z$ to $z'$ that does not pass through $\bvars(U, \Omega)$ (recall that $G_\q$ is undirected).
The $\sim$ relation can be lifted to the atoms in $\qwo$ by setting
$\atom_1 \sim \atom_2$ if all of the variables in $(\atom_1 \cup \atom_2) \setminus \bvars(U, \Omega)$ are $\sim$-equivalent.
Let $\q^{U, \Omega}_1$, \dots, $\q^{U, \Omega}_k$ denote the queries formed by the
$\sim$-equivalence classes of atoms in  $\qwo$ with $\avars(\q^{U, \Omega}_i) =
(\avars(\q) \cup \bvars(U, \Omega)) \cap \vars(\q^{U, \Omega}_i)$.
\smallskip

\noindent{\bf Claim.} For every query $\q$ of treewidth $t$, 
there exists a subset $U$ of $\vars(\q)$ such that
$|U| \le t+1$ and for all subqueries $\q_i^{U, \Omega}$, we have $\tds(\q_i^{U, \Omega}) < \tds(\q) / 2 $.
\smallskip

\noindent\emph{Proof of claim.}
Consider some tree decomposition ($T, \lambda$) of $\q$ of width $t$ with $T=(V,E)$ and $|V| = \tds(\q)$.
It was shown in \cite{lics14-KKPZ} that there exists a vertex $v \in V$ such
that each connected component in the graph $T_{v}$ obtained by removing $v$ from $T$ has at most ${|V|}/{2}= \tds(\q)/2$ vertices.
Consider the set of variables $U= \lambda(v)$. Since ($T, \lambda$) has width $t$, we have that $|U| \leq t +1$.
As to the second property, we observe that it is sufficient to consider
queries of the form $\q_i^{U, \emptyset}$, since by definition,
$\q_i^{U, \Omega} \subseteq \q_i^{U, \emptyset}$ for any $\Omega \subseteq \TW(U)$.
We then remark that due to the connectedness condition on tree decompositions,
and the fact that we only consider paths in $G_\q$ that do not pass by variables in $U= \lambda(v)$,
every query $\q_i^{U, \emptyset}$ can be obtained by:
\begin{enumerate}
\item taking a connected component $C_i = (V_i, E_i)$ in the graph $T_v$;
\item considering the resulting tree decomposition $(C_i, \lambda_i)$, where $\lambda_i$ is the restriction of $\lambda$
to the vertices in $C_i$;
\item taking a subset of the atoms in $\{\atom \in \q \mid \vars(\atom) \subseteq \lambda(v') \text{ for some } v' \in V_i\}$.
\end{enumerate}
It then suffices to recall that each connected component of $T_v$ contains at most $\tds(\q)/2$ vertices.
(\emph{end proof of claim})
\medskip

We now use the claim to define a recursive rewriting procedure. Given a query $\q$
of treewidth at most $t$, we choose a set  $U$ of variables
that satisfies the properties of preceding claim, and define the rewriting $\q^\dagger$ of $\q$ w.r.t.\ $\Tmc$ as follows:
$$\q^\dagger = \exists \vec{y} \bigvee_{\substack{\Omega \subseteq \TW(U)\\Ê\text{ independent}}}
\left (\mathsf{at}(U,\Omega) \land
\mathsf{tw}(U,\Omega) \land \bigwedge_{i=1}^k (\qwo_i)^\dagger \right )$$
where
\begin{itemize}
\item $\vec{y}$ is the set of all existential variables in $U$,
\item $\mathsf{at}(U, \Omega) = \{ \atom \in \q \mid \vars(\atom) \subseteq U \mbox { and there is no }
\t \in \Omega \mbox{ with } \atom \in \q_\t\}$;
\item $$\mathsf{tw}(U, \Omega) =  \bigwedge_{\t \in \Omega} 
\left (\exists z' (\bigvee_{\substack{\t \in \Theta^\q_\Tmc[\varrho]\\ \varrho \in\rni }}  \rho_\varrho(z')) \land \bigwedge_{z \in \tr} (z = z') \right);$$
\item $(\qwo_i)^\dagger$ are rewritings of the queries $\qwo_i$, which are constructed recursively according to the same procedure.
\end{itemize}

The $\dagger$-rewriting we have just presented generalizes the rewriting procedure
for tree-shaped queries from \cite{lics14-KKPZ}, and correctness can be shown similarly
to the original procedure.

As to the size of the obtained rewriting, we remark that
since the set $U$ is always chosen according to the claim, and $\tds(\q) \leq (2|\q|-1)^2$
(cf.\ \cite{books/sp/Kloks94}), we know that the depth of the recursion is logarithmic in $|\q|$.
We also know that the branching is at most $2^{t+1}$
at each step, since there is at most one recursive call for each subset $\Omega\subseteq\TW(U)$,
and since $\Tmc$ has depth 1, we have $\TW(U)\leq |U|$.   
Thus, the resulting formula $\q^\dagger$ has the structure of a tree whose number of nodes is bounded by
$O((2^{t+1})^{\log |\q|}) = O(|\q|^{t+1})$.

\end{proof}

\section*{Proofs for Section \ref{sec:complexity}}
\medskip

\noindent{\bf Proposition \ref{logcfl-upper-prop}.} \emph{Every execution of \bbarbalgo\ terminates. 
There exists an execution of \bbarbalgo\ that returns \yes\ on input $(\Tmc,\Amc, \q, \vec{b})$ just in the case that $\Tmc, \Amc \models \q(\vec{b})$. }
\begin{proof}
The following two claims, which can be easily seen to hold by examination of the procedure  and straightforward induction,  
resume some important properties of \bbarbalgo. 

\medskip

\noindent\textbf{Claim 1}. Every execution of \bbarbalgo\ satisfies the following statements:
\begin{itemize}
\item $\frontier$ always contains tuples $(v_1,v_2, c,n)$ such that $v_2$ is a child of $v_1$ in $T$.
\item  Once $(v_1, v_2, c,n)$ is added to $\frontier$, no other tuple of the form $(v_1, v_2, c', n')$ may ever be added to $\frontier$
in future iterations. 
\item At every iteration of the while loop, at least one tuple is removed from $\frontier$. 
\item If $(v_1, v_2, c,n)$ is removed from $\frontier$, 
then either the procedure returns \no\ or for every child $v_3$ of $v_2$, a tuple whose first two arguments 
are $(v_2,v_3)$ is added to $\frontier$. 
\end{itemize}

\smallskip

\noindent\textbf{Claim 2}. The while loop in Step 4 has the following loop invariants:
\begin{itemize}
\item $\stackheight$ is equal to number of symbols on $\stack$.
\item If $\stackheight > 0$, then all tuples $(u,v,c,n)$ with $n> 0$ have the same $c$. 
\item All tuples $(u,v,c,n)$ in $\frontier$ are such that $n \leq \stackheight$, and there exists at least one tuple with $n=\stackheight$. 
\end{itemize}

\medskip

We now show the first statement of the proposition. 

\medskip

\noindent\textbf{Claim 3}. Every execution of \bbarbalgo\ terminates.
\smallskip

\noindent\emph{Proof of claim}. A simple examination of \bbarbalgo\ shows that the only possible source of non-termination 
is the while loop in Step 4, which continues so long as $\frontier$ is non-empty. 
It follows from the first statement of Claim 1 that the total number of tuples that may appear in $\frontier$ at some point cannot exceed the number of edges in $T$, 
which is itself bounded above by $|\q|$. We also know from the second and third statements of Claim 1 that every tuple is added at most once and is eventually removed from $\frontier$ . Thus, either we will exit the while loop by returning \no\ (if one of the checks fails), or we will eventually exit the while loop after reaching an empty $\frontier$.  (\emph{end proof of claim})

\medskip

The next two claims establish the second half of the proposition. 

\medskip

\noindent\textbf{Claim 4}. If $\Tmc, \Amc \models \q(\vec{b})$, then some execution of \bbarbalgo($\Tmc, \Amc, \q, \vec{b})$ returns \yes.

\smallskip
\noindent\emph{Proof of claim}. Suppose that $\Tmc, \Amc \models \q(\vec{b})$. Then there exists a homomorphism
$h: \q \rightarrow \canmod$ such that $h(\avars(\q)) = \vec{b}$, and without loss of generality we may choose $h$ so that the image of $h$ consists
of elements $aw$ with $|w| \leq 2 |\Tmc| + |\q|$. We use $h$ to specify an execution of \bbarbalgo($\Tmc, \Amc, \q, \vec{b})$ that 
returns \yes. In Step 1, we fix some arbitrary variable $v_0$ as root, and in Step 2, we let choose the element $h(v_0)=a_0w_0$.
Since $h$ defines a homomorphism of $\q(\vec{b})$ into $\canmod$, the call to \compnode\ or \compnodeanon\ 
will return \true.
In Steps 3, we will initialize $\stack$ to $w_0$, $\stackheight$ 
to $|w_0|$, and $\frontier$ to $\{(v_0, v_i, a_0, \stackheight) \mid v_i \text{ is a child of } v_0 \}$.
In Step 4, we enter the while loop. Our aim will be to make the non-deterministic choices in 
such a way as to satisfy the following invariant:
\begin{description}
\item[Inv]If $(v,v',c,m)$ is in $\frontier$ and $w=\stack[m]$, then $h(v)=cw$.
\end{description}
Recall that $\stack[m]$ designates the word obtained by concatenating the first $m$ symbols of $\stack$. 
 Observe that at the start of Step 4,  property \textbf{Inv} is satisfied. 
At the start of each iteration of the while loop, we proceed as follows:
\begin{description}
\item[Case 1] $\frontier$ contains an element $\tau = (v_1,v_2,c,0)$ such that $h(v_2) \in \ainds(\Amc)$. In this case, 
we will choose Option 1. In Step 4(a), we will remove $\tau$ from $\frontier$, and in 4(b), we guess the individual $h(v_2)$. 
As $c=h(v_1)$ (by \textbf{Inv}) and $h$ is a homomorphism, the calls to \compnode\ and \compedge\ will both return \true. 
We will thus continue to 4(c) where we will add $(v_2,v_3,h(v_2),0)$ to $\frontier$ for every child $v_3$ of $v_2$. Note 
that these additions to $\frontier$ preserve the invariant. 
\item[Case 2] $\frontier$ contains $\tau = (v_1,v_2,c,\stackheight)$ such that $h(v_2) = h(v_1) S$. In this case,
we choose Option 2 and remove $\tau$ from $\frontier$ in 4(d). Note that we must have $\stackheight < 2|\Tmc|+|\q|$ since 
(i) by the invariant \textbf{Inv}, $h(v_1)=c w$ where $w=\stack[\stackheight]$, and (ii) by our choice of the homomorphism $h$, 
we have that $w S \leq 2|\Tmc|+|\q|$. We will thus continue on to Step 4(e), where we choose the role $S$. 
Because of the invariant, the fact that $h(v_2) = h(v_1) S$, and that $h$ is a homomorphism, one can show that 
none of the (undesired) properties in 4(e) holds, and so we will continue to 4(f). First consider the case in which $v_2$ has 
some child. In this case, we push $S$ onto $\stack$, increment $\stackheight$, and add $(v_2,v_3,c,\stackheight)$ to $\frontier$ 
for every child $v_3$ of $v_2$. Observe that \textbf{Inv} holds for the newly added tuples and continues to hold for existing tuples. 
If $v_2$ is a leaf in $T$, then no additions are made to $\frontier$, but we pop $\delta$ symbols from $\stack$ and decrement $\stackheight$
by $\delta$, where $\delta$ is the difference between $\stackheight$ and the maximal current value appearing in any tuple of $\frontier$. 
Since there are no additions, and the relevant initial segment of $\stack$ remains unchanged, \textbf{Inv} continues to hold. 
\item[Case 3] Neither Case 1 nor Case 2 holds. 
In this case, we choose Option 3, and remove all elements in $\deepest = \{(v_1,v_2,c,n) \in \frontier \mid n = \stackheight\}$ from $\frontier$.
Since neither Case 1 nor Case 2 applies, $\stackheight > 0$. Thus, in Step 4(g), we will not return \no\ and will instead 
pop the top symbol $R$ from $\stack$ and decrement $\stackheight$ by 1.  Since $\stackheight > 0$, it follows from Claim 2 that 
all tuples in $\deepest$ have the same individual $c$ in third position. By the invariant \textbf{Inv}, 
for every tuple $(v_1,v_2,c,n) \in \deepest$ is such that $h(v_1)=cwR$ where $wR =\stack[\stackheight]$. 
Moreover, since Case 2 was not applicable, we know that for every such tuple $(v_1,v_2,c,n)$, 
we have $h(v_2)=cw$. Using the fact that $h$ is a homomorphism, we can show that none of the undesired properties 
in Step 4(h) holds, and so we will continue on to 4(i), where we will set $\children = \{(v_2,v_3) \mid (v_1, v_2, c,n) \in \deepest, v_3 \text{ is a child of } v_2\}$. 
If $\children$ is non-empty, then we will add the tuple $(v_2, v_3, c, \stackheight)$ to $\frontier$ for each pair $(v_2,v_3) \in \children$.
Note that the invariant \textbf{Inv} is satisfied by all the new tuples. Moreover, since we only removed the last symbol in $\stack$, 
all the remaining tuples in $\frontier$ will continue to satisfy \textbf{Inv}. If $\children$ is empty, then we pop $\delta$ symbols 
from $\stack$ and decrement $\stackheight$ by $\delta$, where $\delta = \stackheight - \mathbf{max} \{\ell \mid (v,v',d,\ell) \in \frontier\}$.
We can use the same reasoning as in Option 2 to show that \textbf{Inv} continues to hold. 
\end{description}
Since we have shown how to make the non-deterministic choices in the while loop without returning \no, 
we will eventually leave the while loop (by Claim 3), and return \yes\ in Step 5. (\emph{end proof of claim})
\medskip

\noindent\textbf{Claim 5}. If some execution of \bbarbalgo($\Tmc, \Amc, \q, \vec{b})$ returns \yes, then $\Tmc, \Amc \models \q(\vec{b})$.

\smallskip
\noindent\emph{Proof of claim}. Consider an execution of \bbarbalgo($\Tmc, \Amc, \q, \vec{b})$ that returns \yes. 
Since \yes\ can only be returned in Step 5, it follows that the while loop was successfully exited after reaching an empty $\frontier$. 
Let $L$ be the total number of iterations of the while loop. We inductively define a sequence $h_0, h_1, \ldots, h_L$ of partial functions 
from $\vars(\q)$ to $\Delta^{\canmod}$ by considering the guesses made during the different iterations of the while loop. 
We will ensure that the following properties hold for every $0 \leq i < L$:
\begin{description}
\item[P1] If $i>0$, then $\dom(h_{i_1}) \subseteq \dom(h_{i})$, and if $v \in \dom(h_{i-1})$ is defined, then $h_{i}(v) = h_{i-1}(v)$.
\item[P2] If tuple $(v_1,v_2, c,n)$ belongs to $\frontier$ at the beginning of iteration $i+1$, then:
\begin{itemize}
\item[(a)] $h_i(v_1)=c w$ where $w= \stack[n]$ (recall that $\stack[n]$ consists of the first $n$ symbols of $\stack$)
\item[(b)] neither $v_2$ nor any of its descendants belongs to $\dom(h_{i})$. 
\end{itemize}
\item[P3] $h_i$ is a homomorphism from $\q_i$ to $\canmod$, where $\q_i$ is the restriction of $\q$ to the variables in $\dom(h_i)$.
\end{description}
Note that above and in what follows, we use $\dom(h_i)$ to denote the domain of the partial function $h_i$. 

We begin by setting $h_0(v_0)=u_0$ (and leaving $h_0$ undefined for all other variables). Property \textbf{P1} is not applicable. 
Property \textbf{P2}(a) holds because of the initial values of $\frontier$, $\stack$, and $\stackheight$,
and \textbf{P2}(b) holds because only $v_0 \in \dom(h_0)$, and $v_0$ cannot be its own child (hence cannot appear in the second argument of 
a tuple in $\frontier$). To see why \textbf{P3} is satisfied, first suppose that $u_0 \in \ainds(\Amc)$. Then in Step 2, the subprocedure \compnode\ 
was called on input $(\Tmc,\Amc,\q, \vec{b}, v_0, u_0)$ and returned \yes. It follows that
\begin{itemize}
  \item if $v_0=z_j$, then $u_0 = b_j$; 
 \item if $\q$ contains $A(v_0)$, then $u_0 \in A^{\canmod}$;
 \item if $\q$ contains $r(v,v)$, then $(u_0,u_0) \in r^{\canmod}$;
 \end{itemize}
and hence that $h_0$ defines a homomorphism of $\q_0$ into $\canmod$. 
The other possibility is that $u_0=a_0w_0$ for some non-empty word $w_0 = w_0' R$, 
and so in Step 2, \compnodeanon\ was called on input $(\Tmc,\q, v_0, R)$ and returned \yes.
It follows that 
\begin{itemize}
\item  $v_0 \not \in \avars(\q)$; 
\item if $\q$ contains $A(v)$, then $\Tmc \models \exists y R(y,x) \rightarrow A(x)$ (hence: $u_0 \in A^{\canmod}$);
\item $\q$ does not contain any atom of the form $S(v,v)$;
 \end{itemize}
and hence $h_0$ maps all atoms of $\q_0$ into $\canmod$. We have thus shown that the initial partial function $h_0$
satisfies the three requirements. 

Next we show how to inductively define $h_{i}$ from $h_{i-1}$ while preserving properties \textbf{P1} --\textbf{P3}.
The variables that belong to $\dom(h_{i}) \setminus \dom(h_{i-1})$ are precisely those 
variables that appear in the second position of a tuple removed from $\frontier$ during iteration $i$ (since these are the
variables for which we guess a domain element). 
The choice of where to map these variables depends on which of three options was selected:\smallskip

\noindent\textbf{Option 1}: In this case, we removed a tuple $(v_1,v_2, c, 0)$ and guessed an individual $d \in \ainds(\Amc)$. 
We set $h_{i}(v_2)=d$ and $h_{i}(v)=h_{i-1}(v)$ for all variables in $\dom(h_{i-1})$ (all other variables remain undefined). 
Property \textbf{P1} is trivially satisfied. 

For property \textbf{P2}, let $\stack_{i-1}$ designate $\stack$ at the beginning of iteration $i$,
and let $\stack_{i}$ designate $\stack$ at the beginning of iteration $i+1$. 
Consider some tuple $\tau=(v,v',a,p)$ that belongs to $\frontier$ at the beginning 
of iteration $i+1$ (equivalently, the end of iteration $i$). 
If the tuple $\tau$ was already in $\frontier$ at the beginning of iteration $i$, then we can use the fact that 
$h_{i-1}$ satisfies \textbf{P2} to obtain that:
\begin{itemize}
\item[(a)] $h_{i-1}(v)=c w$ where $w= \stack_{i-1}[n]$ 
\item[(b)] neither $v'$ nor any of its descendants belongs to $\dom(h_{i-1})$ 
\end{itemize}
Since $\stack_{i} = \stack_{i-1}$ and $h_i(v)=h_{i-1}(v)$, it follows that statement (a) continues to hold for $\tau$. 
Moreover, since $\tau$ was not removed from $\frontier$ during iteration $i$, 
we have that $\tau \neq (v_1,v_2, c, 0)$, and so using Claim 1, we can conclude that $v' \neq v_2$.
It follows that neither $v'$ nor any descendant is in  $\dom(h_{i})$. 
The other possibility is that the tuple $\tau$ was added to $\frontier$ during 
iteration $i$, in which case $\tau = (v_2,v_3, d, 0)$ for some child $v_3$ of $v_2$.
Condition (a) is clearly satisfied (since $\stack_{i}[0]=\epsilon$). 
Since $h_{i-1}$ satisfies \textbf{P2}, we know that $v_3$ (being a descendant of $v_2$) is not in $\dom(h_{i-1})$, 
and so remains undefined for $h_{i}$. 

To show property \textbf{P3}, we first note that since $h_{i}$ agrees with $h_{i-1}$ on all variables in $\dom(h_i)$,
it is only necessary to consider the atoms in $\q_{i}$ that do not belong to $\q_{i-1}$. There are four kinds of such atoms:
\begin{itemize}
\item Atoms of the form $A(v_2)$: if $A(v_2) \in \q$, then \compnode($\Tmc,\Amc,\q, \vec{b}, v_2,d$)=\true\ implies that
$h_{i}(v_2) = d \in A^{\canmod}$.
\item Atoms of the form $R(v_2,v_2)$:  if $R(v_2,v_2) \in \q$, then we can again use the fact that  \compnode($\Tmc,\Amc,\q, \vec{b}, v_2,d$)=\true\
to infer that $(h_{i}(v_2),h_{i}(v_2))=(d,d) \in R^{\canmod}$.
\item Atoms of the form $R(v_2,v)$ with $v \neq v_2$: since $R(v_2,v) \in \q_{i}$, we know that $v$ must belong to $\dom(h_{i})$,
so $v$ must be the parent $v_1$ (rather than one of $v_2$'s children). We can thus use the fact that 
\compedge($\Tmc, \Amc, \q, \vec{b}, v_1, v_2, c, d$)=\true\  to obtain $(h_{i}(v_2),h_{i}(v))= (c,d) \in R^{\canmod}$.
\item Atoms of the form $R(v,v_2)$ with $v \neq v_2$: analogous to the previous case.
\end{itemize}
We have thus shown that property \textbf{P3} holds for $h_{i}$. 

\smallskip

\noindent\textbf{Option 2}:  If Option 2 was selected during iteration $i$, then 
a tuple $(v_1,v_2, c,n)$ was removed from $\frontier$ with $n$ equal to the value of $\stackheight$, and then a role $S$ was guessed.  
We set $h_{i}(v_2)=h_{i-1}(v_1) S$. Note that we are sure that $h_{i-1}(v_1)$ is defined, since $h_{i-1}$ satisfies property \textbf{P2}. 
Moreover, the first two checks in Step 4(e) ensure that $h_{i-1}(v_1) S$ belongs to the domain of $\canmod$. 
We also set $h_{i}(v)=h_{i-1}(v)$ for all variables in $\dom(h_{i-1})$ and leave the remaining variables undefined. 

Property \textbf{P1} is immediate from the definition of $h_i$, and property \textbf{P2}(b) can be shown exactly as for Option 1. To show 
\textbf{P2}(a), we define $\stack_{i-1}$ and $\stack_{i}$ as in Option 1, and consider 
a tuple $\tau=(v,v',a,p)$ that belongs to $\frontier$ at the beginning of iteration $i+1$. 
If $\tau$ was present in $\frontier$ at the beginning of iteration $i$, then $h_{i-1}(v)=c w$ where $w= \stack_{i-1}[p]$
(since $h_{i-1}$ satisfies \textbf{P2}). Since $\stack_{i} = \stack_{i-1} \, S$, $p \leq |\stack_{i-1}|$ and $h_i(v)=h_{i-1}(v)$, 
it follows that statement (a) continues to hold for $\tau$. The other possibility is that $\tau$ was added to $\frontier$
during iteration $i$, in which case $\tau$ must take the form $(v_2, v_3, c, n+1)$ for some child $v_3$ of $v_2$. 
Since $h_{i-1}$ satisfies \textbf{P2}, we know that $h_{i-1}(v_1)= c \cdot \stack_{i-1}[n]$. Statement (a) follows then 
from the fact that $h_i(v_2)= h_{i-1}(v_1) S$ and $\stack_{i} = \stack_{i-1} \, S$. 

We now turn to property \textbf{P3}. As explained in the proof for Option 1, it is sufficient to consider the atoms in 
$\q_{i} \setminus \q_{i-1}$, which can be of the following four types:
\begin{itemize}
\item Atoms of the form $A(v_2)$:  if $A(v_2) \in \q$, then \compnodeanon($\Tmc,\q,v_2,S$)=\true\ implies that
$\Tmc \models \exists y S(y,x)\rightarrow A(x)$, hence $h_{i}(v_2)=h_{i-1}(v_1) S\in A^{\canmod}$.
\item  Atoms of the form $R(v_2,v_2)$:  \compnodeanon($\Tmc,\q,v_2,S$)=\true\ implies that no such atom occurs in $\q$. 
\item  Atoms of the form $R(v_2,v)$ with $v \neq v_2$: if $R(v_2,v) \in \q_{i}$, the only possibility is that $v=v_1$ (cf.\ proof for Option 1).
We know from the third check in Step 4(e) that $\Tmc \models S(x,y) \rightarrow R(y,x)$, which shows that $(h_i(v_2),h_i(v)) = (h_{i-1}(v_1) S, h_{i-1}(v_1)) \in R^{\canmod}$.
\item Atoms of the form $R(v,v_2)$ with $v \neq v_2$: analogous to the previous case.
\end{itemize}
This establishes that $h_i$ is a homomorphism from $\q_i$ into $\canmod$, so $h_i$ satisfies \textbf{P3}.
\smallskip

\noindent\textbf{Option 3}:  If it is Option 3 that was selected during iteration $i$, then 
the tuples in $\deepest = \{(v_1,v_2,c,n) \in \frontier \mid n = \stackheight\}$ were removed 
from $\frontier$, and the role $R$ was popped from $\stack$. We know from Claim 2 that all tuples in 
$\deepest$ contain the same individual $c$ in their third position. 
For every variable $v \in \mathsf{DVars}=\{v_2 \mid v_1,v_2,c,n) \in \deepest\}$, we set $h_i(v) = c w$, 
where $w$ is equal to $\stack$ after $R$ has been popped. As for the other two options, 
we set $h_{i}(v)=h_{i-1}(v)$ for all variables in $\dom(h_{i-1})$ and leave the remaining variables undefined. 

Property \textbf{P1} is again immediate, and the argument for property \textbf{P2}(b) is the same as for Option 1. 
For property \textbf{P2}(a), 
let $\stack_{i-1}$ and $\stack_{i}$ be defined as earlier, l
and let  $\tau=(v,v',a,p)$ be a tuple that in $\frontier$ at the beginning of iteration $i+1$. 
If $\tau$ was present in $\frontier$ at the beginning of iteration $i$, then $h_{i-1}(v)=c w$ where $w= \stack_{i-1}[p]$,
and $p$ must be smaller than the value of $\stackheight$ at the start of iteration $i$.
We know that $\stack_{i}$ is obtained from $\stack_{i-1}$ by popping one or more symbols, 
and that at the end of iteration $i$, $\stackheight$ is equal to the largest value appearing in a tuple of $\frontier$. 
We thus know that at the start of iteration $i+1$, $p \leq \stackheight$, 
and so \textbf{P2}(a) continues to hold for $\tau$. Next consider the other possibility, which is that the tuple 
$\tau=(v,v',a,p)$ was added
to $\frontier$ during the $i$th iteration of the while loop. In this case, we know that 
$v \in \mathsf{DVars}$, $h_i(v)= c \stack_{i}$, and $p=|\stack_{i}|$, from which property \textbf{P2}(a) follows.

For property \textbf{P3}, the argument is similar to the other two options and involves considering
the different types of atoms that may appear in $\q_{i} \setminus \q_{i-1}$: 
\begin{itemize}
\item Atoms of the form $A(v)$ with $v\in \mathsf{DVars}$:  if $A(v_2) \in \q$, then either
\begin{itemize}
\item $|\stack_{i-1}|=1$ and \compnode($\Tmc, \Amc, \q, \vec{b}, v_2, c$)= \true, or 
\item $|\stack_{i-1}|>1$ and \compnodeanon($\Tmc,\q,v_2,S$)=\true, where $S$ is next-to-top symbol in $\stack_{i-1}$
\end{itemize}
In both cases, we may infer $h_{i}(v) \in A^{\canmod}$ (see Options 1 and 2). 
\item Atoms of the form $P(v,v)$ with $v\in \mathsf{DVars}$: in this case, 
we must have $\stackheight=0$, $h_i(v)=c$, and \compnode($\Tmc,\Amc,\q, \vec{b}, v,c$)=\true. 
The latter implies that $(h_{i}(v),h_{i}(v))=(c,c) \in P^{\canmod}$.
\item  Atoms of the form $P(v,v')$ with $v \neq v'$ and $v\in \mathsf{DVars}$: if $P(v,v') \in \q_{i}$, the only possibility is that $v'$ is the parent of $v$ (cf.\ proof for Option 1).
We know from the third check in Step 4(h) that $\Tmc \models R(y,x) \rightarrow P(x,y)$, which shows that $(h_i(v),h_i(v')) = (c \, \stack_{i}, c \, \stack_{i}\, R) \in P^{\canmod}$.
\item Atoms of the form $P(v',v)$ with $v \neq v'$ and $v\in \mathsf{DVars}$: analogous to the previous case.
\end{itemize}

\medskip

We claim that the final partial function $h_L$ is a homomorphism of $\q$ to $\canmod$. 
Since $h_L$ is a homomorphism of $\q_L$ into $\canmod$, it suffices to show that $\q=\q_L$, 
or equivalently, that all variables of $\q$ are in $\dom(h_L)$. This follows from Claim 1 and the 
fact that $\dom(h_{i+1}) = \dom(h_i) \cup \{v' \mid (v,v',c,n) \text { is removed from } \frontier \text{ during iteration } i\}$. 
(\emph{end proof of claim})
\end{proof}

\bigskip 


To complete our proof of the \LOGCFL\ upper bound, we prove the following proposition. 

\begin{proposition}
\bbarbalgo\ can be implemented by an NAuxPDA
\end{proposition}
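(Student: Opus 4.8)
The plan is to read an NAuxPDA implementation directly off the pseudocode, invoking Sudborough's characterization \cite{sudborough78} of \LOGCFL\ as the class of problems decided by NAuxPDAs \cite{DBLP:journals/jacm/Cook71} that run in logarithmic worktape space and polynomial time (the pushdown store being exempt from the space bound). The stack \stack\ of \bbarbalgo\ becomes the pushdown store; everything else --- the set \frontier, the counter \stackheight, the variable indices, constants and roles currently under consideration, and scratch space for the subroutines below --- lives on the logarithmic worktape. This fits the budget: a variable of \q\ needs $O(\log|\q|)$ bits, a constant of \Amc\ needs $O(\log|\Amc|)$ bits, a role from $\rni\cap\sig(\Tmc)$ (and all role guesses are restricted to this finite set) needs $O(\log|\Tmc|)$ bits, \stackheight\ never exceeds $2|\Tmc|+|\q|$, and since \q\ has at most $\ell$ leaves the edges $\{v_1,v_2\}$ occurring in the tuples of \frontier\ form an antichain of $T$, so $|\frontier|\le\ell=O(1)$ (as already exploited in the proof of Theorem~\ref{nl-bb}). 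Hence the worktape content is $O(\log n)$ on an input $(\Tmc,\Amc,\q,\vec b)$ of size $n$.

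Next I would verify that \bbarbalgo\ uses \stack\ in genuine last-in-first-out fashion. Its only stack operations are the initial block of pushes in Steps 2--3 (while $w_0$ is guessed one symbol at a time), the push of a single role in Step 4(f), the pop of a prefix in Steps 4(f), 4(g) and 4(i), and the reading of the top symbol in Steps 4(e) and 4(h). No access below the top is ever needed: the informal reading ``$(v,v',c,n)$ means $v$ maps to $c\,\stack[n]$'' is used only in the correctness proof of Proposition~\ref{logcfl-upper-prop}, never by the procedure, and the loop invariants established there (every tuple has $n\le\stackheight$, and at least one attains \stackheight) guarantee that Options~2 and~3 always act on tuples whose last component equals \stackheight, i.e.\ at the current top, whereas Option~1 acts only on tuples with last component $0$ and leaves \stack\ alone. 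A prefix pop of $\delta=\stackheight-\mathbf{max}\{n\mid(v,v',c,n)\in\frontier\}$ symbols is realized by popping one symbol and decrementing \stackheight\ $\delta$ times; since $\delta\le\stackheight\le 2|\Tmc|+|\q|$ and the maximum is found by scanning the $O(\ell)$ tuples of \frontier, this costs polynomial time.

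The third ingredient is the logical tests. Every test in \bbarbalgo, \compnode\ and \compnodeanon\ --- ``$\Tmc,\Amc\models A(u)$'', ``$\canmod\models P(u,u')$'' for named $u,u'$, ``$\Tmc\models\exists y\,R(y,x)\to A(x)$'', ``$\Tmc\models S(x,y)\to P(x,y)$'', ``$\Tmc,\Amc\models\exists x\,S(c,x)$'', and the chain of conditions defining membership of a word in $\Delta^{\canmod}$ --- is an instance of the entailment problem for \OWLQL, which is in \NL\ \cite{CDLLR07}; since \NL\ is closed under complement, their negations are in \NL\ too. For each test, fix a non-deterministic logspace subroutine that has an accepting run iff the test holds; it runs on a reserved $O(\log n)$ portion of the worktape and never touches the pushdown. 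Since \bbarbalgo\ inspects these tests only inside clauses ``return \no\ if [condition]'', a run heading for output \yes\ must certify that [condition] is false, and [condition] is a finite boolean combination of (possibly negated) \NL\ tests; we certify its falsity by calling the corresponding subroutines and aborting the branch if any rejects. In particular the genuinely negative checks --- ``$\Tmc\not\models\varrho_i(y,x)\to\varrho_{i+1}(x,y)$'' and ``$a_0$ has no named $\varrho_1$-successor'', both needed to certify $a_0w_0\in\Delta^{\canmod}$ --- are \NL\ (via closure under complement) and are run incrementally as the symbols of $w_0$ are pushed, so each sees only the current top role (kept on the worktape) and the freshly guessed one. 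The accepting computations of the resulting NAuxPDA are thus exactly the executions of \bbarbalgo\ returning \yes, so by Proposition~\ref{logcfl-upper-prop} it accepts $(\Tmc,\Amc,\q,\vec b)$ iff $\Tmc,\Amc\models\q(\vec b)$.

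Finally, the time bound: by the claims in the proof of Proposition~\ref{logcfl-upper-prop}, at most $|\q|$ tuples ever enter \frontier\ and each while-iteration removes at least one, so there are $O(|\q|)$ iterations, each doing $O(1)$ guesses, $O(1)$ calls to polynomial-time \NL\ subroutines and $O(|\Tmc|+|\q|)$ stack steps, while the initial phase does $O(|\Tmc|+|\q|)$ pushes with $O(1)$ subroutine calls each --- altogether polynomial, with the worktape kept logarithmic. I expect the main obstacle to be the combination of the last two ingredients: rigorously justifying that \bbarbalgo\ never needs a symbol buried inside \stack\ (which rests entirely on the loop invariants of Proposition~\ref{logcfl-upper-prop}, and is precisely why the stack-height bookkeeping in Options~2 and~3 was built into the procedure), and that the negative entailment and non-membership tests --- awkward at first sight for a non-deterministic device --- can be discharged, via closure of \NL\ under complement, by subroutines that leave the pushdown untouched.
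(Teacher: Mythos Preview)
Your proposal is correct and follows essentially the same approach as the paper's proof: map \stack\ to the pushdown store, keep \frontier, \stackheight, and scratch data on the logarithmic worktape, argue that $|\frontier|$ stays bounded by a constant (you give the antichain argument the paper only alludes to), reduce the logical tests to \NL\ entailment, and bound the number of while-iterations by $|\q|$. Your explicit verification that \stack\ is accessed only in LIFO fashion (via the loop invariants of Proposition~\ref{logcfl-upper-prop}) and your invocation of closure of \NL\ under complement for the negative tests are points the paper leaves implicit, so your write-up is, if anything, a bit more careful on those fronts.
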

\begin{proof}
It suffices to show that \bbarbalgo\ runs in non-deterministic logarithmic 
space and polynomial time. 

In Step 1, we non-deterministically fix a root variable $v_0$, but do not 
actually need to store the induced directed tree $T$ in memory, since it suffices   
to be able to decide given two variables $v,v'$ whether $v$ is the parent of $v'$ in $T$, and the 
latter problem clearly belongs to \NL. 

In Step 2, we need only logarithmic space to store the individual $a_0$. The word $w_0 = \varrho_1 \ldots \varrho_N$ 
can be guessed symbol by symbol and pushed onto $\stack$. 
We recall that $a_0w_0 \in \Delta^{\canmod}$ just in the case that:
\begin{itemize}
\item $\Tmc, \Amc \models \exists y \varrho_1(a,y)$ and  $\Tmc, \Amc \not \models \varrho_1(a,b)$ for any $b \in \ainds(\Amc)$; 
\item for every $1 \leq i < N$: $\mathcal{T} \models \exists y\, \varrho_i(y,x) \rightarrow \exists y \, \varrho_{i+1}(x,y)$
  and $\Tmc \not \models \varrho_i(x,y) \rightarrow \varrho_{i+1}(y,x)$. 
\end{itemize}
Thus, it is possible to perform the required entailment checks incrementally as the symbols of $w_i$ are guessed. 
Finally, to ensure that the guessed word $w_0$ does not exceed the length bound, 
each time we push a symbol onto $\stack$, we increment $\stackheight$ by $1$.
If $\stackheight$ reaches $2 |\Tmc| + |\q|$, then no more symbols may be guessed. 
We next call either sub-procedure \compnode\ or \compnodeanon. It is easy to see that both can be made to run in 
non-deterministic logarithmic space. 

The initializations of $\stack$ and $\stackheight$ in Step 3 were already handled in our discussion of Step 2. 
Since the children of a node in $T$ can be identified in \NL, 
we can decide in non-deterministic logspace whether a tuple $(v_0,v_i, a_0, \stackheight)$ should be included
in $\frontier$. Moreover, since the input query $\q$ is a tree-shaped query with a bounded number of leaves,
we know that only constantly many tuples can be added to $\frontier$ in Step 4. Moreover, it is clear that 
every tuple can be stored using in logarithmic space. More generally, using 
Claims 1 and 2 from the proof of Proposition \ref{logcfl-upper-prop}, one can show that 
$|\frontier|$ is bounded by a constant throughout the execution of the procedure, 
and the tuples added during the while loop can also be stored using only logarithmically many bits. 

Next observe that every iteration of while loop in Step 4 involves a polynomial number of the
following elementary operations:
\begin{itemize}
\item remove a tuple from $\frontier$, or add a tuple to $\frontier$
\item pop a role from $\stack$, or push a role onto $\stack$
\item increment or decrement $\stackheight$ by a number bounded by $2 |\Tmc| + |q|$
\item test whether $\stackheight$ is equal to $0$ or to $2 |\Tmc| + |q|$
\item guess a single individual constant or symbol 
\item identify the children of a given variable
\item locate an atom in $\q$
\item test whether $\Tmc \models \alpha$, for some inclusion $\alpha$ involving symbols from $\Tmc$
\item make a call to one of the sub-procedures \compnode,  \compnodeanon, or \compedge
\end{itemize}
For each of the above operations, it is either easy to see, or has already been explained, that the 
operation can be performed in non-deterministic logarithmic space. 
To complete the argument, we note that it follows 
from Claim 1 (proof of Proposition \ref{logcfl-upper-prop}) that there are at most $|\q|$ many iterations of the while loop.
\end{proof}

\bigskip
\begin{proposition}
For a logspace-uniform family $\{\Cir_l\}_{l=1}^\infty$ of \sac\ circuits in normal form,  
the sequences $\q_{\Cir_l}^\mathsf{lin}$ and $(\Tmc_{\Cir_l}^{\vec{x}}, \Amc_{\Cir_l})$ are also logspace uniform.
\end{proposition}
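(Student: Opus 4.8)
The plan is to build, for each of $\q_{\Cir_l}^\mathsf{lin}$ and $(\Tmc_{\Cir_l}^{\vec x}, \Amc_{\Cir_l})$, a logspace transducer that on input $1^l$ (together with the circuit input $\vec x$, $|\vec x|=l$, in the ontology case) writes the corresponding object; since logspace transducers compose by recomputing intermediate bits on demand, and $\Amc_{\Cir_l}=\{G_1(a)\}$ is trivial, it suffices that each transducer may consult, as an oracle, the structure of $\Cir_l$. By logspace-uniformity of $\{\Cir_l\}$, a logspace machine answers the standard structural queries on $\Cir_l$ given $1^l$: the number of gates, the type of each gate ($\AND$, $\OR$, or a literal $x_k$ or $\neg x_k$), the predecessor relation on gates and, for each $\AND$ gate, which of its two predecessors is the left input, as well as the number $2d+1$ of levels (part of the normal-form description). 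Because $\Cir_l$ is in normal form its size is polynomial in $l$ and its depth is $O(\log l)$, hence $d=O(\log l)$; the key consequence is that an index into $w_d$, a word of length $|w_d|=8(2^d-1)$ coming from the recursion $|w_{j+1}|=2|w_j|+8$, still has $O(\log l)$ bits and can be held in logarithmic space.

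For the query, the only non-routine point is that $w_d$, defined by $w_0=\emptyword$ and $w_{j+1}=\leftand^-\,\aor^-\,w_j\,\aor\,\leftand\,\rightand^-\,\aor^-\,w_j\,\aor\,\rightand$, is given only implicitly. I would provide a subroutine $\mathsf{letter}(i)$ returning the $i$-th symbol of $w_d$: it keeps a pair $(j,p)$, initially $(d,i)$, and at each step computes $m=|w_{j-1}|=8(2^{j-1}-1)$ and compares $p$ with the block boundaries $2$, $2+m$, $6+m$, $6+2m$ inside $w_j$; if $p$ lands in the eight-symbol scaffold it returns the appropriate connective symbol, otherwise it replaces $(j,p)$ by $(j-1,\,p-2)$ or $(j-1,\,p-6-m)$ according to which nested occurrence of $w_{j-1}$ contains $p$ (the descent certainly reaches a scaffold symbol by the time $j=1$, since $|w_0|=0$). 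This is a tail recursion, so it runs with a constant number of $O(\log l)$-bit counters. Using $\mathsf{letter}$, the transducer writes $\q_{\Cir_l}^\mathsf{lin}$ in one pass: for $i=1,\dots,|w_d|$ it emits the atom $\varrho_i(y_{i-1},y_i)$ with $\varrho_i=\mathsf{letter}(i)$ (exactly as in the definition of $\q_w$, inverted symbols being read via $\varrho^-(u,u')=\varrho(u',u)$), and it additionally emits $\trueleaf(y_i)$ whenever $\mathsf{letter}(i)=\aor^-$ and $\mathsf{letter}(i+1)=\aor$. Variable indices are $O(\log l)$ bits, so the whole construction is logspace.

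For the ontology, the transducer runs a bounded number of nested loops over gate indices (each $O(\log l)$ bits) using the structural oracle. For every pair of gates $(g_i,g_j)$ with $g_j$ a predecessor of $g_i$ it outputs $G_i(x)\to\exists y\,P_{ij}(y,x)$ and $\exists y\,P_{ij}(x,y)\to G_j(x)$; it then outputs $P_{ij}(x,y)\to\aor(x,y)$ if $g_i$ is an $\OR$ gate, and $P_{ij}(x,y)\to\leftand(x,y)$, resp.\ $P_{ij}(x,y)\to\rightand(x,y)$, if $g_i$ is an $\AND$ gate whose left, resp.\ right, predecessor is $g_j$ — precisely the cases in which $\aor(g_j,g_i)$, resp.\ $\leftand(g_j,g_i)$, resp.\ $\rightand(g_j,g_i)$, lies in $D_{\Cir_l}^{\vec x}$. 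Finally, for each input gate $g_i$ the oracle gives the literal $x_k$ or $\neg x_k$ labelling it; the transducer reads the bit $x_k$ from $\vec x$ on the input tape and, if the literal evaluates to $1$, outputs $G_i(x)\to\trueleaf(x)$ — exactly the condition $\trueleaf(g_i)\in D_{\Cir_l}^{\vec x}$. All bookkeeping stays within logarithmic space, so $\Tmc_{\Cir_l}^{\vec x}$ is logspace-constructible, and $\Amc_{\Cir_l}=\{G_1(a)\}$ trivially so.

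The main obstacle — indeed the only step needing any care beyond routine inspection of $\Cir_l$ and $\vec x$ — is the logspace generation of $w_d$: one has to observe that the normal form forces depth $O(\log l)$, so that positions in the exponentially long $w_d$ still use $O(\log l)$ bits, and one has to realize $\mathsf{letter}$ as an iterative descent rather than a genuine recursion, so that the working space stays $O(\log l)$ instead of $O(\log^2 l)$. Everything else is a composition of straightforward logspace procedures.
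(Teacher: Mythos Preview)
Your proposal is correct and follows essentially the same approach as the paper: a logspace procedure that emits the letters of $w_d$ one at a time while tracking only $O(\log l)$ bits of state (exploiting $d=O(\log l)$ so that positions in $w_d$ are polynomially bounded), combined with a straightforward scan of the circuit description to emit the ontology axioms. The paper phrases the $w_d$ generation as a depth-$d$ recursion storing the current position in the recursion tree, while you implement it as an iterative descent computing each letter on demand; these are equivalent, and your worry that a recursive version would cost $O(\log^2 l)$ is unfounded, since the recursion-tree path itself is only $O(d)=O(\log l)$ bits.
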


\begin{proof}
Consider a  circuit $\Cir$ in normal form with $2d+1$ layers of gates, where $d$ is logarithmic in number of its inputs $l$. We show that $\q_{\Cir}^\mathsf{lin}$ and $(\Tmc_{\Cir}^{\vec{x}}, \Amc_{\Cir})$ can be constructed 
using $O(\log(l))$ worktape memory.

\begin{itemize}
\item To produce the query $\q_{\Cir}^\mathsf{lin}$, we can generate the word $w_d$ letter by letter and insert the corresponding variables. 
This can be done by a simple recursive procedure of depth $d$, using the worktape to remember the current position in the recursion tree
as well as the index of the current variable $y_i$. Note that $|w_d|$ (hence the largest index of the query variables) may be exponential in $d$, 
but is only polynomial in $l$, and so we need only logarithmic space to store the index of the current variable. 
\item The ontology $\Tmc_{\Cir}^{\vec{x}}$ is obtained by 
making a single pass over a (graph representation) of the circuit and generating the 
axioms that correspond to the gates of $\Cir$ and the links between $\Cir$'s gates. 
To decide which axioms of the form $G_i(x) \rightarrow A(x)$ to include,  we must also look up the value 
of the variables associated to the input gates under the valuation $\vec{x}$. 
\item $\Amc_{\Cir}$ consists of a single constant atom.
\end{itemize}
\end{proof}

\bigskip

\noindent{\bf Proposition \ref{logcfl-lower-prop}.}
\emph{ $\Cir$ accepts input $\vec{x}$ iff 
$\Tmc_{\Cir}^{\vec{x}}, \Amc_{\Cir} \models \qclin(a)$.}

\begin{proof}
Denote by $p_\q$ the natural homomorphism from $\qclin$ to $\qc$, and
by $p_\Cmc$ the natural homomorphism from $\CmC$ to $\dcx$.
As it is proven in \cite{DBLP:journals/jacm/GottlobLS01} that 
$\Cir$ accepts input $\vec{x}$ iff there is a homomorphism $h$ from  $\qc$ to $\dcx$, it 
suffices to show that there exists a homomorphism $f$ from $\qclin$ to $\CmC$ 
iff there is a homomorphism $h$ from $\qc$ to $\dcx$.

\begin{center}
\begin{tikzpicture}[auto]
\node at (-0.6,2) {(a)};
\node (qclin) at (0,2) {$\q_\Cir^{\mathsf{lin}}$};
\node (qc) at (0,0) {$\q_\Cir$};
\node (dc) at (2,0) {$D_\Cir^\vec{x}$};
\node (cm) at (2,2) {$\CmC$};
\draw[->] (qclin) to node [left] {$p_\q$} (qc);
\draw[->] (cm) to node {$p_\Cmc$} (dc);
\draw[->] (qc) to node [below]  {$h$} (dc);
\draw[->,dashed] (qc) to node [below right] {$h'$} (cm);
\draw[->,dashed] (qclin) to node [above] {$f$} (cm);
\node at (3.4,2) {(b)};
\node (qclin1) at (4,2) {$\q_\Cir^{\mathsf{lin}}$};
\node (qc1) at (4,0) {$\q_\Cir$};
\node (dc1) at (6,0) {$D_\Cir^\vec{x}$};
\node (cm1) at (6,2) {$\CmC$};
\draw[->] (qclin1) to node [left] {$p_\q$} (qc1);
\draw[->] (cm1) to node {$p_\Cmc$} (dc1);
\draw[->] (qclin1) to node [above] {$f$} (cm1);
\draw[->,dashed] (qc1) to node [below right] {$f'$} (cm1);
\draw[->,dashed] (qc1) to node [below]  {$h$} (dc1);
\end{tikzpicture}
\end{center}

($\Rightarrow$) Suppose that $h$ is a homomorphism  from $\qc$ to $\dcx$.
We define the homomorphism $h' : \q_\Cir \to \CmC$ inductively moving from
the root $n_1$ of $\q_\Cir$ to its leaves. First,  we set $h'(n_1) = a$. Note that
$\CmC \models G_1(a)$. Then we proceed by induction.
Suppose that $n_j$ is a child of $n_i$, $h'(n_i)$ is defined, $\CmC \models G_{i'}(h(n_i))$
and  $h(n_j) = g_{j'}$. In this case, we set $h'(n_j) = h'(n_i) P^-_{i'j'}$.  It follows from the definition of
$\Tmc_\Cir^\vec{x}$ that $\CmC \models G_{j'}(h'(n_j))$, 
which enables us to continue the induction. It should be clear that
$h'$ is indeed a homomorphism from $\qc$ into $\CmC$. Since the composition of homomorphisms is again a homomorphism, we can 
obtain the desired homomorphism $f: \qclin \rightarrow \CmC$ by setting $f = p_\q \,\circ\, h'$. This is illustrated in diagram (a) above.  

($\Leftarrow$) Suppose that $f$ is a homomorphism  from $\qclin$ to $\CmC$. 
We prove that for all its variables $y_i, y_j$ (with $i < j$) $p_\q(y_i) = p_\q(y_j)$ implies
$f(y_i) = f(y_j)$ by induction on $|j - i|$. The base case ($|j - i| = 0$) is trivial.
For the inductive step, we may assume without loss of generality that
between $y_i$ and $y_j$ there are no intermediate variable $y_k$ with $ p_\q(y_i) = p_\q(y_k) =p_\q(y_j)$
(otherwise, we can simply use the induction hypothesis together with the transitivity of equality). It follows that 
$p_\q(y_{i+1}) = p_\q(y_{j-1})$, and the atom between $y_{j-1}$ and $y_{j}$ is oriented from $y_{i-1}$ towards $y_{j}$,
while the atom between $y_i$ and $y_{i+1}$ goes from $y_{i+1}$ to $y_i$. 
 Indeed, it holds if the node $n = p_\q(y_i) = p_\q(y_j)$ is an \OR-node since there are exactly two variables in $\qclin$ which are mapped to $n$, 
 and they bound the subtree in $\qc$ generated by $n$. For an \AND-node, this also holds because of our assumption about intermediate variables.
 By the induction hypothesis, we have
$f(y_{i+1}) =  f(y_{j-1}) = aw\varrho$ for some word $aw\varrho$. Since the only parent of $aw\varrho$ in $\CmC$ is $aw$, 
all arrows in relations $U$, $L$ and $R$ are oriented towards the root, 
and $f$ is known to be a homomorphism, it follows that $f(y_{i}) =  f(y_{j}) = aw$. This concludes the inductive argument.

Next define the function $f': \q_\Cir \to \CmC$ by setting $f'(x)=f(y)$ where $y$ is such that $p_\q(y)=x$. Since $p_\q(y_i) = p_\q(y_j)$ implies
$f(y_i) = f(y_j)$, we have that $f'$ is well-defined, and because $f$ is a homomorphism, the same holds for $f'$. To obtain the desired 
homomorphism from $\qc$ to $\dcx$, it suffices to consider the composition $h$  of $f'$ and $p_\Cmc$.
\end{proof}


\end{document}